\newenvironment{psmallmatrix}{\left(\begin{smallmatrix}}{\end{smallmatrix}\right)}
\begin{document}

\title{Failure Modes of Variational Autoencoders \\
and Their Effects on Downstream Tasks}

\author{\name Yaniv Yacoby \email yanivyacoby@g.harvard.edu \\
	\name Weiwei Pan \email weiweipan@g.harvard.edu \\
	\name Finale Doshi-Velez \email finale@seas.harvard.edu \\
       \addr John A. Paulson School of Engineering and Applied Sciences\\
       Harvard University\\
       Cambridge, MA 02138, USA
       }

\editor{}

\maketitle

\doparttoc 
\faketableofcontents 
\part{} 

\begin{abstract}
Variational Auto-encoders (VAEs) are deep generative latent variable models that are widely used for a number of downstream tasks. While it has been demonstrated that VAE training can suffer from a number of pathologies, existing literature lacks characterizations of exactly \emph{when} these pathologies occur and \emph{how} they impact downstream task performance. In this paper, we concretely characterize conditions under which VAE training exhibits pathologies and connect these failure modes to undesirable effects on specific downstream tasks, such as learning compressed and disentangled representations, adversarial robustness, and semi-supervised learning.
\end{abstract}

\begin{keywords}
  Variational Autoencoder, Variational Inference, Approximate Inference, Latent Variable Models
\end{keywords}

\section{Introduction}
Variational Auto-encoders (VAEs) are deep generative latent variable models that transform simple distributions over a latent space to model complex data distributions~\citep{Kingma2013}. They have been used for a wide range of downstream tasks, including, generating realistic-looking synthetic data (e.g ~\cite{Pu2016}), learning compressed representations (e.g., ~\cite{Alemi2017}), adversarial defense using de-noising \citep{Ghosh2018}, and, when expert knowledge is available, generating counter-factual data using weak or semi-supervision 
(e.g., \cite{Kingma2014,Siddharth2017,Klys2018}).
Variational auto-encoders are widely used by practitioners due to the ease of their implementation and simplicity of their training. In particular, the common choice of mean-field Gaussian (MFG) approximate posteriors for VAEs (MFG-VAE) results in an inference procedure that is straightforward to implement and stable in training.

Unfortunately, a growing body of work has demonstrated that MFG-VAEs suffer from a variety of pathologies, including learning un-informative latent codes (e.g.\cite{Oord2017,Kim2018}) and unrealistic data distributions (e.g., \cite{Tomczak2017}). 
When the data consists of images or text, rather than evaluating the model based on metrics alone, we often rely on ``gut checks" to make sure that the quality of the latent representations the model learns and the synthetic data (as well as counterfactual data) generated by the model is high (e.g., by reading generated text / inspecting generated images visually \citep{TCVAE, Klys2018}). However, as VAEs are increasingly being used in applications where the data is numeric, e.g., in medical or financial domains \citep{pfohl2019counterfactual}, these intuitive qualitative checks no longer apply. For example, in many medical applications, the original data features themselves (e.g., biometric reading) are difficult to analyze by human experts in raw form. In these cases, in which the application touches human lives and potential model errors are particularly consequential, we need to have a clear understanding of the failure modes of our models as well as the potential negative consequences on downstream tasks. 

Recent work \citep{Yacoby2020,Stuhmer2019} attributes a number of the pathologies of MFG-VAEs to properties of the training objective: 
the objective may compromise learning a good generative model in order to learn a good inference model -- or in other words, the inference model ``over-regularizes'' the generative model.
While this pathology has been noted in literature \citep{burda_importance_2016,zhao_towards_2017,cremer_inference_2018}, 
no prior work characterizes \emph{conditions} under which the MFG-VAE objective compromises learning a good generative model in order to learn a good inference model; moreover, no prior work relates MFG-VAE pathologies with the performance on \emph{downstream tasks}. 
Rather, existing literature focuses on mitigating the regularizing effect of the inference model on the VAE generative model 
by using richer variational families (e.g., \cite{Kingma2016,nowozin_debiasing_2018,luo_sumo_2020}).
While promising, these methods potentially introduce significant additional computational costs to training~\citep{agrawal2020advances},
as well as new training issues (e.g., noisy gradients ~\cite{Roeder2017,tucker_doubly_2018,rainforth_tighter_2019}).
As such, it is important to understand precisely when MFG-VAEs exhibit pathologies and when alternative training methods are worth the computational trade-off. 
In this paper, we characterize the conditions under which MFG-VAEs perform poorly and link these failures to effects on a range of downstream tasks. 
While we might expect that methods designed to mitigate VAE training pathologies (e.g., methods with richer variational families~\citep{Kingma2016}), will also alleviate the negative downstream effects, we find that this is not always so. Our observations point to reasons for further studying the performance of VAE alternatives in these applications. 
Our contributions are:

I. \emph{When} VAE pathologies occur: (1) We characterize concrete conditions under which learning the inference model will compromise learning the generative model for MFG-VAEs. More problematically, we show that these bad solutions are \emph{globally optimal} for the training objective, the ELBO. (2) We demonstrate that using the ELBO to select the output noise variance and the latent dimension results in biased estimates. (3) We propose synthetic data-sets that trigger these two pathologies and can be used to test future proposed inference methods.

II. \emph{Effects} on tasks: (4) Using novel synthetic data-sets, we demonstrate ways in which these pathologies affect key downstream tasks, including learning compressed, disentangled representations, adversarial robustness, and semi-supervised learning. In semi-supervised learning, we are the first to document the instance of ``functional collapse'', in which the data conditionals problematically collapse to the same distribution. (5) We show that while the use of richer variational families alleviate VAE pathologies on unsupervised learning tasks, they introduce new ones in the semi-supervised tasks.
Lastly, (6) we provide guidelines to avoid these failure modes in practice. 

These contributions help identify when MFG-VAEs suffice, and when advanced methods are needed.

\section{Related Work}

\paragraph{Characterizing pathologies of MFG-VAEs occurring at local optima of the training objective.}
Existing works that characterize MFG-VAEs pathologies largely focus on relating \emph{local optima} 
of the training objective to a single pathology: 
the un-informativeness of the learned latent codes (posterior collapse) ~\citep{he_lagging_2019,Lucas2019,Dai2019}. 
In contrast, there has been little work to characterize pathologies 
at the \emph{global optima} of the MFG-VAE's training objective. 
\cite{Yacoby2020} show that, when the decoder's capacity is restricted, posterior collapse 
and the mismatch between aggregated posterior and prior can occur at the global optima of the training objective.
In contrast, we focus on \emph{global optima} of the MFG-VAE objective in a \emph{fully general} setting: 
with fully flexible generative and inference models, as well as with and without learned observation noise.
In this work, we therefore do not discuss posterior collapse, since as a global optima,
posterior collapse only occurs under restricted conditions, 
in which the true posterior equals the prior and is thus perfectly modeled by an MFG~\citep{zhao_towards_2017,he_lagging_2019,Dai2019}.
For this condition to occur as a global optima, the likelihood must completely ignore the latent code,
and must therefore use the observation noise (assumed to be Gaussian) to explain the data distribution. 
Thus, posterior collapse cannot occur as a global optima on non-Gaussian data (see Appendix \ref{sec:posterior_collapse}). 

\paragraph{Characterizing pathologies of MFG-VAEs occurring at global optima of the variational training objective.}
Our work follows a long line of research that examines whether the global optima of the variational
training objective retains properties necessary for good downstream performance.
For example, previous work shows that, with an inflexible variational family,
variational inference is inconsistent for the drift coefficient in linear state-space models~\citep{Wang2004},
that it fails to propagate uncertainty through time in time-series models~\citep{turner_sahani_2011},
that it underestimates uncertainty in data-scarce regions of the input space in Bayesian Neural Networks~\citep{Foong2020},
and that it inappropriately prunes additional degrees of freedom in mixture models~\citep{MacKay2001}. 

Whereas previous work has focused on other probabilistic models, in this work, we focus on VAEs;
previous works on VAEs (e.g., \cite{Yacoby2020,Stuhmer2019}) have connected pathologies, such as posterior collapse, 
to the over-regularizing effect of the variational family on the generative model. 
However while there are many works that note / mitigate the over-regularization issue (e.g.~\cite{burda_importance_2016,zhao_towards_2017,cremer_inference_2018,shu_amortized_2018}), 
none have given a full characterization of \emph{when} the learned generative model is over-regularized, 
and few related the quality of the learned model to its performance on \emph{downstream tasks}.
In particular, many of these works have shown that their proposed methods have higher test log-likelihood 
relative to an MFG-VAEs, but as we show in this paper, 
high test log-likelihood is not the only property needed for good performance on downstream tasks. 
Lastly, these works propose fixes that require a potentially significant computational overhead.
For instance, works that use complex variational families, such as normalizing flows~\citep{Kingma2016}, 
require a significant number of parameters to scale~\citep{kingma_glow_2018}.
In the case of the Importance Weighted Autoencoder (IWAE) objective~\citep{burda_importance_2016}, 
which can be interpreted as having a more complex variational family~\citep{cremer_reinterpreting_2017,domke_importance_2018},
the complexity of the posterior scales with the number of importance samples used. 
Lastly, works that de-bias existing bounds~\citep{nowozin_debiasing_2018,luo_sumo_2020} require several evaluations of the objective.

Given that MFG-VAEs remain popular today due to the ease of their implementation, 
speed of training, and their connections to other dimensionality 
reduction approaches like probabilistic PCA~\citep{Stuhmer2019,rolinek_variational_2019,dai_connections_nodate,Lucas2019}, 
it is important to characterize the training pathologies of MFG-VAE, 
as well as the concrete connections between these pathologies and downstream tasks.
More importantly, this characterization will help clarify for which tasks and data-sets an MFG-VAE suffices and for which the computational tradeoffs are worth it.


\section{Background} \label{sec:background}

\paragraph{Unsupervised VAEs \citep{Kingma2013}.} A VAE assumes the following generative process: 
\begin{align}
p(z) &= \mathcal{N}(0, I),  \quad p_\theta(x | z) = \mathcal{N}(f_\theta(z), \sigma^2_\epsilon \cdot I)
\end{align}
where  $x$ in $\mathbb{R}^D$, $z\in \mathbb{R}^K$ is a latent variable and $f_\theta$ is a neural network parametrized by $\theta$. We learn the likelihood parameters $\theta$ while jointly approximating the posterior $p_\theta(z | x)$ with $q_\phi(z  | x)$:
\begin{align}
\begin{split}
\max_\theta \mathbb{E}_{p(x)} \left\lbrack \log p_\theta(x) \right\rbrack &\geq \max_{\theta, \phi} \underbrace{\mathbb{E}_{p(x)} \left\lbrack \mathbb{E}_{q_\phi(z | x)} \left\lbrack \log \frac{p_\theta(x | z) p(z)}{q_\phi(z | x)} \right\rbrack \right\rbrack}_{\text{ELBO}(\theta, \phi)}
\end{split}
\label{eq:elbo}
\end{align}
where $p(x)$ is the true data distribution, $p_\theta(x)$ is the learned data distribution,
and $q_\phi(z | x)$ is an MFG with mean and variance $\mu_\phi(x), \sigma^2_\phi(x)$, parameterized by neural network with parameters $\phi$. 
The VAE ELBO can alternately be written as a sum of two objectives --  
the ``MLE objective'' (MLEO), which maximizes the $p_\theta(x)$, 
and the ``posterior matching objective'' (PMO), 
which encourages variational posteriors to match posteriors of the generative model. 
That is, we can write $\text{argmin}_{\theta, \phi} -\text{ELBO}(\theta, \phi)$ as follows~\citep{zhao_towards_2017}:
\begin{align} 
\begin{split}
\text{argmin}_{\theta, \phi} ( \underbrace{D_{\text{KL}} \lbrack p(x) || p_\theta(x) \rbrack}_{\text{MLEO}} + \underbrace{\mathbb{E}_{p(x)} \left\lbrack D_{\text{KL}} \lbrack q_\phi(z | x) || p_\theta(z | x) \rbrack \right\rbrack}_{\text{PMO}})
\end{split}
\label{eq:vae-obj}
\end{align}
This decomposition allows for a more intuitive interpretation of VAE training and illustrates the tension between approximating the true posteriors and approximating $p(x)$.

\paragraph{Semi-Supervised VAEs.}
We extend the VAE model and inference to incorporate partial labels, allowing for some supervision of the latent space dimensions. For this, we use the semi-supervised model introduced by \cite{Kingma2014} as the ``M2 model'',
which assumes the generative process,
\begin{align}
\begin{split}
z \sim \mathcal{N}(0, I), \quad y \sim p(y), \quad \epsilon \sim \mathcal{N}(0, \sigma^2_\epsilon \cdot I), 
\quad x | z, y = f_\theta(z, y) + \epsilon,
\end{split}
\end{align}
where $y$ is observed only a portion of the time. 
The inference objective for this model is typically written as a sum of three objectives:
a lower bound for the likelihood of $M$ labeled observations, a lower bound for the likelihood for $N$ unlabeled observations, and a term encouraging the discriminative powers of the variational posterior:
\begin{align}
\begin{split}
\mathcal{J}(\theta, \phi) = \sum\limits_{n=1}^N \mathcal{U}(x_n; \theta, \phi)
+ \gamma \cdot \sum\limits_{m=1}^M \mathcal{L}(x_m, y_m; \theta, \phi) 
+ \alpha \cdot \sum\limits_{m=1}^M \log q_\phi(y_m | x_m)
\end{split}
\label{eq:ss-m2-objective-pre}
\end{align}
where the $\mathcal{U}$ and $\mathcal{L}$ lower bound $p_\theta(x)$ and $p_\theta(x, y)$, respectively (see Appendix \ref{sec:semisup_det}); the last term in the sum is included to explicitly increase discriminative power of the posteriors $q_\phi(y_m|x_m)$ (\cite{Kingma2014} and \cite{Siddharth2017}); $\alpha$, $\gamma$ controls the relative weights of the last two terms. Note that $\mathcal{J}(\theta, \phi)$ is only a lower bound of the observed data log-likelihood only when $\gamma=1, \alpha=0$, but in practice, $\gamma, \alpha$ are tuned as hyper-parameters. 
Following \cite{Kingma2014}, we assume an MFG variational family for each of the unlabeled and labeled objectives.

\paragraph{Additional Notation.} Let $\theta_\text{GT}$ denote the ground-truth generative model parameters
(for which the MLEO is $0$),
and let $\phi_\text{GT} = \mathrm{argmin}_{\phi} -\text{ELBO}(\theta_\text{GT}, \phi)$ be the best corresponding
variational posterior. 
Let $\theta^*, \phi^* = \mathrm{argmin}_{\theta, \phi} -\text{ELBO}(\theta, \phi)$ be the global optima of the ELBO.
Lastly, let $\mathcal{F}$ represent all functions $f_\theta$ realizable under the generative model's function class.

\section{Methodology} \label{sec:methodology}

In this section, we provide an overview of the experimental philosophy and setup used in the remainder of the paper.

\paragraph{When VAEs fail on downstream tasks, at a high-level.}
Previous work has shown that, at the global optima of the VAE training objective,
failures to perform well on downstream tasks generally occur due to 
the non-identifiability of the generative model, and/or due to the inductive bias of the variational family~\citep{Yacoby2020,Stuhmer2019}. 
Specifically, \cite{Yacoby2020} argue that when the ELBO is tight,
all generative models that explain the observed data well are preferred equally by the ELBO (non-identifiability).
As such, it is highly unlikely that the learned model retains any properties of the ground-truth model 
(like the meaning behind each latent dimension).
When the ELBO is not tight, \cite{Yacoby2020} argue that the ELBO will select one of the many generative models
that explain the observed data well -- specifically, it will select the one for which the posterior is most easily captured by the variational family (the inductive bias of the variational family). 
In this case, it is again unlikely that the model selected by the ELBO will be the one appropriate for the downstream task.

In this work, we extend this analysis, but unlike existing work, 
we focus our analysis on the case in which $f_\theta$ is arbitrarily flexible,
and on shedding intuition on the types of data-sets that trigger these failure modes 
on a large variety of common downstream tasks. 
Unless noted otherwise, in the examples considered in this paper, 
we fix a set of realizable likelihood functions $\mathcal{F}$, implied by our choice of the generative model network architecture. We choose $\mathcal{F}$ to be significantly more expressive than necessary to contain any smooth function, 
including the ground-truth generating function.
Thus, we emphasize that all results demonstrating failure modes of VAE in this paper
are not due to a lack of generative model capacity. 

\paragraph{Roadmap.} 
We begin by showing when and how the inductive bias of the variational family
biases the learned VAE towards models that misestimate the target data density,
as well as how this affects downstream tasks that require accurate density estimation (Section \ref{sec:misestimate-px}).
While the goal of VAE inference is, by design, to maximize the likelihood of the observed data,
many downstream tasks do not use the estimated data density at all,
and instead use the latent codes.
We next show when and how non-identifiability and the inductive bias of the variational family
compromise performance on tasks that require the learned latent space dimensions to 
retain the same meaning as those of the ground-truth model (Section \ref{sec:bad-latent-space}). 
While one may be tempted to incorporate additional knowledge (or inductive bias), e.g. in the form of partial labels,
to guide latent space to have its desired properties,
in Section \ref{sec:semi-supervision} we show how, when this additional information conflicts with the 
inductive bias of the variational family, performance on downstream tasks is similarly compromised.
Lastly, we unfold the consequence of using the ELBO for model selection on tasks requiring an accurate decomposition
between signal and noise (Section \ref{sec:misestimate-signal-vs-noise}).
Each of these sections shares a common approach to designing datasets, evaluation, and experimental setup. 
We describe these shared elements below before describing each failure mode. 
Following the sections about failure modes, we highlight implications for best practice (Section \ref{sec:discussion}).

\paragraph{Evaluation.}
Every downstream task requires the learned model to recover some property of the ground-truth model that generated the observed data. 
Some tasks, like learning disentangled representations, require the learned model's latent dimensions to align with those of the ground-truth model;
tasks like defenses against adversarial perturbations, require the learned model to exhibit the ground-truth model's decomposition between ``signal'' and ``noise'' (or between $f_\theta(z)$ and $\epsilon$);
and tasks like out-of-distribution (OOD) detection, simply require the learned model to accurately model ground-truth data distribution $p(x)$. 
As such, success in every task will be measured according to task-specific metrics (described along with each downstream task in the following sections of the paper). 

\paragraph{Data-sets.}
Ideally, we want to compare the learned models with the ground-truth models to measure how inference affects downstream tasks. Thus, in this work, in addition to using real-world data-sets, we rely heavily on synthetic data 
(for which we know the ground-truth model). 
Using knowledge of the ground-truth model, we can benchmark when traditional inference successfully is able to recover the necessary property of the ground-truth model, and when it cannot, how the downstream task's performance will be impacted.
Furthermore, although many works in literature observe various failures of VAEs, 
there does not exist a collection of benchmark data-sets in current literature known to \emph{causally} 
trigger failure modes of VAEs on common downstream tasks. 
That is, when VAE failures are observed in practice, it is often unknown whether it is the fault of the 
inference method (e.g. choice of gradient estimator), parameters of the optimizer, initialization, etc. 
In this work, we propose a collection of such synthetic benchmark data-sets 
for which we know that the global optima of the ELBO correspond to models with undesirable properties. 
For each data-set we propose, we additionally provide intuition explaining what properties of this data-set 
trigger the failure on the downstream task, as well as how to construct additional data-sets with similar properties. 
We consider our proposed data-sets (as well as the intuition for constructing these data-sets) as a contribution of this paper; 
these data-sets can help future researchers and practitioners to test their proposed inference methods with concrete downstream tasks in mind, to sanity-check their implementations, etc., and the intuition behind the construction of these data-sets can further help future researchers construct new benchmark data-sets that trigger VAE failure modes on new tasks. 
We summarize all proposed benchmark data-sets in Appendix \ref{sec:unsup-examples} (unsupervised) and Appendix \ref{sec:ss-examples} (semi-supervised), as well as explain why we expect them to trigger a failure mode. 

In addition to proposing synthetic benchmarks that trigger VAE failures on common downstream tasks,
we also describe how these benchmarks typify real classes of data-sets.
For some tasks (e.g. semi-supervised learning), we can even develop metrics that will a priori 
tell us whether a failure mode is likely to occur.
However, for many of the other tasks, this is not possible,
since the characterization of the failure mode is a function of the ground-truth model and not the data.
While it would be ideal to have such metrics for all tasks, 
we emphasize that knowledge of the general characteristics of a data-set that may trigger a failure mode is still useful.
Much like incorporating our beliefs about the data-set into our probabilistic model,
such knowledge (of how our approximate inference assumptions affect our downstream tasks)
needs to be accounted for with equal care. 
For example, as we show in Section \ref{sec:misestimate-px}, MFG-VAEs will misestimate the true $p(x)$
given noisy samples off of a curvy manifold. 
Since estimating the curvature of the manifold may be as expensive as training an MFG-VAE,
it may not be possible to know a priori whether an MFG-VAE will fail to estimate $p(x)$ well;
however, if you believe your data lies on such a curvy manifold,
it is best to use a richer variational family.

\paragraph{Experimental setup.}
In this paper, we are concerned with failures that occur at the global optima of the VAE training objective. However, proving properties about the global optima of the ELBO evaluated on arbitrary data-sets is difficult. We thus take an empirical approach.
To ensure that the solutions recovered via traditional inference are as close as possible to the global optima, we take the following measures: 
\begin{itemize}
\item We give the optimizer access to the ground-truth parameters $\theta_\text{GT}, \phi_\text{GT}$ (described in Section \ref{sec:background}). That is, we use 10 random restarts -- 5 initialized at $\theta_\text{GT}, \phi_\text{GT}$ and 5 initialized randomly -- and select the model with the highest ELBO on a validation set. 
When the highest ELBO does not correspond to the ground-truth model, we know that the model corresponding to the global optima of the ELBO may not retain the desired properties needed for the downstream task. 
\item Unless otherwise specified, we fix the model's hyper-parameters -- the observation noise variation $\sigma^2_\epsilon$ and the latent space's dimensionality $K$ -- at the ground-truth. 
\item Unless noted otherwise, we ensure that the generative model's architecture is significantly more expressive than necessary to contain any smooth function, including the ground-truth generating function. In this way, we ensure that the failures we observe are not caused by a lack of generative model capacity. 
\item We compare traditional VAE inference against two baselines. First, we compare against Lagging Inference Networks (LIN) training~\citep{he_lagging_2019}, an alternative training method for VAEs designed to better escape problematic local optima. By showing that even when initialized at the ground-truth, LIN does not improve on traditional training, we confirm that our initialization scheme gets us close to the global optima. 
Second, we compare against IWAE as a representative example of a VAE with a more complex variational family.
By showing that IWAE does not suffer from the same failures as an MFG-VAE, we can attribute the failure to our choice of variational family. We specifically choose IWAE as a baseline since we can easily control the complexity of the implied variational family using the number of importance samples $S$, 
and in doing so we encompass other types of variational families. 
\item Lastly, we ensure that all baselines do not suffer from training issues by constructing all synthetic data to be low dimensional (most have a 1D latent space and a 2D data space). 
\end{itemize}
All details about the experimental setup are in Appendix \ref{sec:exp-details}.

\section{When VAEs explain the observed data poorly} \label{sec:misestimate-px}

Since the goal of VAE inference is to maximize the likelihood of the observed data,
we first examine tasks that require accurate density estimation:
generating realistic-looking synthetic data (which requires sampling from $p(x)$),
and OOD detection (which requires finding samples for which $p(x)$ is relatively small).
In Section \ref{sec:misestimate-px-conditions} we characterize two conditions of the ground-truth generative model
under which the global optima of the ELBO prefers models that significantly compromise learning $p(x)$.
In Section \ref{sec:intuition} we then provide intuition for these two conditions.
Next (in Section \ref{sec:ds-construction}), 
since in practice one does not know the ground-truth generative model (and thus one cannot verify these two conditions),
we demonstrate how to construct synthetic data-sets that trigger this failure.
Using the insights used to construct these synthetic data-sets,
we translate the original two conditions into properties of real data that we would expect to trigger this failure.
Lastly, we unfold the consequences of misestimating $p(x)$ on the two aforementioned tasks.

\subsection{Conditions under which VAE inference will meaningfully compromise learning $p(x)$} \label{sec:misestimate-px-conditions}

Intuitively, the global optima of the ELBO correspond to incorrect generative models under two conditions (that must both be satisfied):
\begin{itemize}
\item[] \textbf{Condition 1:} The true posterior for the ground-truth $f_{\theta_\text{GT}}$ is difficult to approximate by an MFG for a large portion of $x$'s.
\item[] \textbf{Condition 2:} There does not exist a likelihood function $f_\theta \in \mathcal{F}$ with simpler posteriors that approximates $p(x)$ well.
\end{itemize}
For completeness, we formalize these conditions in a theorem in Appendix \ref{sec:path_1}
and prove that under these conditions, the global optima of the VAE corresponds to a model that misestimates $p(x)$.
However, we emphasize that just because $p(x)$ is learned incorrectly, it does not mean that the quality of the generative model is \emph{meaningfully} compromised; in fact, there are many conditions that can lead to the ELBO learning a model  that does not recover $p(x)$ exactly, but for which the compromise in the quality of the learned $p(x)$ is imperceptible in downstream tasks. 
Here we conjecture that conditions (1) and (2) are necessary for \emph{significant} compromises in the quality of the learned $p(x)$. While we do not provide a proof of this conjecture, we verify it both quantitatively and qualitatively in this section. We furthermore provide examples where only one of the conditions is met and show, as a result, that the learned $p(x)$ differs in non-significant ways from the true data distribution, again by qualitative evaluations. 
Before empirically verifying that under these conditions, the quality of $p(x)$ is meaningfully compromised, however,
we provide the intuition behind these conditions.

\subsection{The Intuition Behind the Two Conditions} \label{sec:intuition}

\begin{figure*}[t!]
    \centering

    \includegraphics[width=0.7\textwidth]{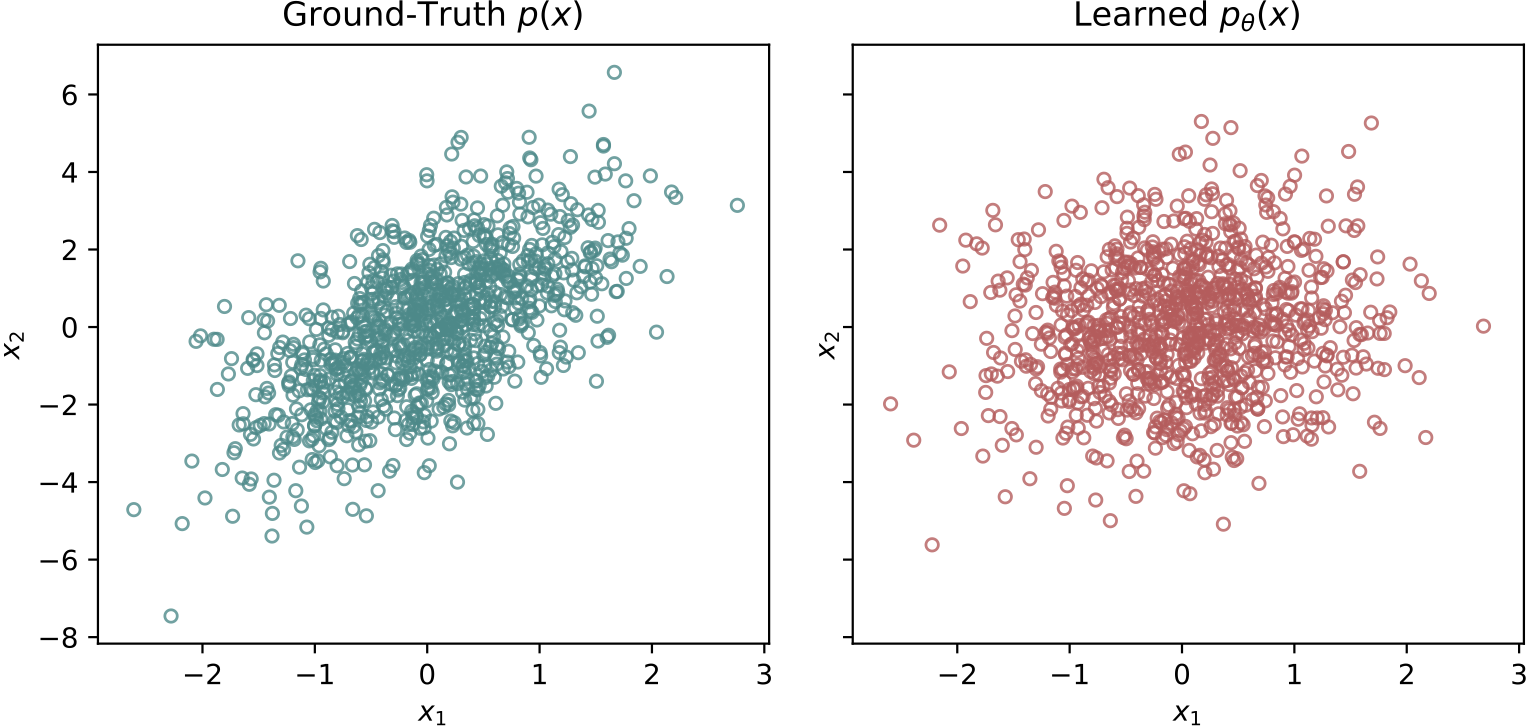}

    \caption{\textbf{The model at the global optima of the ELBO fails to capture the most salient feature of the true data distribution -- the correlation between the covariates.} Left: samples from the ground-truth model in Equation \ref{eq:aabi1} (left). Right: samples from the model at the global optima of the ELBO (right).}
    \label{fig:aabi1-px}
\end{figure*}

While the over-regularizing effect of the posterior on the generative model has served as a motivation to develop
more sophisticated inference methods (e.g.~\cite{burda_importance_2016,zhao_towards_2017,cremer_inference_2018}),
there has been few synthetic examples and data-sets constructed to trigger this pathology. 
In this section, we will use simple, low-dimensional examples to build intuition why the two conditions are necessary,
as well as for the remainder of the section,
which describes when (or on what types of data-sets) VAEs fail to estimate $p(x)$ accurately. 
We begin by reviewing an example constructed by \cite{Yacoby2020}
to exploit the form of the ELBO in Equation \ref{eq:vae-obj} so that at the global optima of the ELBO,
the MLEO is high and the PMO is low (i.e. $p(x)$ was approximated poorly in order to learn simpler posteriors).

\paragraph{Trade-off between the MLEO and PMO when $\bm{f_\theta}$ is inflexible.}
Consider the following model~\citep{Yacoby2020}:
\begin{equation} \label{eq:aabi1}
  x | z = \underbrace{\text{Cholesky} \left( A A^\intercal + B \right) \cdot z}_{f_\theta(z)} + \epsilon, \quad z \sim \mathcal{N} \left( 0, I \right),\quad \epsilon \sim \mathcal{N} \left( 0, I \cdot \sigma^2_\epsilon - B \right) 
\end{equation}
where $B, \sigma^2_\epsilon$ are fixed\footnote{In this example, $\sigma^2_\epsilon = 0.01$, $B = \left\lbrack \begin{smallmatrix} 0.006 & 0 \\ 0 & 0.006 \end{smallmatrix} \right\rbrack$ are fixed.},
and $\theta = A$ is learned\footnote{The ground-truth model parameter $\theta_\text{GT} = A_\text{GT} = \left\lbrack \begin{smallmatrix} 0.75 & 0.25 \\ 1.5 & -1.0 \end{smallmatrix} \right\rbrack$}.
In this example, the global optima of the ELBO corresponds to a model that explains the true data distribution poorly:
at the ground-truth model, $-\text{ELBO}(\theta_\text{GT}, \phi_\text{GT}) = 0.532$, 
while at the global optima, $-\text{ELBO}(\theta^*, \phi^*) = 0.196$,
corresponding to a model that explains $p(x)$ poorly.
Namely, as shown in Figure \ref{fig:aabi1-px}, 
the learned $p_\theta(x)$ misses the only salient feature of the true $p(x)$ -- the correlation between the two covariates.
Why did the global optima of the ELBO prefer a model that explains the data so poorly?
Because, as we will show, for this model both conditions from Section \ref{sec:misestimate-px-conditions} hold. 

The model in Equation \ref{eq:aabi1} was specifically constructed such that it cannot simultaneously have a low MLEO and PMO.
At the ground-truth, we have,
\begin{align*}
-\text{ELBO}(\theta_\text{GT}, \phi_\text{GT}) + \text{const} &= \underbrace{\text{MLEO}}_{0} + \underbrace{\text{PMO}}_{\text{large}} = 0.532.
\end{align*}
This is because, for this model, $f_\theta(z)$ is linear in $z$;
as such, for every $x$, the true posterior $p_{\theta_\text{GT}}(z | x)$ is a full-covariance Gaussian 
(poorly approximated by an MFG), causing PMO at the ground-truth to be high. 
On the other hand, at the global optima of the ELBO, we have,
\begin{align*}
-\text{ELBO}(\theta^*, \phi^*) + \text{const} &= \underbrace{\text{MLEO}}_{\text{small}} + \underbrace{\text{PMO}}_{\text{small}} = 0.196.
\end{align*}
Why did the ELBO compromise the MLEO to lower the PMO?
Recall that in general for MFG-VAEs with a linear $f_\theta$, 
the ELBO recovers a likelihood $f_{\theta^*}(z)$ that is a rotation of the ground-truth likelihood $f_{\theta_\text{GT}}(z)$
~\citep{tipping1999probabilistic,Lucas2019};
that is, $f_{\theta^*}(z) = f_{\theta_\text{GT}}(R \cdot z)$, where $R$ is selected 
so that the best-fitting approximate posterior $q_{\phi^*}(z | x)$ is an MFG.
For linear MFG-VAEs, at the global optima of the ELBO, the MLEO is $0$, since the prior is rotationally invariant, 
and the PMO is also $0$, since there always exists a rotation for which $q_{\phi^*}(z | x)$ is an MFG.
However, in this example, since $f_\theta(z)$ is restricted to be a triangular matrix by the Cholesky decomposition,
not all rotations of the ground-truth function $f_{\theta_\text{GT}}(z)$ can be represented by the likelihood:
$f_{\theta_\text{GT}}(R \cdot z) \notin \mathcal{F}$.
As such, both conditions from Section \ref{sec:misestimate-px-conditions} are satisfied:
for every $x$, the posterior (full-covariance Gaussian) is poorly approximated by an MFG (satisfying condition 1),
and since $f_\theta$ is restricted to be a triangular matrix,
there does not exist an alternative $f_\theta \in \mathcal{F}$ that explains the observed data
and has easy-to-approximate posteriors (satisfying condition 2). 

This example assumes a linear and restricted $f_\theta$ to force a tradeoff between the MLEO and the PMO; 
however, when $f_\theta$ is arbitrarily complex (e.g., a neural network), as is most commonly found in VAE applications,
does the ELBO still need to compromise the MLEO in order to have a lower PMO?
And if so, under what conditions?
This is the subject of Section \ref{sec:ds-construction}.
Before diving in, however, we provide one more toy example to illustrate why it is so difficult to characterize 
when (or on what types of data) an MFG-VAE will fail to approximate the data density well: 
generative model non-identifiability.

\paragraph{A trade-off between the MLEO and PMO is not always necessary due to non-identifiability.}
While the model in Equation \ref{eq:aabi1} was constructed so that having a low MLEO and a low PMO 
is mutually exclusive, when $f_\theta$ can express any likelihood function,
are there data-sets for which the same happens? 
As we show here, this is a difficult question to answer due to non-identifiability in the generative model.
When $f_\theta$ is fully flexible, there may exist an alternative model that explains the observed data equally well
as the ground-truth model \emph{and} has simpler posteriors.
However, it is non-trivial to determine when there exists such an alternative model, 
and whether for this alternative model the ``simpler'' posteriors are still poorly approximated by an MFG,
causing the learned $p_\theta(x)$ to be significantly compromised. 

\begin{figure*}[t!]
    \centering

    \begin{subfigure}[t]{0.3075\textwidth}
    \includegraphics[width=1.0\textwidth]{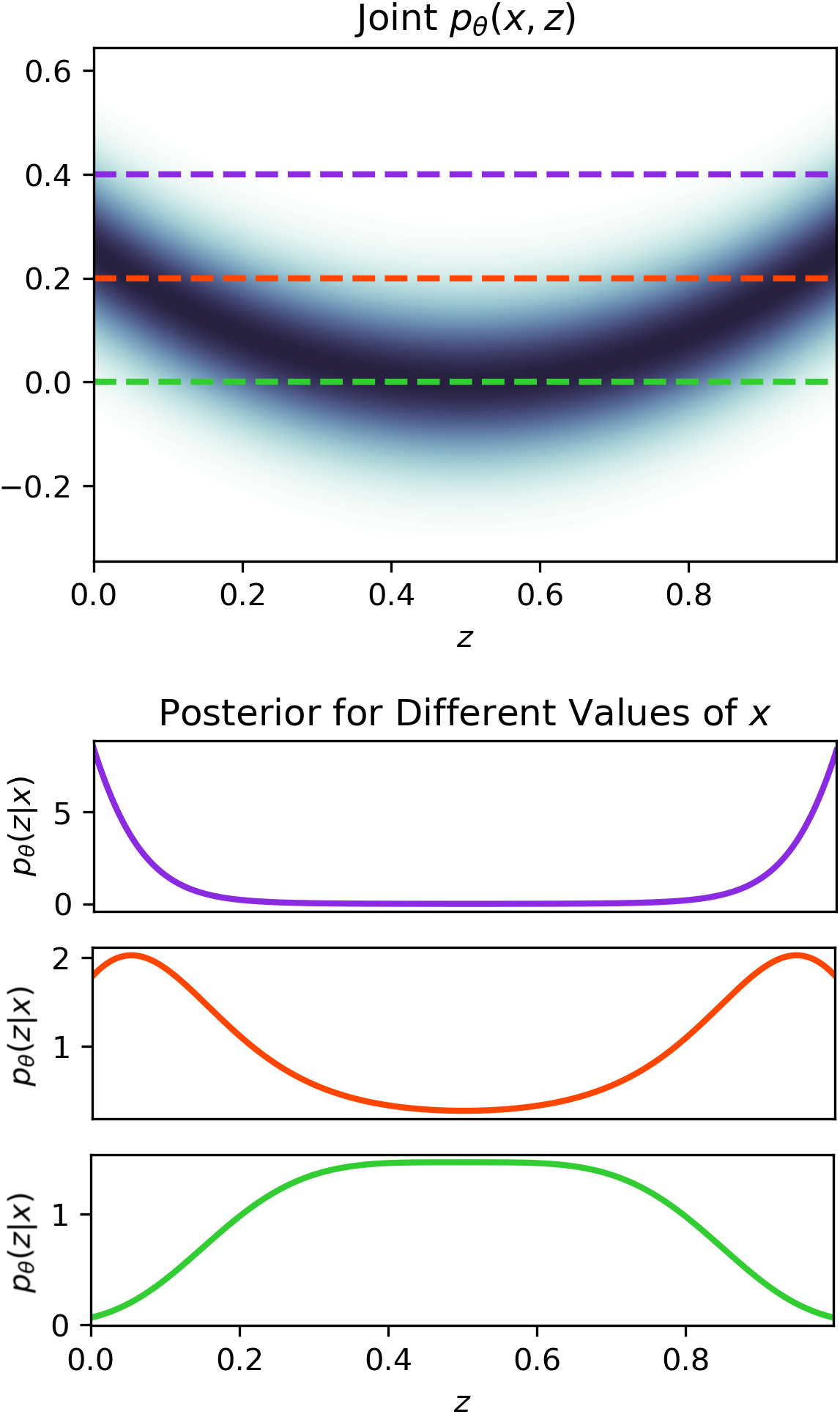}
    \caption{\scriptsize \textbf{Variant 1}}
    \label{fig:quad-true-fn}
    \end{subfigure}
    \begin{subfigure}[t]{0.2835\textwidth}
    \includegraphics[width=1.0\textwidth]{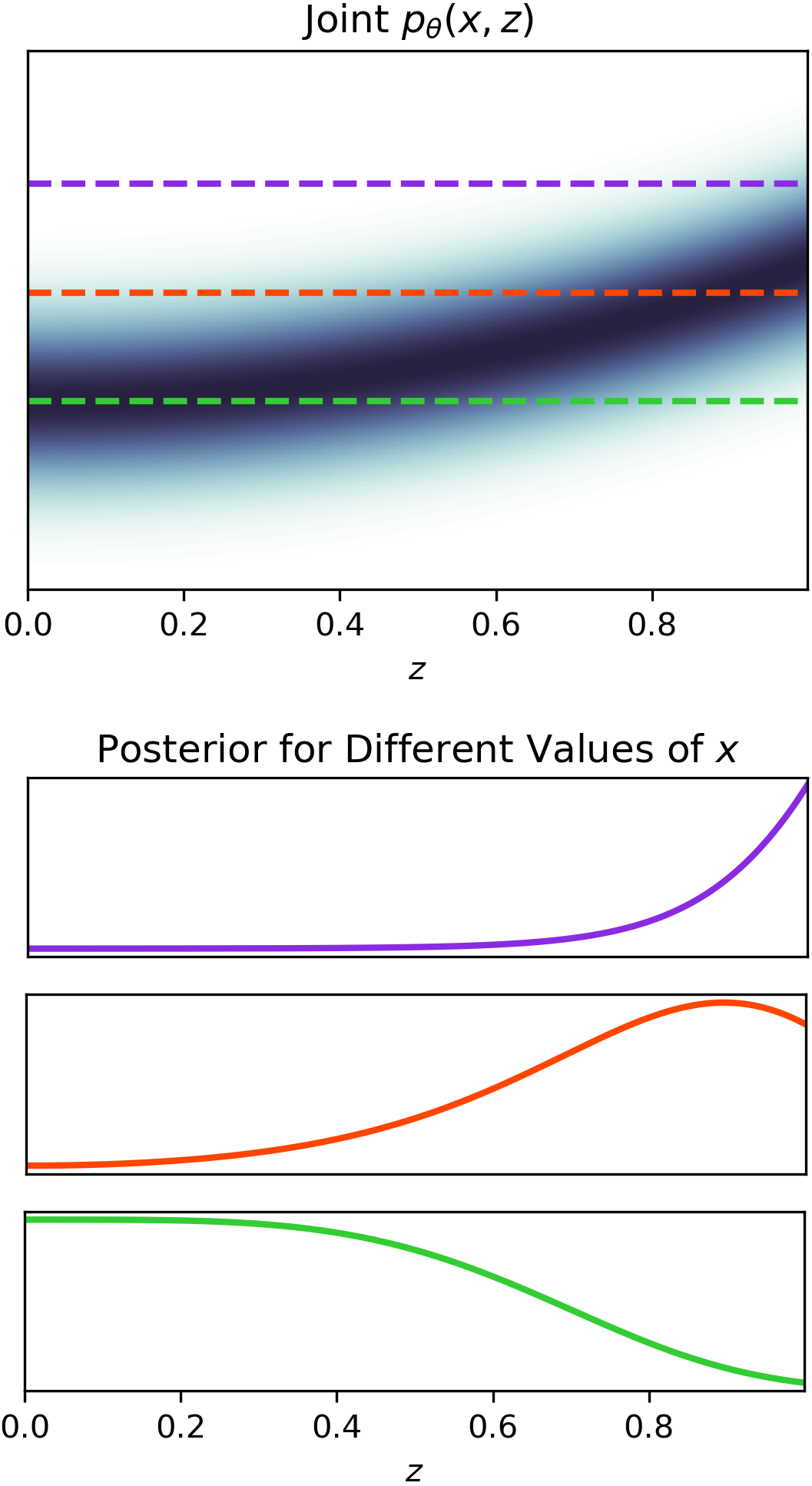}
    \caption{\scriptsize \textbf{Variant 2}}
    \label{fig:quad-variant1-fn}
    \end{subfigure}
    \begin{subfigure}[t]{0.2835\textwidth}
    \includegraphics[width=1.0\textwidth]{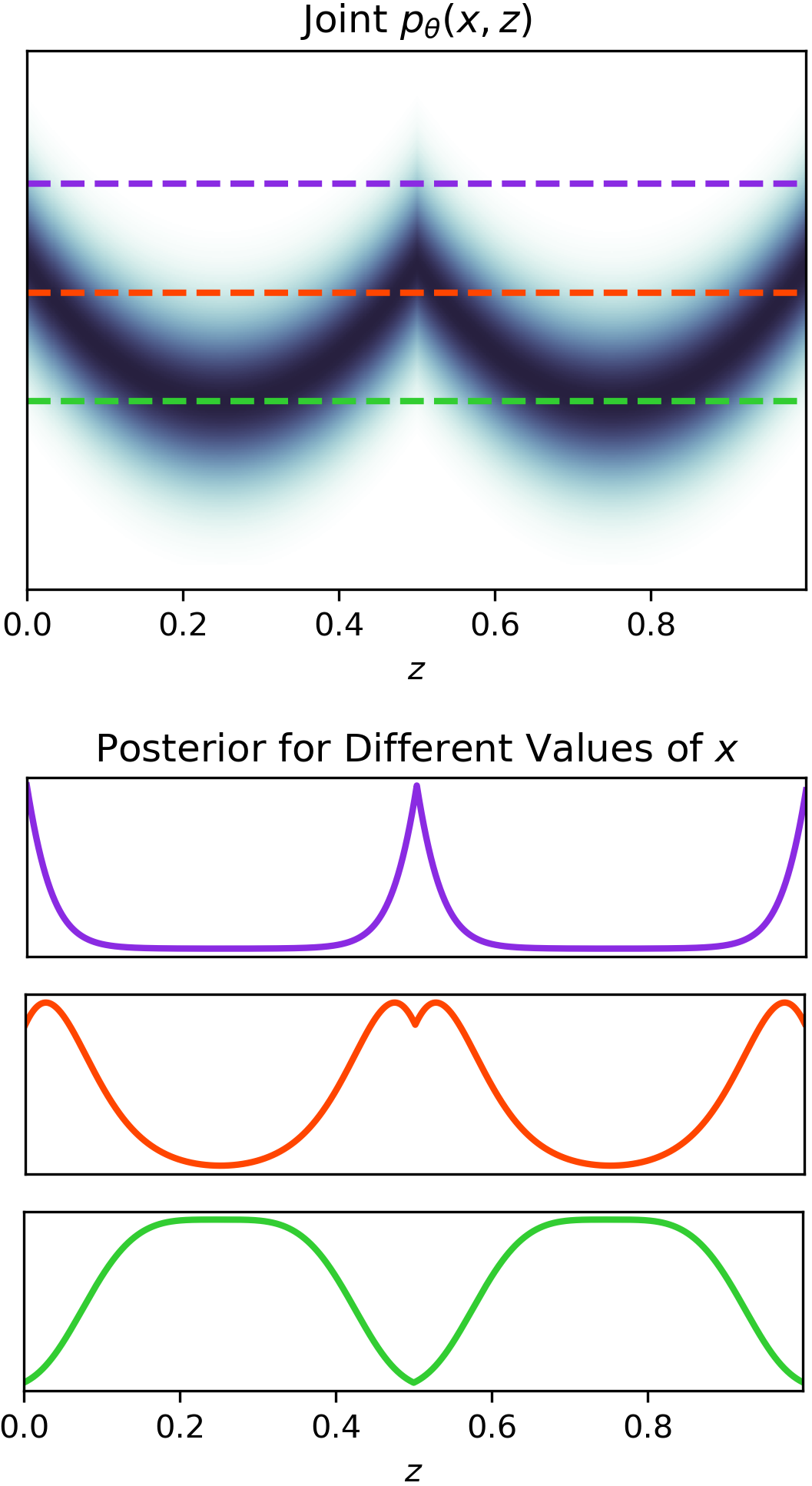}
    \caption{\scriptsize \textbf{Variant 3}}
    \label{fig:quad-variant2-fn}
    \end{subfigure}
    \begin{subfigure}[t]{0.0925\textwidth}
    \includegraphics[width=1.0\textwidth]{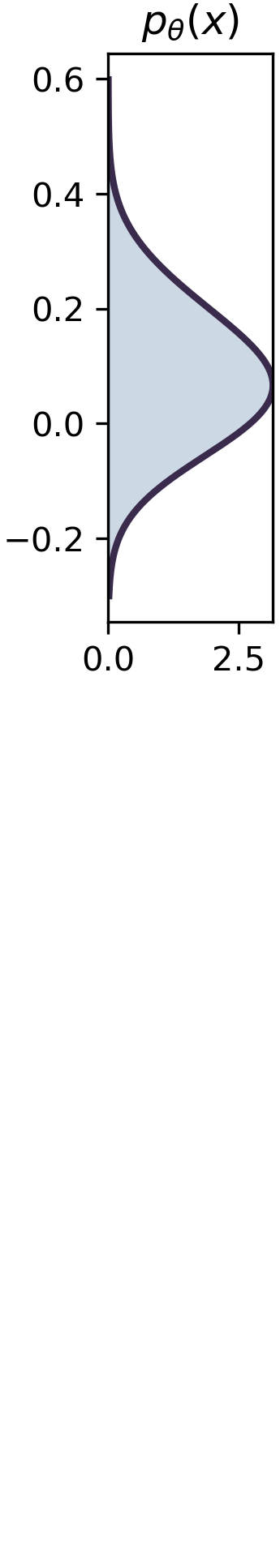}
    \end{subfigure}
    
    \begin{subfigure}[t]{0.3075\textwidth} \scriptsize
    \vspace{-10pt}
    \begin{align*}
    f_\theta(z) = (z - 0.5)^2
    \end{align*}
    \end{subfigure}
    \begin{subfigure}[t]{0.2835\textwidth} \scriptsize
    \vspace{-10pt}
    \begin{align*}
    f_\theta(z) =  0.25 \cdot z^2
    \end{align*}
    \end{subfigure}
    \begin{subfigure}[t]{0.2835\textwidth} \scriptsize
    \vspace{-10pt}
    \begin{align*}
    f_\theta(z) = \mathbb{I}&(z < 0.5) \cdot (2 z - 0.5)^2 \\
			 &+ \mathbb{I}(z \geq 0.5) \cdot (2 z - 1.5)^2
    \end{align*}    
    \end{subfigure}
    \begin{subfigure}[t]{0.0925\textwidth}
    \end{subfigure}
    
    \caption{\textbf{VAE non-identifiability: three models with the same $\bm{p_\theta(x)}$ and $\bm{p(z)}$ but different posteriors $\bm{p_\theta(z | x)}$.} Consider three models with the prior $z \sim \mathcal{U}(0, 1)$ and likelihood $x | z \sim \mathcal{N}(f_\theta(z), \sigma^2_\epsilon)$, but different $f_\theta(z)$. The top-row visualizes the joint distributions $p_\theta(x, z)$, in which each dotted cross-section represents a posterior for a different value of $x$, visualized on the bottom row. All models have the same marginal $p_\theta(x)$ (top-right), but have very different posteriors.}
    \label{fig:quad}
\end{figure*}

We illustrate this using the three models depicted in Figure \ref{fig:quad}, all assuming a model of the form,
\begin{align*}
x|z \sim \mathcal{N}(f_\theta(z), \sigma^2_\epsilon), \quad z \sim \mathcal{U}(0, 1),
\end{align*}
but each with a different $f_\theta$.
For all three models, summing the joint density along a horizontal cross-section
(or ``projecting onto the vertical axis'') yields the \emph{same data marginal} $p_\theta(x)$, 
depicted in the top right corner.
As such, one may intuitively assume that the ELBO would select any one of them arbitrarily 
(depending on the random initialization of the optimizer); 
however, as we show next, this is not the case.
Plotting the joint density along horizontal cross-sections (depicted in the dotted lines)
reveals that each variant has \emph{different posteriors} $p_\theta(z | x)$ (visualized in the bottom row). 

Looking at the shapes of the posteriors for Variant 1, many are multi-modal and thus poorly approximated by an MFG.
Variant 1 will thereby have an MLEO of $0$ but a high PMO. 
Intuitively, one might therefore think that, like in the example in Equation \ref{eq:aabi1},
in order to lower the PMO, the MLEO must increase, causing the VAE objective to compromise learning the data distribution.
But this is actually not the case:
by looking at the geometry of $f_\theta$, notice that both low and high values of $z$ 
yield the same likelihood (e.g., $p_\theta(x | z = 0) = p_\theta(x | z = 1)$).
This causes multimodality in the posterior, since cross-sections (e.g., purple and orange lines)
intersect the joint density in multiple separate regions of high-density.
Since $p_\theta(x)$ is computed via projection onto the vertical axis,
it is not affected by the location of the high-mass regions in the joint density,
so long as summing the joint along all horizontal cross-sections yields the same values of $p_\theta(x)$. 
We can therefore construct a joint density that ``consolidates'' 
the areas of high mass for large and small $z$'s into one side of the plot,
so that any cross-section / posterior will intersect the high-mass regions of the joint density only once. 
Variant 2 is exactly such a function
(e.g., instead of $p_\theta(x | z = 0)$ and $p_\theta(x | z = 1)$ each contributing equal mass to $p_\theta(x=0.4)$, 
in Variant 2 $p_\theta(x=0.4 | z = 1)$ contributes all of the mass). 
In comparison to Variant 1, Variant 2 has the same data marginal $p_\theta(x)$ (so its MLEO will be $0$),
and has posteriors that are significantly more MFG-looking (and thus have a low PMO).
As such, given data generated by Variant 1, the VAE objective will always prefer Variant 2.

Even though we have established that the ELBO will prefer Variant 2 over the other variants,
the posteriors of Variant 2 are still not exactly MFG; 
that is for Variant 2, the MLEO is $0$ and the PMO is non-zero (but lower than it is for the other variants). 
For Variant 2, it is not immediately obvious whether the PMO is high enough 
to cause the ELBO to tradeoff the quality of the learned $p(x)$ for even simpler posteriors.


To summarize, as shown in this example, due to generative model non-identifiability, 
for some data-sets it may possible to learn an alternative non ground-truth model that 
recovers $p(x)$ almost perfectly and has simpler posteriors.
This observation necessitates condition (2), and generally renders the theoretical analysis of VAE failure modes difficult. 
So on what types of data-sets, even with a fully-flexible generative model,
the VAE objective will be forced to compromise learning $p(x)$ in order to learn a model with simpler posteriors? 
And how do we construct such benchmark data-sets to test VAE inference methods?
We discuss this next.

\subsection{Constructing synthetic data for which MFG-VAEs misestimate $p(x)$ significantly} \label{sec:ds-construction}

\begin{figure*}[t!]
    \centering

    \begin{subfigure}[t]{0.45\textwidth}
    \includegraphics[width=1.0\textwidth]{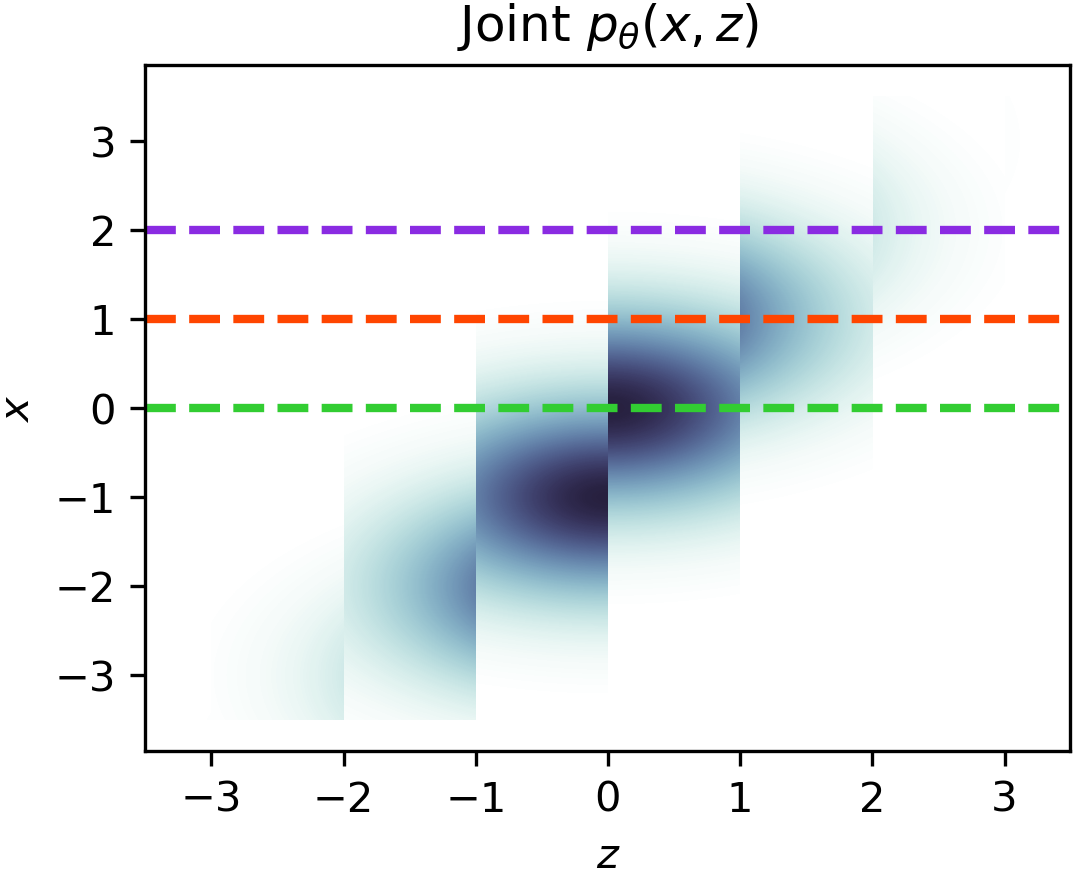}
    \end{subfigure}
    \hspace{5mm}
    \begin{subfigure}[t]{0.41\textwidth}
    \includegraphics[width=1.0\textwidth]{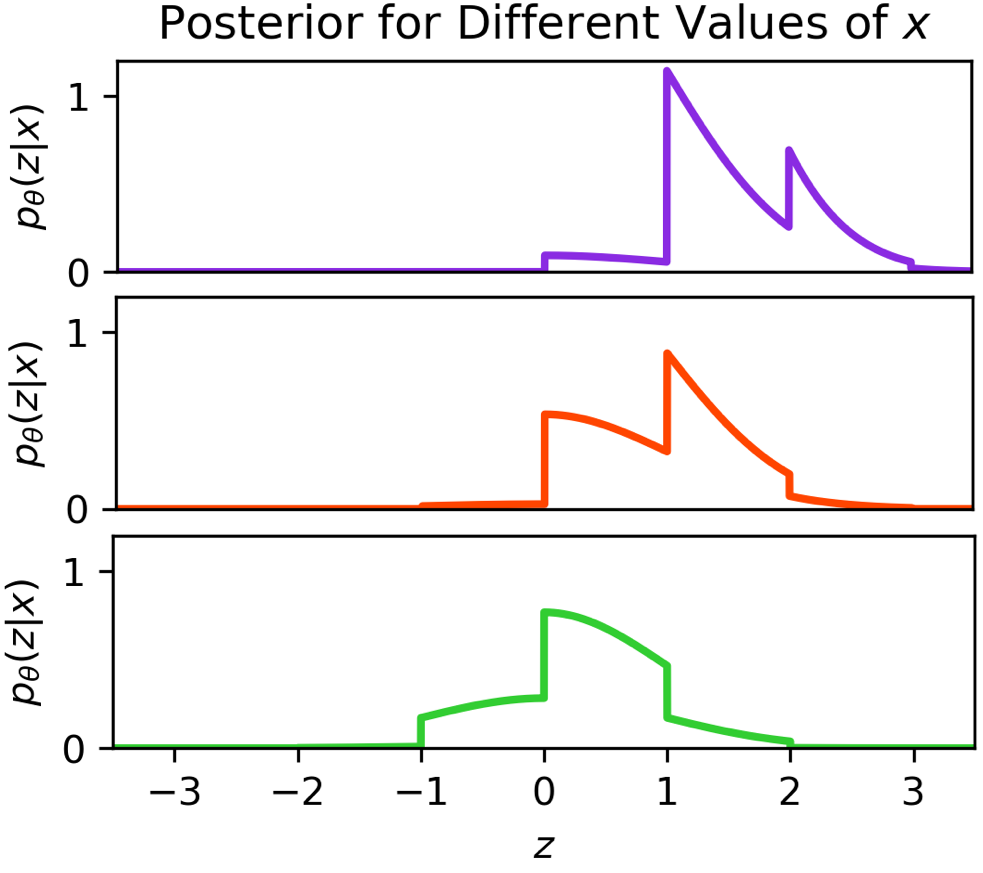}
    \end{subfigure}
    
    \caption{\textbf{Rapid changes in $\bm{f_\theta}$ lead to multi-modal posterior.} Consider the model in Equation \ref{eq:stepfn}. Left: joint distribution. Right: posterior for different values of $x$. Posteriors are multimodal since horizontal cross-sections of the joint density alternate passing through low and high mass regions.}
    \label{fig:stepfn}
\end{figure*}

\begin{figure*}[t!]
    \centering

    \begin{subfigure}[t]{0.02\textwidth}
        \centering
        \small
        \rotatebox[origin=l]{90}{\hspace{31pt}\textbf{Clusters Example}}
    \end{subfigure}
    \begin{subfigure}[t]{0.46\textwidth}
    \includegraphics[width=1.0\textwidth]{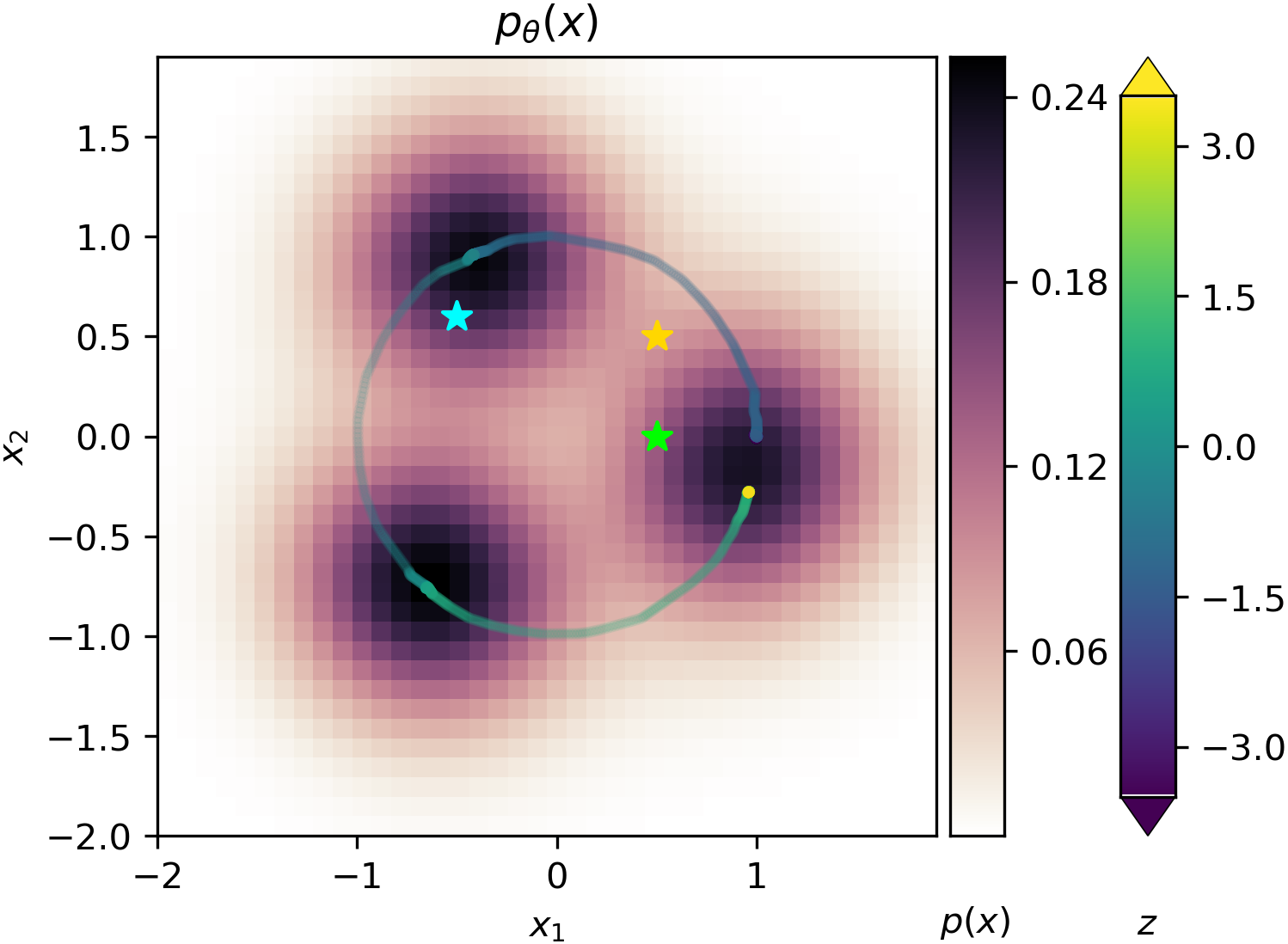}
    \end{subfigure}
    \hspace{1mm}
    \begin{subfigure}[t]{0.46\textwidth}
    \includegraphics[width=1.0\textwidth]{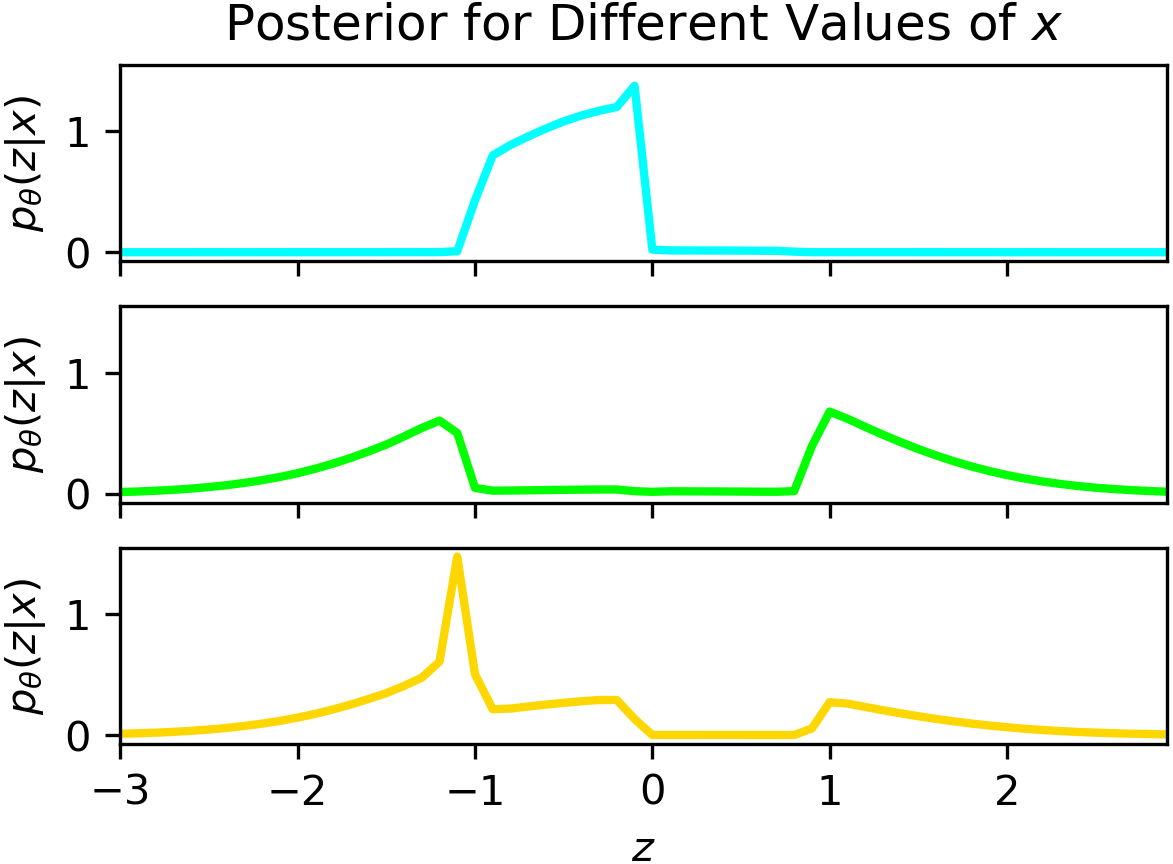}
    \end{subfigure}
    
    \begin{subfigure}[t]{0.02\textwidth}
        \centering
        \small
        \rotatebox[origin=l]{90}{\hspace{26pt}\textbf{Figure-8 Example}}
    \end{subfigure}
    \begin{subfigure}[t]{0.46\textwidth}
    \includegraphics[width=1.0\textwidth]{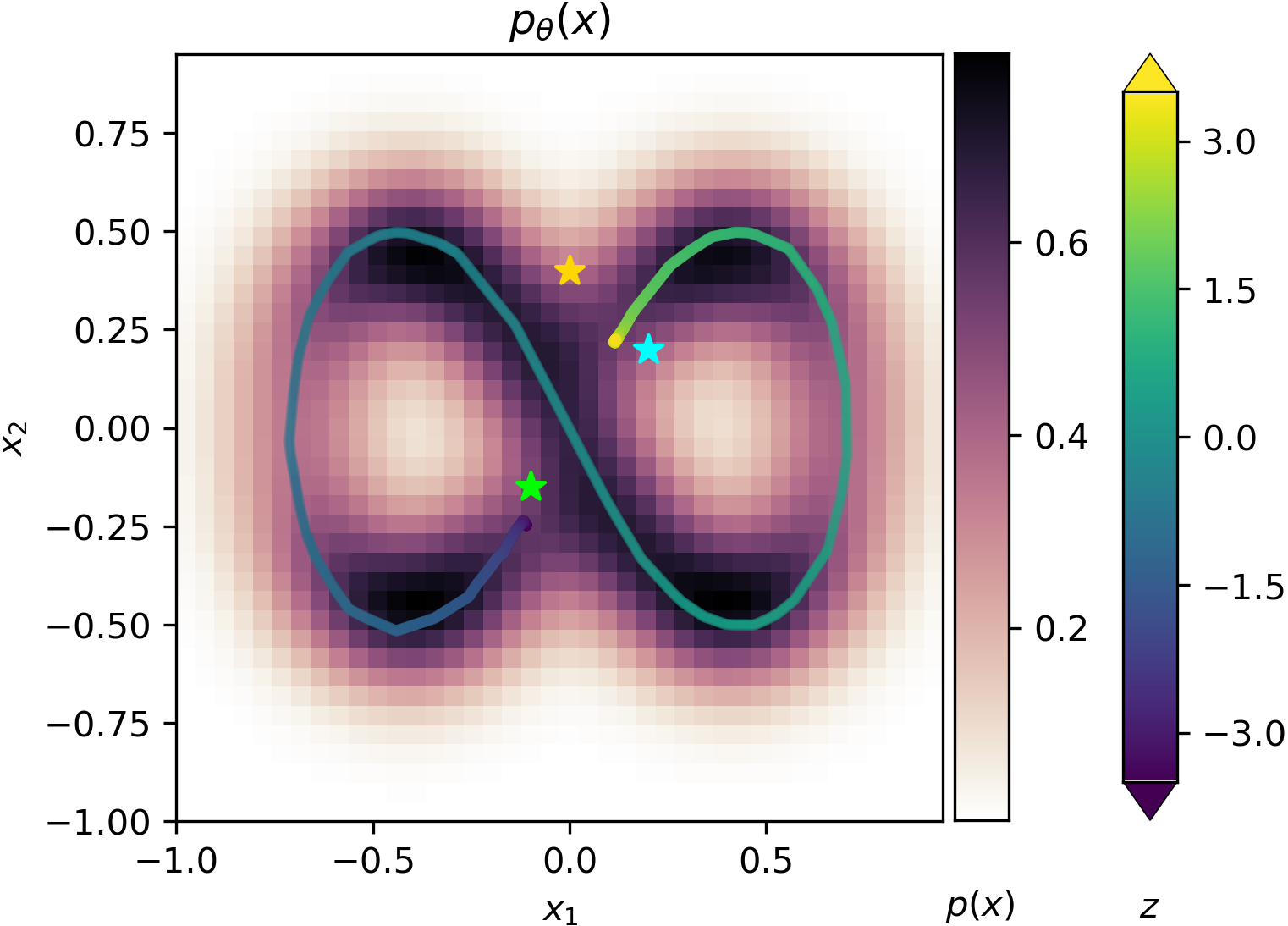}
    \end{subfigure}
    \hspace{1mm}
    \begin{subfigure}[t]{0.46\textwidth}
    \includegraphics[width=1.0\textwidth]{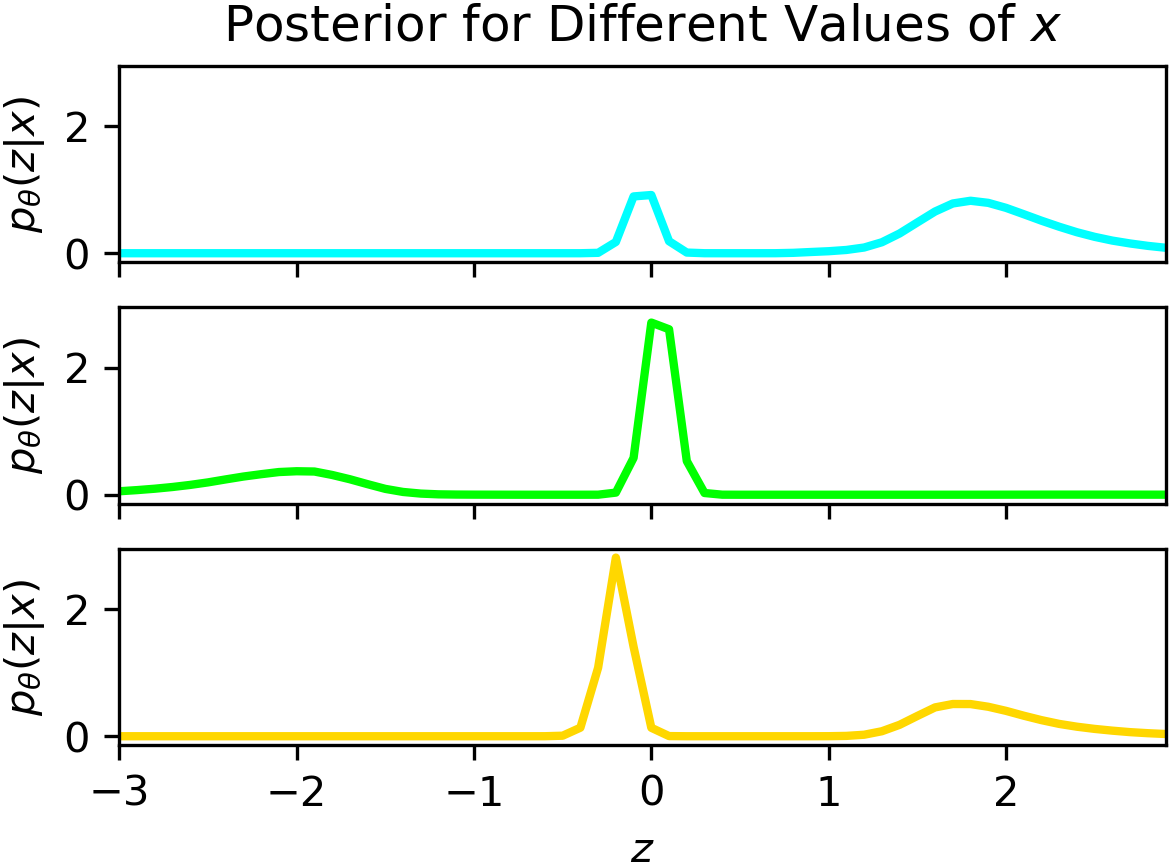}
    \end{subfigure}
    
    \caption{\textbf{Novel benchmark data-sets for which MFG-VAE's ELBO meaningful compromises learning $\bm{p(x)}$.} Top-row: the ``Figure-8 Example''. Bottom-row: ``Clusters Example''. Left-column: data marginal $p_\theta(x)$, with $f_\theta(z)$ overlaid on-top, colored by its corresponding value of $z$. Right-column: posteriors corresponding to the $x$'s starred on the density plots. At these starred points, the posterior is multi-modal and therefore poorly approximated by an MFG.}
    \label{fig:fig-8-and-clusters}
\end{figure*}

Based on the analysis of model non-identifiability in Figure \ref{fig:quad}, 
we argue that even in the ``simplest parameterization'' of $f_\theta$,
the posteriors need to be approximated poorly by an MFG for the VAE 
to significantly misestimate $p(x)$.
That is, if we consider all likelihood functions $f_\theta$ that capture the true data distribution $p(x)$
(e.g. if we consider all variants in Figure \ref{fig:quad})
and select the one that has the easiest-to-approximate posteriors,
when these posteriors are still sufficiently difficult to approximate, 
the learned $p(x)$ will be compromised. 

We create such data-sets in two ways.
Our first way to create such a data-set is to choose a likelihood function $f_\theta$ that interleaves 
areas of high slope and low slope; for example, consider,
\begin{align}
z \sim \mathcal{N}(0, 1), \quad x | z \sim \mathcal{N}(f_\theta(z), \sigma^2_\epsilon), \quad f_\theta(z) = \lfloor z \rfloor,
\label{eq:stepfn}
\end{align}
visualized in Figure \ref{fig:stepfn}. 
The figure shows precisely how the posterior, computed as horizontal cross-sections of $p_\theta(x, z)$ (on the left),
intersects the joint density $p(x, z)$ in multiple locations, 
and is therefore skewed and multi-modal (on the right). 
Using this mechanism, we propose the ``Clusters Example'' 
(visualized in Figure \ref{fig:fig-8-and-clusters}, described in Appendix \ref{sec:clusters-example}),
which is a smoothed step-function embedded on a circle. 
The flat regions of the step function yield the ``clusters''.
For this data-set, we verify that both conditions from Section \ref{sec:misestimate-px-conditions} hold:
(1) as \ref{fig:fig-8-and-clusters} shows, a large portion of the data points has posteriors that are poorly approximated by an MFG, and (2) there does not exist a simpler parameterization of the same $f_\theta$ (all would have interleaved areas of high and low slopes, resulting in these complex posteriors).

Instead of interleaving areas of high and low gradient in $f_\theta$,
we can alternatively create a ground-truth model that has non-MFG posteriors by selecting a likelihood function
$f_\theta$ that curves close to itself.
The ``Figure-8 Example'' (visualized in Figure \ref{fig:fig-8-and-clusters} described in Appendix \ref{sec:fig-8-example})
is an example of such a model. 
When $f_\theta$ curves close to itself, for may $x$'s there will exist several different $z$ that could have generated it,
leading to a multi-modal posterior.
To see this in the ``Figure-8 Example'', consider the starred points in Figure \ref{fig:fig-8-and-clusters}, 
each of which could have been generated by different areas on $f_\theta(z)$ by different $z$'s, 
and for which the true posterior is therefore multi-modal, satisfying condition (1). 
We also verify condition (2) is satisfied by considering all continuous parameterizations of  the ``Figure-8" curve: 
any such parametrization of $f_\theta$ will curve close to itself and thus result in multi-modal posteriors. 

\paragraph{VAEs approximate $p(x)$ poorly when conditions (1) and (2) of Section \ref{sec:misestimate-px-conditions} hold.}
To show that these issues occur because the MFG variational family over-regularizes the generative model,
we follow the experimental setup described in Section \ref{sec:methodology}:
that is, we give the initialized access to the ground-truth parameters, and we compare VAE inference with LIN and IWAE. 
As expected, IWAE learns $p(x)$ better than LIN, which outperforms the VAE (Figure \ref{fig:fig-8-inline}).
Like the VAE, LIN compromises learning the data distribution in order to learn simpler posteriors,
since it also uses an MFG variational family.
In contrast, IWAE is able to learn more complex posteriors and thus compromises $p(x)$ far less.
However, note that with $S = 20$ importance samples, IWAE still does not learn $p(x)$ perfectly.
For the full analysis (both quantitative and qualitative), see Appendix \ref{sec:verifying_path_cond}.

What happens if the portion of observations with highly non-MFG posterior was small?
or if there exists an alternative function that explains $p(x)$ well?
In Appendix \ref{sec:thm-1-cond-not-sat-quant}, we present two benchmarks --
one for which only condition (1) is satisfied (Figure \ref{fig:abs-inline}), 
and one for which only condition (2) is satisfied  (Figure \ref{fig:circle-inline}) --
and demonstrate that in both cases an MFG-VAE estimates $p(x)$ well.

\begin{figure*}[p]
    \centering
    
    \begin{subfigure}[t]{1.0\textwidth} 
        \begin{subfigure}[t]{0.41\textwidth}
        \centering
        \small
        True $p_{\theta_\text{GT}}(x)$
        \end{subfigure}
        \begin{subfigure}[t]{0.16\textwidth}
        \centering
        \small    
        IWAE $p_\theta(x)$
        \end{subfigure}
        \begin{subfigure}[t]{0.41\textwidth}
        \centering
        \small    
        VAE $p_\theta(x)$
        \end{subfigure}
        
        \begin{subfigure}[t]{0.01\textwidth}
            \centering
            \small
            \rotatebox[origin=l]{90}{\hspace{65pt}$x_2$}
        \end{subfigure}
        \begin{subfigure}[t]{0.98\textwidth}
        \includegraphics[width=1.0\textwidth]{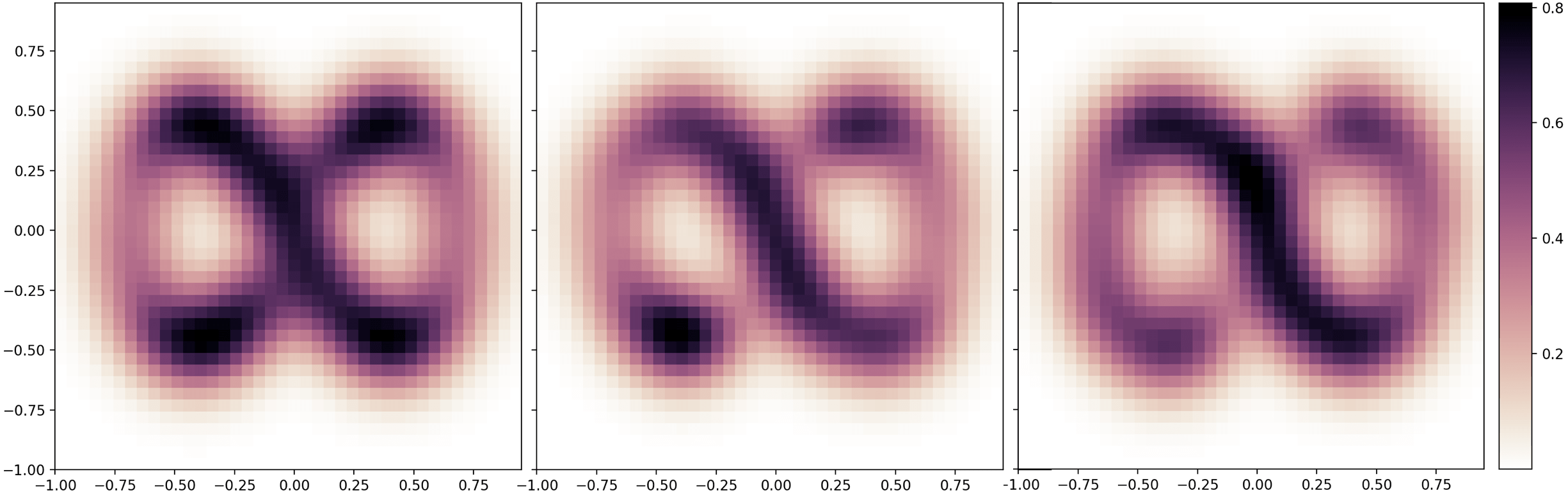}    
        \end{subfigure}
        
        \begin{subfigure}[t]{0.41\textwidth}
        \centering
        \small
        \vspace{-0.3cm}
        $x_1$
        \end{subfigure}
        \begin{subfigure}[t]{0.16\textwidth}
        \centering
        \small    
        \vspace{-0.3cm}
        $x_1$
        \end{subfigure}
        \begin{subfigure}[t]{0.41\textwidth}
        \centering
        \small    
        \vspace{-0.3cm}
        $x_1$
        \end{subfigure}
    
    \caption{\small{\textbf{Figure-8 Example}}}
    \label{fig:fig-8-inline}
    \end{subfigure}
    
    \vspace{0.5cm}
    
    \begin{subfigure}[t]{1.0\textwidth} 
        \begin{subfigure}[t]{0.41\textwidth}
        \centering
        \small
        True $p_{\theta_\text{GT}}(x)$
        \end{subfigure}
        \begin{subfigure}[t]{0.16\textwidth}
        \centering
        \small    
        IWAE $p_\theta(x)$
        \end{subfigure}
        \begin{subfigure}[t]{0.41\textwidth}
        \centering
        \small    
        VAE $p_\theta(x)$
        \end{subfigure}
        
        \begin{subfigure}[t]{0.01\textwidth}
            \centering
            \small
            \rotatebox[origin=l]{90}{\hspace{65pt}$x_2$}
        \end{subfigure}
        \begin{subfigure}[t]{0.98\textwidth}
        \includegraphics[width=1.0\textwidth]{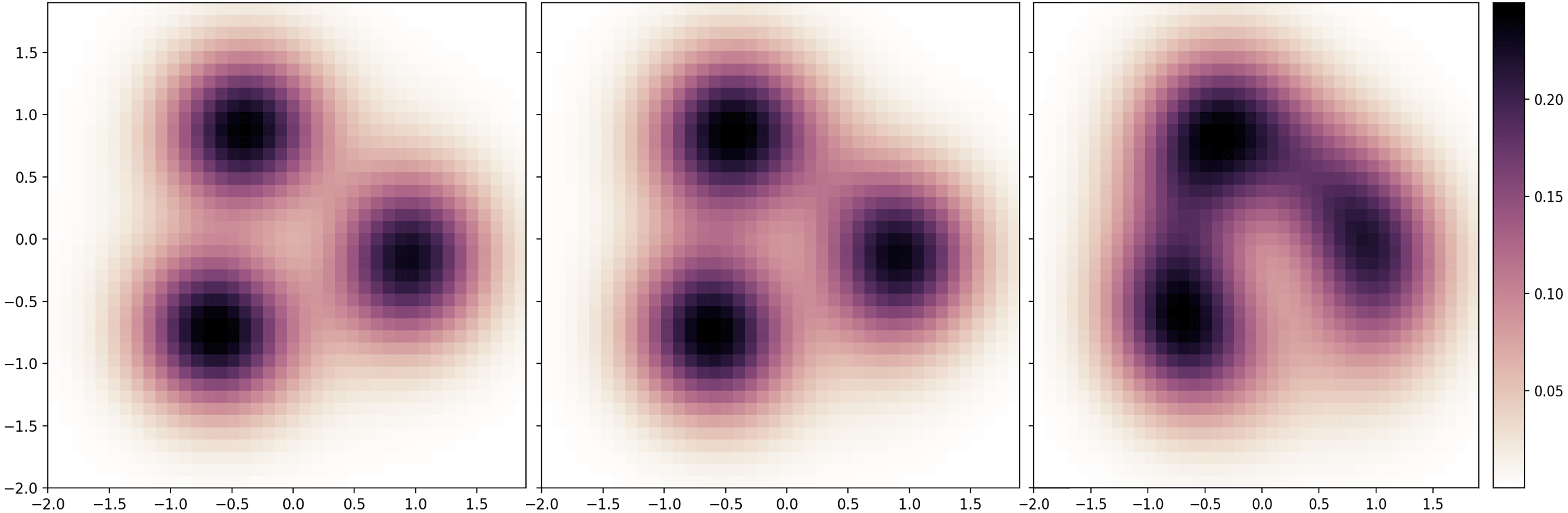}        
        \end{subfigure}
        
        \begin{subfigure}[t]{0.41\textwidth}
        \centering
        \small
        \vspace{-0.3cm}
        $x_1$
        \end{subfigure}
        \begin{subfigure}[t]{0.16\textwidth}
        \centering
        \small    
        \vspace{-0.3cm}
        $x_1$
        \end{subfigure}
        \begin{subfigure}[t]{0.41\textwidth}
        \centering
        \small    
        \vspace{-0.3cm}
        $x_1$
        \end{subfigure}        
    
    \caption{\small{\textbf{Clusters Example}}}
    \label{fig:clusters-inline}
    \end{subfigure}
    
    \vspace{0.5cm}
        
    \begin{subfigure}[t]{0.49\textwidth}
        \begin{subfigure}[t]{0.45\textwidth}
            \centering
            \scriptsize
            \phantom{AAAa}True $p_{\theta_\text{GT}}(x)$
        \end{subfigure}
        \begin{subfigure}[t]{0.45\textwidth}
        	   \centering
            \scriptsize    
            \phantom{Af}IWAE/VAE $p_\theta(x)$
        \end{subfigure}   
         	
    	\begin{subfigure}[t]{0.01\textwidth}
            \centering
            \scriptsize
            \rotatebox[origin=l]{90}{\hspace{45pt}$x_2$}
        \end{subfigure}
    	\begin{subfigure}[t]{0.98\textwidth}
    	\includegraphics[width=1.0\textwidth]{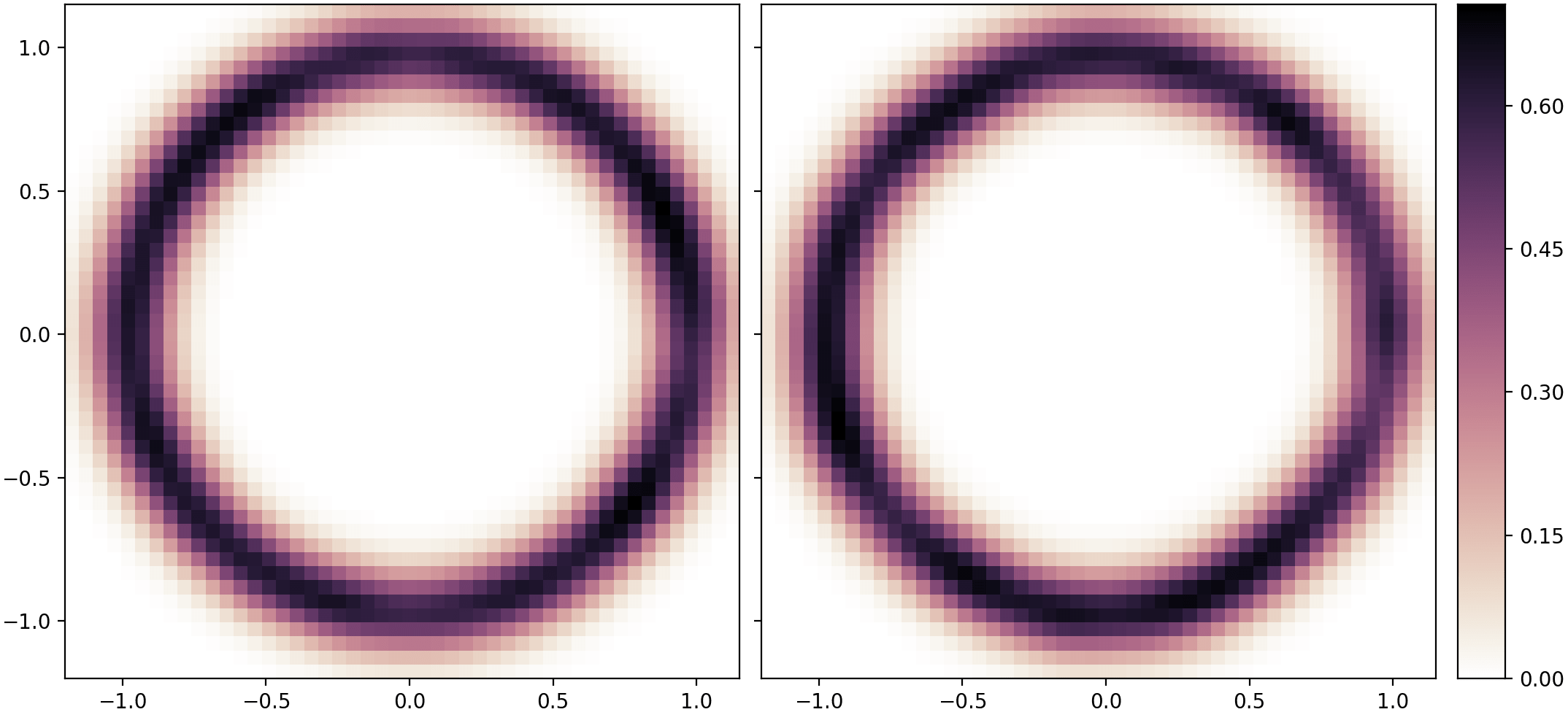}
	\end{subfigure}
	
	\begin{subfigure}[t]{0.45\textwidth}
            \centering
            \scriptsize
            \vspace{-0.4cm}
            \phantom{AAAa}$x_1$
        \end{subfigure}
        \begin{subfigure}[t]{0.45\textwidth}
        	   \centering
            \scriptsize    
            \vspace{-0.4cm}
            \phantom{Aa}$x_1$
        \end{subfigure}   
	
    \caption{\small{\textbf{Circle Example}}}
    \label{fig:circle-inline}
    \end{subfigure}
    \begin{subfigure}[t]{0.49\textwidth}
    	\begin{subfigure}[t]{0.45\textwidth}
            \centering
            \scriptsize
            \phantom{AAAa}True $p_{\theta_\text{GT}}(x)$
        \end{subfigure}
        \begin{subfigure}[t]{0.45\textwidth}
        	   \centering
            \scriptsize    
            \phantom{Af}IWAE/VAE $p_\theta(x)$
        \end{subfigure}  
    
    	\begin{subfigure}[t]{0.01\textwidth}
            \centering
            \scriptsize
            \rotatebox[origin=l]{90}{\hspace{45pt}$x_2$}
        \end{subfigure}
    	\begin{subfigure}[t]{0.98\textwidth}
    	\includegraphics[width=0.99\textwidth]{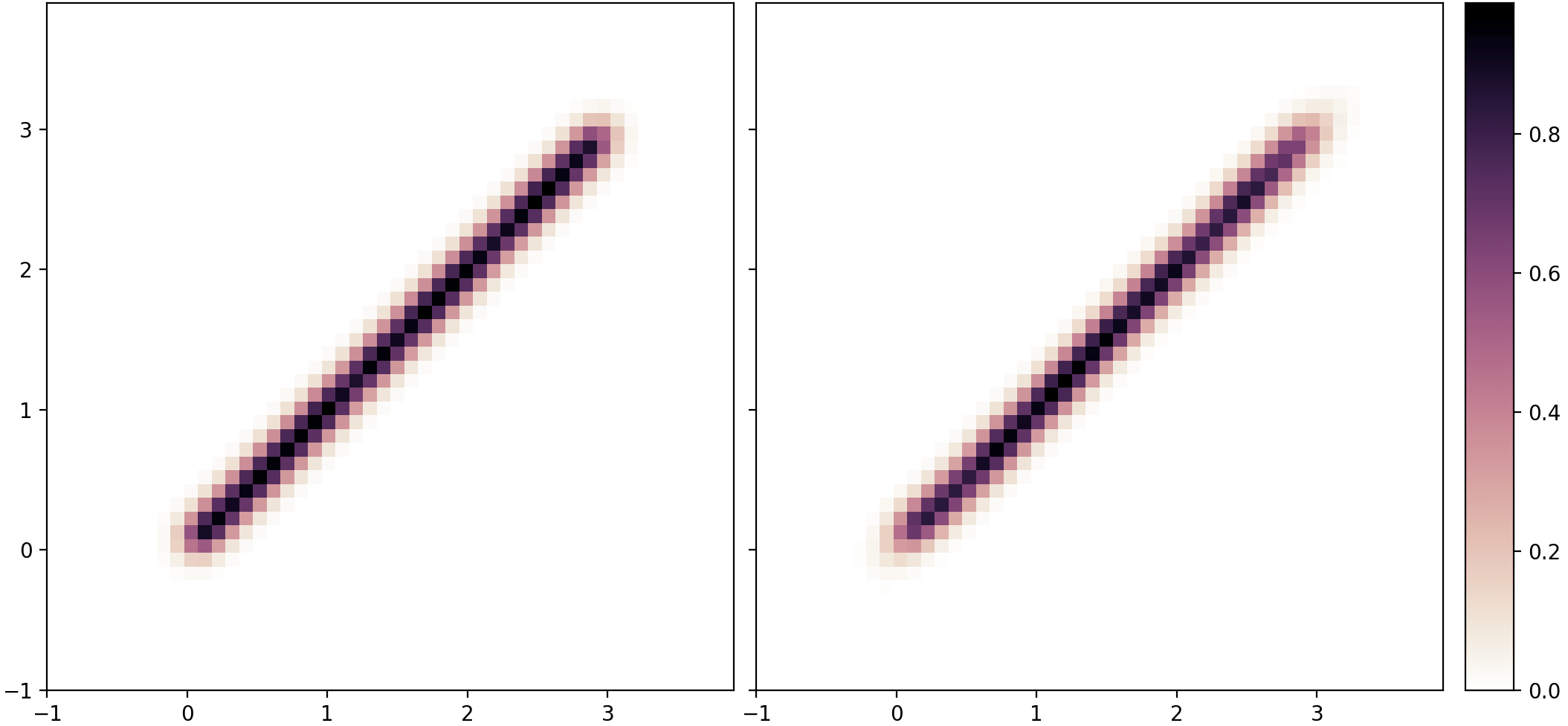}
	\end{subfigure}
	
	\begin{subfigure}[t]{0.45\textwidth}
            \centering
            \scriptsize
            \vspace{-0.45cm}
            \phantom{AAAa}$x_1$
        \end{subfigure}
        \begin{subfigure}[t]{0.45\textwidth}
        	   \centering
            \scriptsize    
            \vspace{-0.45cm}
            \phantom{Aa}$x_1$
        \end{subfigure}   	
	
    \caption{\small{\textbf{Absolute-Value Example}}}
    \label{fig:abs-inline}
    \end{subfigure}
    
    \caption{\textbf{VAE approximates $\bm{p(x)}$ poorly when conditions from Section \ref{sec:misestimate-px-conditions} hold.} This figure comprises the true data distributions with the corresponding distributions learned by the global optima of the VAE and IWAE objectives. When conditions are satisfied, in examples (a) and (b), VAE training approximates $p(x)$ poorly and IWAE performs better. When one of the conditions is not met, in examples (c) and (d), then the VAE can learn $p(x)$ as well as IWAE.}
    \label{fig:inline-unsup}
\end{figure*}

\paragraph{Proposed benchmarks typify classes of real data.} 
The ``Figure-8'' Example generalizes to any data manifold with high curvature 
(e.g., images from videos showing continuous physical transformations of objects), 
i.e. where the Euclidean distance between two points in a high-density region on the manifold is 
(A) less than the length of the geodesic connecting these points and 
(B) within 2 standard deviations of observation noise. 
The ``Clusters" Example in Figure \ref{fig:clusters-inline} generalizes to cases where we are learning very low-dimensional representations of multimodal data distributions (e.g., popular image data-sets where similar images lie in clusters).
On these data-sets, we expect the ELBO to prefer compromising 
the quality of the generative model for posteriors that are easy to approximate. 
We note that it is difficult (and perhaps even impossible) to verify these conditions on real data,
since on real data, the global optima of the ELBO may not be the largest bottleneck for good performance;
in fact, recent work shows that other choices -- such as the choice of gradient estimator and optimizer's step size -- 
may play just as big of a role as the choice of variational family~\citep{agrawal2020advances}. 
However, it is still helpful to keep in mind the types of data-sets mentioned here,
to ensure our modeling assumptions are consistent with the data.
Just like one would not apply a linear model to non-linear data,
it is best not to use a vanilla MFG-VAE on the aforementioned types of data-sets. 

\paragraph{Effect on downstream tasks: generating synthetic data \& OOD detection.} 
The effects of misestimating $p(x)$ on generating synthetic data and on OOD detection are clear;
when the learned model under-estimates $p(x)$ in one region, it must over-estimate it in another region.
For example, in Figure \ref{fig:clusters-inline}, at the global optima of the ELBO,
the corresponding VAE ``smears'' the data distribution across the three modes,
placing mass where it ought not to. 
As a result, samples from the regions between modes will be incorrectly labeled 
as ``within distribution'', as opposed to as OOD. 
Similarly, generating synthetic data from this model will inevitably yield samples
from these problematic regions.

\paragraph{Proposed benchmarks question whether the inflexibility of the prior causes the misestimation of $p(x)$.}
Recent work~\citep{Tomczak2017,falorsi_explorations_2018,davidson_hyperspherical_2018,bauer2019resampled} 
attributes the failure to generate high-quality synthetic data to the inflexibility of the prior.
For example, recent work \citep{falorsi_explorations_2018,davidson_hyperspherical_2018,miao2021incorporating}
argues that when the topology of the data manifold is non-trivial 
(e.g. when the data lies on a figure-8, circle, or cluster shaped manifold),
the Gaussian prior over-regularizes the generative model, causing it to under-fit. 
In contrast to these works, our ``Figure-8'', ``Circle'' and ``Clusters'' examples all tell a different story:
they show that the VAE's generative model is perfectly capable of expressing data distributions with complex topologies,
and that any estimation error is caused by the choice of an inexpressive variational family and/or local optima during inference.
Similarly, recent work (e.g. ~\cite{Tomczak2017,bauer2019resampled}) 
attributes the failure to generate high-quality synthetic data to the inflexibility of the prior,
arguing that the prior creates ``holes'' in the latent space, 
which lead to a mismatch between the aggregated posterior $\frac{1}{N}\sum_{n=1}^N q_\phi(z | x)$ and the prior $p(z)$.
Due to this mismatch, the generative model does not learn to decode samples from the ``holes'',
causing it to generate unrealistic samples.
Here, however, we argue that this mismatch is not the cause, but a symptom of the problem.
That is, when the global optima of the ELBO corresponds to a model that misestimates $p(x)$,
we can no longer expect the aggregated posterior to recover the prior~\citep{Yacoby2020}:
\begin{align}
p(z) &= \mathbb{E}_{p(x)} [ p_{\theta_\text{GT}} (z | x) ] \neq \mathbb{E}_{p(x)} [ p_{\theta^*} (z | x) ] \approx \mathbb{E}_{p(x)}  [q_{\phi^*} (z | x) ] \approx \frac{1}{N} \sum\limits_{n=1}^N q_{\phi^*} (z_n | x_n)
\end{align}
So how do existing works estimate $p(x)$ better using a more sophisticated prior?
A more sophisticated prior allows for a simpler $f_\theta$, thereby simplifying the posterior;
for example, in the ``Clusters Example'', having a mixture-of-Gaussians prior would obviate the need
for $f_\theta$ to alternate between flat and steep regions, thereby allowing for a better MFG approximation of the posterior.

To improve the quality of the learned $p(x)$, practitioners are now faced with a choice:
either to increase the flexibility of the prior or that of the variational family.
Based on our results, we recommend choosing a prior based on domain knowledge 
(e.g. using a mixture-of-Gaussians prior if your data contains several ``subtypes'' of observations),
and increasing the flexibility of the variational family sufficiently to learn $p(x)$ well
(but not too to avoid overfitting and/or increasing the difficulty of optimization~\citep{shu_amortized_2018,rosca2018distribution}).

\section{When VAEs learn models with undesirable latent spaces} \label{sec:bad-latent-space}

Whereas in Section \ref{sec:misestimate-px}, 
we focus on tasks that require accurate estimatation of the data distribution $p(x)$,
in this section, we investigate tasks (like learning disentangled representations)
that require the learned model's latent dimensions to align with those of the ground-truth model. 
In disentangled representation learning, we suppose that each dimension of the latent space 
corresponds to a task-meaningful concept \citep{Ridgeway2016,TCVAE}. 
We aim to infer these meaningful ground-truth latent dimensions. 
It is noted in literature that this inference problem is ill-posed --  
there are an infinite number of likelihood functions (and hence latent codes) 
that can capture $p(x)$ equally well~\citep{locatello_challenging_2019}. 
Previous work therefore advocates that with random restarts, one can hopefully find a model with the correct disentangled representation, and then select that model via human input~\citep{sercu_interactive_2019}, via a new metric~\citep{Duan2019}, or alternatively one can align the latent representation with the desired latent concepts with side-information or inductive bias~\citep{Siddharth2017,locatello_challenging_2019}. 
In contrast to these works, \cite{Stuhmer2019} argue that these approaches will not mitigate the issue;
they show that even when $f_\theta$ is linear, the ELBO already exhibits a specific 
inductive bias towards models that have easy-to-approximate posteriors,
which ``entangle'' the latent representations (i.e. does not recover the ground-truth latent space). 

\paragraph{Conditions under which VAEs will learn models with a non ground-truth latent space.}
In this section, we conjecture that VAE inference will fail to recover the ground-truth model's
latent space under the following two conditions:
\begin{itemize}
\item[] \textbf{Condition 1:} The true posterior for the ground-truth $f_{\theta_\text{GT}}$ is difficult to approximate by an MFG for a large portion of $x$'s.
\item[] \textbf{Condition 2:} There exists a likelihood function $f_\theta \in \mathcal{F}$ with simpler posteriors that approximates $p(x)$ well.
\end{itemize}
Note that condition (1) is the same as the one from Section \ref{sec:misestimate-px-conditions},
while condition (2) is the opposite. 
The intuition behind these two conditions is simple:
when condition (1) is satisfied on the ground-truth model, the MLEO will be $0$ while the PMO will be high,
but when condition (2) is satisfied, VAE inference will recover a model different than that of the ground-truth. 
We now present two illustrative examples -- 
we recap the linear example provided by \cite{Stuhmer2019} and relate it two our two above conditions,
and propose a new non-linear example.

\paragraph{A Linear Example.} Our analysis follows that of \cite{Stuhmer2019}.
Consider data generated by the ground-truth model $f_{\theta_{\text{GT}}}(z) = A z + b$. 
If $A$ is non-diagonal, then the posteriors of this model are correlated Gaussians (poorly approximated by MFGs). 
Since for every $x$ the true posterior $p_\theta(z | x)$ the same non-diagonal covariance matrix, condition (1) is satisfied: 
for \emph{all} $x$'s, the true posterior cannot be well-captured by an MFG.
Let $A' = AR$, where we define $R = (\Sigma V^\top)^{-1}(\Lambda - \sigma^2 I)^{1/2}$ 
with an arbitrary diagonal matrix $\Lambda$ and matrices $\Sigma, V$ taken from the SVD of $A$, 
$A = U\Sigma V^\top$. In this case, $f_{\theta} = A'z + b$ has the same marginal likelihood as 
$f_{\theta_{\text{GT}}}$, that is, 
$p_\theta(x) =p_{\theta_{\text{GT}}}(x) = \mathcal{N} (b, \sigma^2_\epsilon \cdot I + A A^\intercal)$. 
Condition (2) is thereby also satisfied. 
As a result, since the posteriors of $f_{\theta}$ are uncorrelated, 
the ELBO will prefer $f_{\theta}$ over $f_{\theta_{\text{GT}}}$.
However, in the latent space corresponding to $f_{\theta}$ , 
the original \emph{interpretations} of the latent dimensions are now entangled. 

\paragraph{A Non-Linear Example.}
As in the linear example, for more complicated likelihood functions, 
we expect the ELBO to prefer learning models with simpler posteriors,
which are not necessarily ones that are useful for learning disentangled representations. 
Suppose, for instance, that we observe data from ``Variant 1'' in Figure \ref{fig:quad}.
Variant 1 has two regions of the latent space that both map to the same region of the input space.
If $x$ represents patient symptoms and $z$ represents the underlying condition of the patient,
Variant 1 describes a model in which two patients with two different underlying conditions exhibit the same symptoms. 
As we already show in Section \ref{sec:intuition},
if our goal is to recover a model with the same latent space as Variant 1,
when using a uni-modal variational family, the VAE will actually prefer learning Variant 2.
For the medical task in question, however, Variant 2 will not suffice,
since in the latent space of Variant 2, 
all patients that exhibit the same symptoms are mapped to the same region of the latent space,
preventing us from characterizing the relationship between underlying condition and symptoms.
Therefore, just as in the linear example, the inductive bias in our choice of variational family
may cause us to recover models in which the latent space is not meaningful for our downstream task. 
Even if this inductive bias is reduced by using a more flexible variational family,
we are still left with the issue of non-identifiability,
in which we leave which model we recover in practice up to chance
(e.g. we recover Variant 3, which has even more complex posteriors). 

\paragraph{Proposed benchmark on which MFG-VAEs will learn an undesirable latent space.} 
Based on these insights, we propose the ``Absolute Value Example'' (described in Appendix \ref{sec:unsup-examples}).
Since in this example, $f_\theta(z) = | z |$,
there are two values of $z$ that could have generated every $x$.
As such, the ground-truth posteriors are all bimodal.
However, for this model, there also exists a simpler function that explains $p(x)$ and has unimodal posteriors
(see Figure \ref{fig:vae-abs-f}). 
We empirically verify that, indeed, traditional inference recovers a model with unimodal posteriors
while still explaining $p(x)$ well (see Figure \ref{fig:vae-abs-post-learned}). 

\paragraph{Proposed benchmarks typify classes of real data.}
The linear example above trivially generalizes to all data-sets, since the MFG-VAE prior is rotationally invariant. 
The Non-linear example (as well as the ``Absolute Value Example'') both generalize to data-sets 
generated by models for which $f_\theta$ is two-to-one, or alternatively,
to data-sets in which we a priori believe several regions of the latent space all map to the same regions of the input space.
We note that while a priori it is not possible to know which data-set was generated by such a function, 
one can treat the choice of MFG variational family (an inference assumption) as 
a generative model assumption that $f_\theta$ is not two-to-one. 
If one believes this assumption is inappropriate for a task, do not use an MFG-VAE.

\section{When VAEs ignore additional inductive bias from semi-supervision} \label{sec:semi-supervision}

\begin{figure*}[p]
    \vspace{-0.75cm}
    \centering
    \small
    
    \begin{subfigure}[t]{0.288\textwidth}
    \centering
    \normalsize
    \textbf{True}
    \end{subfigure}
    \begin{subfigure}[t]{0.288\textwidth}
    \centering
    \normalsize
    \textbf{IWAE}
    \end{subfigure}
    \begin{subfigure}[t]{0.288\textwidth}
    \centering
    \normalsize
    \textbf{VAE}
    \end{subfigure}
    
    \vspace{0.3cm}
    
    \begin{subfigure}[t]{0.05\textwidth}
    \centering
    \small
    \rotatebox[origin=l]{90}{\hspace{0pt}\textbf{$\bm{f_\theta(z, y)}$ for $\bm{y = 0, 1}$}}
    \end{subfigure}
    \begin{subfigure}[t]{0.288\textwidth}
    \includegraphics[width=1.0\textwidth]{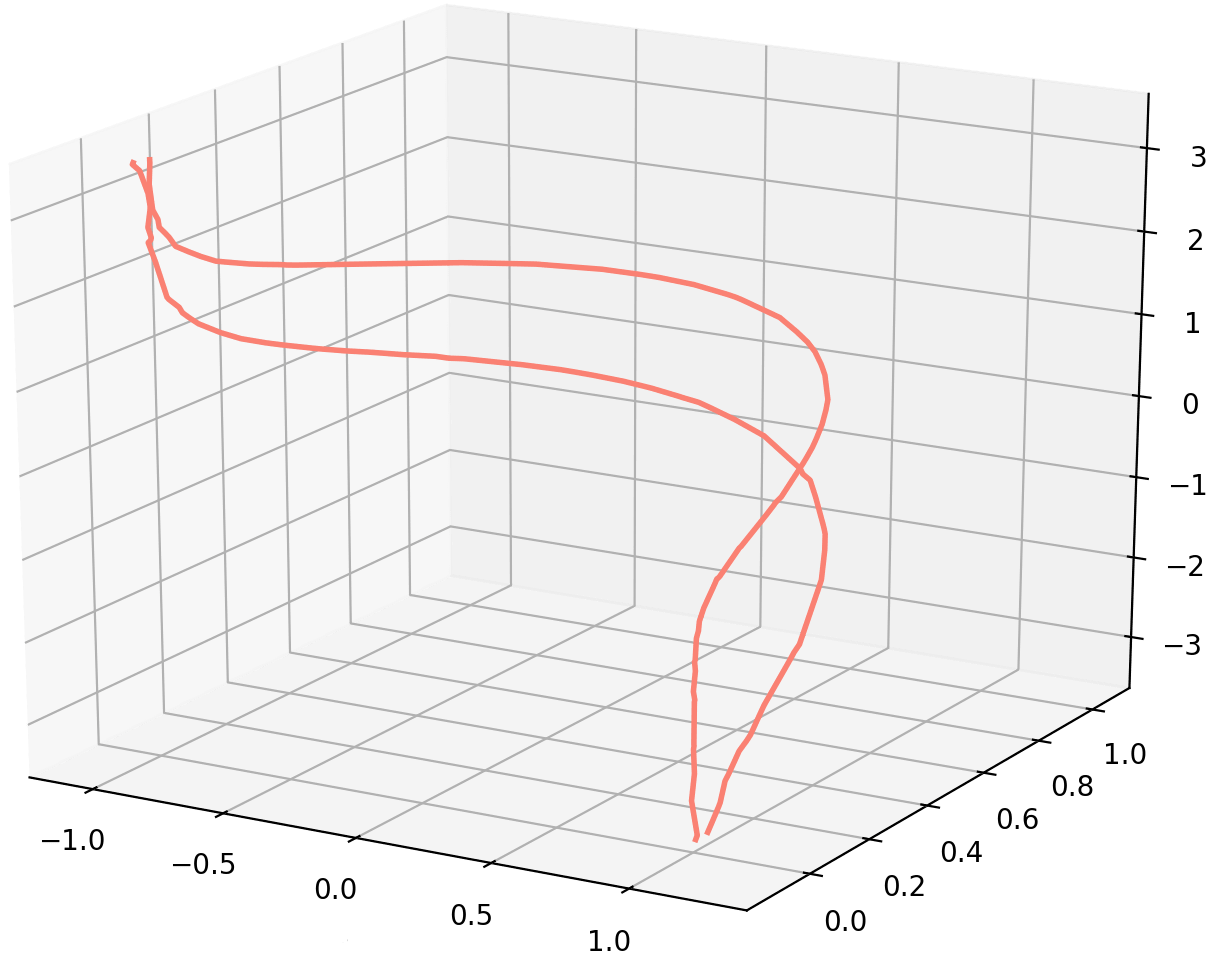} \\
    \scriptsize
    \centering
    \vskip -20pt
    $x_1$ \hspace{70pt} $x_2$
    \caption{}
    \label{fig:discrete-fn-true-inline}
    \end{subfigure}
    \begin{subfigure}[t]{0.288\textwidth}
    \includegraphics[width=1.0\textwidth]{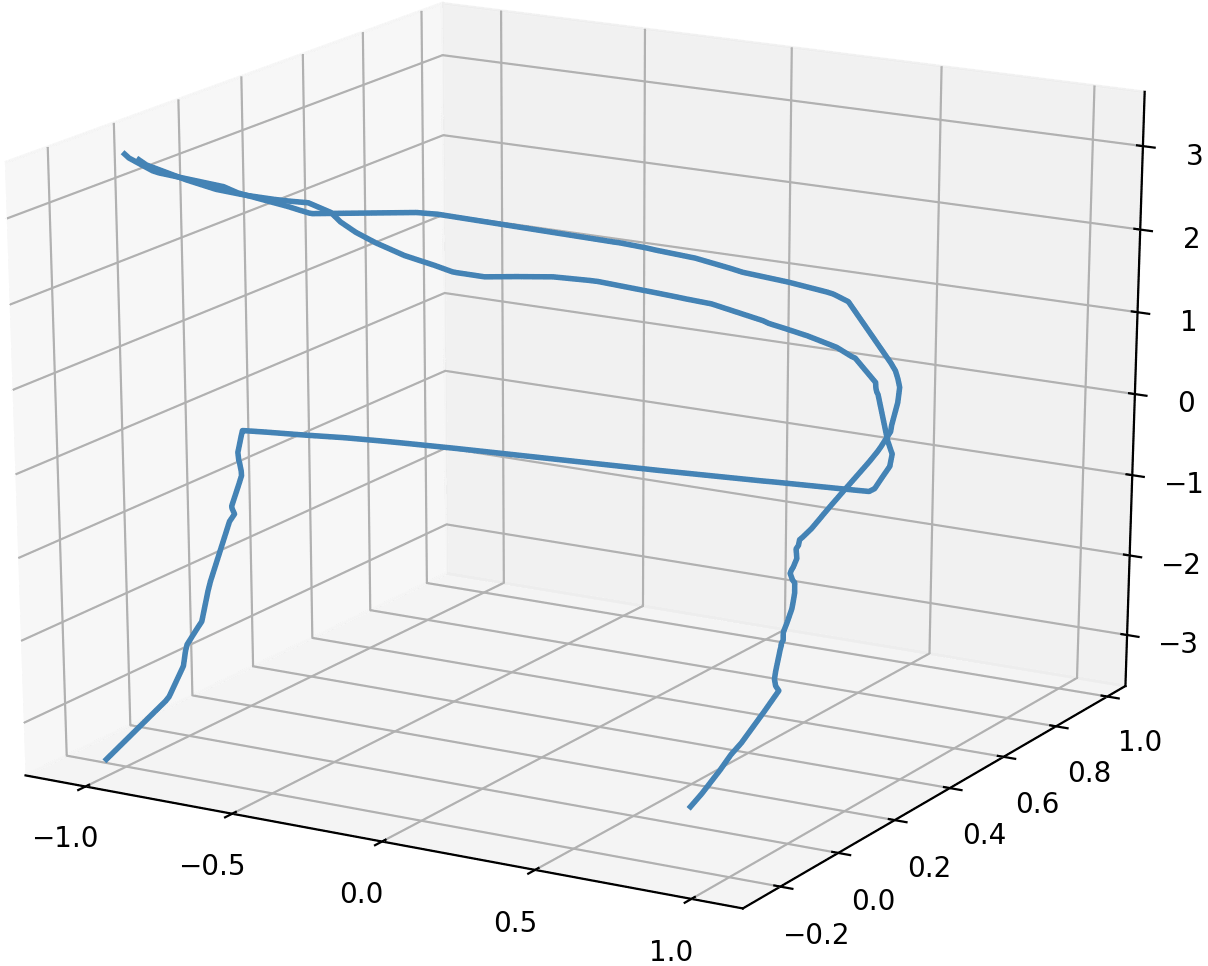}
    \scriptsize
    \centering
    \vskip -10pt
    $x_1$ \hspace{70pt} $x_2$    
    \caption{}
    \label{fig:discrete-fn-iwae-inline}
    \end{subfigure}
    \begin{subfigure}[t]{0.288\textwidth}
    \includegraphics[width=1.0\textwidth]{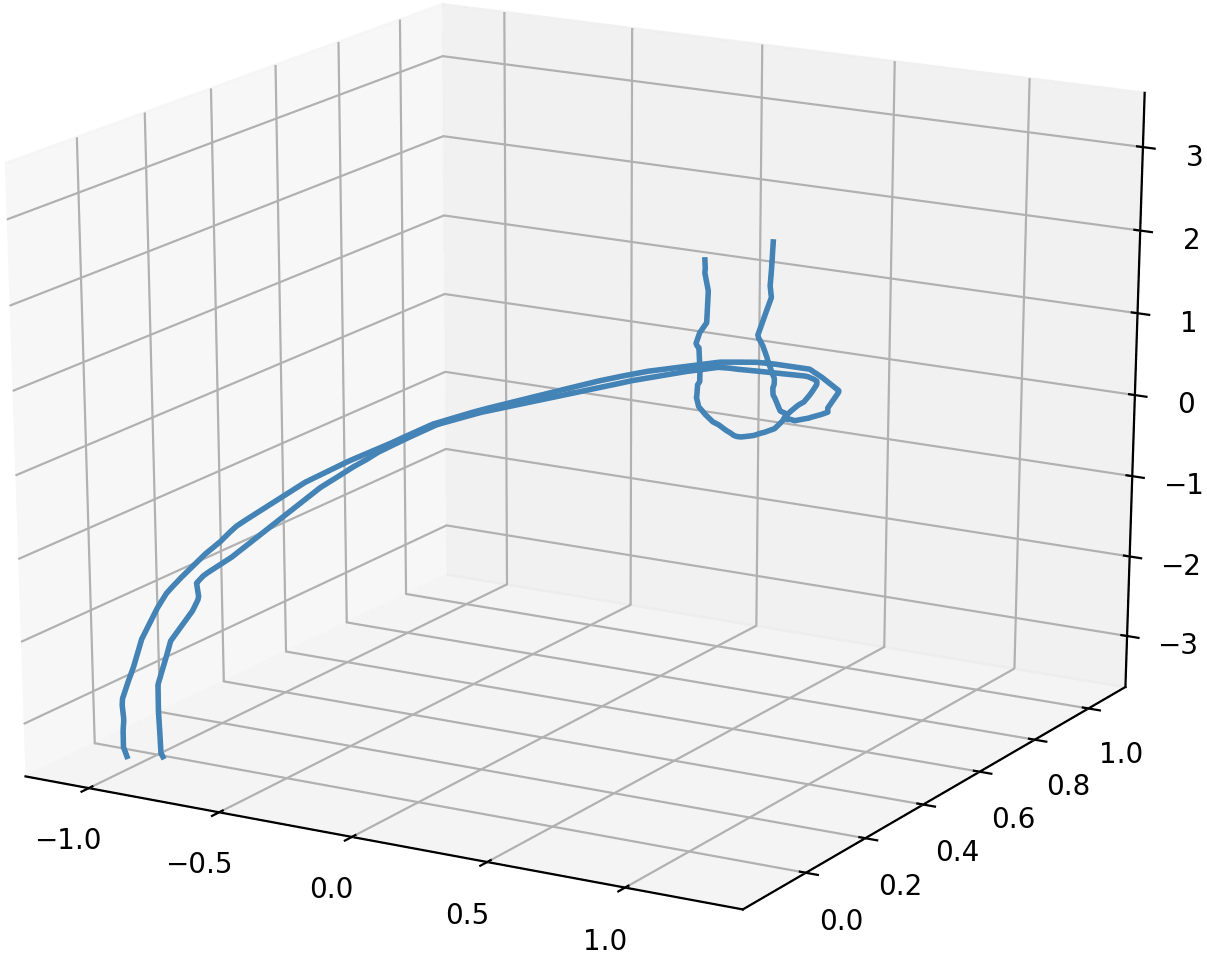}
    \scriptsize
    \centering
    \vskip -10pt
    $x_1$ \hspace{70pt} $x_2$    
    \caption{}
    \label{fig:discrete-fn-vae-inline}
    \end{subfigure}
    \begin{subfigure}[t]{0.01\textwidth}
    \centering
    \footnotesize
    \vspace{-60pt}$z$
    \end{subfigure}
    
    \vskip 0.5em
          
    \begin{subfigure}[t]{0.05\textwidth}
    \centering
    \small
    \rotatebox[origin=l]{90}{\hspace{35pt}\textbf{$\bm{p_\theta(x | y = 1)}$ \phantom{AAAAAA} $\bm{p_\theta(x | y = 0)}$}}
    \end{subfigure}
    \begin{subfigure}[t]{0.01\textwidth}
    \centering
    \footnotesize
    \rotatebox[origin=l]{90}{\hspace{57pt}\textbf{$x_2$ \phantom{AAAAAAAAAAAAA} $x_2$}}
    \end{subfigure}
    \begin{subfigure}[t]{0.286\textwidth}
    \includegraphics[width=1.0\textwidth]{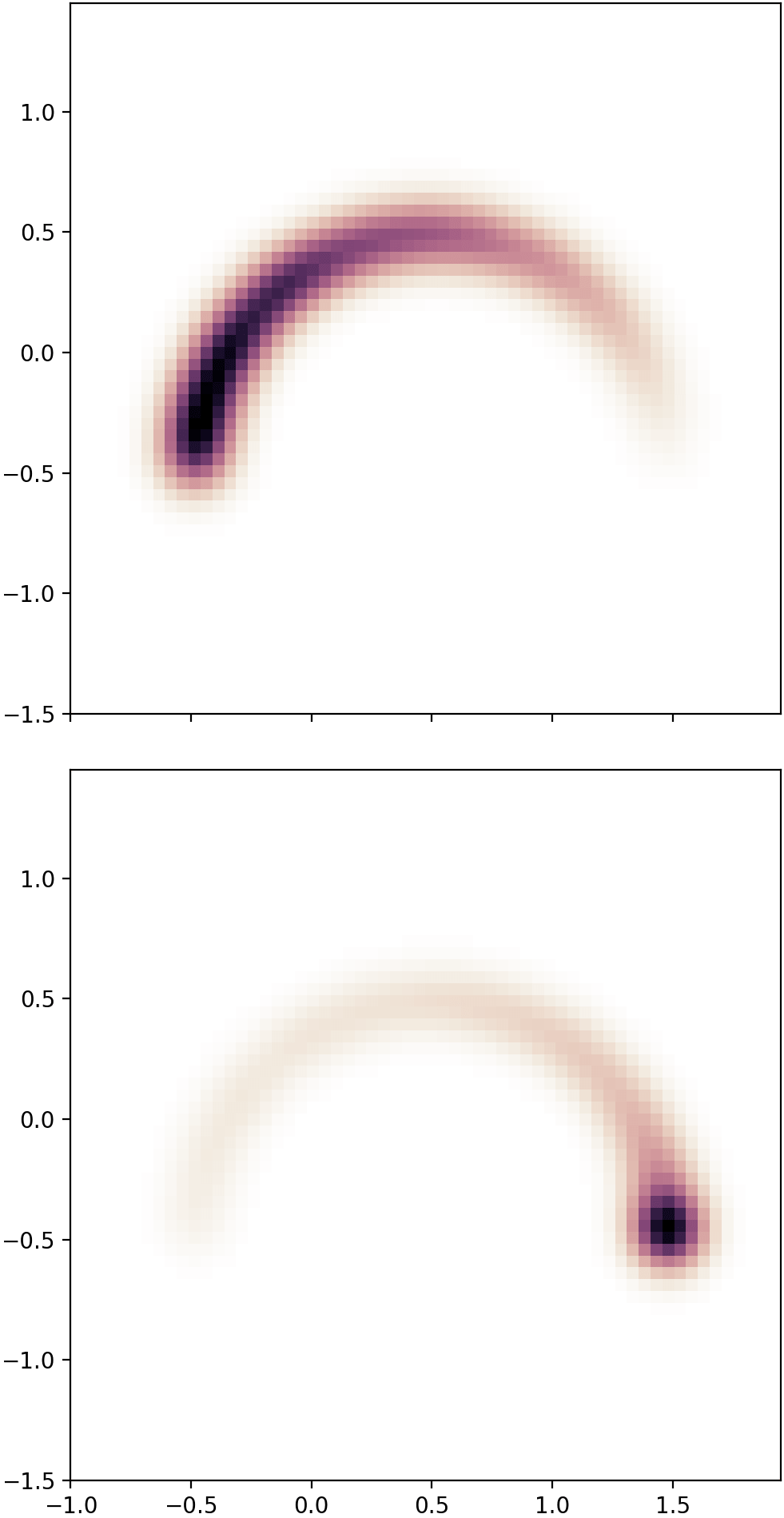}
    \scriptsize
    \centering
    \vskip -3pt
    $x_1$
    \caption{}
    \label{fig:discrete-cond-true-inline}
    \end{subfigure}
    \begin{subfigure}[t]{0.2705\textwidth}
    \includegraphics[width=1.0\textwidth]{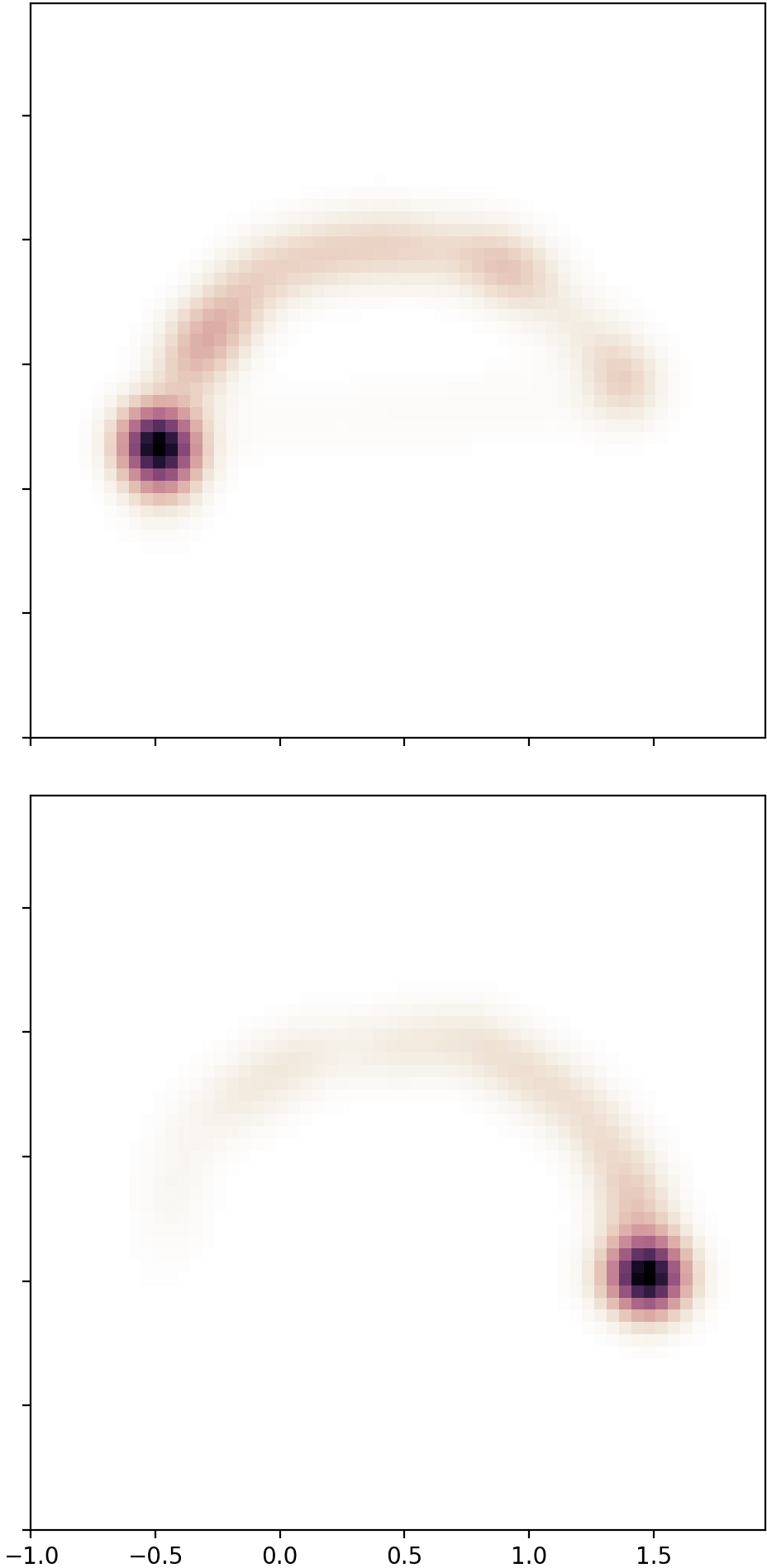}
    \scriptsize
    \centering
    \vskip -3pt
    $x_1$    
    \caption{}
    \label{fig:discrete-cond-iwae-inline}
    \end{subfigure}
    \begin{subfigure}[t]{0.318\textwidth}
    \includegraphics[width=1.0\textwidth]{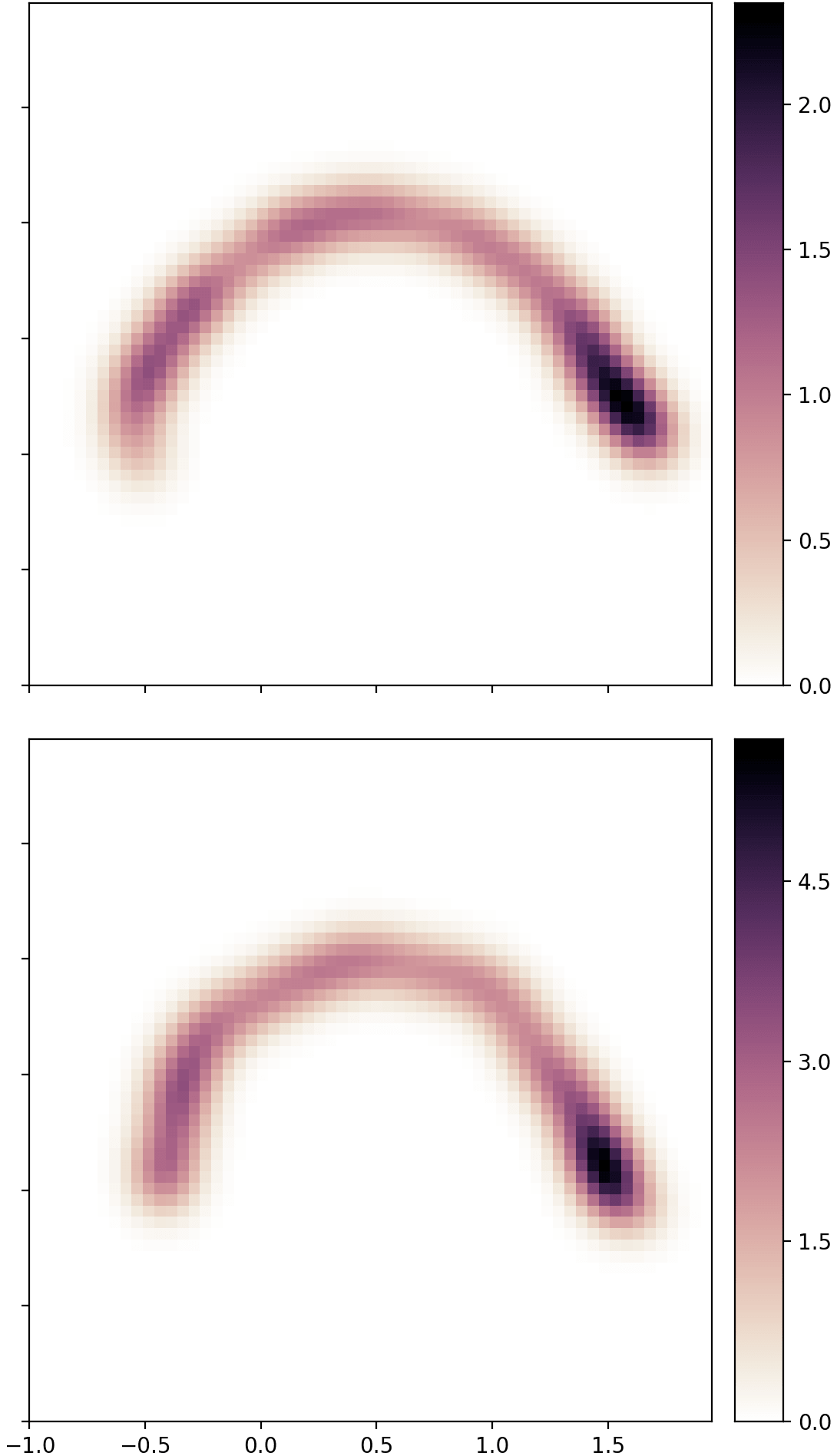}
    \scriptsize
    \centering
    \vskip -3pt
    $x_1$    
    \caption{}
    \label{fig:discrete-cond-vae-inline}
    \end{subfigure}
    \vskip 0.5em
        
    \begin{subfigure}[t]{0.05\textwidth}
    \centering
    \small
    \rotatebox[origin=l]{90}{\hspace{45pt}\textbf{$\bm{p_\theta(x)}$}}
    \end{subfigure}
    \begin{subfigure}[t]{0.01\textwidth}
    \centering
    \footnotesize
    \rotatebox[origin=l]{90}{\hspace{55pt}\textbf{$x_2$}}
    \end{subfigure}
    \begin{subfigure}[t]{0.2875\textwidth}
    \includegraphics[width=1.0\textwidth]{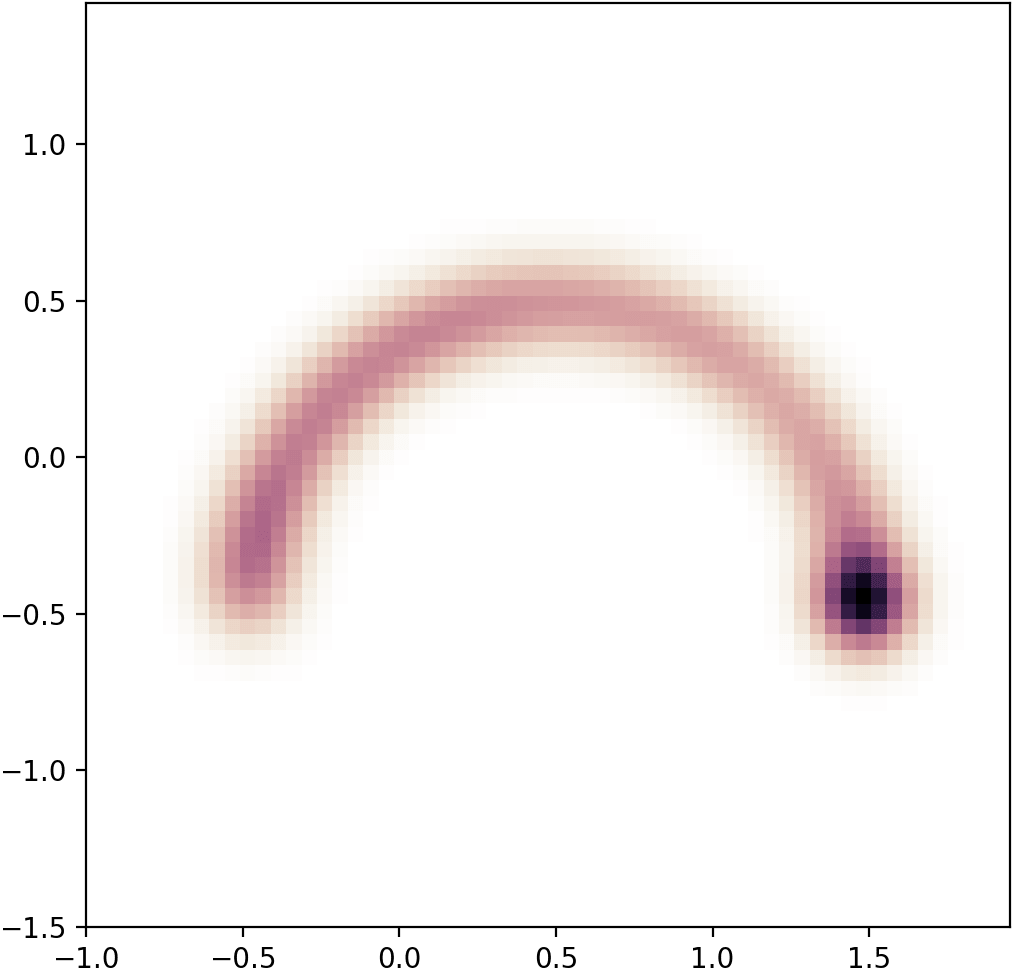}
    \scriptsize
    \centering
    \vskip -3pt
    $x_1$    
    \caption{}
    \label{fig:discrete-px-inline}
    \end{subfigure}
    \begin{subfigure}[t]{0.271\textwidth}
    \includegraphics[width=1.0\textwidth]{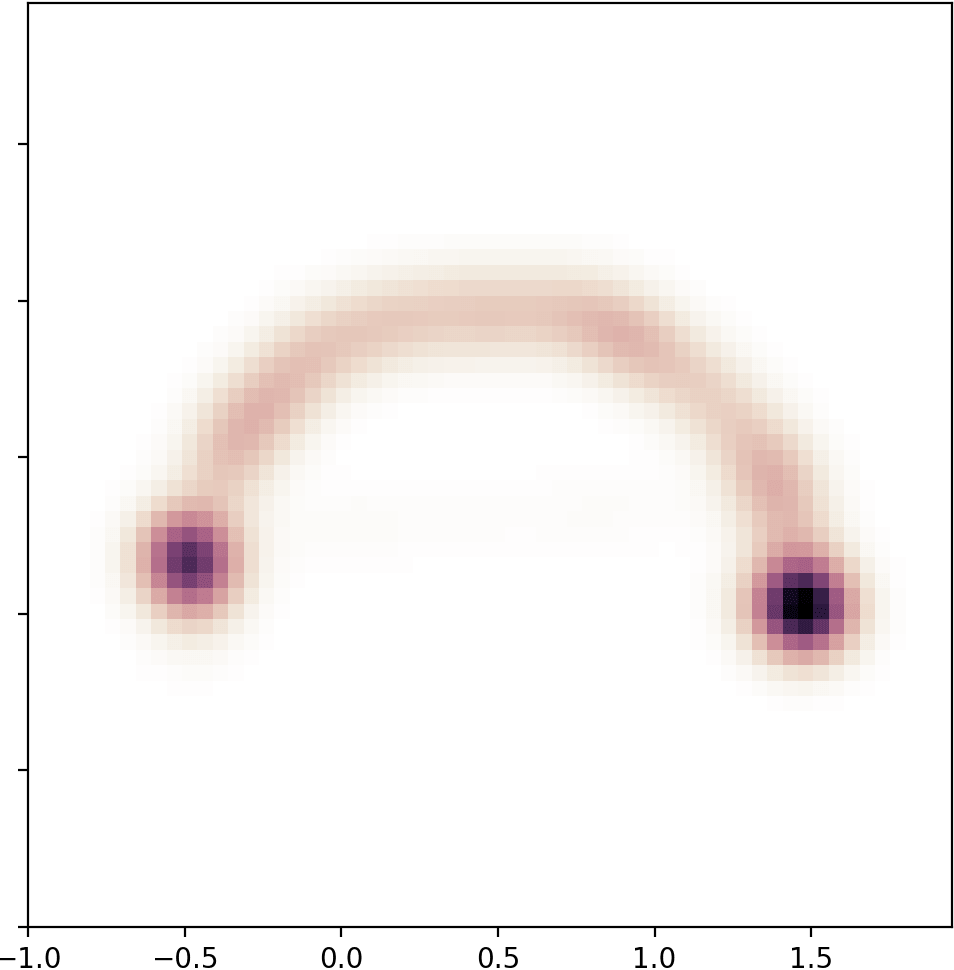}
    \scriptsize
    \centering
    \vskip -3pt    
    $x_1$       
    \caption{}
    \label{fig:discrete-px-iwae-inline}
    \end{subfigure}
    \begin{subfigure}[t]{0.317\textwidth}
    \includegraphics[width=1.0\textwidth]{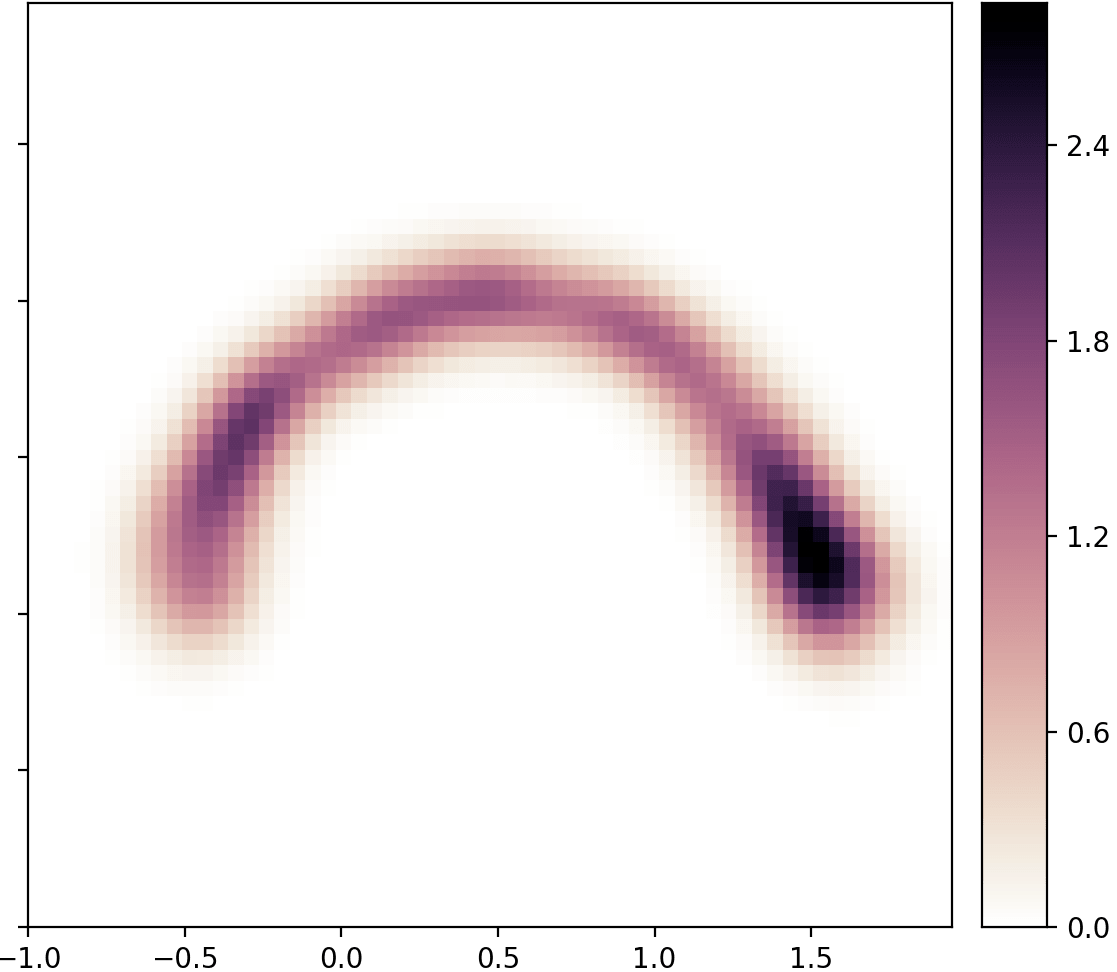}
    \scriptsize
    \centering
    \vskip -3pt    
    $x_1$       
    \caption{}
    \label{fig:discrete-px-vae-inline}
    \end{subfigure}
    
    \caption{\small \textbf{Semi-supervised VAE exhibits ``functional collapse'' on the Discrete Semi-Circle Example.} Comparison of VAE and IWAE on a semi-supervised example 
    (left column: true, middle column: IWAE, right-column: VAE). 
    The ground-truth likelihood function $f_{\theta_\text{GT}}(z, y)$ shows two distinct functions, one for each $y=0,1$. 
    The VAE's $f_\theta(z, y)$ is over-regularized by an MFG variational family and learns two nearly identical functions (``functional collapse''). 
    The IWAE's $f_\theta(z, y)$ function is un-regularized and learns two distinct but overfitted functions.
    As a result, both the VAE and IWAE fail to learn $p(x)$ and $p(x | y)$. 
    The VAE learns $p(x)$ better while IWAE learns $p(x|y)$ better.}
    \label{fig:inline-semisup}
\end{figure*}

Since model non-identifiability and the inductive bias of the variational family both present challenges
for learning disentangled representations (Section \ref{sec:bad-latent-space}), 
one may hope that with additional information about the true latent space, one can recover it.
One way of doing this is using semi-supervision: 
by incorporating partial labels on the latent space, 
we can guide some latent dimensions towards their desired meaning~\citep{Kingma2014,Siddharth2017}.
However, as we show here, when this additional information conflicts with the inductive bias of the variational family,
it may be ignored. 

In semi-supervised VAEs, we partition the latent space into two parts, $y$ and $z$,
and assume that for some portion of the data we observe labels for $y$ (``M2 model'' proposed by ~\cite{Kingma2014}).
These VAEs have been used for tasks such as 
generating synthetic cohorts (sampling from $p(x | y = 1)$, $p(x | y = 0)$ respectively),
and for generating counterfactuals (generating a synthetic data $x'$ with label $y=0$ that is similar to a real observation $x$ with $y=1$).
For these tasks, it is important to accurately model the data conditional $p(x | y)$.
Surprisingly, as we show here, for certain types of data-sets we observe a phenomenon we call ``functional collapse": 
the model ignores the partial labels given by the semi-supervision, 
causing the learned conditionals to collapse onto a single distribution,
$p_\theta(x) \approx p_\theta(x | y=0) \approx p_\theta(x | y=1)$.
While one might expect that increasing the flexibility of the variational family will fix this issue, 
doing so actually causes the model to overfit to the few partial labels, causing it to perform no better than an MFG-VAE.
In this section, we characterize \emph{when} functional collapse occurs,
as well as \textit{how} it impacts the task of generating realistic counterfactuals. 

\paragraph{Conditions under which MFG-VAEs trade-off between generating realistic data and realistic counterfactuals.}
Let $x | y = 0$ represent cohort 0 and let $x | y = 1$ represent cohort 1. 
We identify two conditions under which functional collapse occur: 
\begin{itemize}
\item[] \textbf{Condition 1:} A large portion of the $x$'s in cohort 0 lies on the same manifold as the $x$'s in cohort 1 (or vice versa).
\item[] \textbf{Condition 2:} When the two cohorts lie on the same manifold, they are distributed differently. 
\end{itemize}

\paragraph{Intuition behind conditions.}
When the above conditions hold, the ground-truth model's posterior for the unlabeled data
$p_{\theta_\text{GT}}(z | x) = \int_y p_{\theta_\text{GT}}(z, y | x) dy$ will be multi-modal,
since for each value of $y$ there are a number of different likely $z$'s, each from a different cohort. 
As such, using an MFG variational family for the unlabeled portion of
the semi-supervised objective ($\mathcal{U}$ in Equation \ref{eq:ss-m2-objective-pre}) 
will encourage inference to either compromise learning the data distribution 
in order to better approximate the posterior, 
or to learn the data distribution well but approximate the posterior poorly, depending on our prioritization of the two objectives (indicated by our the choice of the hyperparameter $\gamma$ in Equation \ref{eq:ss-m2-objective-pre}). 
In the first case, data generation will be compromised because the model will overfit to the partial labels;
however, the model will at least be able to generate from two distinct data conditionals $p(x | y = 0)$ and $p(x | y = 1)$.
In contrast, in the latter case, the learned model will be able to generate realistic  
data but not realistic cohorts since the model will over over-regularize
the likelihood function $f_\theta(z, y)$ to collapse to the same function for all values of $y$ (functional collapse).
thereby collapsing the data conditionals $p_\theta(x | y) \approx p(x)$. 
That is, $p(x|y)$ will generate identical-looking cohort regardless of our choice of $y$.

\paragraph{Proposed benchmark on MFG-VAEs exhibit functional collapse.} 
We empirically demonstrate the trade-off between realistic data and realistic counterfactuals generation on the ``Discrete Semi-Circle'' Example in Figure \ref{fig:inline-semisup} (full details in Appendix \ref{sec:discrete-ss-example}).
In this benchmark, the ground-truth functions $f_\theta(z, y = 0)$ and  $f_\theta(z, y = 0)$ both lie on the same manifold (a semi-circle), but each has different slopes. 
As such, $p(x | y = 0)$ and $p(x | y = 1)$ represent two different distributions on the same manifold. 
In this data-set, we show that the MFG-VAE is able to learn the data manifold and distribution well. 
However, the ELBO prefers a model with a simple posterior (in comparison to the true posterior),
causing the learned $f_\theta(z, y)$ to collapse to the same function for all values of $y$ (Figure \ref{fig:discrete-fn-vae-inline}).
As a result, $p_\theta(x | y) \approx p_\theta(x)$ under the learned model.
As expected, functional collapse occurs when training with LIN as well.
In contrast, IWAE is able to learn two distinct data conditionals $p_\theta(x | y=0), p_\theta(x | y=1)$, but it does so at a cost.
\emph{IWAE does not regularize the generative model, and thus overfits to the few partial labels} (notice in Figure \ref{fig:discrete-fn-iwae-inline}, $f_\theta(z, y)$ juts out from one side to the other instantaneously). Lastly, IWAE learns $p(x)$ considerably worse than the VAE, while learning $p(x|y)$ significantly better. In Appendix \ref{sec:semi-sup-quant}, we provide a full quantitative and qualitative analysis of the above on synthetic and real data-sets. 

When $y$ is discrete, we can lower-bound the number of modes of $p_\theta(z | x)$ by the number of distinct values of $y$,
and choose a variational family that is sufficiently expressive.
But when $y$ is continuous, we cannot easily bound the complexity of $p_\theta(z | x)$. 
In this case, an \emph{additional} pathology is introduced by the discriminator 
$q_\phi(y | x)$ (Equation \ref{eq:ss-m2-objective-pre}),
whereby predictive accuracy increases at the cost of collapsing $p_\theta(x | y)$ towards $p_\theta(x)$.
We present the full analysis of this failure using a continuous version of the benchmark proposed here in Appendix \ref{sec:semi-sup-quant}.

\paragraph{Naive adaptation of IWAE for semi-supervision introduces new pathologies.}
The discriminator ensures that the approximate posterior of $y|x$ is predictive,
as it would be under the true posterior.
The approximate posterior implied by the IWAE objective, however,
is not the one given by the IWAE encoder $q_\phi(z | x)$,
and has a rather complex and uninterpretable form, $q_\text{IW}(z | x)$~\citep{cremer_reinterpreting_2017}.
Incorporating the approximate posterior of $y|x$ induced by the IWAE objective into
the semi-supervised objective would require an intractable marginalization of $q_\text{IW}(z | x)$ over $z$. 
Although some work proposes to use with lower bounds \citep{Siddharth2017}
on $q_\phi(z, y | x)$ marginalized over $z$, 
the discriminator in these cases is nonetheless different from the approximate posterior induced by the IWAE objective.  
This may be an additional factor of the poor performance of IWAE in the semi-supervised setting with continuous $y$.

\paragraph{Proposed benchmarks typify classes of real data.}
In real data-sets, we often have samples from multiple cohorts of the population. 
General characteristics of the population hold for all cohorts, but each cohort may have different distributions of
these characteristics~\citep{klys2018learning}. 
Formally, this means that all of the cohorts lie on a shared manifold but each has a different distribution on that manifold
(that is $x | y  = 0$ and $x | y  = 1$ lie on the same manifold but $p(x | y = 0) \neq p(x | y = 1)$).
In order to check whether a data-set contains this structure, one can run a simple check:
we train a classifier to predict $y | x$. 
Difficulty in training a good classifier indicates that $x | y=0$ and $x | y=1$ lie roughly on the same manifold, 
and are difficult to distinguish, leading to the aforementioned failure mode. 
More formally, we expect this failure mode to occur when the classifier's average entropy $\mathbb{E}_{p(x, y)} \left[ \mathbb{H}[p(y | x) \right]$ is high.
In Appendix \ref{sec:semi-sup-quant}, we selected 3 UCI data-sets --
Diabetic Retinopathy Debrecen~\citep{UCIDiabetic}, Contraceptive Method Choice~\citep{Keel,UCI}
and the Titanic~\citep{Keel,Titanic} data-sets -- that exhibit high class overlap,
and show that semi-supervised VAE training struggles to accurately recover the data conditionals $p(x | y)$.

\section{When VAEs misestimate the decomposition between signal and noise} \label{sec:misestimate-signal-vs-noise}

Whereas so far, we were concerned with tasks requiring accurate estimation of $p(x)$
and of the ground-truth latent codes,
in this Section, we are concerned with tasks that require an accurate decomposition between
$f_\theta(z)$ and $\epsilon$ (or ``signal'' and ``noise''): learning compressed representations,
and defenses against adversarial perturbations. 

\paragraph{Conditions under which VAEs misestimate decomposition between signal and noise.}
There are two mechanisms that may cause traditional inference to misestimate the decomposition between signal and noise,
both occurring under the same conditions identified in Section \ref{sec:misestimate-px-conditions}:
\begin{itemize}
\item \textbf{Case A:} In practice, it is common to select the observation noise variance $\sigma^2_\epsilon$
that maximizes the ELBO (since selecting it using log-likelihood directly is intractable).
Here, we show that learning $\sigma^2_\epsilon$, 
either via hyper-parameter search or via direct optimization of the ELBO, can be biased;
for an observation set of size $N$, we have that,
\begin{align} \small
\begin{split}
\underset{{\sigma^{(d)}_\epsilon}^2}{\mathrm{argmin}} -\mathrm{ELBO}\left(\theta, \phi, {\sigma^{(d)}_\epsilon}^2\right) 
= \frac{1}{N} \sum\limits_{n=1}^N \mathbb{E}_{q_\phi(z | x_n)} \left\lbrack \left(x_n^{(d)} - f_\theta(z)^{(d)}\right)^2 \right\rbrack,
\label{eq:elbo-optima-obs-noise-var}
\end{split}
\end{align} 
where $d$ is the dimension (derivation in Appendix \ref{sec:path_2}). 
Equation \ref{eq:elbo-optima-obs-noise-var} shows that the variance $\sigma^2_\epsilon$ that minimizes the negative ELBO depends on the approximate posterior $q_\phi(z | x)$, and thus when the learned generative model does not capture $p(x)$ well (due to the conditions from Section \ref{sec:misestimate-px-conditions}), the learned $\sigma^2_\epsilon$ may be biased.
\item \textbf{Case B:} In practice, if the task does not require a specific latent space dimensionality, $K$, 
one chooses a $K$ that maximizes the ELBO or log-likelihood. 
We show that on data generated from a model for which conditions (1) and (2) 
from Section \ref{sec:misestimate-px-conditions} are satisfied,
using a model with a larger $K$ and a smaller $\sigma^2_\epsilon$ than those used by the ground-truth model 
no longer satisfied condition (2).
That is, we can now capture the data distribution with a simpler function $f_\theta(z)$ and hence get simpler posteriors. 
Thus, increasing $K$ and decreasing $\sigma^2_\epsilon$ alleviates the need to compromise the generative model in order 
to improve the inference model and leads to a better approximation of $p(x)$. 
\end{itemize}


\paragraph{Proposed benchmarks on which MFG-VAEs misestimate signal-to-noise decomposition.} 
For Case A, we propose the ``Spiral Dots" Example (details in Appendix \ref{sec:spiral-dots-example})
and perform two experiments.
In the first, we fix the noise variance ground-truth ($\sigma_\epsilon^2= 0.01$),
we initialize and train $\theta, \phi$ following the experimental setup from Section \ref{sec:methodology}, and finally,
we recompute $\sigma_\epsilon^2$ that maximizes the ELBO for the learned $\theta, \phi$. 
In the second experiment, we do the same, but train the ELBO jointly over 
$\sigma_\epsilon^2$, $\theta$ and $\phi$.
Using these two methods of learning the noise, we get $0.014\pm 0.001$ and $0.020\pm0.003$, respectively.
The ELBO therefore over-estimates the noise variance by $50\%$ and $100\%$, respectively.
We will argue later that this behavior is undesirable for tasks such as defenses against
adversarial perturbations. 

For Case B, we propose two benchmarks based on the ``Figure-8'' and ``Clusters'' Examples 
proposed in Section \ref{sec:ds-construction} to show how hyper-parameter selection on 
$\sigma^2_\epsilon$ and $K$ may prevent learning compressed representations.
Specifically, we apply a linear transform to the output of the function $A \cdot f_\theta(z)$ to embed it in 5D
\footnote{We use the following linear transformation: $A = \begin{psmallmatrix} 1.0 & 0.0 & 0.5 & 0.2 & -0.8 \\ 0.0 & 1.0 & -0.5 & 0.3 & -0.1 \\ \end{psmallmatrix}$}. 
We then train a VAE with latent dimensionality $K \in \{1, 2, 3 \}$, with 
$K = 1$ corresponding to the ground-truth model.  
Training for $K=1$ is initialized at the ground-truth model (GT), and for $K>2$ we initialize randomly; 
in each case we optimize $\sigma^2_\epsilon$ per-dimension to minimize the negative $\mathrm{ELBO}$. 
We find that the ELBO prefers models with larger $K$ over the ground-truth model ($K=1$),
and with smaller $\sigma^2_\epsilon$ (Table \ref{tab:model-mismatch}). 
We confirm that the posteriors become simpler
as $K$ increases, lessening the incentive for the VAE to compromise on approximating $p(x)$ (Figure \ref{fig:fig-8-mismatch-5d}). Lastly, we confirm that while LIN also shows a preference for higher $K$'s, IWAE does not (Table \ref{tab:model-mismatch}).
In preferring larger values of $K$, MFG-VAE inference therefore hinders learning compressed representations.
Furthermore, as $K$ increases, we find that the learned latent codes are, on average, less informative
 (Table \ref{tab:model-mismatch} shows that the average mutual information between 
each latent dimension and the inputs decreases).
To accommodate larger values of $K$, MFG-VAE also selects smaller values of $\sigma^2_\epsilon$.
This, again, hinders performance on tasks such as defenses against adversarial perturbations.

\begin{table}[!t]
\centering
\scriptsize

\setlength{\tabcolsep}{2.75pt}

\begin{tabularx}{\columnwidth}{@{}cccc|ccc@{}}
\textbf{VAE}                                              & \multicolumn{3}{c}{Figure-8 Example}                           & \multicolumn{3}{c}{Clusters Example}                        \\ \midrule
\multicolumn{1}{l|}{}                                & $K=1$ (GT) & $K=2$              & $K=3$              & $K=1$ (GT) & $K=2$             & $K=3$             \\ \midrule
\multicolumn{1}{l|}{Test $-\mathrm{ELBO}$}           & $-0.127 \pm 0.057$   & $-0.260 \pm 0.040$ & $\bm{-0.234 \pm 0.050}$ & $4.433 \pm 0.049$    & $4.385 \pm 0.034$ & $\bm{4.377 \pm 0.024}$ \\
\multicolumn{1}{l|}{Test $\mathrm{avg}_i I(x; z_i)$} & $\bm{2.419 \pm 0.027}$    & $1.816 \pm 0.037$  & $1.296 \pm 0.064$  & $\bm{1.530 \pm 0.011}$    & $1.425 \pm 0.019$ & $1.077 \pm 0.105$ \\ \bottomrule
\end{tabularx}

\vskip 2.0em

\begin{tabularx}{\columnwidth}{@{}cccc|ccc@{}}
\textbf{IWAE}                                              & \multicolumn{3}{c}{Figure-8 Example}                           & \multicolumn{3}{c}{Clusters Example}                        \\ \midrule
\multicolumn{1}{l|}{}                                & $K=1$ (GT) & $K=2$              & $K=3$              & $K=1$ (GT) & $K=2$             & $K=3$             \\ \midrule
\multicolumn{1}{l|}{Test $-\mathrm{ELBO}$}           & $\bm{-0.388 \pm 0.044}$ &            $-0.364 \pm 0.051$ &            $-0.351 \pm 0.045$        & $\bm{4.287 \pm 0.047}$ &             $4.298 \pm 0.054$ &             $4.295 \pm 0.049$ \\
\multicolumn{1}{l|}{Test $\mathrm{avg}_i I(x; z_i)$} & $\bm{2.159 \pm 0.088}$ &             $1.910 \pm 0.035$ &             $1.605 \pm 0.087$        & $1.269 \pm 0.052$ &             $\bm{1.321 \pm 0.033}$ &             $1.135 \pm 0.110$ \\ \bottomrule
\end{tabularx}

\vskip 1.0em

\caption{\textbf{The ELBO prefers learning models with more latent dimensions (and smaller $\bm{\sigma^2_\epsilon}$)
over the ground-truth (GT) model ($\bm{k = 1}$).}
Although the models preferred by the ELBO have higher mutual information between the data and learned $z$'s,
the average mutual information between dimensions of $z$ and the data decreases since with more latent dimensions,
the latent space learns $\epsilon$. In contrast, IWAE does not suffer from this pathology.
LIN was not included here because it was not able to minimize the negative ELBO as well as the VAE on these data-sets.
}
\label{tab:model-mismatch}
\end{table}

\paragraph{Intuition behind ELBO's preference for a larger $K$ and smaller $\sigma^2_\epsilon$.} 
So on these data-sets, why does the ELBO prefer models that do not compress the data and are on average less informative?
When increasing the latent dimensionality $K$ and decreasing the observation noise variance $\sigma^2_\epsilon$, 
condition (2) from Section \ref{sec:misestimate-px-conditions} no longer holds, 
since now there exist alternative generative models that explain $p(x)$ well but have simpler posteriors. 
This happens for two different reasons on the two archetypical pathological data-sets 
we identify in Section \ref{sec:ds-construction}. 
On ``Figure-8''-like data, the high $\sigma^2_\epsilon$ causes the posterior for the ground-truth model to be multi-modal; 
an observation $x$ near the crossing of the Figure-8 could have been 
generated by $z$'s from very different regions in the 1-D latent space (Figure \ref{fig:vae-fig-8-px}). 
On the other hand, for a model that captures $p(x)$ equally well but with a smaller $\sigma^2_\epsilon$, 
the posterior will be less multi-modal (the inverse mapping from x to z will be less ill-posed) 
and thus be preferred by the ELBO. 
As the latent dimension $K$ increases, the latent space has more capacity and increasingly models both $f_\theta(z)$ 
as well as observation noise (as the estimated $\sigma^2_\epsilon$ decreases). 
We observe exactly this phenomenon empirically in Figure \ref{fig:fig-8-mismatch-5d}. 
On the other hand, to generate the "Clusters"-like data with a 1D latent space, 
$f_\theta$ contracts regions of the latent space -- mapping many different $z$'s to nearby $x$'s (Figure \ref{fig:vae-clusters-f}). 
In this case, the posteriors have high skew and bi-modality (see Figure \ref{fig:vae-clusters-post-true}). 
By increasing $K$ and decreasing $\sigma^2_\epsilon$, 
one can learn an $f_\theta(z)$ that becomes more distance preserving. 
In this case, the posteriors will be unimodal and without skew (see Figure \ref{fig:clusters-mismatch-5d}), 
i.e. easily approximated with an MFG.

\paragraph{Effect on successful defenses against adversarial perturbations.}
Manifold-based defenses against adversarial attacks (e.g., \citet{Jalal2017,Meng2017,Samangouei2018,Hwang2019,jang_need_2020}) require both accurate estimates of the noise as well as of the intrinsic dimensionality of the data (i.e. the ground-truth latent dimensionality); however, as we show here, since the ELBO is unable to identify the correct $\sigma^2_\epsilon$ and correct latent dimensionality $K$, and incorrect compression may further result in incorrect noise estimates due to incorrect ground-truth latent space dimensionality. See Appendix~\ref{sec:adversarial} for full analysis.

\paragraph{Proposed benchmarks typify classes of real data.}
Since the failures to accurately decompose a given data distribution into signal and noise 
occurs when the ELBO can explain the data better using a non ground-truth setting of $\sigma^2_\epsilon$ and $K$,
the data-sets on which we expect this failure to occur are the same ones described in Section \ref{sec:ds-construction},
i.e. data-sets on which MFG-VAEs misestimate $p(x)$.

\section{Implications for Practice} \label{sec:discussion}

In this paper, we present two contributions that advance our understanding of VAEs: (1) we describe \emph{when} pathologies occur and introduced benchmarks to expose them;  (2) we describe the \emph{impact} of these pathologies on common downstream tasks.  Now, we connect these insights with implications for using VAEs in practice.
We make three simple guidelines for practitioners when using MFG-VAEs 
in order to avoid the pathologies described in this work.
While the guidelines are simple, we provide empirical and formal rationales for \textit{why} these practices matter (and we note that these best practices are not always used -- 
e.g., it is common to set $\sigma^2_\epsilon = 1$ without examining the data-set,
or to learn it by optimizing jointly with model parameters~\citep{Lucas2019}).
Finally, as others have noted~\citep{finke_importance-weighted_2019,agrawal2020advances}, 
a single methodological innovation is unlikely to fix all issues -- each innovation makes a specific tradeoff; 
thus, improvements will need to be task/data specific.
Our guidelines are:
\begin{enumerate}
\item On semi-supervised tasks, before selecting a variational class, 
check to see if classes are not easily separable. 
If they are not (i.e. if a simple neural network predicts $y | x$ with low balanced-accuracy),
use a rich variational family for $q_\phi(z | x)$ 
in the unlabeled data objective ($\mathcal{U}$ in Equation \ref{eq:ss-m2-objective-pre})
(and use an MFG family otherwise).
\item Set the noise variance $\sigma^2_\epsilon$ using domain expertise, 
or by hyper-parameter selection with an unbiased low-variance log-likelihood estimator.
\item Investigate the topology of the data (e.g., using topological data analysis, dimensionality reduction) 
before choosing a variational family.
If the data lies on a manifold in distorted Euclidean space (e.g., ``Figure-8'' Example),
or if the data is clustered (e.g., ``Clusters'' Example), use a rich variational family if you need to learn a very low-dimensional latent space.
\item If performing some topological analysis is intractable, determine if a richer variational family is needed by separately checking if (i) the manifold or (ii) the density was misestimated by an MFG-VAE. For (i), given a learned model, find the $f_\theta(z)$ is closest to each $x$, compute the residual $R = x - f_\theta(z)$, and use a statistical test for normality on $R$. If $p(R)$ is not Gaussian, the manifold may have been misestimated. Checking for (ii) is more difficult: one can do so by checking whether log-likelihood increases significantly when using a rich variational family. These tests are, of course, not perfect, since there are many factors independent of the ELBO's global optima that may lead to poor performance~\citep{agrawal2020advances}. 
\item Whenever using a rich variational family, apply regularization to the decoder network weights to prevent overfitting.
\end{enumerate}

\section{Conclusion}
In this work, we characterize conditions under which global optima of the MFG-VAE objective exhibit pathologies and connect these failure modes to undesirable effects on specific downstream tasks. We find that while performing inference with richer variational families (which increases training time) can alleviate these issue on unsupervised tasks, the use of complex variational families introduce unexpected new pathologies in semi-supervised settings. Finally, we provide a set of synthetic data-sets on which MFG-VAE exhibits pathologies. We hope that these examples contribute to a benchmarking data-set of ``edge-cases" to test future VAE models and inference methods. 


\acks{WP acknowledges support from the Harvard Institute of Applied Computational Sciences. 
YY acknowledges support from NIH 5T32LM012411-04 and from the IBM Faculty Research Award.}

\vskip 0.2in
\bibliography{references}


\clearpage

\addcontentsline{toc}{section}{Appendix} 
\part{Appendix} 
\parttoc 

\appendix

\section{Posterior Collapse as a Global Optima of the ELBO} \label{sec:posterior_collapse}

Posterior collapse occurs when the posteriors under both the generative model and approximate 
posterior learned by the inference model are equal to the prior~\citep{he_lagging_2019};
that is, $p(z) = p_\theta(z | x) = q_\phi(z | x)$.
In this regime, posterior collapse occurs as a global optima of the ELBO,
and surprisingly, the model is still able to generate samples from 
$p(x)$,
despite the latent codes retaining no information about the data (e.g.~\cite{chen_variational_2017,zhao_towards_2017}).
This is often attributed to the fact the generative model is very powerful and is therefore able 
to maximize the log data marginal likelihood without the help of the auxiliary latent codes~\citep{Oord2017}.

However, as we briefly show here, posterior collapse for MFG-VAEs only occurs when $p(x)$ is Gaussian, 
so we do not consider it in the remainder of the paper. 
Specifically, for the prior to equal both the true and approximate posteriors, the decoder must ignore $z$. 
Let $Y = f_\theta(z)$ be a random variable. 
$X$ is thus a sum of two independent random variables: $X = Y + \epsilon$. 
If the decoder ignores $z$, then $Y$ must be constant; thus, the variance of $X$ and $\epsilon$ must be equal.
Since in MFG-VAEs $\epsilon$ is Gaussian, $p(x)$ must also be Gaussian. 
Furthermore, in this case, $p(x)$ will still be learned perfectly.

\section{The Semi-Supervised VAE Training Objective}\label{sec:semisup_det}
We extend VAE model and inference to incorporate partial labels, allowing for some supervision of the latent space dimensions. For this, we use the semi-supervised model first introduced by \cite{Kingma2014} as the ``M2 model''.
We assume the following generative process:
\begin{align}
z &\sim \mathcal{N}(0, I), \quad \epsilon \sim \mathcal{N}(0, \sigma^2_\epsilon \cdot I), \quad y \sim p(y), \quad x | z, y = f_\theta(z, y) + \epsilon
\end{align}
where $y$ is observed only a portion of the time. Inference objective for this model can be written as a sum of two objectives, 
a lower bound for the likelihood of $M$ labeled observations and a lower bound for the likelihood for $N$ unlabeled observations:
\begin{align}
\mathcal{J}(\theta, \phi) &= \sum\limits_{n=1}^N \mathcal{U}(x_n; \theta, \phi)
+ \gamma \cdot \sum\limits_{m=1}^M \mathcal{L}(x_m, y_m; \theta, \phi)
\label{eq:ss-m2-objective-pre}
\end{align}
where $\mathcal{U}$ and $\mathcal{L}$ lower bound $p_\theta(x)$ and $p_\theta(x, y)$, respectively:
\begin{align}
\log p_\theta(x, y) &\geq \underbrace{\mathbb{E}_{q_\phi(z | x, y)} \left\lbrack -\log p_\theta(x | z, y) \right\rbrack  - \log p(y) 
+ D_\text{KL} \left\lbrack q_\phi(z | x, y) || p(z) \right\rbrack}_{ \mathcal{L}(x, y; \theta, \phi) }\\
\log p_\theta(x) &\geq \underbrace{\mathbb{E}_{q_\phi(y | x) q_\phi(z | x)} \left\lbrack -\log p_\theta(x | z, y) \right\rbrack
+ D_\text{KL} \left\lbrack q_\phi(y | x) || p(y) \right\rbrack + D_\text{KL} \left\lbrack q_\phi(z | x) || p(z) \right\rbrack}_{\mathcal{U}(x; \theta, \phi)}
\end{align}
and $\gamma$ controls their relative weight (as done by ~\cite{Siddharth2017}).
When using IWAE, we substitute the IWAE lower bounds for $\mathcal{U}$ and $\mathcal{L}$ as follows:
\begin{align}
\log p_\theta(x, y) &\geq \underbrace{\mathbb{E}_{z_1, \dots, z_S \sim q_\phi(z | x, y)} \left\lbrack \log \frac{1}{S} \frac{p_\theta(x, z_s, y)}{q_\phi(z_s | x, y)} \right\rbrack}_{ \mathcal{L}(x, y; \theta, \phi) }\\
\log p_\theta(x) &\geq \underbrace{\mathbb{E}_{(y_1, z_1), \dots, (y_S, z_S) \sim q_\phi(y | x) q_\phi(z | x)} \left\lbrack \log \frac{1}{S} \sum\limits_{s=1}^S \frac{p_\theta(x, z_s, y_s)}{q_\phi(y_s | x) q_\phi(z_s | x)} \right\rbrack}_{\mathcal{U}(x; \theta, \phi)}
\end{align}

\section{Theorems and Derivations}

\subsection{Proof that under conditions from Section \ref{sec:misestimate-px-conditions}, a VAE will misestimate $p(x)$} \label{sec:path_1}

For completeness, we formalize the intuitive conditions from Section \ref{sec:misestimate-px-conditions} in the theorem below
and prove that under these conditions the global optima of the VAE corresponds to a model that misestimates $p(x)$.
However, we emphasize that just because $p(x)$ is learned incorrectly, it does not mean that the quality of the generative model is \emph{meaningfully} compromised; in fact, there are many conditions that can lead to the ELBO learning a model  that does not recover $p(x)$ exactly, but for which the compromise in the quality of the learned $p(x)$ is imperceptible in downstream tasks. 
In this work, we conjecture that conditions (1) and (2) are necessary for \emph{significant} compromises in the quality of the learned $p(x)$. While we do not provide a proof of this conjecture, in Appendix \ref{sec:verifying_path_cond} we show that  conditions (1) and (2) are satisfied on actual data-sets and provide evidence that when the two conditions are met the learned $p(x)$ differs significantly from the true data distribution, by qualitative and quantitative evaluations. 

To formalize conditions (1) and (2), first recall the decomposition of the negative ELBO in Equation \ref{eq:vae-obj}.
In this discussion, we always set $\phi$ to be optimal for our choice of $\theta$. 
Assuming that $p(x)$ is continuous, then for any $\eta \in \mathbb{R}_{\geq 0}$, we can decompose the PMO as:
\begin{align}
\begin{split}
\mathbb{E}_{p(x)} \left\lbrack D_{\text{KL}} \lbrack q_\phi(z | x) || p_\theta(z | x)\rbrack \right\rbrack =& 
\mathrm{Pr}[\mathcal{X}_{\mathrm{Lo}}(\theta)]\, \mathbb{E}_{p(x)\vert_{\mathcal{X}_{\mathrm{Lo}}}}\left[ D_{\text{KL}} \lbrack q_\phi(z | x) || p_\theta(z | x)\rbrack \right] \\
&+ \mathrm{Pr}[\mathcal{X}_{\mathrm{Hi}}(\theta)]\, \mathbb{E}_{p(x)\vert_{\mathcal{X}_{\mathrm{Hi}}}}\left[ D_{\text{KL}} \lbrack q_\phi(z | x) || p_\theta(z | x)\rbrack \right]
\end{split}\label{eqn:PMO_decomp}
\end{align}
where $D_{\text{KL}} \lbrack q_\phi(z | x) || p_\theta(z | x) \rbrack \leq \eta$ on $\mathcal{X}_{\mathrm{Lo}}(\theta)$, $D_{\text{KL}} \lbrack q_\phi(z | x) || p_\theta(z | x) \rbrack > \eta$ on $\mathcal{X}_{\mathrm{Hi}}(\theta)$, with $\mathcal{X}_i(\theta) \subseteq \mathcal{X}$; where $\mathbb{E}_{p(x)\vert_{\mathcal{X}_{i}}}$ is the expectation over $p(x)$ restricted to $\mathcal{X}_i(\theta)$ and renormalized, and $\mathrm{Pr}[\mathcal{X}_i]$ is the probability of $\mathcal{X}_i(\theta)$ under $p(x)$. Let us denote the expectation in the first term on the right-hand side of Equation \ref{eqn:PMO_decomp} as $D_{\mathrm{Lo}}(\theta)$ and the expectation in the second term as $D_{\mathrm{Hi}}(\theta)$.

Let $f_{\theta_{\mathrm{GT}}} \in \mathcal{F}$ be the ground-truth likelihood function (for which the MLE objective, MLEO, is zero). Now conditions (1) and (2) above may be rewritten more formally as:

\begin{restatable}{theorem}{ThmFailureI} \label{thm:path_1}
Suppose that there is an $\eta \in \mathbb{R}_{\geq 0}$ such that $\mathrm{Pr}[\mathcal{X}_{\mathrm{Hi}}(\theta_{\mathrm{GT}})]\,D_{\mathrm{Hi}}(\theta_{\mathrm{GT}})$ is greater than $\mathrm{Pr}[\mathcal{X}_{\mathrm{Lo}}(\theta_{\mathrm{GT}})]\,D_{\mathrm{Lo}}(\theta_{\mathrm{GT}})$. Suppose the following two conditions: (1) [True posterior often difficult] there exist an $f_\theta\in \mathcal{F}$ with $D_{\mathrm{Lo}}(\theta_{\mathrm{GT}}) \geq D_{\mathrm{Lo}}(\theta)$ and 
\begin{align*}
\begin{split}
\mathrm{Pr}[\mathcal{X}_{\mathrm{Hi}}(\theta_{\mathrm{GT}})]\,(D_{\mathrm{Hi}}(\theta_{\mathrm{GT}}) - D_{\mathrm{Lo}}(\theta_{\mathrm{GT}})) 
> \mathrm{Pr}[\mathcal{X}_{\mathrm{Hi}}(\theta)]\,D_{\mathrm{Hi}}(\theta) + D_{\text{KL}} \lbrack p(x) || p_\theta(x) \rbrack;
\end{split}
\end{align*}
and (2) [No good, simpler alternative] that for no such $f_\theta \in \mathcal{F}$ is the MLEO $D_{\text{KL}} \lbrack p(x) || p_\theta(x) \rbrack$ equal to zero. Then at the global minima $(\theta^*, \phi^*)$ of the negative ELBO, the MLEO will be non-zero. 
\end{restatable}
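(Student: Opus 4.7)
The plan is to use Condition (1) to produce a parameter setting whose negative ELBO is strictly smaller than that of the ground truth, and then invoke Condition (2) to exclude MLEO-zero models from the global minimum.

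First, I would evaluate $-\mathrm{ELBO}(\theta_{\mathrm{GT}},\phi_{\mathrm{GT}})$. Since $p_{\theta_{\mathrm{GT}}}(x) = p(x)$, the MLEO term vanishes, and the PMO decomposition in Equation \ref{eqn:PMO_decomp} reduces the expression to $\mathrm{Pr}[\mathcal{X}_{\mathrm{Lo}}(\theta_{\mathrm{GT}})]\,D_{\mathrm{Lo}}(\theta_{\mathrm{GT}}) + \mathrm{Pr}[\mathcal{X}_{\mathrm{Hi}}(\theta_{\mathrm{GT}})]\,D_{\mathrm{Hi}}(\theta_{\mathrm{GT}})$. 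Using $\mathrm{Pr}[\mathcal{X}_{\mathrm{Lo}}(\theta_{\mathrm{GT}})] + \mathrm{Pr}[\mathcal{X}_{\mathrm{Hi}}(\theta_{\mathrm{GT}})] = 1$, I rewrite this equivalently as $D_{\mathrm{Lo}}(\theta_{\mathrm{GT}}) + \mathrm{Pr}[\mathcal{X}_{\mathrm{Hi}}(\theta_{\mathrm{GT}})]\bigl(D_{\mathrm{Hi}}(\theta_{\mathrm{GT}}) - D_{\mathrm{Lo}}(\theta_{\mathrm{GT}})\bigr)$, which isolates the quantity controlled by the main inequality of Condition (1).

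Next, I would apply that strict inequality to conclude $-\mathrm{ELBO}(\theta_{\mathrm{GT}},\phi_{\mathrm{GT}}) > D_{\mathrm{Lo}}(\theta_{\mathrm{GT}}) + \mathrm{Pr}[\mathcal{X}_{\mathrm{Hi}}(\theta)]\,D_{\mathrm{Hi}}(\theta) + D_{\mathrm{KL}}[p(x)\,\|\,p_\theta(x)]$, where $f_\theta$ is the alternative promised by Condition (1). To compare this right-hand side with $-\mathrm{ELBO}(\theta,\phi) = D_{\mathrm{KL}}[p(x)\,\|\,p_\theta(x)] + \mathrm{Pr}[\mathcal{X}_{\mathrm{Lo}}(\theta)]\,D_{\mathrm{Lo}}(\theta) + \mathrm{Pr}[\mathcal{X}_{\mathrm{Hi}}(\theta)]\,D_{\mathrm{Hi}}(\theta)$, I use the trivial bound $\mathrm{Pr}[\mathcal{X}_{\mathrm{Lo}}(\theta)] \leq 1$ together with the first inequality of Condition (1), $D_{\mathrm{Lo}}(\theta) \leq D_{\mathrm{Lo}}(\theta_{\mathrm{GT}})$, to obtain $\mathrm{Pr}[\mathcal{X}_{\mathrm{Lo}}(\theta)]\,D_{\mathrm{Lo}}(\theta) \leq D_{\mathrm{Lo}}(\theta_{\mathrm{GT}})$. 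Chaining yields $-\mathrm{ELBO}(\theta_{\mathrm{GT}},\phi_{\mathrm{GT}}) > -\mathrm{ELBO}(\theta,\phi)$, so $\theta_{\mathrm{GT}}$ is not optimal. Letting $(\theta^*,\phi^*)$ be a global minimum, this forces $-\mathrm{ELBO}(\theta^*,\phi^*) < -\mathrm{ELBO}(\theta_{\mathrm{GT}},\phi_{\mathrm{GT}})$, and reading Condition (2) as forbidding any MLEO-zero model from achieving such an improvement immediately gives $\mathrm{MLEO}(\theta^*) > 0$.

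The main obstacle is not algebraic (the steps above are essentially rearrangements) but interpretive: I need the quantifier ``for no such $f_\theta$'' in Condition (2) to cover every $f_\theta$ that strictly improves the ELBO over the ground truth, not only the particular witness produced by Condition (1). A more defensive proof would run by contradiction: assume $\mathrm{MLEO}(\theta^*) = 0$, infer $\mathrm{PMO}(\theta^*) = -\mathrm{ELBO}(\theta^*,\phi^*) < \mathrm{PMO}(\theta_{\mathrm{GT}})$, and then show that---with $\eta$ chosen as in the theorem's hypothesis so that $D_{\mathrm{Hi}}(\theta^*) > D_{\mathrm{Lo}}(\theta^*)$ by construction of the Hi/Lo partition---the pair $(\theta_{\mathrm{GT}}, \theta^*)$ itself realizes the inequality structure of Condition (1), contradicting Condition (2). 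Verifying that both inequalities of Condition (1) hold for $\theta^*$, rather than just the sum of the PMO terms, is the step I anticipate needing the most care.
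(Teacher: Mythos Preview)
Your proposal is correct and follows the paper's proof essentially line for line: the same rewriting of $-\mathrm{ELBO}(\theta_{\mathrm{GT}},\phi_{\mathrm{GT}})$ via $\mathrm{Pr}[\mathcal{X}_{\mathrm{Lo}}] = 1 - \mathrm{Pr}[\mathcal{X}_{\mathrm{Hi}}]$, the same application of the strict inequality in Condition~(1), and the same use of $D_{\mathrm{Lo}}(\theta_{\mathrm{GT}}) \geq D_{\mathrm{Lo}}(\theta)$ together with $\mathrm{Pr}[\mathcal{X}_{\mathrm{Lo}}(\theta)] \leq 1$ to reach $-\mathrm{ELBO}(\theta_{\mathrm{GT}},\phi_{\mathrm{GT}}) > -\mathrm{ELBO}(\theta,\phi)$.

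Your interpretive worry about the quantifier in Condition~(2) is well-placed, and the paper treats it exactly as you anticipate: it asserts that any $f_{\theta'}$ with $-\mathrm{ELBO}(\theta',\phi') < -\mathrm{ELBO}(\theta,\phi)$ ``must also satisfy the conditions in assumption~(1)'', so that the global minimizer falls under Condition~(2) and hence cannot have zero MLEO. The paper does not spell out the verification that both inequalities of Condition~(1) hold for such a $\theta'$---precisely the step you flagged as needing care---so your more defensive contradiction argument is, if anything, a more explicit version of what the paper leaves implicit.
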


\begin{proof}

The proof is straightforward. Condition (1) of the theorem implies that the negative ELBO of $f_\theta$ will be lower than that of $f_{\theta_{\mathrm{GT}}}$. That is, we can write:
\begin{align}
-\mathrm{ELBO}(\theta_{\mathrm{GT}}, \phi_{\mathrm{GT}}) &= \mathrm{Pr}[\mathcal{X}_{\mathrm{Hi}}(\theta_{\mathrm{GT}})]\, D_{\mathrm{Hi}}(\theta_{\mathrm{GT}}) + \mathrm{Pr}[\mathcal{X}_{\mathrm{Lo}}(\theta_{\mathrm{GT}})]\, D_{\mathrm{Lo}}(\theta_{\mathrm{GT}}) \\
&= \mathrm{Pr}[\mathcal{X}_{\mathrm{Hi}}(\theta_{\mathrm{GT}})]\, D_{\mathrm{Hi}}(\theta_{\mathrm{GT}}) + (1 - \mathrm{Pr}[\mathcal{X}_{\mathrm{Hi}}(\theta_{\mathrm{GT}})])\, D_{\mathrm{Lo}}(\theta_{\mathrm{GT}}) \\
&= \mathrm{Pr}[\mathcal{X}_{\mathrm{Hi}}(\theta_{\mathrm{GT}})]\ (D_{\mathrm{Hi}}(\theta_{\mathrm{GT}}) -D_{\mathrm{Lo}}(\theta_{\mathrm{GT}})) + D_{\mathrm{Lo}}(\theta_{\mathrm{GT}})\\
&>\underbrace{\mathrm{Pr}[\mathcal{X}_{\mathrm{Hi}}(\theta)]\,D_{\mathrm{Hi}}(\theta) + \mathrm{Pr}[\mathcal{X}_{\mathrm{Lo}}(\theta)]\,D_{\mathrm{Lo}}(\theta) + D_{\text{KL}} \lbrack p(x) || p_\theta(x)}_{-\mathrm{ELBO}(\theta, \phi)}\rbrack
\end{align}
So we have that $-\mathrm{ELBO}(\theta_{\mathrm{GT}}, \phi_{\mathrm{GT}}) > -\mathrm{ELBO}(\theta, \phi)$. Note again that by construction $\phi_{\mathrm{GT}}$ and $\phi$ are both optimal for $\theta_{\mathrm{GT}}$ and $\theta$, respectively. 

Furthermore, if there is an $f_{\theta'}\in \mathcal{F}$ such that $ -\mathrm{ELBO}(\theta', \phi') < -\mathrm{ELBO}(\theta, \phi)$, then it must also satisfy the conditions in assumption (1) and, hence, the global minima of the negative ELBO satisfy the conditions in assumption (1). By assumption (2), at the global minima of the negative ELBO, the MLEO $D_{\text{KL}} \lbrack p(x) || p_\theta(x) \rbrack$ cannot be equal to zero.

\end{proof}

Theorem \ref{thm:path_1} shows that under conditions (1) and (2) 
the ELBO can prefer learning likelihood functions $f_\theta$ that reconstruct $p(x)$ incorrectly, 
even when learning the ground-truth likelihood is possible. 
We again emphasize that this theorem is included for completeness, 
and that the main focus of the paper is to empirically demonstrate that under these two conditions, 
the VAE's performance \emph{significantly} degrades on a variety of downstream tasks,
as well as to provide novel benchmark data-sets that trigger this pathology.

\subsection{Derivation of the observation noise variation that maximizes the ELBO} \label{sec:path_2}
In practice, the noise variance of the data-set is unknown and it is common to estimate the variance as a hyper-parameter. Here, we show that 
learning the variance of $\epsilon$ either via hyper-parameter search or via direct optimization of the ELBO
can be biased. 
We rewrite the negative ELBO:
\begin{align}
\underset{{\sigma^{(d)}_\epsilon}^2}{\mathrm{argmin}} -&\mathrm{ELBO}(\theta, \phi, \sigma^2_\epsilon) \\
&=
\underset{{\sigma^{(d)}_\epsilon}^2}{\mathrm{argmin}} \quad
\mathbb{E}_{p(x)} \left\lbrack \mathbb{E}_{q_\phi(z | x)} \left\lbrack -\log p_\theta(x | z) \right\rbrack 
+ D_\text{KL} \left\lbrack q_\phi(z | x) || p(z) \right\rbrack \right\rbrack \\
&= 
\underset{{\sigma^{(d)}_\epsilon}^2}{\mathrm{argmin}} \quad
\mathbb{E}_{p(x)} \left\lbrack \mathbb{E}_{q_\phi(z | x)} \left\lbrack -\log p_\theta(x | z) \right\rbrack \right\rbrack \\
&= 
\underset{{\sigma^{(d)}_\epsilon}^2}{\mathrm{argmin}} \quad
\mathbb{E}_{p(x)} \left\lbrack \mathbb{E}_{q_\phi(z | x)} \left\lbrack -\sum\limits_{d=1}^D \log \left( \frac{1}{\sqrt{2 \pi {\sigma^{(d)}_\epsilon}^2}} \cdot \exp\left( \frac{-(x^{(d)} - f_\theta(z)^{(d)})^2}{2 {\sigma^{(d)}_\epsilon}^2} \right) \right) \right\rbrack \right\rbrack \\
&= \underset{{\sigma^{(d)}_\epsilon}^2}{\mathrm{argmin}} \quad
 \sum\limits_{d=1}^D 
\mathbb{E}_{p(x)} \left\lbrack \mathbb{E}_{q_\phi(z | x)} \left\lbrack \log \left( \sqrt{2 \pi {\sigma^{(d)}_\epsilon}^2} \right) + \frac{(x^{(d)} - f_\theta(z)^{(d)})^2}{2 {\sigma^{(d)}_\epsilon}^2} \right\rbrack \right\rbrack \\
&=  \underset{{\sigma^{(d)}_\epsilon}^2}{\mathrm{argmin}} \quad
\sum\limits_{d=1}^D 
\mathbb{E}_{p(x)} \left\lbrack \mathbb{E}_{q_\phi(z | x)} \left\lbrack \log \left( {\sigma^{(d)}}_\epsilon \right) + \frac{(x^{(d)} - f_\theta(z)^{(d)})^2}{2 {\sigma^{(d)}_\epsilon}^2} \right\rbrack \right\rbrack \\
&= \underset{{\sigma^{(d)}_\epsilon}^2}{\mathrm{argmin}} \quad
 \sum\limits_{d=1}^D \log \left( {\sigma^{(d)}}_\epsilon \right) + 
\frac{1}{2 {\sigma^{(d)}_\epsilon}^2} \cdot \underbrace{\mathbb{E}_{p(x)} \left\lbrack \mathbb{E}_{q_\phi(z | x)} \left\lbrack (x^{(d)} - f_\theta(z)^{(d)})^2 \right\rbrack \right\rbrack}_{C(\theta, \phi, d)} 
\end{align}
Setting the gradient of the above with respect to $\sigma^2_\epsilon$ equal to zero yields the following:
\begin{align}
0 &= -\frac{\partial}{\partial \sigma^{(d)}_\epsilon} \mathrm{ELBO}(\theta, \phi, \sigma^{(d)}_\epsilon) \\
&= \frac{ {\sigma^{(d)}_\epsilon}^2 - C(\theta, \phi, d)}{{\sigma^{(d)}}^3_\epsilon} .
\end{align}
Thus, we can write,
\begin{align}
{\sigma^{(d)}_\epsilon}^2 = C(\theta, \phi, d) &= \mathbb{E}_{p(x)} \left\lbrack \mathbb{E}_{q_\phi(z | x)} \left\lbrack (x^{(d)} - f_\theta(z)^{(d)})^2 \right\rbrack \right\rbrack \\
&\approx \frac{1}{N} \sum\limits_{n=1}^N \mathbb{E}_{q_\phi(z | x_n)} \left\lbrack (x_n^{(d)} - f_\theta(z)^{(d)})^2 \right\rbrack.
\end{align}

\section{Experimental Details} \label{sec:exp-details}

\paragraph{Initialization at Global Optima of the VAE Objective.}
The decoder function $f_\theta$ is initialized to the ground-truth using full supervision given the
ground-truth $z$'s and $f_{\theta_\text{GT}}$.
The encoder is initialized to $\phi_\text{GT}$ by fixing the decoder at the ground-truth
and maximizing the ELBO (with the 10 random restarts).
We fix the observation error $\sigma^2_\epsilon$ to that of the ground-truth model,
and we fix a sufficiently flexible architecture -- one that is significantly more expressive than needed to capture 
$f_{\theta_{\text{GT}}}$ -- to ensure that, if there exists an $f_\theta$ with simpler posteriors, 
it would be included in our feasible set $\mathcal{F}$.
Lastly, we select the restart that yields the lowest value of the objective function.

\paragraph{Synthetic Data-sets.}
We use 4 synthetic data-sets for unsupervised VAEs (described in Appendix \ref{sec:unsup-examples}),
and 2 synthetic data-sets for semi-supervised VAEs (described in Appendix \ref{sec:ss-examples}),
and generate 5 versions of each data-set (each with $5000/2000/2000$ train/validation/test points).
We use 3 real semi-supervised data-sets: 
Diabetic Retinopathy Debrecen~\citep{UCIDiabetic}, Contraceptive Method Choice~\citep{Keel,UCI}
and the Titanic~\citep{Keel,Titanic} data-sets, each with $10\%$ observed labels,
split in 5 different ways equally into train/validation/test.

\paragraph{Real Data-sets.}
We consider 3 UCI data-sets: 
Diabetic Retinopathy Debrecen~\citep{UCIDiabetic}, Contraceptive Method Choice~\citep{Keel,UCI}
and the Titanic~\citep{Keel,Titanic} data-sets. 
In these, we treat the outcome as a partially observed label (observed $10\%$ of the time).
We split the data 5 different ways into equally sized train/validation/test.
On each split of the data, we run 5 random restarts and select the run that yielded the best
value on the training objective, computed on the validation set. 

\paragraph{Evaluation Metrics.}
To evaluate the quality of the generative model, 
we use the smooth $k$NN test statistic ~\citep{Djolonga2017} on samples from the learned model vs.
samples from the training set / ground-truth model as an alternative to log-likelihood,
since log-likelihood has been shown to be problematic for evaluation because of its numerical instability / high variance~\citep{theis_note_2016,wu_quantitative_2017}.
In the semi-supervised case, we also use the smooth $k$NN test statistic to compare $p(x | y)$ 
with the learned $p_\theta(x | y)$.
Finally, in cases where we may have model mismatch, we also evaluate the mutual information between 
$x$ and each dimension of the latent space $z$,
using the estimator presented in~\citep{Kraskov2004}.

\paragraph{Architectures.}
On the synthetic data-sets, we use a leaky-ReLU encoder/decoder with 3 hidden layers, each 50 nodes.
On the UCI data-sets, we use a leaky-ReLU encoder/decoder with 3 hidden layers, each 100 nodes.

\paragraph{Optimization.}
For optimization, we use the Adam optimizer ~\citep{Adam} with a learning rate of $0.001$
and a mini-batch size of 100.
We train for 100 epochs on synthetic data and for 20000 on real data (and verified convergence).
We trained 5 random restarts on each of the split of the data.
For semi-supervised data-sets with discrete labels,
we used continuous relaxations of the categorical distribution with temperature $2.2$~\citep{Jang2016}
as the variational family in order to use the reparameterization trick~\citep{Kingma2013}. 

\paragraph{Baselines.} 
For our baselines, we compare the performance of aan MFG MFG-VAE
with that of a VAE trained with the Lagging Inference Networks (LIN) algorithm 
(still with an MFG variational family), 
since the algorithm claims to be able to escape local optima in training.
Since the pathologies we describe are global optima, we do not expect LIN to mitigate the issues.
We use Importance Weighted Autoencoders (IWAE) as an example of an inference algorithm 
that uses a more complex variational family.
Since the pathologies described are exacerbated by a limited variational family, 
we expect IWAE to out-perform the other two approaches.
For each method, we select the hyper-parameters for which the best restart
yields the best log-likelihood (using the smooth $k$NN test-statistic, described below).

\paragraph{Hyper-parameters.}
When using IWAE, let $S$ be the number of importance samples used.
When using the Lagging Inference Networks, 
let $T$ be the threshold for determining whether the inference network objective has converged,
and let $R$ be the number of training iterations for which the loss is averaged before comparing with the threshold.
When using semi-supervision, $\alpha$ determines the weight of the discriminator,
and $\gamma$ determines the weight of the labeled objective, $\mathcal{L}$.
We grid-searched over all combination of the following sets of parameters:

\paragraph{Unsupervised data-sets:}
\begin{itemize}
\item IWAE: $S \in \{ 3, 10, 20 \}$
\item Lagging Inference Networks: $T \in \{ 0.05, 0.1 \}, R \in \{ 5, 10 \}$
\end{itemize}

\paragraph{Semi-supervised synthetic data-sets:}
\begin{itemize}
\item IWAE: $S \in \{ 3, 10, 20 \}$
\item Lagging Inference Networks: $T \in \{ 0.05, 0.1 \}, R \in \{ 5, 10 \}$
\item All methods: $\alpha \in \{ 0.0, 0.1, 1.0 \}, \gamma \in \{ 0.5, 1.0, 2.0, 5.0 \}$
\end{itemize}

\paragraph{Semi-supervised real data-sets:}
\begin{itemize}
\item IWAE: $S \in \{ 3, 10, 20 \}$
\item Lagging Inference Networks: $T \in \{ 0.05, 0.1 \}, R \in \{ 5, 10 \}$
\item All methods: $\alpha \in \{ 0.0, 0.1, 1.0 \}$, $\gamma \in \{ 0.5, 1.0, 2.0, 5.0 \}$,
$\sigma^2_\epsilon \in \{ 0.01, 0.5 \}$. On Titanic dimensionality of $z$ is $\in \{ 1, 2 \}$,
on Contraceptive and Diabetic Retinopathy $\in \{ 2, 5 \}$.
\end{itemize}

\paragraph{Hyper-parameter Selection.}
For each method, we selected the hyper-parameters that yielded the smallest
value of the smooth $k$NN test statistic (indicating that they learned the $p(x)$ best).

\section{Quantitative Results}

In this section, we present additional quantitative results for the paper,
following the methodology described in Section \ref{sec:methodology} and experimental setup described in Appendix \ref{sec:exp-details}.

\subsection{Approximation of $p(x)$ is poor when both conditions from Section \ref{sec:misestimate-px-conditions} hold} \label{sec:verifying_path_cond}

Here we show that on data-sets for which the conditions from Section \ref{sec:misestimate-px-conditions} hold, 
VAEs approximate $p(x)$ poorly. 
First, consider the ``Figure-8'' Example in Figure \ref{fig:fig-8-inline} (described in Appendix \ref{sec:fig-8-example}). 
For this data-set, values of $z$ in $[-\infty, -3.0] \cup [3.0, \infty]$ map to similar values of $x$ near $(0,0)$, where $p(x)$ is high. 
We verify that, near $x =(0,0)$, the posteriors $p_{\theta_\text{GT}}(z | x)$ are multi-modal, satisfying condition (1).
We verify condition (2) is satisfied by considering all continuous parameterizations of  the ``Figure-8" curve: 
any such parametrization will result in a function $f_\theta$ for which distant values of $z$ map to similar values near $(0,0)$ and thus the posterior matching objective (PMO) will be high. 
As predicted, the learned generative model approximates $p(x)$ poorly, 
learning posteriors that are simpler than those of the ground-truth model (see Table \ref{tab:unsupervised-sknn}).
Moreover, Figure \ref{fig:vae-fig-8-f} shows exactly how $f_\theta$ was regularized 
to induce simpler posteriors, by curling away from itself so to reduce the number of regions in latent 
space that decodes to the same neighborhood of $x$. 

Next, consider the ``Clusters'' Example in Figure \ref{fig:clusters-inline} (described in Appendix \ref{sec:clusters-example}).
For this data-set, $f_{\theta_\text{GT}}$ is a smooth step-function embedded on a circle.
Regions in which $\d f^{-1}_{\theta_\text{GT}} / dx $ is high (i.e. the steps) correspond to regions in which $p(x)$ is high. 
The interleaving of high-density and low-density regions on the manifold yield a multi-modal posterior
(see Figure \ref{fig:vae-clusters-post-true}). 
Since the majority of points lie in the clusters (and have a multi-modal posterior), condition (1) is satisfied,
and since there does not exist an alternative parameterization for a step-function on a circle,
condition (2) is satisfied. 
As predicted, the learned generative model approximates $p(x)$ poorly (see Table \ref{tab:unsupervised-sknn}).
Figure \ref{fig:vae-clusters-post-learned} shows the learned model reduces the slope of the steps
in order to learn simpler posteriors, thus compromising the learned $p(x)$. 

To show that these issues occur because the MFG variational family over-regularizes the generative model,
we compare VAE with LIN and IWAE (Table \ref{tab:unsupervised-sknn}). 
As expected, IWAE learns $p(x)$ better than LIN, which outperforms the VAE 
(Figure \ref{fig:fig-8-inline}).
Like the VAE, LIN compromises learning the data distribution in order to learn simpler posteriors, since it also uses an MFG variational family.
In contrast, IWAE is able to learn more complex posteriors and thus 
compromises $p(x)$ far less.
However, note that with 20 importance samples, IWAE still does not learn $p(x)$ perfectly.

\begin{table}[!t]
\centering
\begin{tabular}{l|lll}
\hline
Data     & IWAE                       & LIN & VAE               \\ \hline
Clusters & $\bm{0.057 \pm 0.028}$ & $0.347 \pm 0.057$              & $0.361 \pm 0.083$ \\
Figure-8    & $\bm{0.036 \pm 0.013}$ & $0.040 \pm 0.081$              & $0.066 \pm 0.014$ \\ \hline
\end{tabular}
\caption{Comparison unsupervised learned vs. true data distributions via the smooth $k$NN test (lower is better). Hyper-parameters selected via smaller value of the loss function on the validation set.}
\label{tab:unsupervised-sknn}
\end{table}

\subsection{Approximation of $p(x)$ is un-compromised when only one condition from Section \ref{sec:misestimate-px-conditions} holds} 
\label{sec:thm-1-cond-not-sat-quant}

What happens if the portion of observations with highly non-Gaussian posterior is small?
or if there exists an alternative function that explains $p(x)$ well?
Here, we present two benchmarks --
one for which condition (1) is not satisfied (Figure \ref{fig:circle-inline}) and one for which condition (2) is not satisfied (Figure \ref{fig:abs-inline}) --
and demonstrate that in both cases an MFG-VAE estimates $p(x)$ well.

\paragraph{Benchmark: approximation of $p(x)$ may be fine when only condition (2) holds.}
What happens if the observations with highly non-Gaussian posterior were few in number?
Consider the ``Circle'' Example in Figure \ref{fig:circle-inline} (described in Appendix \ref{sec:circle-example}).
Here, the regions that have non-Gaussian posteriors are near $x \approx (1.0, 0.0)$,
since  $z \in [-\infty, -3.0] \cup[3.0, \infty]$ map to points near $(1.0, 0.0)$.
However, since the overall number of such points is small,
the VAE objective does not trade-off capturing $p(x)$ for easy posterior approximation.
Indeed, we see that VAE training is capable of recovering $p(x)$, 
regardless of whether training was initialized randomly or at the ground-truth.

\paragraph{Benchmark: approximation of $p(x)$ may be fine when only condition (1) holds.}
We now study the case where the true posterior has a high PMO for a large portion of $x$'s, 
but there exists an $f_\theta$ in our realizable set $\mathcal{F}$ that approximates $p(x)$ well and has simple posteriors. 
Consider the ``Absolute-Value'' Example visualized in Figure \ref{fig:abs-inline}.
Although the posteriors under the ground-truth generative model are complex, there is an alternative likelihood $f_\theta(z)$ that models $p(x)$ equally well and has simpler posteriors, and this is the model 
selected by the VAE objective, regardless of whether training was initialized randomly or at the ground-truth. 
Details in Appendix \ref{sec:abs-value-example}.

\subsection{VAEs trade-off between generating realistic data and realistic counterfactuals in semi-supervision} \label{sec:semi-sup-quant}

\paragraph{Trade-offs when labels are discrete.}
The trade-off between realistic data and realistic counterfactuals generation is demonstrated in the ``Discrete Semi-Circle'' Example, visualized in Figure \ref{fig:inline-semisup} (details in Appendix \ref{sec:discrete-ss-example}).
The VAE is able to learn the data manifold and distribution well (Figure \ref{fig:discrete-px-vae-inline}).
However, the learned model has a simple posterior
in comparison to the true posterior (Figure \ref{fig:vae-ss-discrete-post-learned}). 
In fact, the learned $f_\theta(z, y)$ is collapsed to the same function for all values of $y$ (Figure \ref{fig:discrete-fn-vae-inline}).
As a result, $p_\theta(x | y) \approx p_\theta(x)$ under the learned model (Figure \ref{fig:discrete-cond-vae-inline}).
We call this phenomenon ``functional collapse''. 
As expected, functional collapse occurs when training with LIN as well (Figure \ref{fig:lin-ss-discrete}).
In contrast, IWAE is able to learn two distinct data conditionals $p_\theta(x | y)$, but it does so at a cost.
\emph{\textbf{Since IWAE does not regularize the generative model, it overfits}} (Figure \ref{fig:discrete-fn-iwae-inline}).
Table \ref{tab:semi-supervised-sknn} shows that IWAE learns $p(x)$ worse than the VAE, 
while Table \ref{tab:semi-supervised-conditional-sknn} shows that it learns $p(x|y)$ significantly better.
We see a similar pattern in the real data-sets (see Tables \ref{tab:real-ss-sknn} and \ref{tab:real-ss-cond-sknn}).

\paragraph{Trade-offs when labels are continuous.}
When $y$ is discrete, we can lower-bound the number of modes of $p_\theta(z | x)$ by the number of distinct values of $y$,
and choose a variational family that is sufficiently expressive.
But when $y$ is continuous, we cannot easily bound the complexity of $p_\theta(z | x)$. 
In this case, we show that the same trade-off between realistic data and realistic counterfactuals exists, 
and that there is an \emph{additional} pathology
introduced by the discriminator $q_\phi(y | x)$ (Equation \ref{eq:ss-m2-objective-pre}).
Consider the ``Continuous Semi-Circle'' Example, visualized in Figure \ref{fig:vae-ss-continuous-fn} (details in Appendix \ref{sec:continuous-ss-example}).
Here, since the posterior $p_\theta(y | x)$ is bimodal,
encouraging the MFG discriminator $q_\phi(y | x)$ to be predictive will 
collapse $f_\theta(z, y)$ to the same function for all $y$ (Figure \ref{fig:vae-ss-continuous-fn}). 
So as we increase $\alpha$ (the priority placed on prediction), our predictive accuracy increases at the cost of collapsing $p_\theta(x | y)$ towards $p_\theta(x)$. The latter will result in low-quality counterfactuals (see Figure \ref{fig:vae-ss-continuous-px-given-y}).
Like in the discrete case, $\gamma$ still controls the tradeoff between realistic data and realistic counterfactuals;
in the continuous case, $\alpha$ \emph{additionally} controls the tradeoff between realistic 
counterfactuals and predictive accuracy.
Table \ref{tab:semi-supervised-conditional-sknn} shows that IWAE is able to learn $p(x)$ better than VAE and LIN, as expected, but \emph{\textbf{the naive addition of the discriminator to IWAE means that it learns $p(x|y)$ no better than the other two models}} (see below for an explanation); that is, with the naive discriminator, just like the VAE and LIN, 
IWAE suffers from functional collapse (see Figure \ref{fig:iwae-ss-continuous}).

\paragraph{Naive adaptation of IWAE for semi-supervision introduces new pathologies.}
The goal of the discriminator is to ensure that the approximate posterior of $y|x$ is predictive,
as it would be under the true posterior.
The approximate posterior implied by the IWAE objective, however,
is not the one given by the IWAE encoder $q_\phi(z | x)$,
and has a rather complex and uninterpretable form, $q_\text{IW}(z | x)$~\citep{cremer_reinterpreting_2017}.
Incorporating the approximate posterior of $y|x$ induced by the IWAE objective into
the semi-supervised objective would require an intractable marginalization of $q_\text{IW}(z | x)$ over $z$. 
Although some work proposes to use with lower bounds \citep{Siddharth2017}
on $q_\phi(z, y | x)$ marginalized over $z$, 
the discriminator in these cases is nonetheless different from the approximate posterior induced by the IWAE objective.  
This may be an additional factor of the poor performance of IWAE in the semi-supervised setting with continuous $y$.

\begin{table}[!h]
\centering
\begin{tabular}{l|lll}
\hline
Data                   & IWAE                       & LIN & VAE                        \\ \hline
Discrete Semi-Circle   & $0.694 \pm 0.096$          & $0.703 \pm 0.315$              & $\bm{0.196 \pm 0.078}$ \\
Continuous Semi-Circle & $\bm{0.015 \pm 0.011}$ & $0.128 \pm 0.094$              & $0.024 \pm 0.014$          \\ \hline
\end{tabular}
\caption{Comparison of semi-supervised learned vs. true data distributions via the smooth $k$NN test (lower is better). Hyper-parameters selected via the smooth $k$NN test-statistic computed on the data marginals.}
\label{tab:semi-supervised-sknn}
\end{table}

\begin{table}[!h]
\centering
\scriptsize
\setlength{\tabcolsep}{2.75pt}
\begin{tabular}{@{}l|llllll@{}}
\toprule
                       & \multicolumn{2}{c}{IWAE}                                & \multicolumn{2}{c}{LIN} & \multicolumn{2}{c}{VAE}                          \\
Data                   & \multicolumn{1}{c}{Cohort 1}                   & \multicolumn{1}{c}{Cohort 2}                    & \multicolumn{1}{c}{Cohort 1}                & \multicolumn{1}{c}{Cohort 2}                & \multicolumn{1}{c}{Cohort 1}                    & \multicolumn{1}{c}{Cohort 2}           \\ \midrule
Discrete Semi-Circle   & $\bm{1.426 \pm 1.261}$ & $\bm{1.698 \pm 0.636}$  & $18.420 \pm 1.220$       & $10.118 \pm 0.996$      & $15.206 \pm 1.200$          & $11.501 \pm 1.300$ \\
Continuous Semi-Circle & $15.951 \pm 3.566$         & $\bm{14.416 \pm 1.402}$ & $15.321 \pm 1.507$       & $17.530 \pm 1.509$      & $\bm{13.128 \pm 0.825}$ & $16.046 \pm 1.019$ \\ \bottomrule
\end{tabular}
\caption{Comparison of semi-supervised learned $p_\theta(x | y)$ with ground-truth $p(x | y)$ via the smooth $k$NN test statistic (smaller is better). Hyper-parameters selected via smallest smooth $k$NN test statistic computed on the data marginals. For the discrete data, the cohorts are $p(x|y=0)$ and $p(x|y=1)$, and for the continuous data, the cohorts are $p(x|y=-3.5)$ and $p(x|y=3.5)$.}
\label{tab:semi-supervised-conditional-sknn}
\end{table}

\begin{table}[!h]
\centering
\begin{tabular}{l|ll}
\hline
                     & IWAE              & VAE               \\ \hline
Diabetic Retinopathy & $3.571 \pm 2.543$ & $6.206 \pm 1.035$ \\
Contraceptive        & $1.740 \pm 0.290$ & $2.147 \pm 0.225$ \\
Titanic              & $2.794 \pm 1.280$ & $1.758 \pm 0.193$ \\ \hline
\end{tabular}
\caption{Comparison of semi-supervised learned vs. true data distributions via the smooth $k$NN test (lower is better). Hyper-parameters selected via the smooth $k$NN test-statistic computed on the data marginals.}
\label{tab:real-ss-sknn}
\end{table}

\begin{table}[]
\centering
\scriptsize
\setlength{\tabcolsep}{2.75pt}
\begin{tabular}{l|lll|lll}
\hline
                     & \multicolumn{3}{c|}{IWAE}                                 & \multicolumn{3}{c}{VAE}                                   \\
                     & Cohort 1          & Cohort 2          & Cohort 3          & Cohort 1          & Cohort 2          & Cohort 3          \\ \hline
Diabetic Retinopathy & $4.240 \pm 1.219$ & $4.357 \pm 3.417$ & N/A               & $5.601 \pm 0.843$ & $8.008 \pm 1.096$ & N/A               \\
Contraceptive        & $7.838 \pm 1.138$ & $5.521 \pm 3.519$ & $6.626 \pm 2.571$ & $5.388 \pm 0.788$ & $4.994 \pm 0.932$ & $3.722 \pm 0.488$ \\
Titanic              & $3.416 \pm 0.965$ & $6.923 \pm 1.924$ & N/A               & $3.730 \pm 0.866$ & $8.572 \pm 1.766$ & N/A               \\ \hline
\end{tabular}
\caption{Comparison of semi-supervised learned vs. true conditional distributions $p(x | y)$ via the smooth $k$NN test (lower is better). Hyper-parameters selected via the smooth $k$NN test-statistic computed on the data marginals. }
\label{tab:real-ss-cond-sknn}
\end{table}

\FloatBarrier
\section{Defense Against Adversarial Perturbations Requires the True Observation Noise and Latent Dimensionality}  \label{sec:adversarial}

\begin{figure*}[h!]
    \centering
    
    \begin{subfigure}[t]{0.01\textwidth}
            \centering
            \small
            \rotatebox[origin=l]{90}{\hspace{90pt}$x_2$}
    \end{subfigure}
    \begin{subfigure}[t]{0.47\textwidth}
    \includegraphics[width=1.0\textwidth]{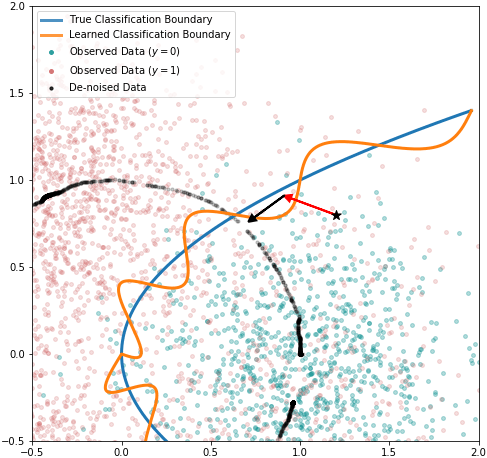} \\
    \centering
    \small
    \vskip -15pt
    $x_1$
    \caption{Projection of adversarial example onto true manifold.}
    \label{fig:adversarial-good}
    \end{subfigure}
    ~
    \begin{subfigure}[t]{0.47\textwidth}
    \includegraphics[width=1.0\textwidth]{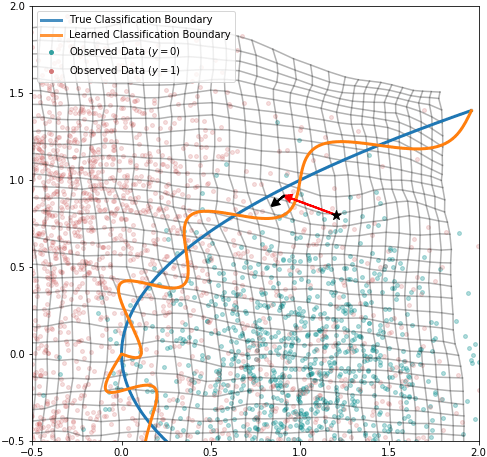}
    \centering
    \small
    \vskip -4pt
    $x_1$
    \caption{Projection of adversarial example onto manifold learned given model mismatch.}
    \label{fig:adversarial-bad}
    \end{subfigure}
    \caption{Comparison of projection of adversarial example onto ground-truth vs. learned manifold. The star represents the original point, perturbed by the red arrow, and then projected onto the manifold by the black arrow.}
    \label{fig:adversarial}
\end{figure*}

As a defense against adversarial attacks, manifold-based approaches de-noise the data before
feeding it into a classifier with the hope that the de-noising will remove the adversarial perturbation from the data
~\citep{Jalal2017,Meng2017,Samangouei2018,Hwang2019,jang_need_2020}.
In this section, we argue that a correct decomposition of the data into $f_\theta(z)$ and $\epsilon$
(or ``signal'' and ``noise'') is necessary to prevent against certain perturbation-based adversarial attacks.

Assume that our data was generated as follows:
\begin{align}
\begin{split}
z &\sim p(z) \\
\epsilon &\sim \mathcal{N}(0, \sigma^2_\epsilon \cdot I) \\
x | z &\sim f_{\theta_\text{GT}}(z) + \epsilon \\
y | z &\sim \text{Cat}\left( g_\psi \circ f_{\theta_\text{GT}}(z) \right)
\end{split}
\end{align}
Let $\mu_\phi(x)$ denote the mean of the encoder and let $M_{\theta, \phi}(x) = f_\theta \circ \mu_\phi(x)$ 
denote a projection onto the manifold.
Our goal is to prevent adversarial attacks on a given discriminative classifier that predicts $y | x$ --
that is, we want to ensure that there does not exist any 
$\eta$ such that $x_n + \eta$ is classified with a different label than $y_n$
by the learned classifier and not by the ground-truth classifier.
Since the labels $y$ are computed as a function of the de-noised data, $f_{\theta_\text{GT}}(z)$,
the true classifier is only defined on the manifold $M$ (marked in blue in Figure \ref{fig:adversarial}).
As such, any learned classifier (in orange) will intersect the true classifier on $M$, 
but may otherwise diverge from it away from the manifold.
This presents a vulnerability against adversarial perturbations, since now any $x$ can be perturbed
to cross the learned classifier's boundary (in orange) to flip its label,
while its true label remains the same, as determined by the true classifier (in blue).
To protect against this vulnerability, existing methods de-noise the data by projecting it onto the manifold
before classifying.
Since the true and learned classifiers intersect on the manifold, 
in order to flip an $x$'s label, the $x$ must be perturbed to cross the true classifier's boundary
(and not just the learned classifier's boundary).
This is illustrated in Figure \ref{fig:adversarial-good}: the black star represents some data point,
perturbed (by the red arrow) by an adversary to cross the learned classifier's boundary
but not the true classifier's boundary.
When projected onto the manifold (by the black arrow), the adversarial attack still falls on the same
side of the true classifier and the learned classifier, rendering the attack unsuccessful
and this method successful.

However, if the manifold is not estimated correctly from the data (i.e. if the ground-truth dimensionality of the latent space and the observation noise $\sigma^2_\epsilon$ are poorly estimated), this defense may fail.
Consider, for example, the case in which $f_\theta(z)$ is modeled with a VAE with
a larger dimensional latent space and a smaller observation noise than the ground-truth model.
Figure \ref{fig:adversarial-bad} shows a uniform grid in $x$'s space projected onto the manifold
learned by this mismatched model.
The figure shows that the learned manifold barely differs from the original space,
since the latent space of the VAE compensates for the observation noise $\epsilon$
and thus does not de-noise the observation. 
When the adversarial attack is projected onto the manifold, it barely moves and is thus left perturbed.
As the figure shows, the attack crosses the learned classifier's boundary but not the true boundary
and is therefore successful.

\FloatBarrier
\section{Unsupervised Pedagogical Examples} \label{sec:unsup-examples}

In this section, we describe in detail the unsupervised pedagogical examples used in the paper
and the properties that cause them to trigger the VAE pathologies. 
For each one of these example decoder functions, we fit a surrogate neural network $f_\theta$
using full supervision (ensuring that the $\mathrm{MSE} < 1\mathrm{e}-4$ 
and use that $f_\theta$ to generate the actual data used in the experiments.

\subsection{Figure-8 Example} \label{sec:fig-8-example}

\paragraph{Generative Process:}
\begin{align}
\begin{split}
z &\sim \mathcal{N}(0, 1) \\
\epsilon &\sim \mathcal{N}(0, \sigma^2_\epsilon \cdot I) \\
u(z) &= \left( 0.6 + 1.8 \cdot \Phi(z) \right) \pi \\
x | z &= \underbrace{
\begin{bmatrix}
\frac{\sqrt{2}}{2} \cdot \frac{\cos(u(z))}{\sin(u(z))^2 + 1} \\
\sqrt{2} \cdot \frac{\cos(u(z)) \sin(u(z))}{\sin(u(z))^2 + 1} \\
\end{bmatrix}
}_{f_{\theta_\text{GT}}(z)} + \epsilon
\end{split}
\label{eq:fig8}
\end{align}
where $\Phi(z)$ is the Gaussian CDF and $\sigma^2_\epsilon = 0.02$ (see Figure \ref{fig:vae-fig-8}). 

\paragraph{Properties:}
In this example, values of $z$ on $[-\infty, -3.0]$, $[3.0, \infty]$ and in small neighborhoods of $z=0$ all produce
similar values of $x$, namely $x\approx 0$; as such, the true posterior $p_{\theta_\text{GT}}(z | x)$ is multi-modal 
in the neighborhood of $x = 0$ (see Figure \ref{fig:vae-fig-8-post-true}), leading to high PMO.
Additionally, in the neighborhood of $x \approx 0$, $p(x)$ is high. 
Thus, condition (1) from Section \ref{sec:misestimate-px-conditions} is satisfied. 
One can verify condition (2) is satisfied by considering all continuous parameterizations of a figure-8 curve. 
Any such parametrization will result in an $f_\theta$ for which far-away values of $z$ lead to nearby values of $x$ and thus in high PMO value for points near $x=0$.

\subsection{Circle Example} \label{sec:circle-example}

\paragraph{Generative Process:}
\begin{align}
\begin{split}
z &\sim \mathcal{N}(0, 1) \\
\epsilon &\sim \mathcal{N}(0, \sigma^2_\epsilon \cdot I) \\
x | z &= \underbrace{
\begin{bmatrix}
\cos (2 \pi \cdot \Phi(z)) \\
\sin (2 \pi \cdot \Phi(z)) \\
\end{bmatrix}
}_{f_{\theta_\text{GT}}(z)} + \epsilon
\end{split}
\label{eq:circle-example}
\end{align}
where $\Phi(z)$ is the Gaussian CDF and $\sigma^2_\epsilon = 0.01$ (see Figure \ref{fig:vae-circle}).

\paragraph{Properties:}
In this example, the regions of the data-space that have a non-Gaussian posterior are near $x \approx [1.0, 0.0]$,
since in that neighborhood, $z \in [-\infty, -3.0]$ and $z \in [3.0, \infty]$ both generate nearby values of $x$.
Thus, this model only satisfies condition (2) from Section \ref{sec:misestimate-px-conditions}.
However, since overall the number of $x$'s for which the posterior is non-Gaussian are few,
the VAE objective does not need to trade-off capturing $p(x)$ for easy posterior approximation.
We see that traditional training is capable of recovering $p(x)$, 
regardless of whether training was initialized randomly or at the ground-truth (see Figure \ref{fig:vae-circle}).

\subsection{Absolute-Value Example} \label{sec:abs-value-example}

\paragraph{Generative Process:}
\begin{align}
\begin{split}
z &\sim \mathcal{N}(0, 1) \\
\epsilon &\sim \mathcal{N}(0, \sigma^2_\epsilon \cdot I) \\
x | z &= \underbrace{
\begin{bmatrix}
|\Phi(z)| \\
|\Phi(z)| \\
\end{bmatrix}
}_{f_{\theta_\text{GT}}(z)} + \epsilon
\end{split}
\label{eq:abs-value-example}
\end{align}
where $\Phi(z)$ is the Gaussian CDF and $\sigma^2_\epsilon = 0.01$ (see Figure \ref{fig:vae-abs}).

\paragraph{Properties:}
In this example, the posterior under $f_{\theta_\text{GT}}$ cannot be well approximated using
an MFG variational family (see Figure \ref{fig:vae-abs-post-true}).
However, there does exist an alternative likelihood function $f_\theta(z)$ (see \ref{fig:vae-abs-f})
that explains $p(x)$ equally well and has simpler posterior \ref{fig:vae-abs-post-learned}.
As such, this model only satisfies condition (1) from Section \ref{sec:misestimate-px-conditions}.

\subsection{Clusters Example} \label{sec:clusters-example}

\paragraph{Generative Process:}
\begin{align}
\begin{split}
z &\sim \mathcal{N}(0, 1) \\
\epsilon &\sim \mathcal{N}(0, \sigma^2_\epsilon \cdot I) \\
u(z) &= \frac{2 \pi}{1 + e^{-\frac{1}{2} \pi z}} \\
t(u) &= 2 \cdot \tanh\left( 10 \cdot u - 20 \cdot \lfloor u / 2 \rfloor - 10 \right) + 4 \cdot \lfloor u / 2 \rfloor + 2 \\
x | z &= \underbrace{
\begin{bmatrix}
\cos(t(u(z))) \\
 \sin(t(u(z))) \\
\end{bmatrix}
}_{f_{\theta_\text{GT}}(z)} + \epsilon
\end{split}
\label{eq:clusters-example}
\end{align}
where $\sigma^2_\epsilon = 0.2$.

\paragraph{Properties:}
In this example, $f_{\theta_\text{GT}}$ is a step function embedded on a circle.
Regions in which $\frac{d f^{-1}_{\theta_\text{GT}}}{dx}$ is high (i.e. the steps) correspond to regions in which $p(x)$ is high. 
The interleaving of high-density and low-density regions on the manifold yield a multi-modal posterior
(see Figure \ref{fig:vae-clusters-post-true}).
For this model, both conditions from Section \ref{sec:misestimate-px-conditions} hold.
In this example, we again see that the VAE objective learns a model with a simpler posterior 
(see Figure \ref{fig:vae-clusters-post-learned}) at the cost of approximating $p(x)$ well
(see Figure \ref{fig:vae-clusters-px}).

%

\subsection{Spiral Dots Example} \label{sec:spiral-dots-example}

\paragraph{Generative Model:}
\begin{align}
\begin{split}
z &\sim \mathcal{N}(0, 1) \\
\epsilon &\sim \mathcal{N}(0, \sigma^2_\epsilon \cdot I) \\
u(z) &= \frac{4 \pi}{1 + e^{-\frac{1}{2} \pi z}} \\
t(u) &= \tanh\left( 10 \cdot u - 20 \cdot \lfloor u / 2 \rfloor - 10 \right) + 2 \cdot \lfloor u / 2 \rfloor + 1 \\
x | z &= \underbrace{
\begin{bmatrix}
t(u(z)) \cdot \cos(t(u(z))) \\
t(u(z)) \cdot \sin(t(u(z))) \\
\end{bmatrix}
}_{f_{\theta_\text{GT}}(z)} + \epsilon
\end{split}
\label{eq:clusters-example}
\end{align}
where $\sigma^2_\epsilon = 0.01$.

\paragraph{Properties:}
In this example, $f_{\theta_\text{GT}}$ a step function embedded on a spiral.
Regions in which $\frac{d f^{-1}_{\theta_\text{GT}}}{dx}$ is high (i.e. the steps) correspond to regions in which $p(x)$ is high. 
The interleaving of high-density and low-density regions on the manifold yield a multi-modal posterior
(see Figure \ref{fig:vae-spiral-dots-post-true}).
In this example, we again see that the VAE objective learns a model with a simpler posterior 
(see Figure \ref{fig:vae-spiral-dots-post-learned}) at the cost of approximating $p(x)$ well
(see Figure \ref{fig:vae-spiral-dots-px}).
Furthermore, for this model the VAE objective highly misestimates the observation noise.

\section{Semi-Supervised Pedagogical Examples} \label{sec:ss-examples}

In this section, we describe in detail the semi-supervised pedagogical examples used in the paper
and the properties that cause them to trigger the VAE pathologies. 
For each one of these example decoder functions, we fit a surrogate neural network $f_\theta$
using full supervision (ensuring that the $\mathrm{MSE} < 1\mathrm{e}-4$ 
and use that $f_\theta$ to generate the actual data used in the experiments.

\subsection{Discrete Semi-Circle Example} \label{sec:discrete-ss-example}

\paragraph{Generative Process:}
\begin{align}
\begin{split}
z &\sim \mathcal{N}(0, 1) \\
y &\sim \text{Bern}\left(\frac{1}{2}\right) \\
\epsilon &\sim \mathcal{N}(0, \sigma^2_\epsilon \cdot I) \\
x | z, y &= \underbrace{
\begin{bmatrix}
\cos\left( \mathbb{I}(y = 0) \cdot \pi \cdot \sqrt{\Phi(z)} + \mathbb{I}(y = 1) \cdot \pi \cdot \Phi(z)^3 \right) \\
\sin\left( \mathbb{I}(y = 0) \cdot \pi \cdot \sqrt{\Phi(z)} + \mathbb{I}(y = 1) \cdot \pi \cdot \Phi(z)^3 \right) \\
\end{bmatrix}
}_{f_{\theta_\text{GT}}(z, y)} + \epsilon
\end{split}
\label{eq:discrete-semi-circle-example}
\end{align}
where $\Phi$ is the CDF of a standard normal and $\sigma^2_\epsilon = 0.01$.

\paragraph{Properties:}
We designed this data-set to specifically showcase issues with the semi-supervised VAE objective.
As such, we made sure that the data marginal $p(x)$ of this example will be
learned well using unsupervised VAE (trained on the $x$'s only) 
This way we can focus on the new issues introduced by the semi-supervised objective.

For this ground-truth model, the posterior of the un-labeled data $p_{\theta_\text{GT}}(z | x)$ is bimodal,
since there are two functions that could have generated each $x$: 
$f_{\theta_\text{GT}}(y = 0, z)$ and $f_{\theta_\text{GT}}(y = 1, z)$.
As such, approximating this posterior with an MFG will encourage
the semi-supervised objective to find a model for which 
$f_{\theta_\text{GT}}(y = 0, z) = f_{\theta_\text{GT}}(y = 1, z)$ (see Figure \ref{fig:vae-ss-discrete-fn}).
When both functions collapse to the same function, $p_\theta(x | y) \approx p_\theta(x)$
(see Figure \ref{fig:vae-ss-discrete-px-given-y}).
This will prevent the learned model from generating realistic counterfactuals.

\subsection{Continuous Semi-Circle Example} \label{sec:continuous-ss-example}

\paragraph{Generative Process:}
\begin{align}
\begin{split}
z &\sim \mathcal{N}(0, 1) \\
y &\sim \mathcal{N}(0, 1) \\
h(y) &= B^{-1}(\Phi(y); 0.2, 0.2) \\
\epsilon &\sim \mathcal{N}(0, \sigma^2_\epsilon \cdot I) \\
x | z, y &= \underbrace{
\begin{bmatrix}
\cos\left( h(y) \cdot \pi \cdot \sqrt{\Phi(z)} + (1 - h(y)) \cdot \pi \cdot \Phi(z)^3 \right) \\
\sin\left( h(y) \cdot \pi \cdot \sqrt{\Phi(z)} + (1 - h(y)) \cdot \pi \cdot \Phi(z)^3 \right) \\
\end{bmatrix}
}_{f_{\theta_\text{GT}}(z, y)} + \epsilon
\end{split}
\label{eq:continuous-semi-circle-example}
\end{align}
where $\Phi$ is the CDF of a standard normal and $B^{-1}(.; \alpha, \beta)$ is the inverse CDF of the beta distribution. 

\paragraph{Properties:} 
As in the ``Discrete Semi-Circle Example'', we designed this data-set to have a $p(x)$ that the VAE objective
would learn well so we can focus on the new issues introduced by the semi-supervised objective.
The data-set demonstrates the same pathologies in the semi-supervised objective
as shown by ``Discrete Semi-Circle Example'' with the addition of yet another pathology:
since the posterior $p_\theta(y | x)$ is bimodal in this example,
encouraging an MFG $q_\phi(y | x)$ discriminator to be predictive will 
collapse $f_\theta(z, y)$ to the same function for all values of $y$ (see Figure \ref{fig:vae-ss-continuous-fn})
As such, as we increase $\alpha$, the better our predictive accuracy will be but
the more $p_\theta(x | y) \rightarrow p_\theta(x)$,
causing the learned model to generate poor quality counterfactuals (see Figure \ref{fig:vae-ss-continuous-px-given-y}).

\section{Qualitative Results} \label{sec:additional-qualitative-results}

\begin{itemize}
\item Qualitative results to support the need for both conditions from Section \ref{sec:misestimate-px-conditions}: Figures \ref{fig:vae-circle}, \ref{fig:vae-abs}.
\item Qualitative demonstration of unsupervised VAE pathologies: Figures \ref{fig:vae-fig-8}, \ref{fig:lin-fig-8}, \ref{fig:iwae-fig-8}, \ref{fig:vae-clusters}, \ref{fig:lin-clusters}, \ref{fig:iwae-clusters}, \ref{fig:vae-spiral-dots}. 
\item Qualitative demonstration of semi-supervised VAE pathologies: Figures \ref{fig:vae-ss-discrete}, \ref{fig:lin-ss-discrete}, \ref{fig:iwae-ss-discrete}, \ref{fig:vae-ss-continuous}, \ref{fig:lin-ss-continuous}, \ref{fig:iwae-ss-continuous}. 
\item When learning compressed representations, posterior is simper for mismatched models: Figures \ref{fig:clusters-mismatch-5d}, \ref{fig:fig-8-mismatch-5d}. 
\end{itemize}

\FloatBarrier

\begin{figure*}[p]
    \centering
    \vspace*{-1cm}
    \tiny
    
    \begin{subfigure}[t]{0.55\textwidth}
    \includegraphics[width=1.0\textwidth]{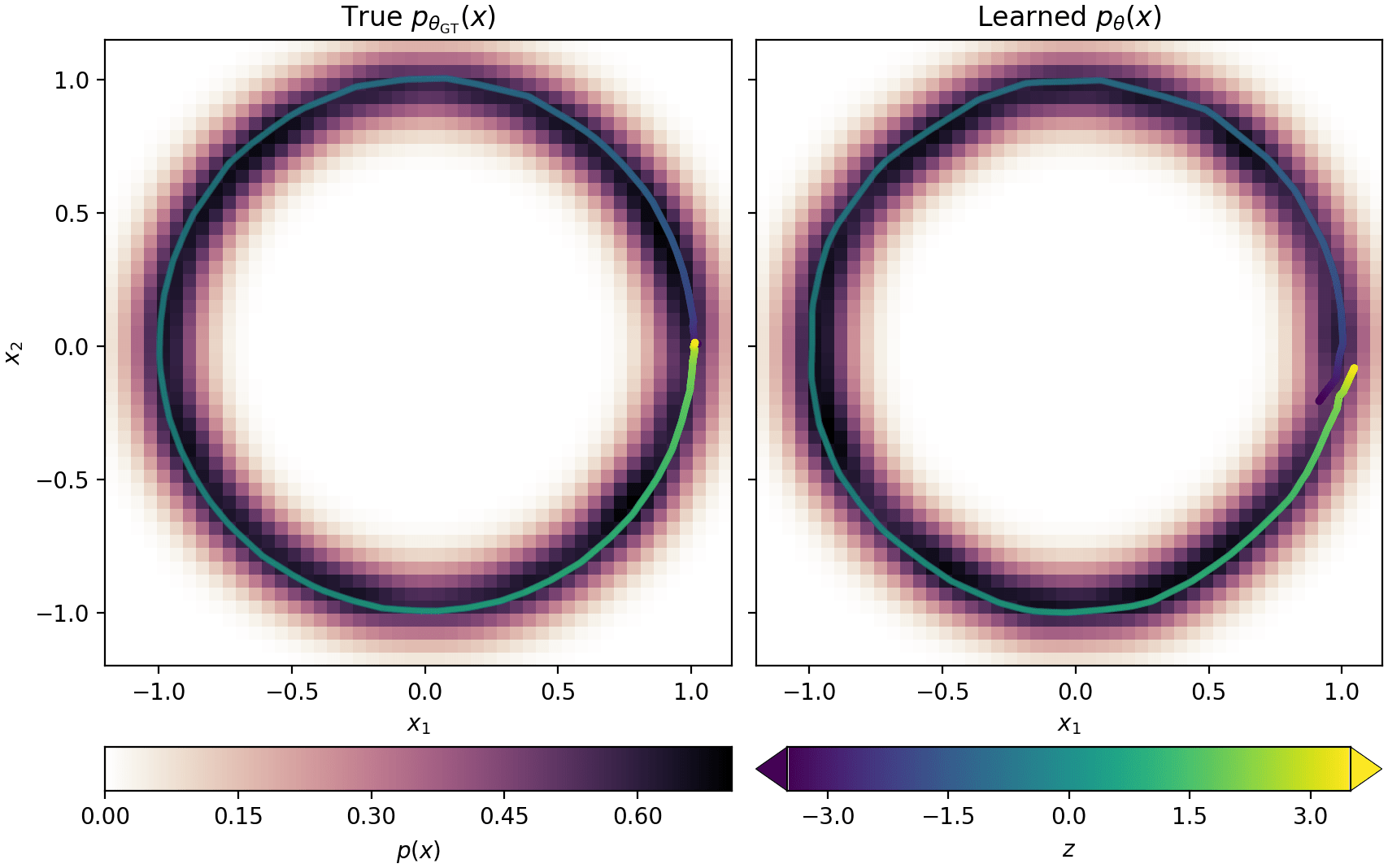}
    \caption{True vs. learned $p_\theta(x)$, and learned vs. true $f_\theta(z)$, colored by the value of $z$.}
    \label{fig:vae-circle-px}
    \end{subfigure}
    
    \begin{subfigure}[t]{0.35\textwidth}
    \includegraphics[width=1.0\textwidth]{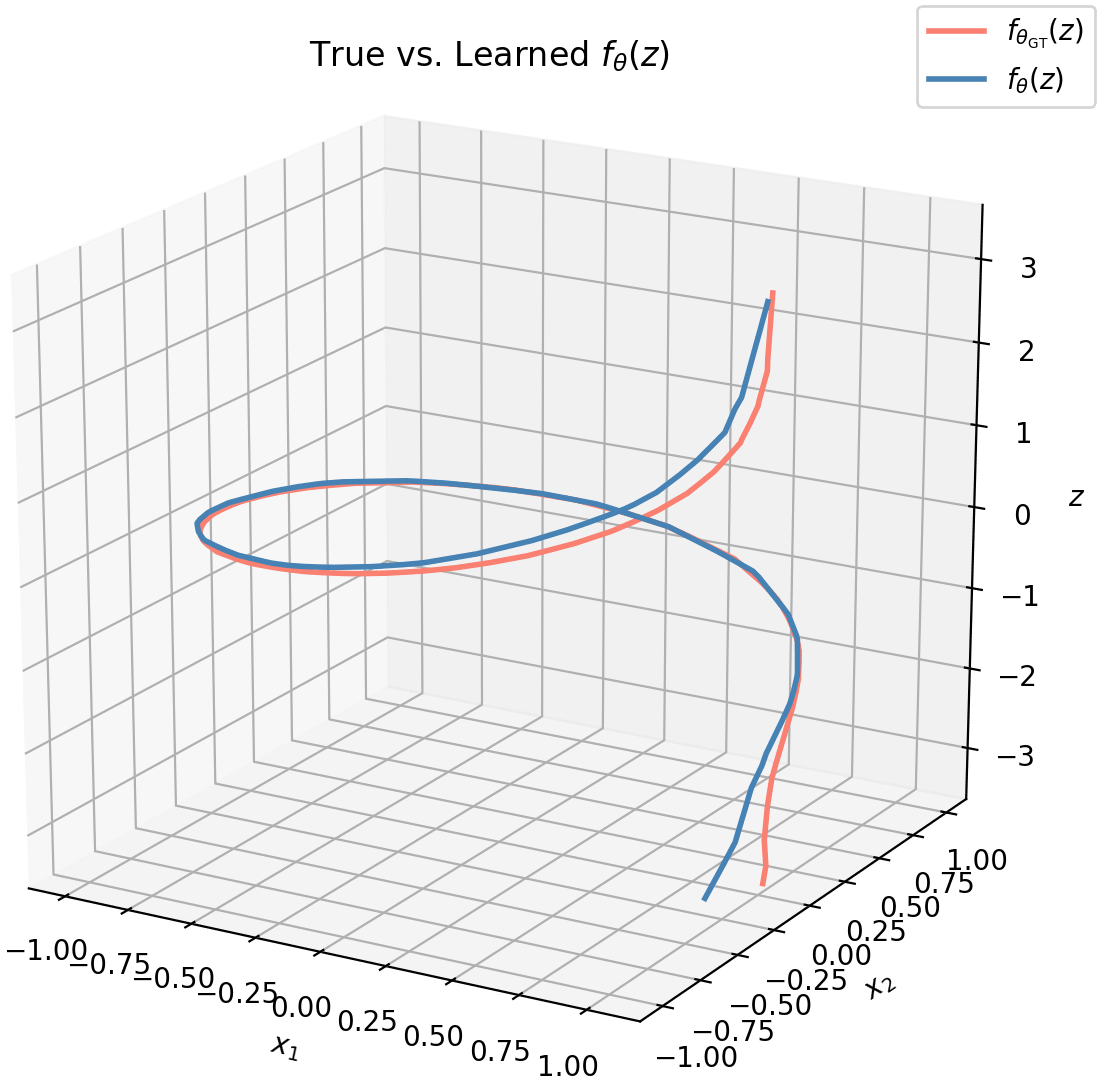}
    \caption{True vs. learned $f_\theta(x)$}
    \label{fig:vae-circle-f}
    \end{subfigure}
    ~
    \begin{subfigure}[t]{0.7\textwidth}
    \includegraphics[width=1.0\textwidth]{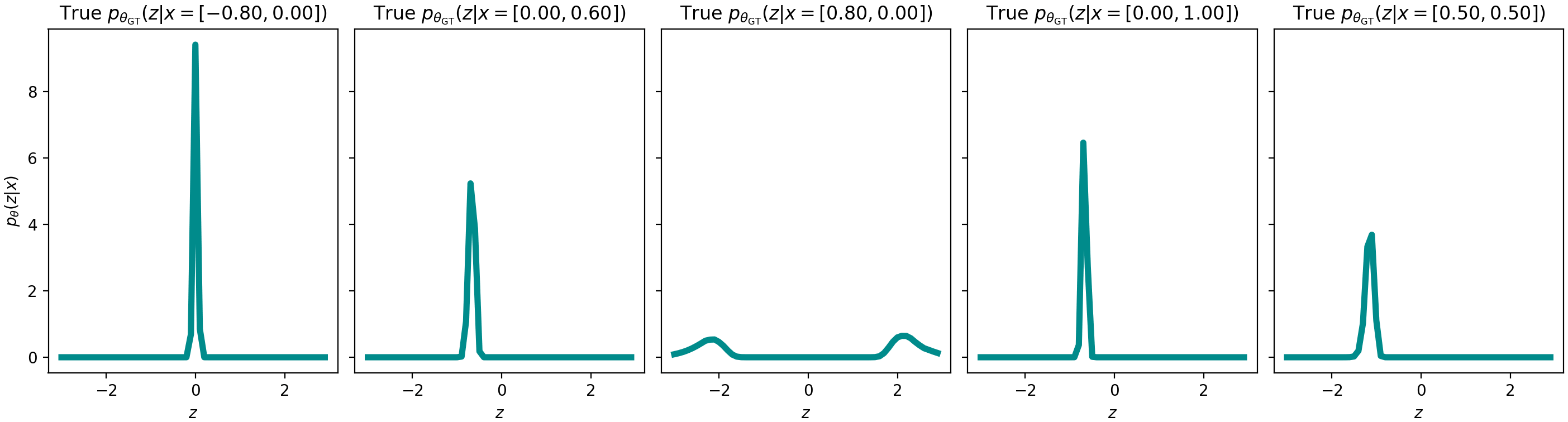}
    \caption{Posteriors under true $f_\theta$}
    \label{fig:vae-circle-post-true}
    \end{subfigure}
    ~
     \begin{subfigure}[t]{0.7\textwidth}
    \includegraphics[width=1.0\textwidth]{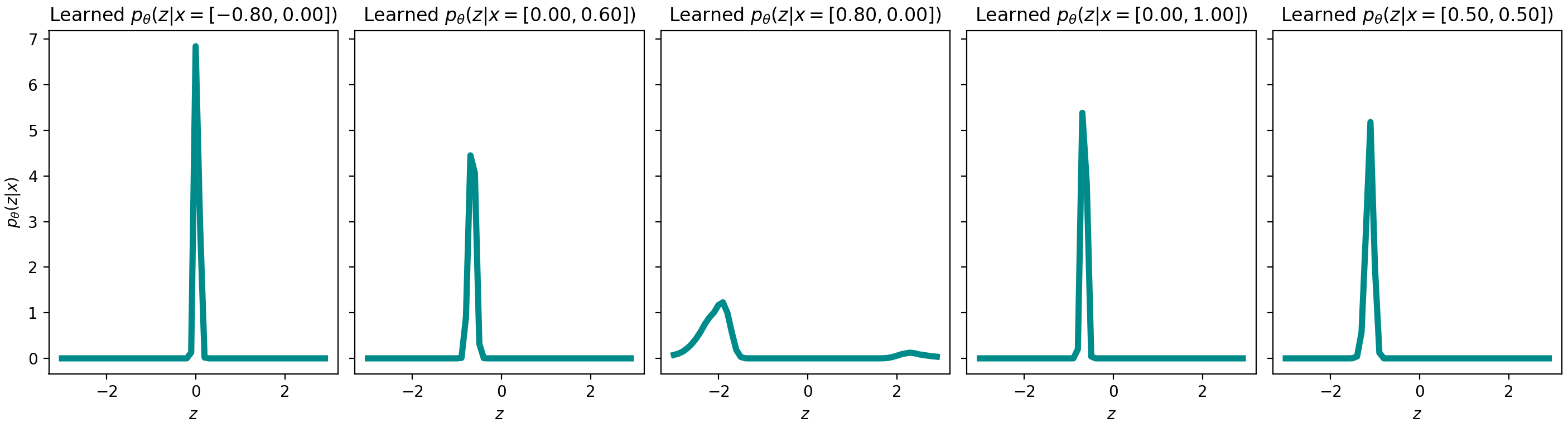}
    \caption{Posteriors under learned $f_\theta$}
    \label{fig:vae-circle-post-learned}
    \end{subfigure}
    \caption{MFG-VAE trained on the Circle Example. In this toy data, condition (2) holds from Section \ref{sec:misestimate-px-conditions} holds and condition (1) does not.
    To see this, notice that most examples of the posteriors are Gaussian-like, with the exception
    of the posteriors near $x = [1.0, 0.0]$, which are bimodal since in that neighborhood, 
    $x$ could have been generated using either $z > 3.0$ or using $z < -3.0$.
    Since only a few training points have a high posterior matching objective,
    a VAE is able to learn the data distribution well.}
    \label{fig:vae-circle}
\end{figure*}

\begin{figure*}[p]
    \centering
    \vspace*{-1cm}
    \tiny
    
    \begin{subfigure}[t]{0.55\textwidth}
    \includegraphics[width=1.0\textwidth]{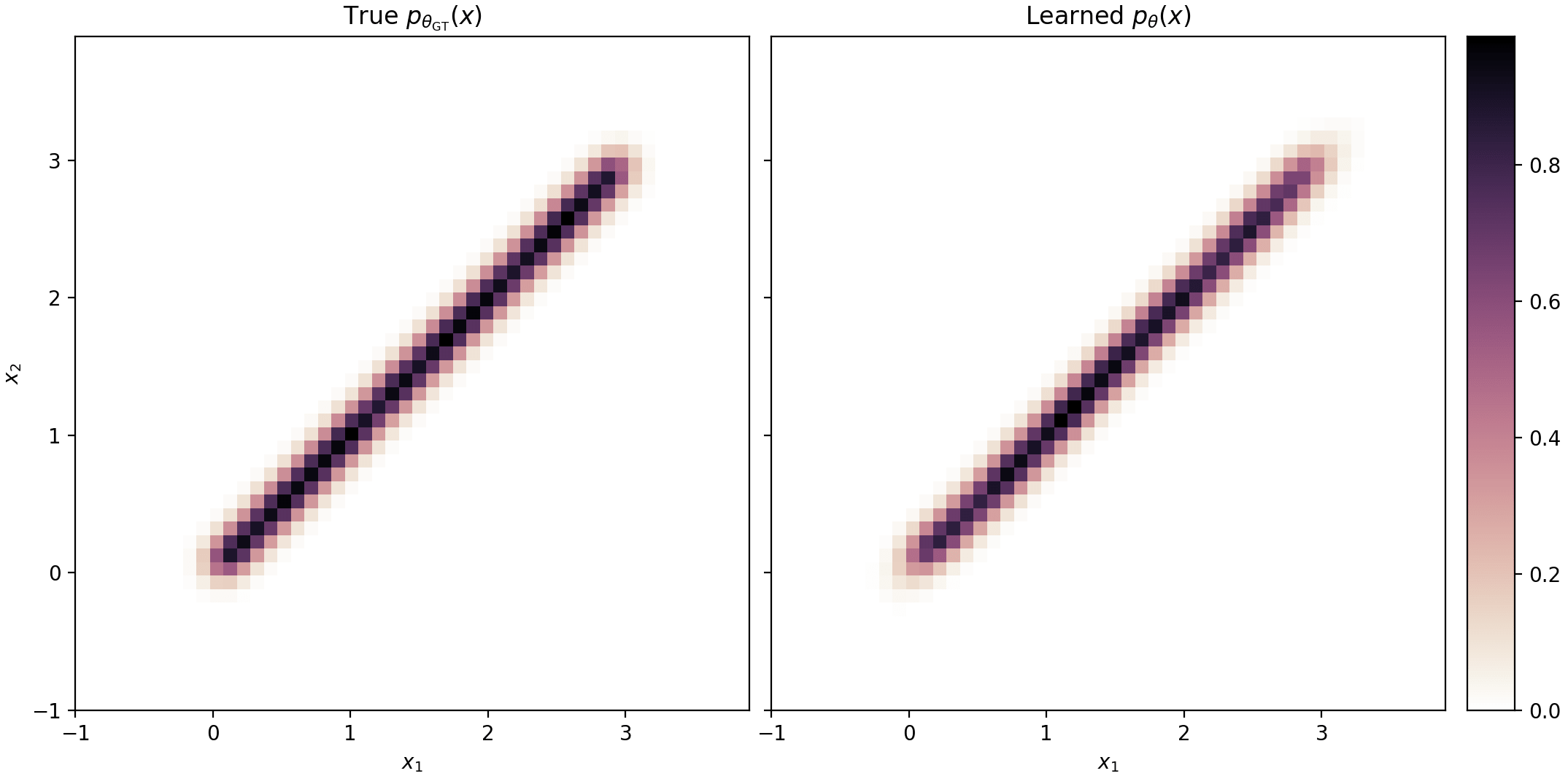}
    \caption{True vs. learned $p_\theta(x)$}
    \label{fig:vae-abs-px}
    \end{subfigure}
    ~
    \begin{subfigure}[t]{0.35\textwidth}
    \includegraphics[width=1.0\textwidth]{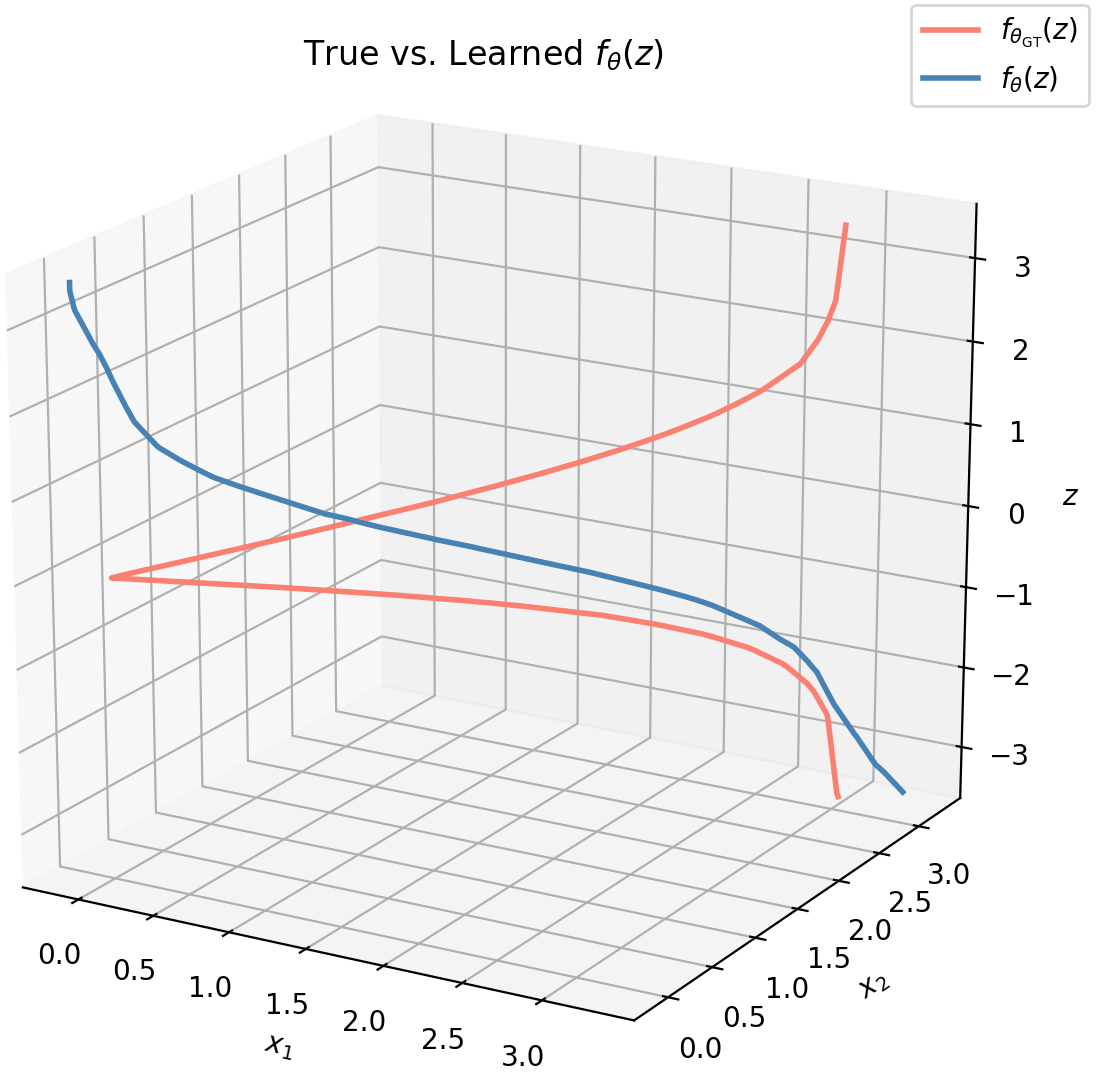}
    \caption{True vs. learned $f_\theta(x)$}
    \label{fig:vae-abs-f}
    \end{subfigure}an MFG
    ~
    \begin{subfigure}[t]{0.7\textwidth}
    \includegraphics[width=1.0\textwidth]{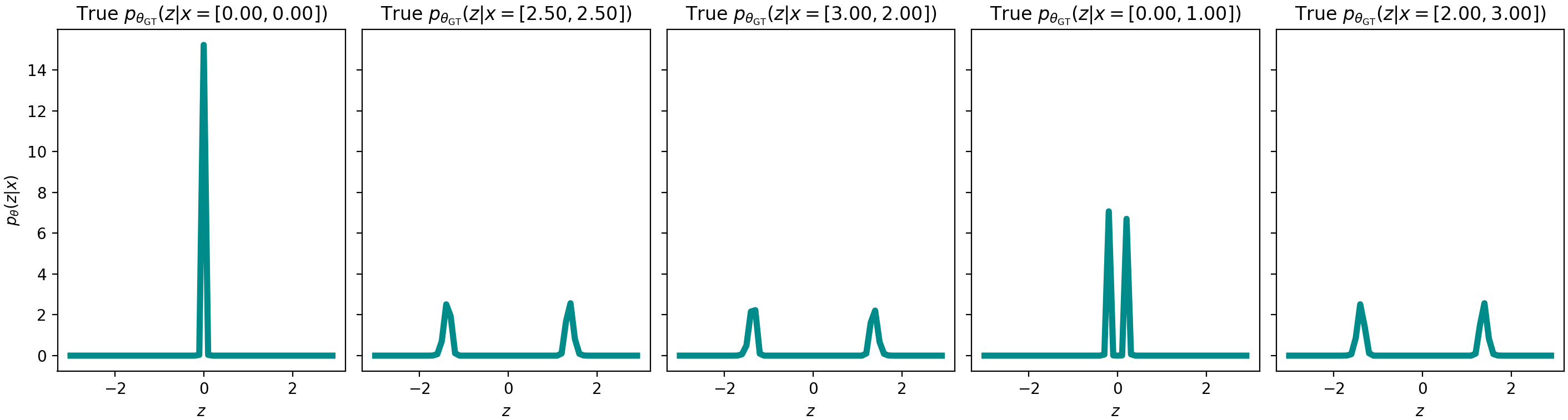}
    \caption{Posteriors under true $f_\theta$}
    \label{fig:vae-abs-post-true}
    \end{subfigure}
    ~
     \begin{subfigure}[t]{0.7\textwidth}
    \includegraphics[width=1.0\textwidth]{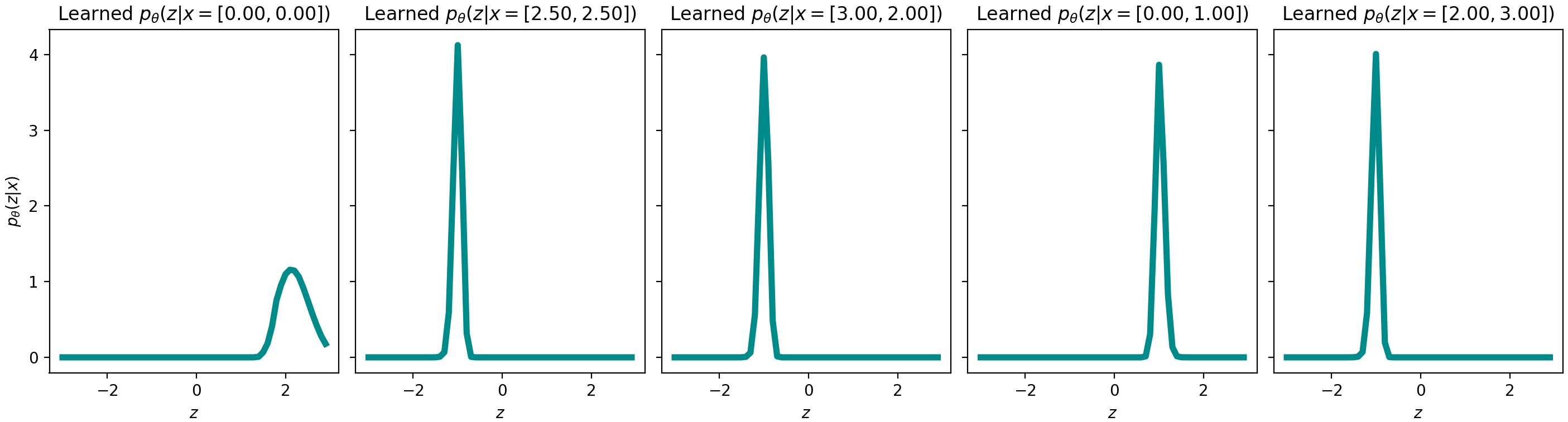}
    \caption{Posteriors under learned $f_\theta$}
    \label{fig:vae-abs-post-learned}
    \end{subfigure}
    \caption{MFG-VAE trained on the Absolute-Value Example. In this toy data, condition (1) from Section \ref{sec:misestimate-px-conditions} holds and condition (2) does not. To see this, notice that the function $f_\theta$ learned with a VAE is completely different than the ground-truth
    $f_\theta$, and unlike the ground-truth $f_\theta$ which has bimodal posteriors, the learned $f_\theta$ has unimodal posteriors (which are easier to approximate with an MFG). As such, a VAE is able to learn the data distribution well.}
    \label{fig:vae-abs}
\end{figure*}

\begin{figure*}[p]
    \centering
    \vspace*{-1cm}
    \tiny
    
    \begin{subfigure}[t]{0.55\textwidth}
    \includegraphics[width=1.0\textwidth]{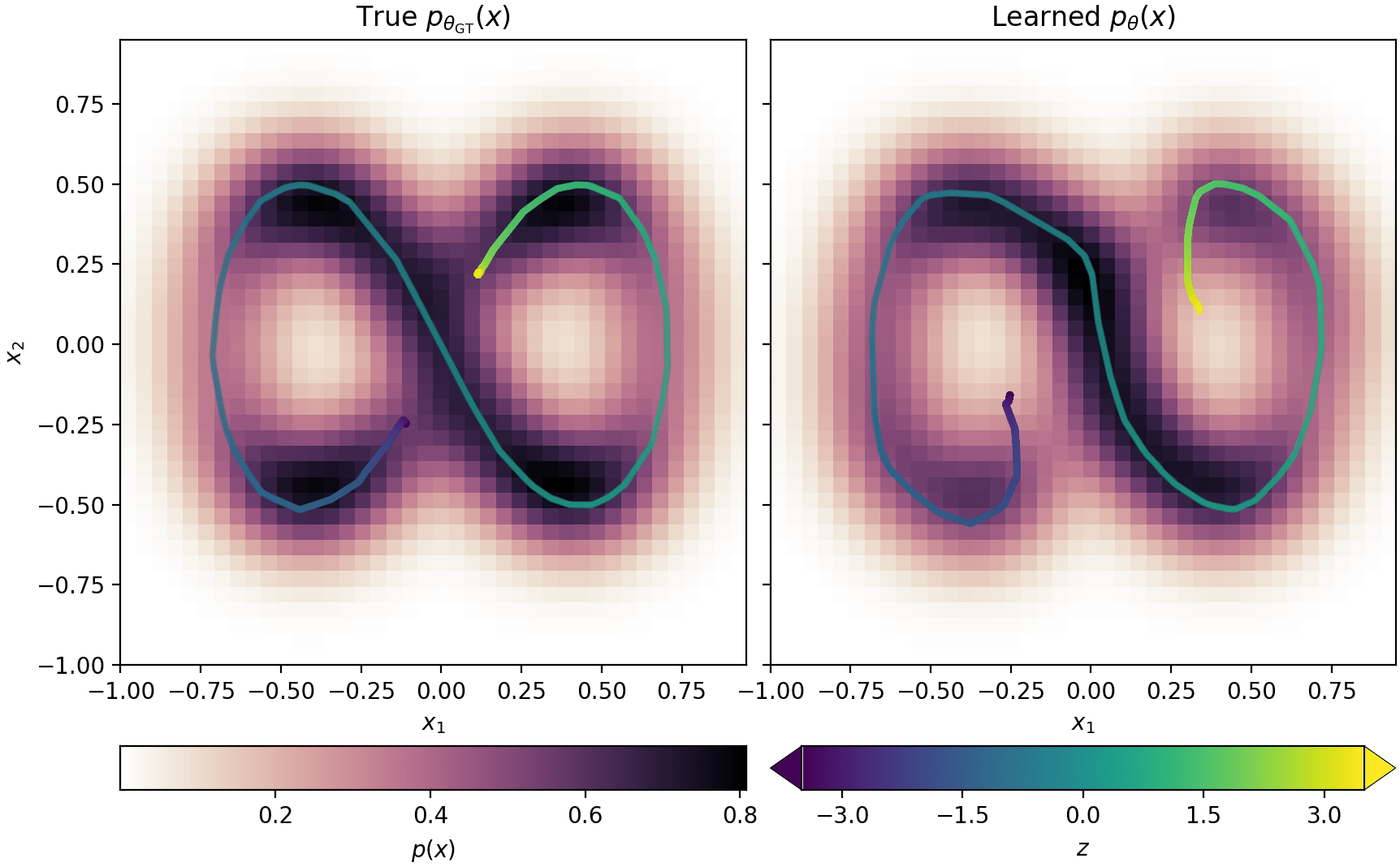}    
    \caption{True vs. learned $p_\theta(x)$, and learned vs. true $f_\theta(z)$, colored by the value of $z$.}
    \label{fig:vae-fig-8-px}
    \end{subfigure} 
    ~
    \begin{subfigure}[t]{0.35\textwidth}
    \includegraphics[width=1.0\textwidth]{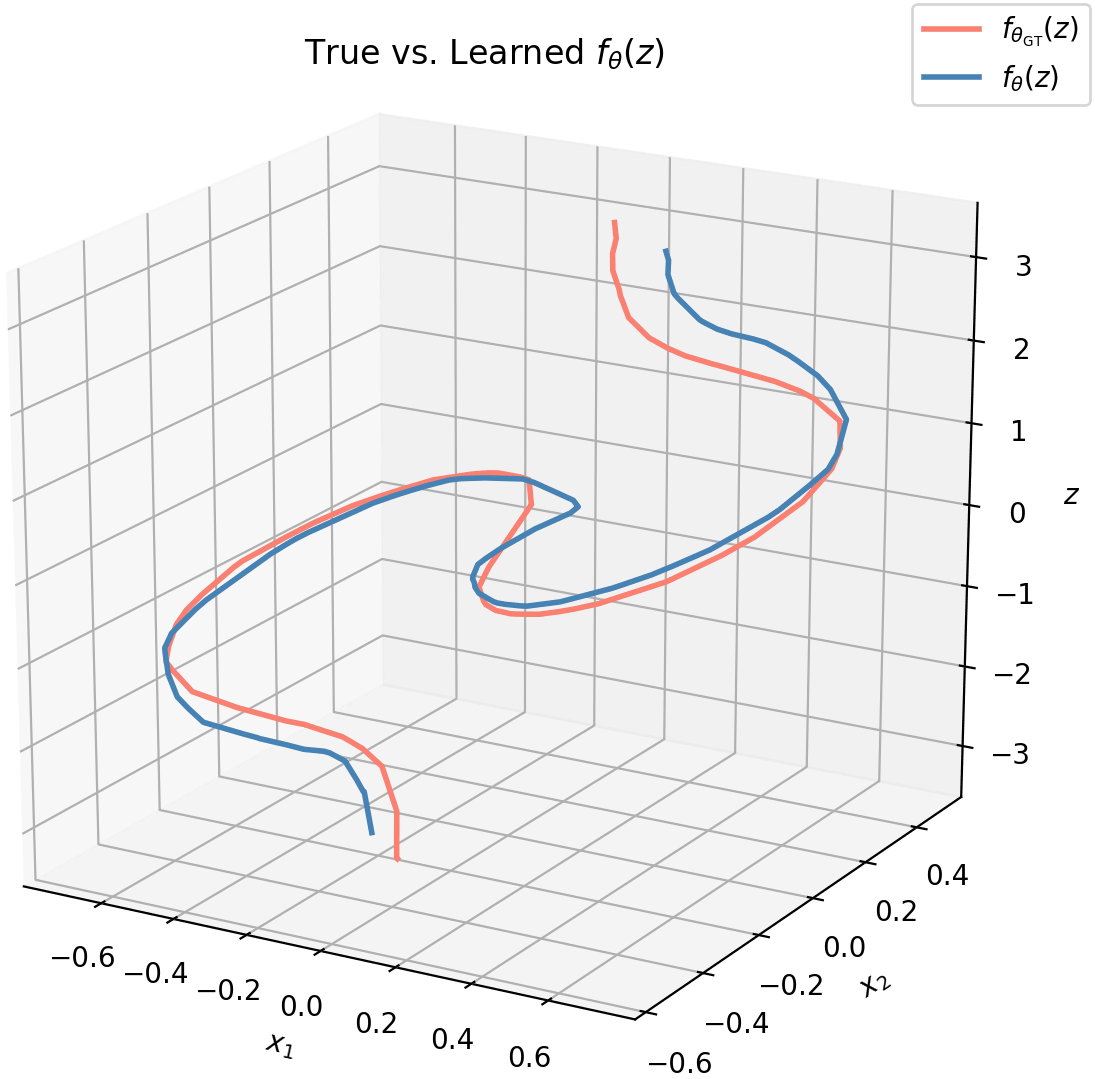}
    \caption{True vs. learned $f_\theta(x)$}
    \label{fig:vae-fig-8-f}
    \end{subfigure}
    ~
    \begin{subfigure}[t]{0.7\textwidth}
    \includegraphics[width=1.0\textwidth]{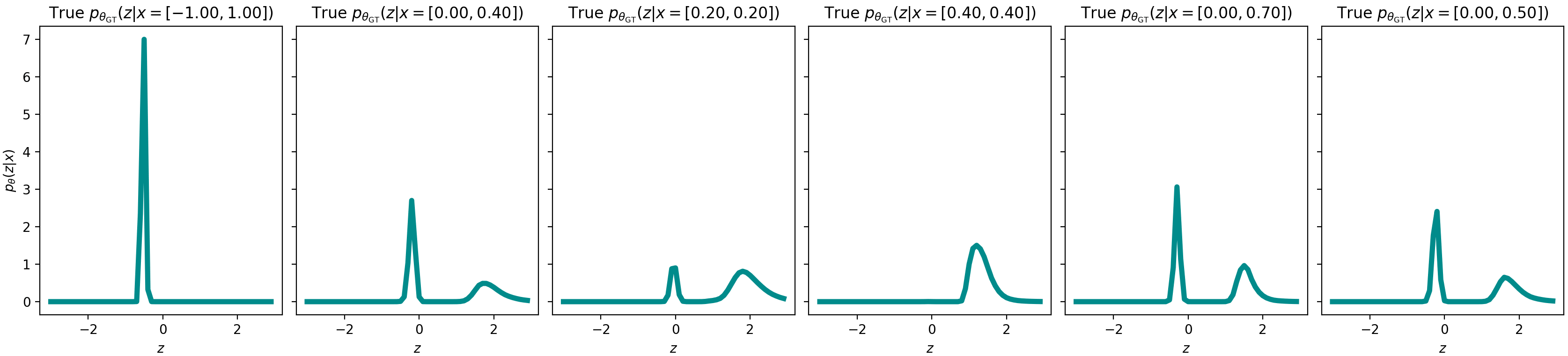}
    \caption{Posteriors under true $f_\theta$}    
    \label{fig:vae-fig-8-post-true}
    \end{subfigure}
    ~
     \begin{subfigure}[t]{0.7\textwidth}
    \includegraphics[width=1.0\textwidth]{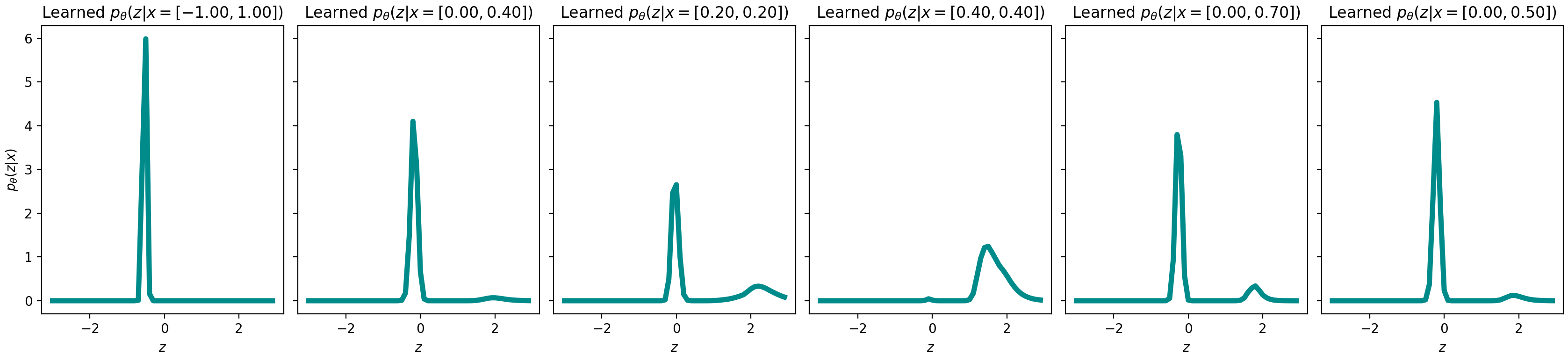}
    \caption{Posteriors under learned $f_\theta$}    
    \label{fig:vae-fig-8-post-learned}
    \end{subfigure}
    \caption{MFG-VAE trained on the Figure-8 Example. In this toy data, both conditions from Section \ref{sec:misestimate-px-conditions} hold.
    The VAE learns a generative model with simpler posterior than that of the ground-truth, 
    though it is unable to completely simplify the posterior as in the Absolute-Value Example.
    To learn a generative model with a simpler posterior, it curves the learned function $f_\theta$ at $z = -3.0$
    and $z = 3.0$ away from the region where $z = 0$.
    This is because under the true generative model, the true posterior $p_\theta(z | x)$ in the neighborhood 
    of $x \approx 0$ has modes around either $z = 0$ and $z = 3.0$, or around $z = 0$ and $z = -3.0$,
    leading to a high posterior matching objective.}
    \label{fig:vae-fig-8}
\end{figure*}

\begin{figure*}[p]
    \centering
    \vspace*{-1cm}
    \tiny
    
    \begin{subfigure}[t]{0.55\textwidth}
    \includegraphics[width=1.0\textwidth]{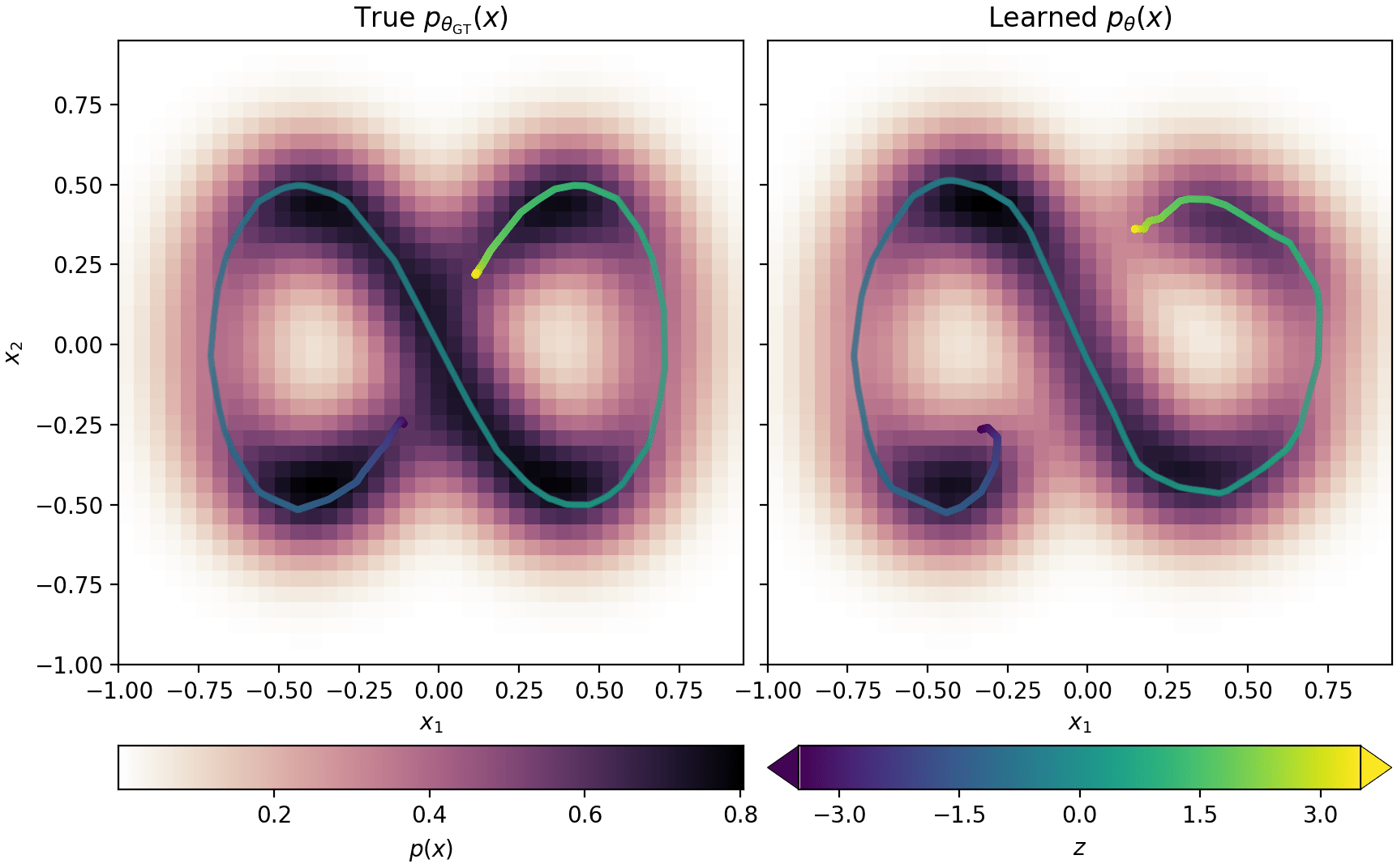}
    \caption{True vs. learned $p_\theta(x)$, and learned vs. true $f_\theta(z)$, colored by the value of $z$.}
    \label{fig:lin-fig-8-px}
    \end{subfigure}
    ~
    \begin{subfigure}[t]{0.35\textwidth}
    \includegraphics[width=1.0\textwidth]{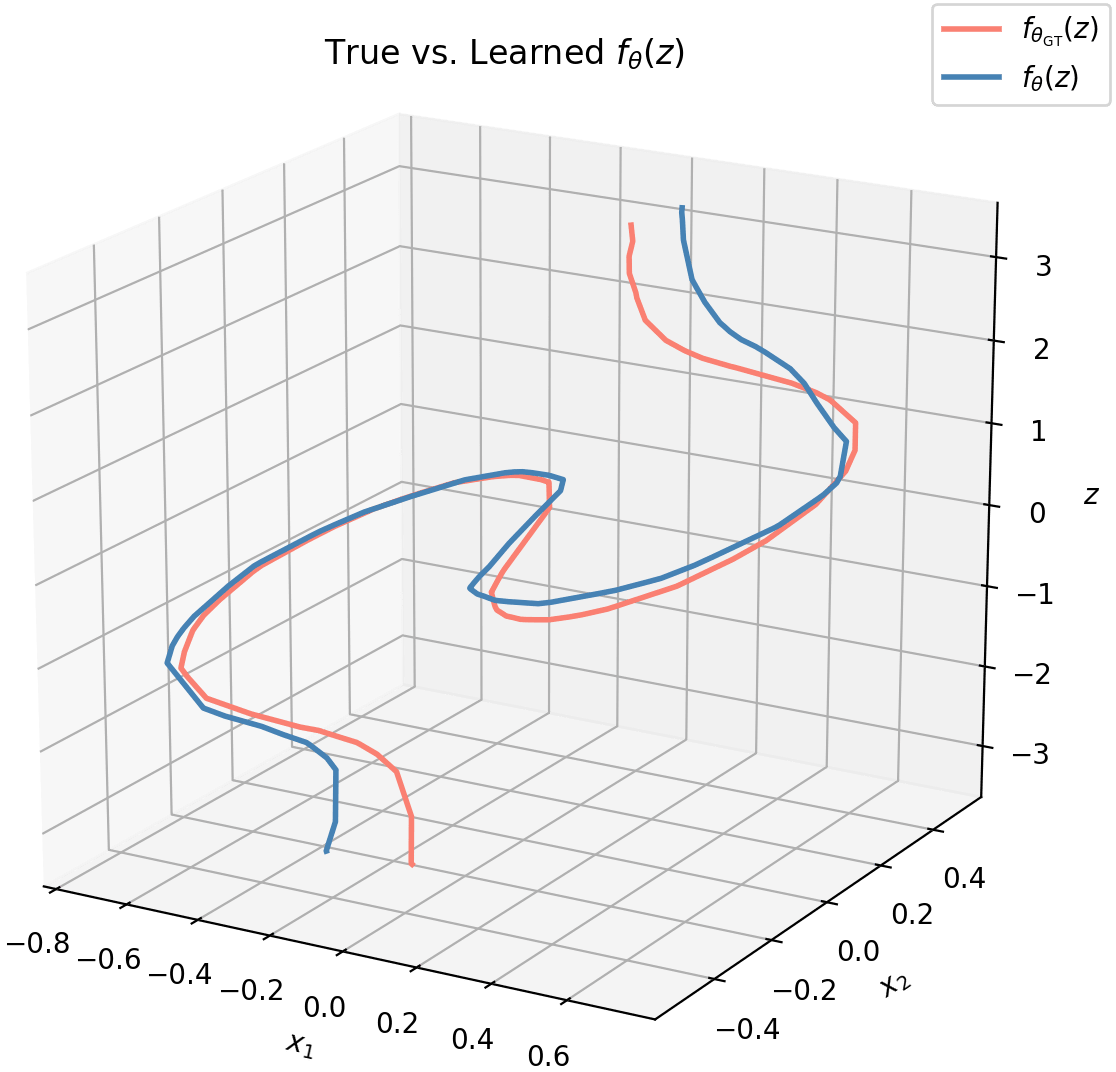}
    \caption{True vs. learned $f_\theta(x)$}
    \label{fig:lin-fig-8-f}
    \end{subfigure}
    ~
    \begin{subfigure}[t]{0.7\textwidth}
    \includegraphics[width=1.0\textwidth]{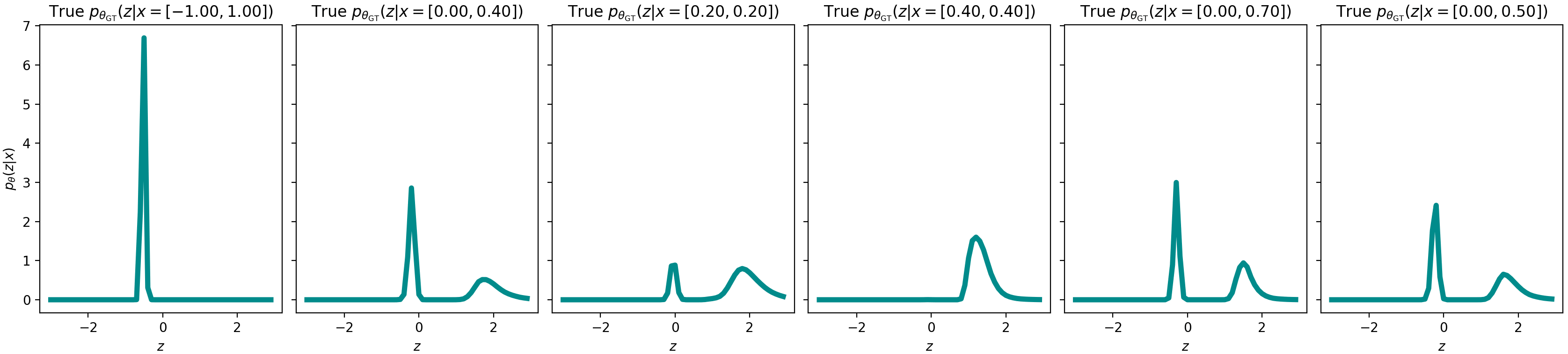}
    \caption{Posteriors under true $f_\theta$}
    \label{fig:lin-fig-8-post-true}
    \end{subfigure}
    ~
     \begin{subfigure}[t]{0.7\textwidth}
    \includegraphics[width=1.0\textwidth]{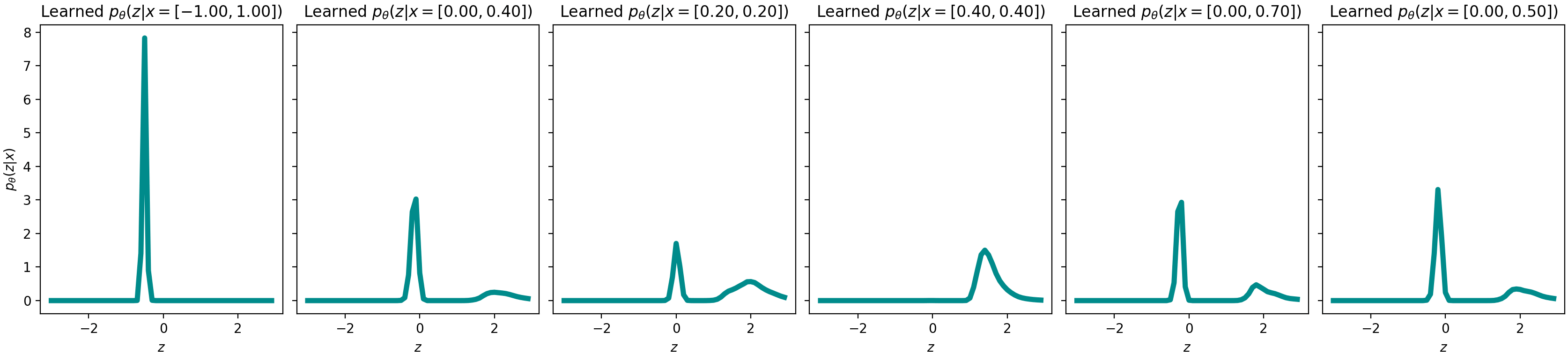}
    \caption{Posteriors under learned $f_\theta$}
    \label{fig:lin-fig-8-post-learned}
    \end{subfigure}
    \caption{VAE with Lagging Inference Networks (LIN) trained on the Figure-8 Example. 
    While LIN may help escape local optima, on this data, the training objective is still biased away
    from learning the true data distribution.
    As such, LIN fails in the same way an MFG-VAE does (see Figure \ref{fig:vae-fig-8}).}
    \label{fig:lin-fig-8}
\end{figure*}

\begin{figure*}[p]
    \centering
    \vspace*{-1cm}
    \tiny
    
    \begin{subfigure}[t]{0.55\textwidth}
    \includegraphics[width=1.0\textwidth]{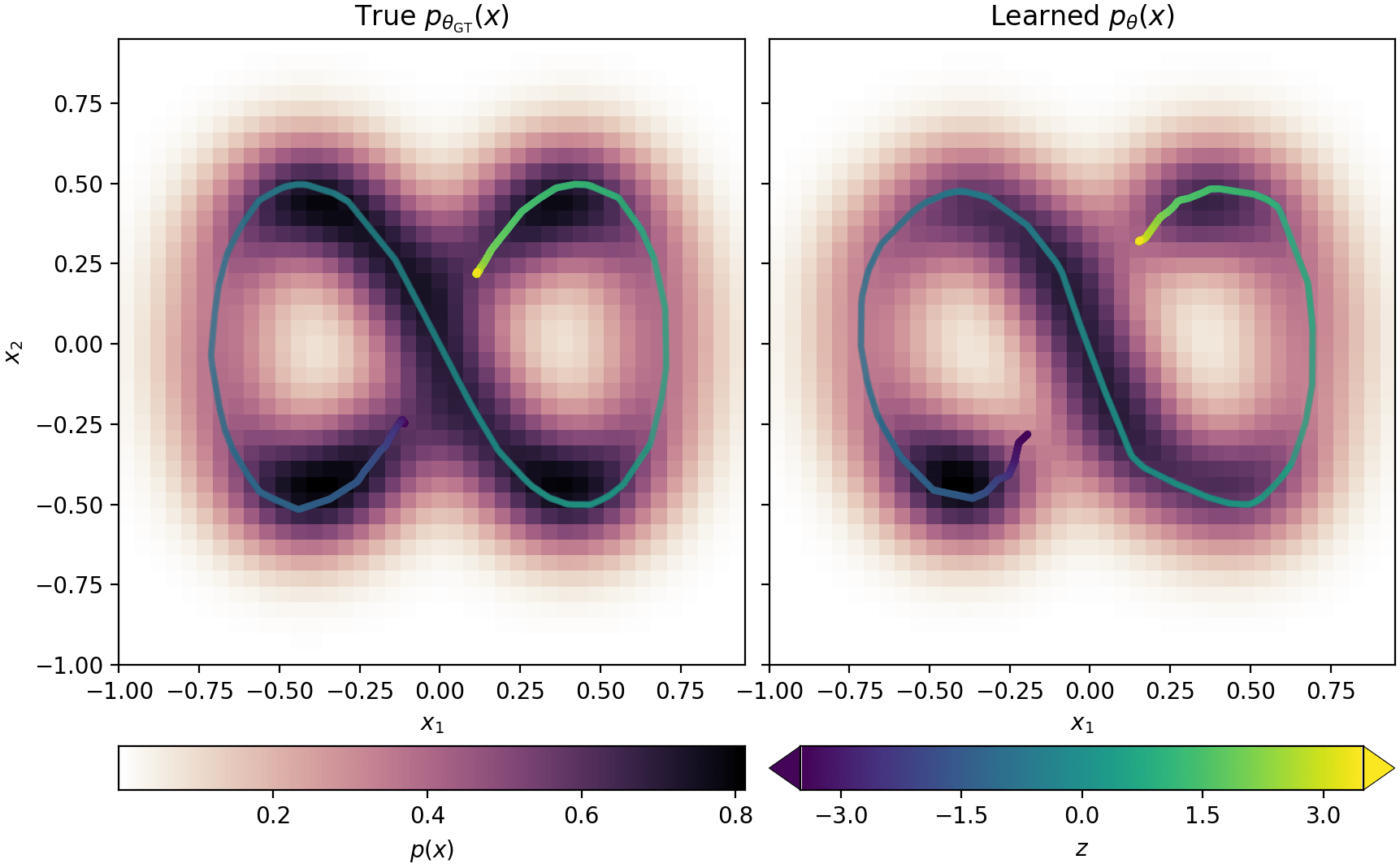}
    \caption{True vs. learned $p_\theta(x)$, and learned vs. true $f_\theta(z)$, colored by the value of $z$.}
    \label{fig:iwae-fig-8-px}
    \end{subfigure}
    ~
    \begin{subfigure}[t]{0.35\textwidth}
    \includegraphics[width=1.0\textwidth]{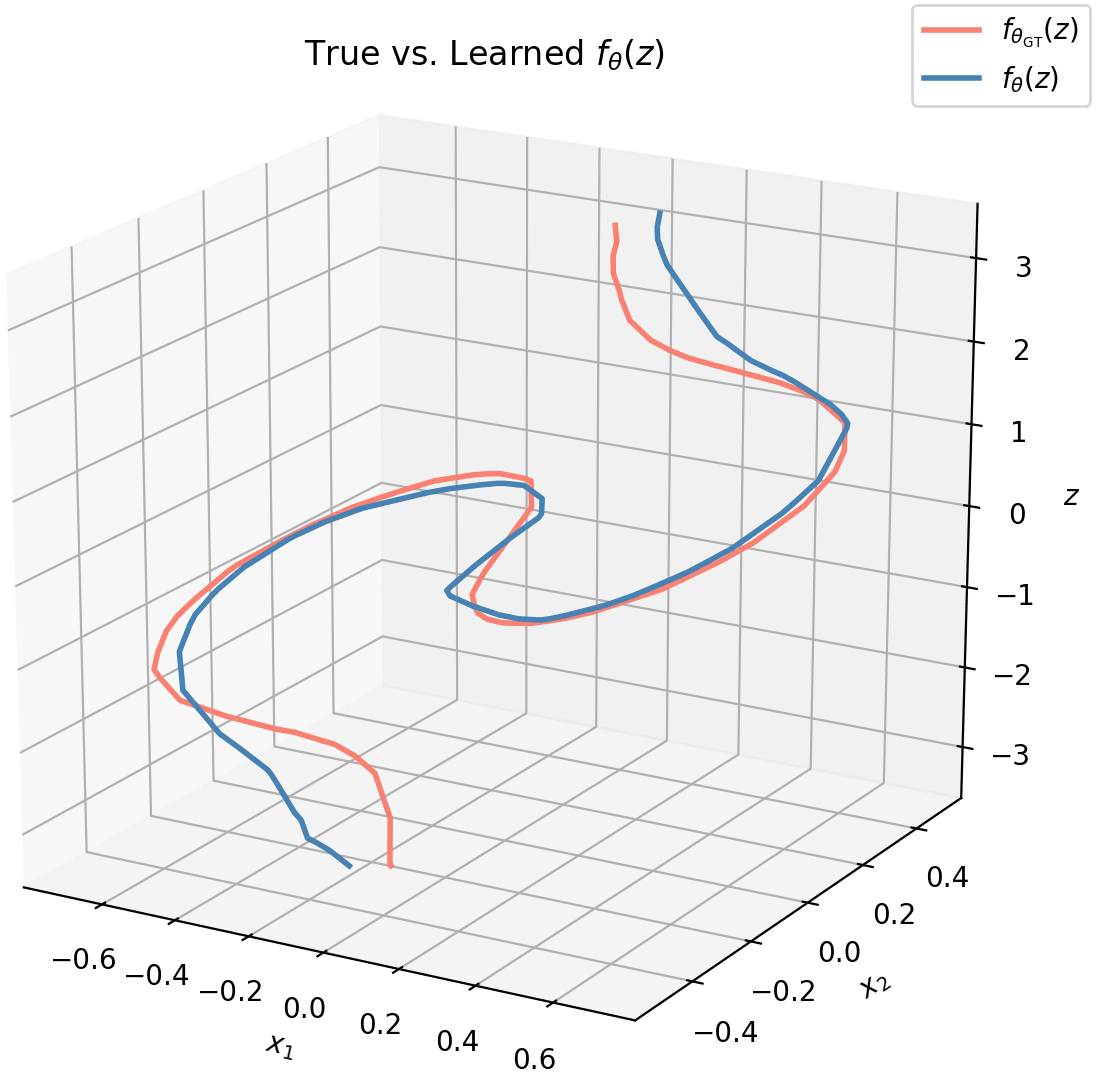}
    \caption{True vs. learned $f_\theta(x)$}
    \label{fig:iwae-fig-8-f}
    \end{subfigure}
    ~
    \begin{subfigure}[t]{0.7\textwidth}
    \includegraphics[width=1.0\textwidth]{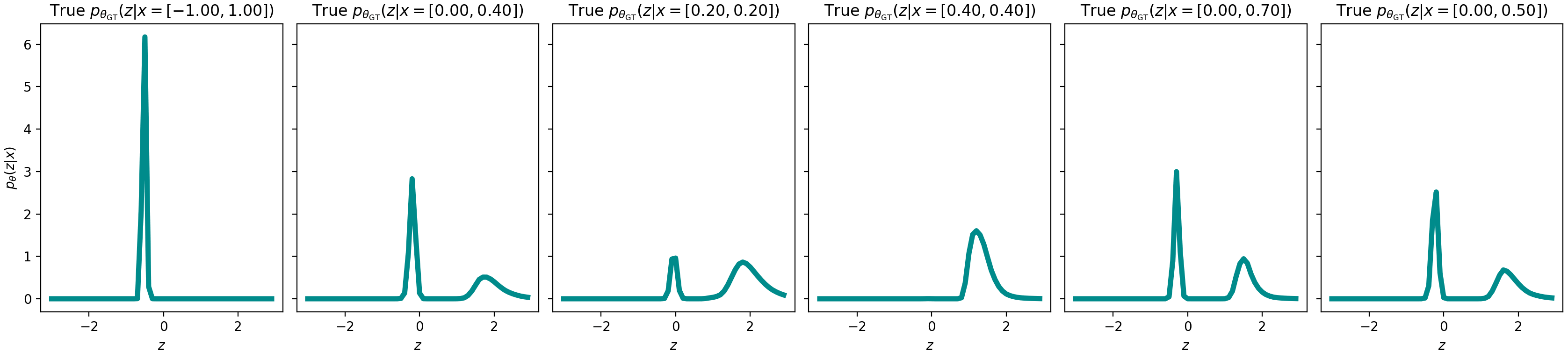}
    \caption{Posteriors under true $f_\theta$}
    \label{fig:iwae-fig-8-post-true}
    \end{subfigure}
    ~
     \begin{subfigure}[t]{0.7\textwidth}
    \includegraphics[width=1.0\textwidth]{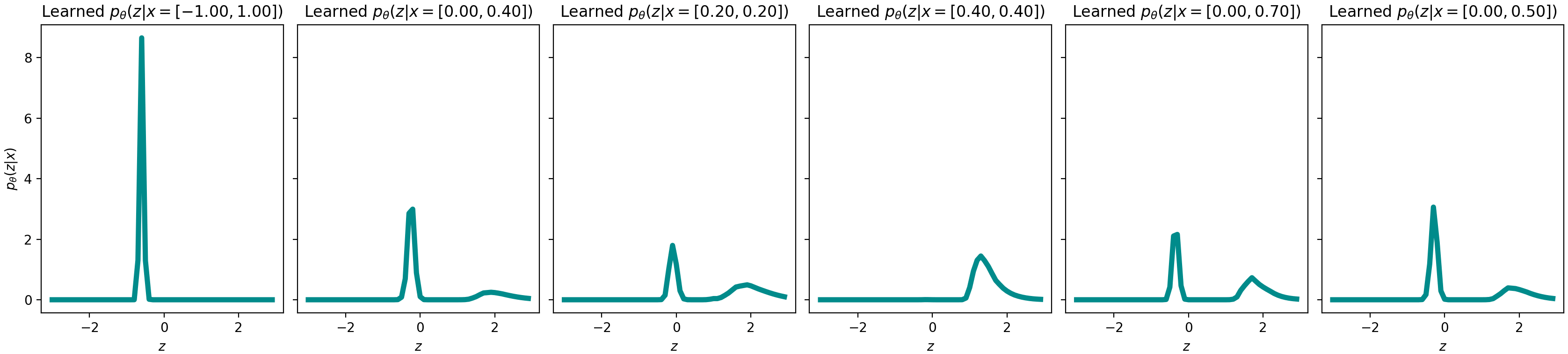}
    \caption{Posteriors under learned $f_\theta$}
    \label{fig:iwae-fig-8-post-learned}
    \end{subfigure}
    \caption{IWAE trained on the Figure-8 Example. In this toy data, both conditions from Section \ref{sec:misestimate-px-conditions} hold.
    The IWAE learns a generative model with a slightly simpler posterior than that of the ground-truth.
    This is because even with the number of importance samples as large as $S = 20$,
    the variational family implied by the IWAE objective is not sufficiently expressive.
    The objective therefore prefers to learn a model with a lower data marginal likelihood.
    While increasing $S \rightarrow \infty$ will resolve this issue, it is not clear how large a $S$ is necessary
    and whether the additional computational overhead is worth it.
    }
    \label{fig:iwae-fig-8}
\end{figure*}

\begin{figure*}[p]
    \centering
    \vspace*{-1cm}
    \tiny
    
    \begin{subfigure}[t]{0.55\textwidth}
    \includegraphics[width=1.0\textwidth]{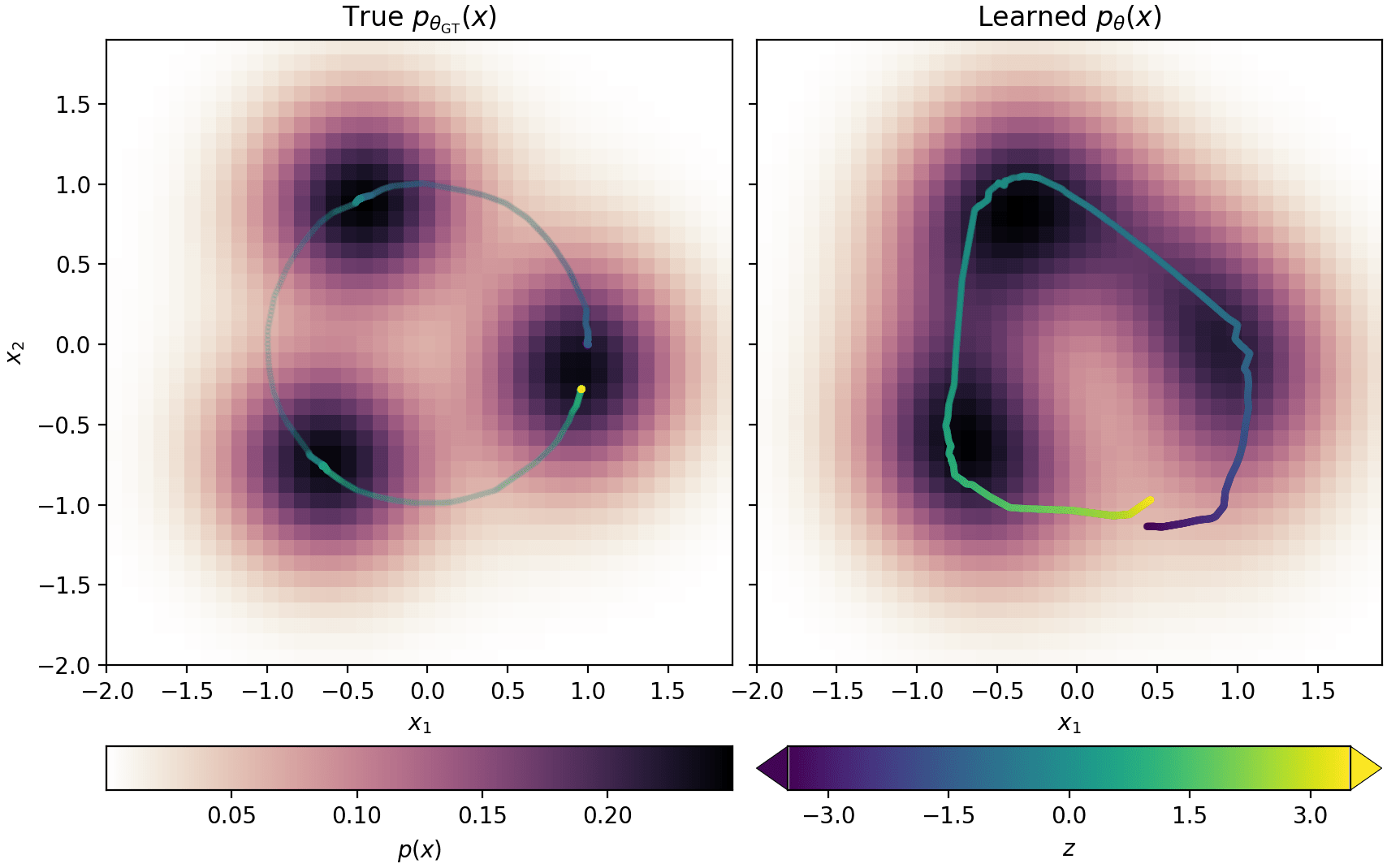}
    \caption{True vs. learned $p_\theta(x)$, and learned vs. true $f_\theta(z)$, colored by the value of $z$.}
    \label{fig:vae-clusters-px}
    \end{subfigure}
    ~
    \begin{subfigure}[t]{0.35\textwidth}
    \includegraphics[width=1.0\textwidth]{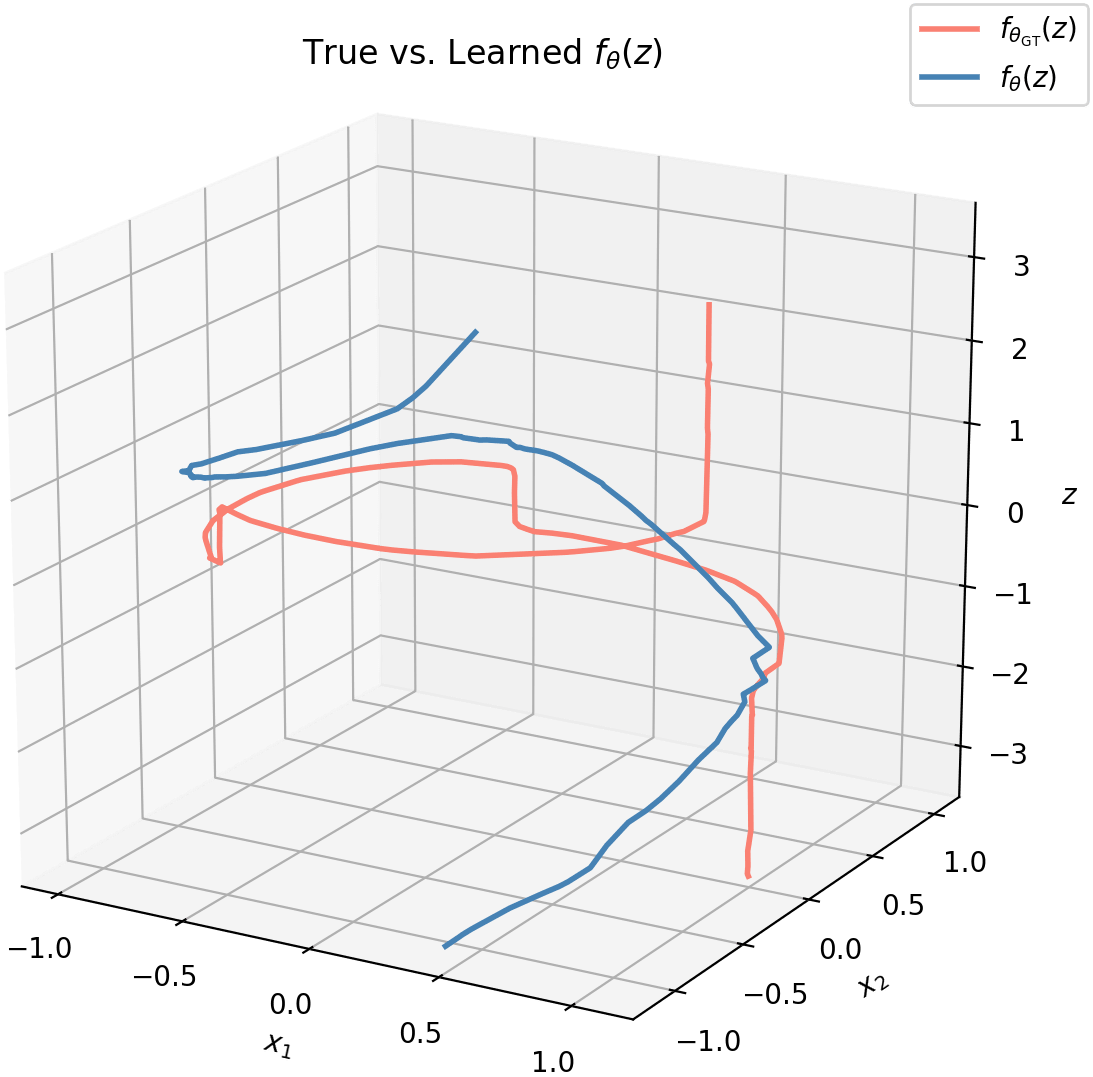}
    \caption{True vs. learned $f_\theta(x)$}
    \label{fig:vae-clusters-f}
    \end{subfigure}
    ~
    \begin{subfigure}[t]{0.7\textwidth}
    \includegraphics[width=1.0\textwidth]{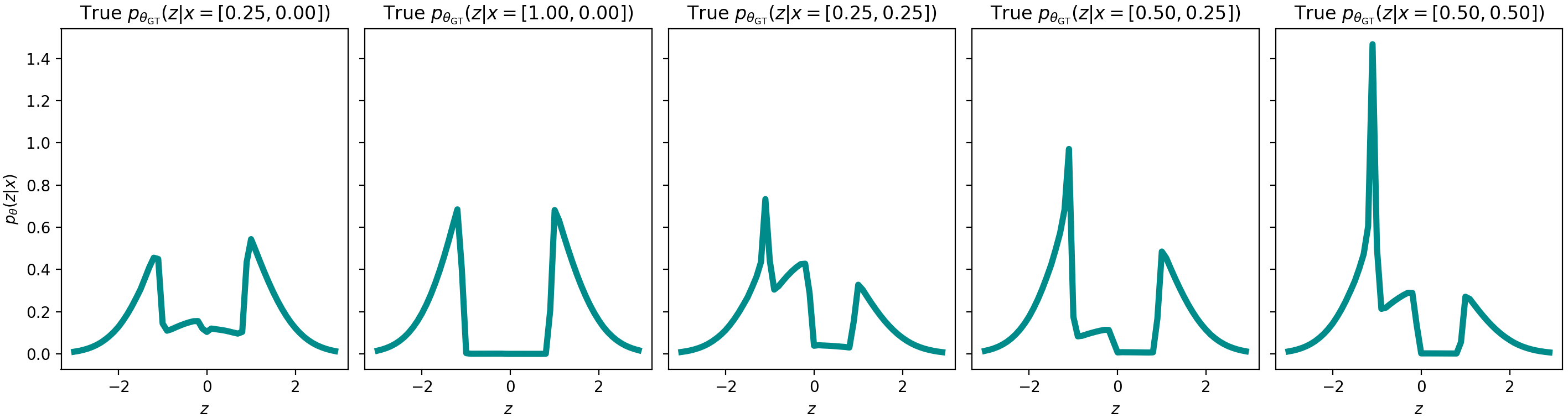}
    \caption{Posteriors under true $f_\theta$}
    \label{fig:vae-clusters-post-true}
    \end{subfigure}
    ~
     \begin{subfigure}[t]{0.7\textwidth}
    \includegraphics[width=1.0\textwidth]{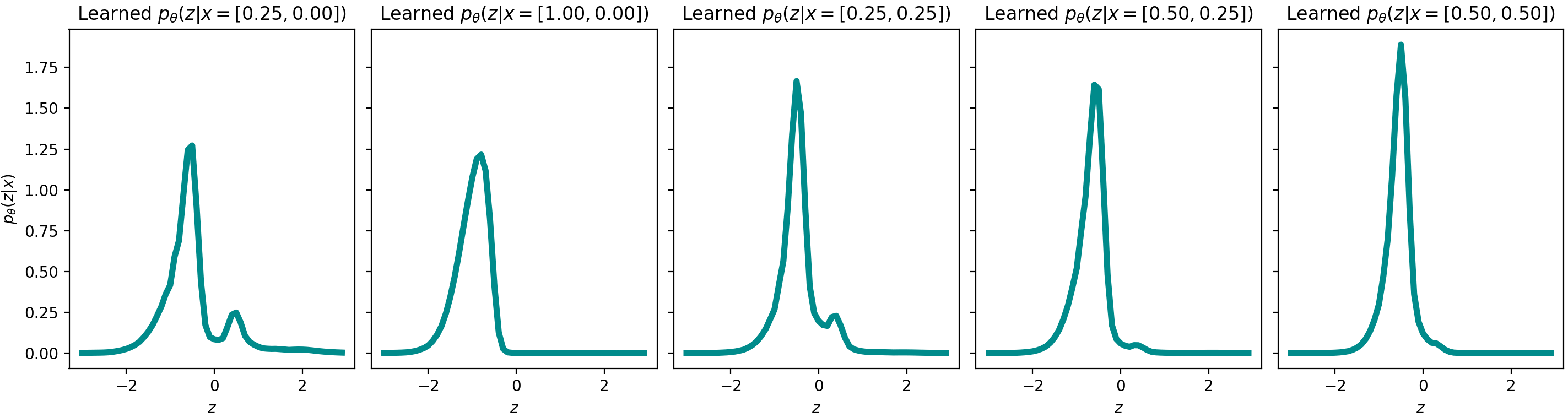}
    \caption{Posteriors under learned $f_\theta$}
    \label{fig:vae-clusters-post-learned}
    \end{subfigure}
    \caption{MFG-VAE trained on the Clusters Example. In this toy data, both conditions from Section \ref{sec:misestimate-px-conditions} hold.
    The VAE learns a generative model with simpler posterior than that of the ground-truth, 
    though it is unable to completely simplify the posterior as in the Absolute-Value Example.
    To learn a generative model with a simpler posterior, it learns a model with a function $f_\theta(z)$ 
    that, unlike the ground-truth function, does not have steep areas interleaved between flat areas.
    As such, the learned model is generally more flat, causing the learned density to be ``smeared'' between the modes.}
    \label{fig:vae-clusters}
\end{figure*}

\begin{figure*}[p]
    \centering
    \vspace*{-1cm}
    \tiny
    
    \begin{subfigure}[t]{0.55\textwidth}
    \includegraphics[width=1.0\textwidth]{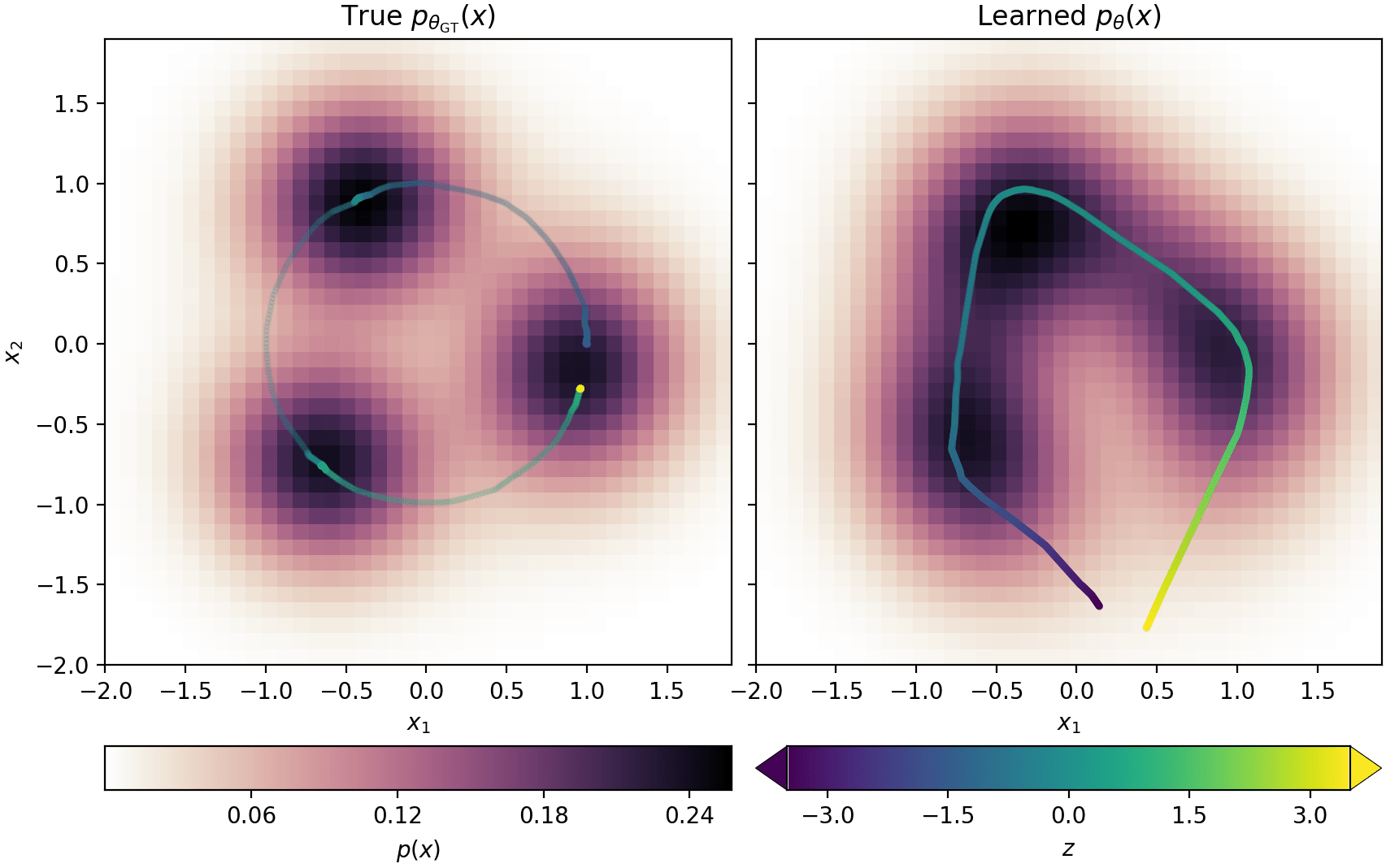}
    \caption{True vs. learned $p_\theta(x)$, and learned vs. true $f_\theta(z)$, colored by the value of $z$.}
    \label{fig:lin-clusters-px}
    \end{subfigure}
    ~
    \begin{subfigure}[t]{0.35\textwidth}
    \includegraphics[width=1.0\textwidth]{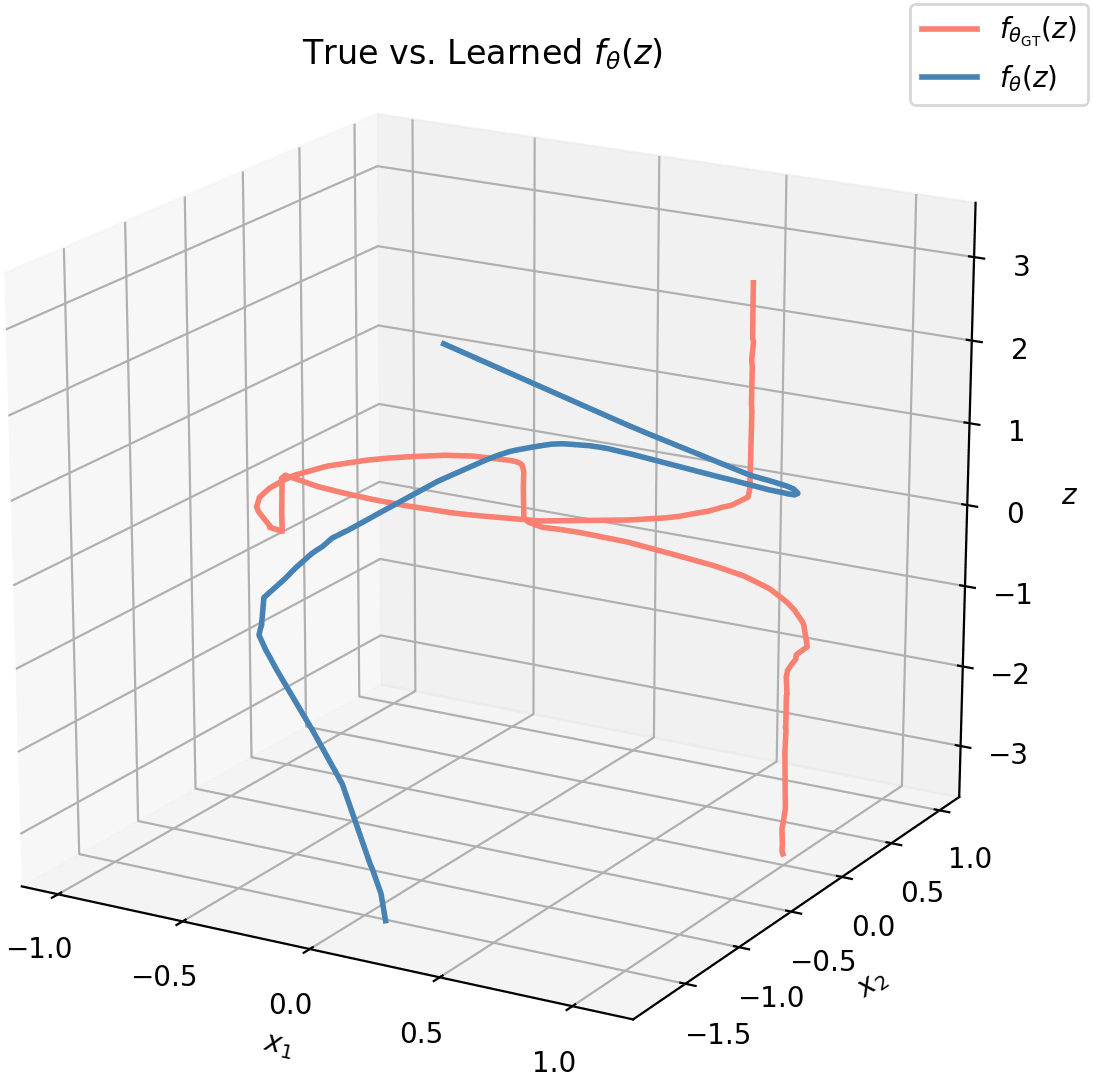}
    \caption{True vs. learned $f_\theta(x)$}
    \label{fig:lin-clusters-f}
    \end{subfigure}
    ~
    \begin{subfigure}[t]{0.7\textwidth}
    \includegraphics[width=1.0\textwidth]{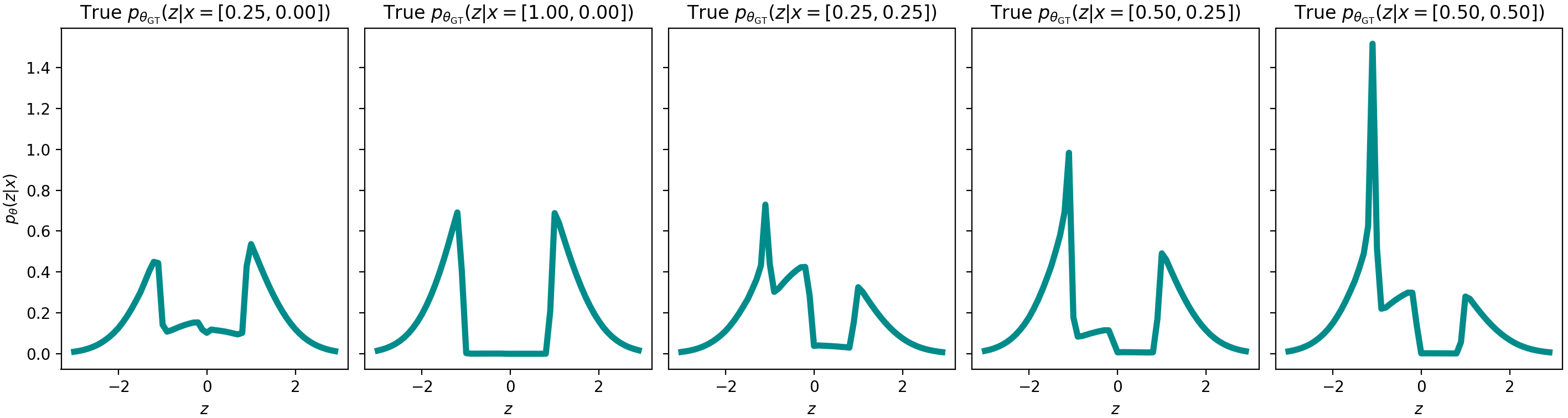}
    \caption{Posteriors under true $f_\theta$}
    \label{fig:lin-clusters-post-true}
    \end{subfigure}
    ~
     \begin{subfigure}[t]{0.7\textwidth}
    \includegraphics[width=1.0\textwidth]{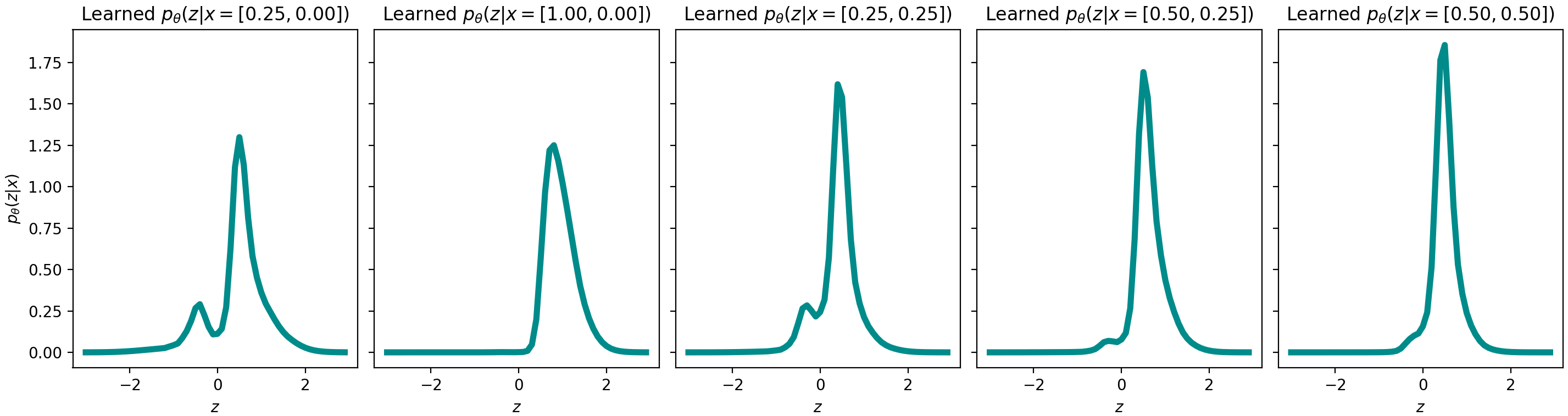}
    \caption{Posteriors under learned $f_\theta$}
    \label{fig:lin-clusters-post-learned}
    \end{subfigure}
    \caption{VAE with Lagging Inference Networks (LIN) trained on the Clusters Example. 
    While LIN may help escape local optima, on this data, the training objective is still biased away
    from learning the true data distribsution.
    As such, LIN fails in the same way an MFG-VAE does (see Figure \ref{fig:vae-clusters}).}
    \label{fig:lin-clusters}
\end{figure*}

\begin{figure*}[p]
    \centering
    \vspace*{-1cm}
    \tiny
    
    \begin{subfigure}[t]{0.55\textwidth}
    \includegraphics[width=1.0\textwidth]{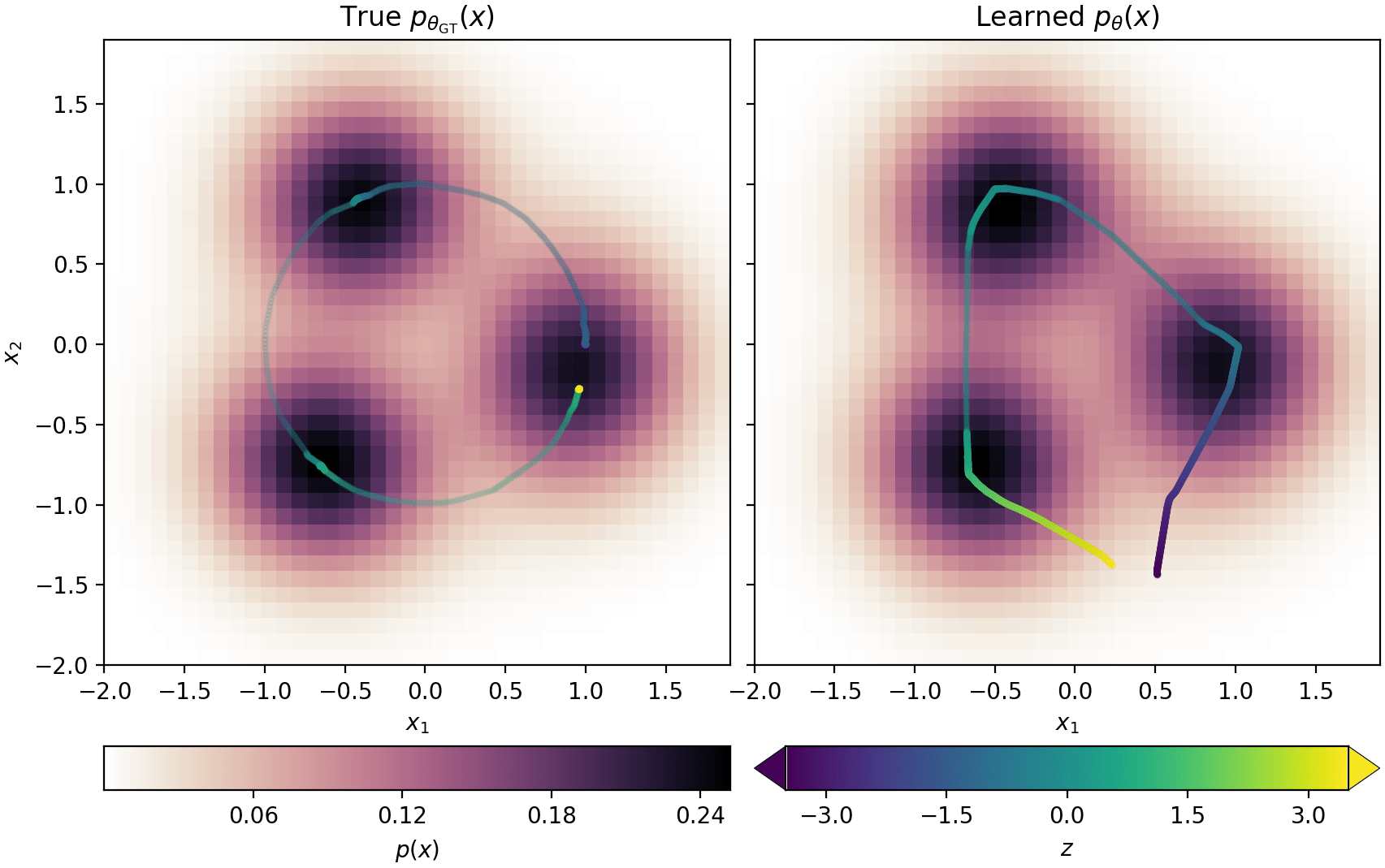}
    \caption{True vs. learned $p_\theta(x)$, and learned vs. true $f_\theta(z)$, colored by the value of $z$.}
    \label{fig:iwae-clusters-px}
    \end{subfigure}
    ~
    \begin{subfigure}[t]{0.35\textwidth}
    \includegraphics[width=1.0\textwidth]{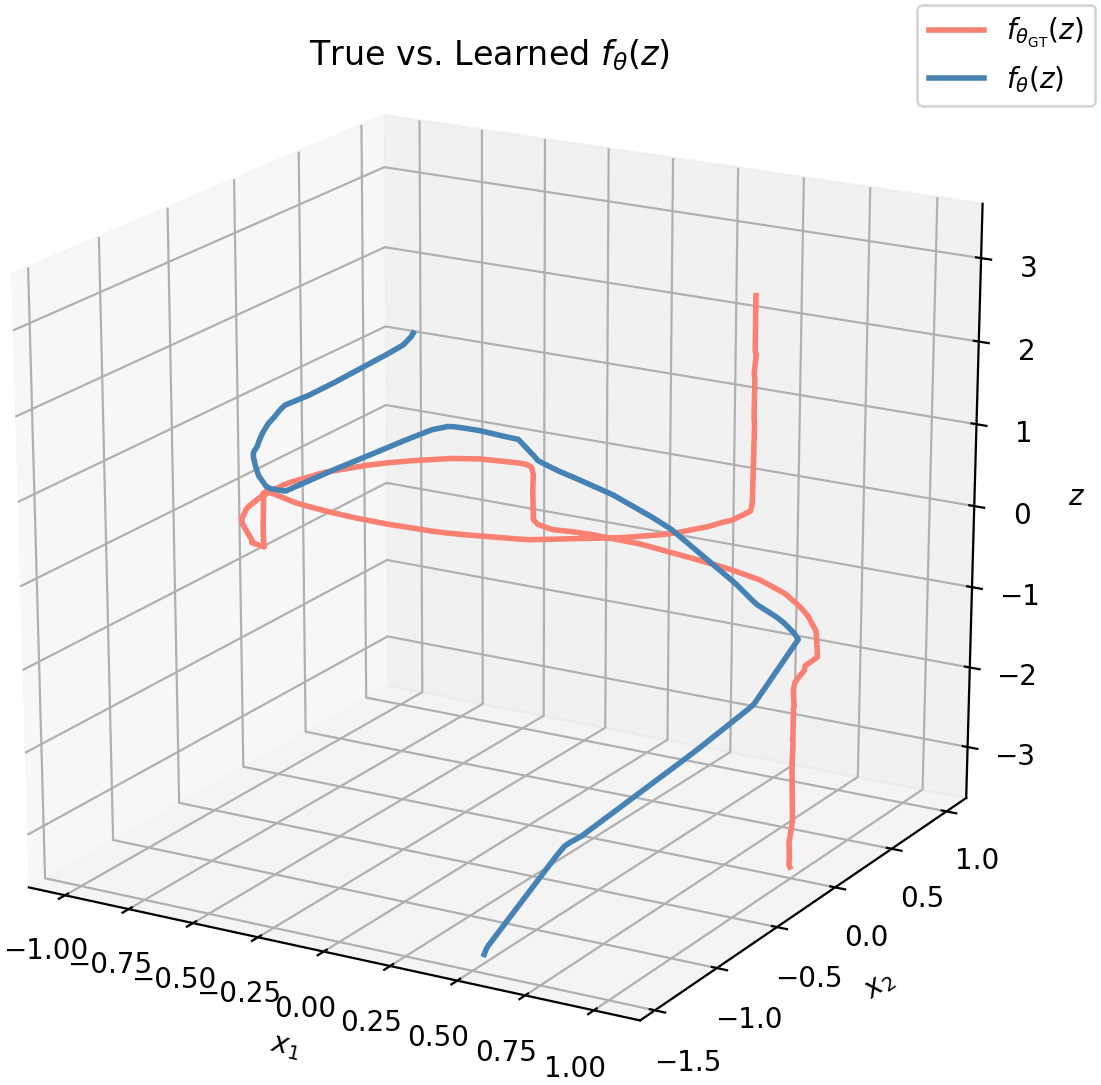}
    \caption{True vs. learned $f_\theta(x)$}
    \label{fig:iwae-clusters-f}
    \end{subfigure}
    ~
    \begin{subfigure}[t]{0.7\textwidth}
    \includegraphics[width=1.0\textwidth]{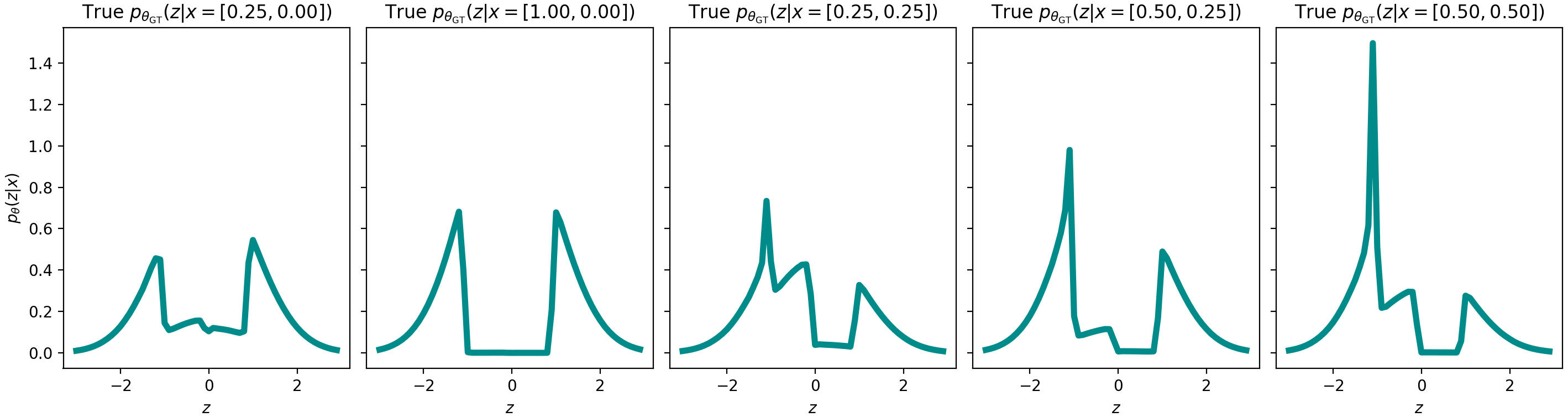}
    \caption{Posteriors under true $f_\theta$}
    \label{fig:iwae-clusters-post-true}
    \end{subfigure}
    ~
     \begin{subfigure}[t]{0.7\textwidth}
    \includegraphics[width=1.0\textwidth]{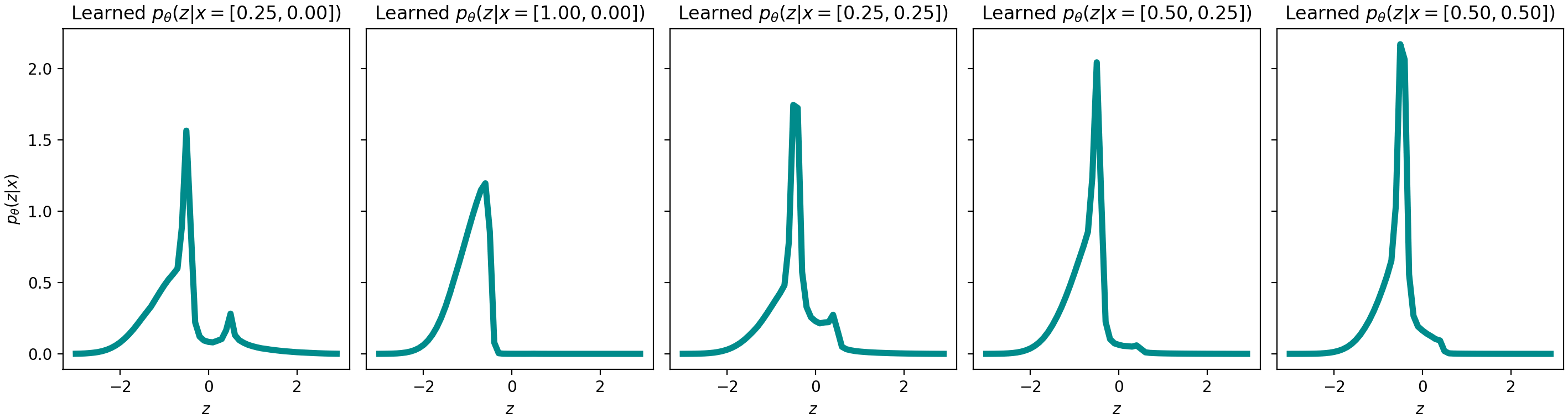}
    \caption{Posteriors under learned $f_\theta$}
    \label{fig:iwae-clusters-post-learned}
    \end{subfigure}
    \caption{IWAE trained on the Clusters Example. In this toy data, both conditions from Section \ref{sec:misestimate-px-conditions} hold.
    IWAE is able to learn the ground-truth data distribution while
    finding a generative model with a simpler posterior than that of the ground-truth model.}
    \label{fig:iwae-clusters}
\end{figure*}

\begin{figure*}[p]
    \centering
    \vspace*{-1cm}
    \tiny
    
    \begin{subfigure}[t]{0.55\textwidth}
    \includegraphics[width=1.0\textwidth]{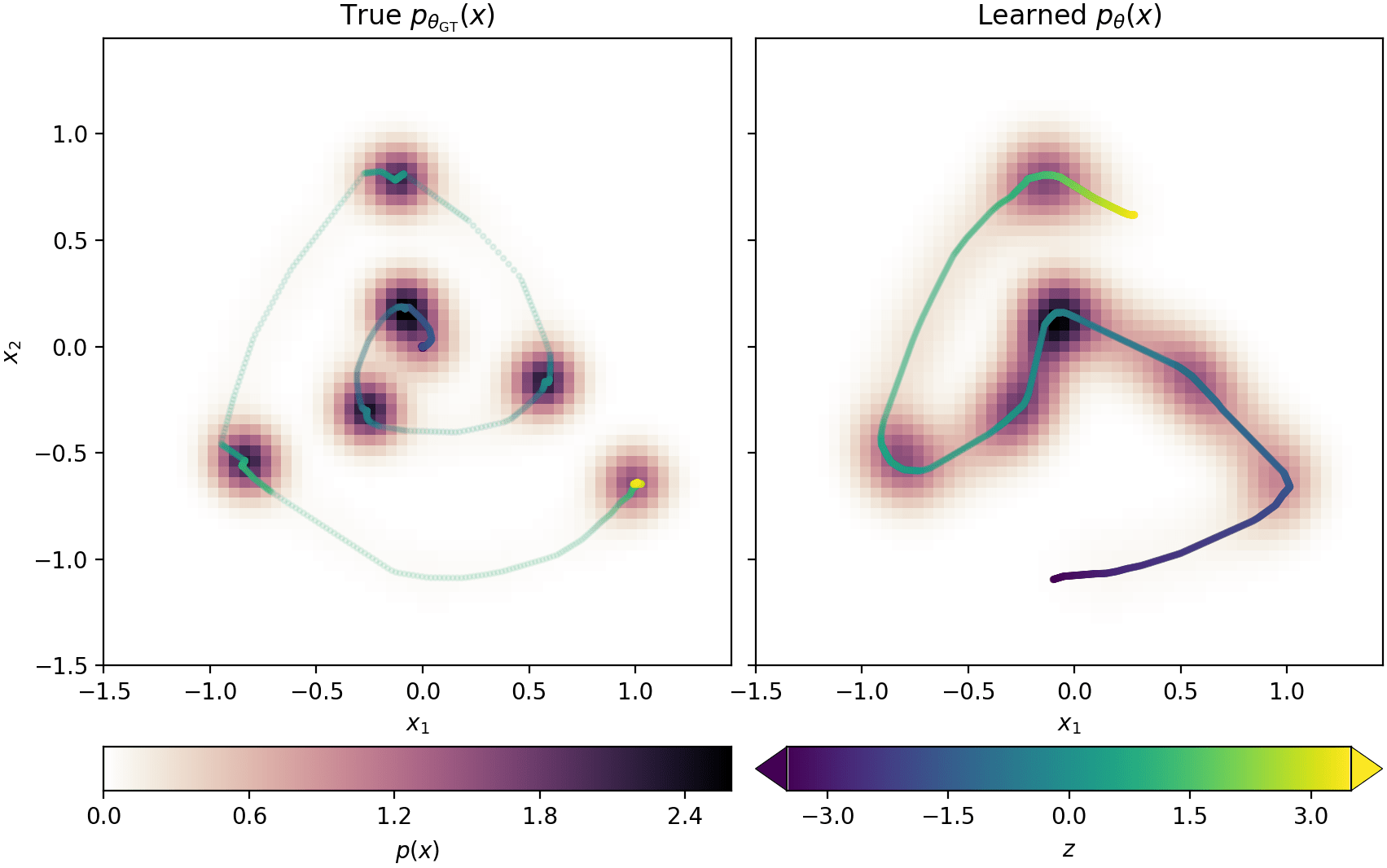}
    \caption{True vs. learned $p_\theta(x)$, and learned vs. true $f_\theta(z)$, colored by the value of $z$.}
    \label{fig:vae-spiral-dots-px}
    \end{subfigure}
    ~
    \begin{subfigure}[t]{0.35\textwidth}
    \includegraphics[width=1.0\textwidth]{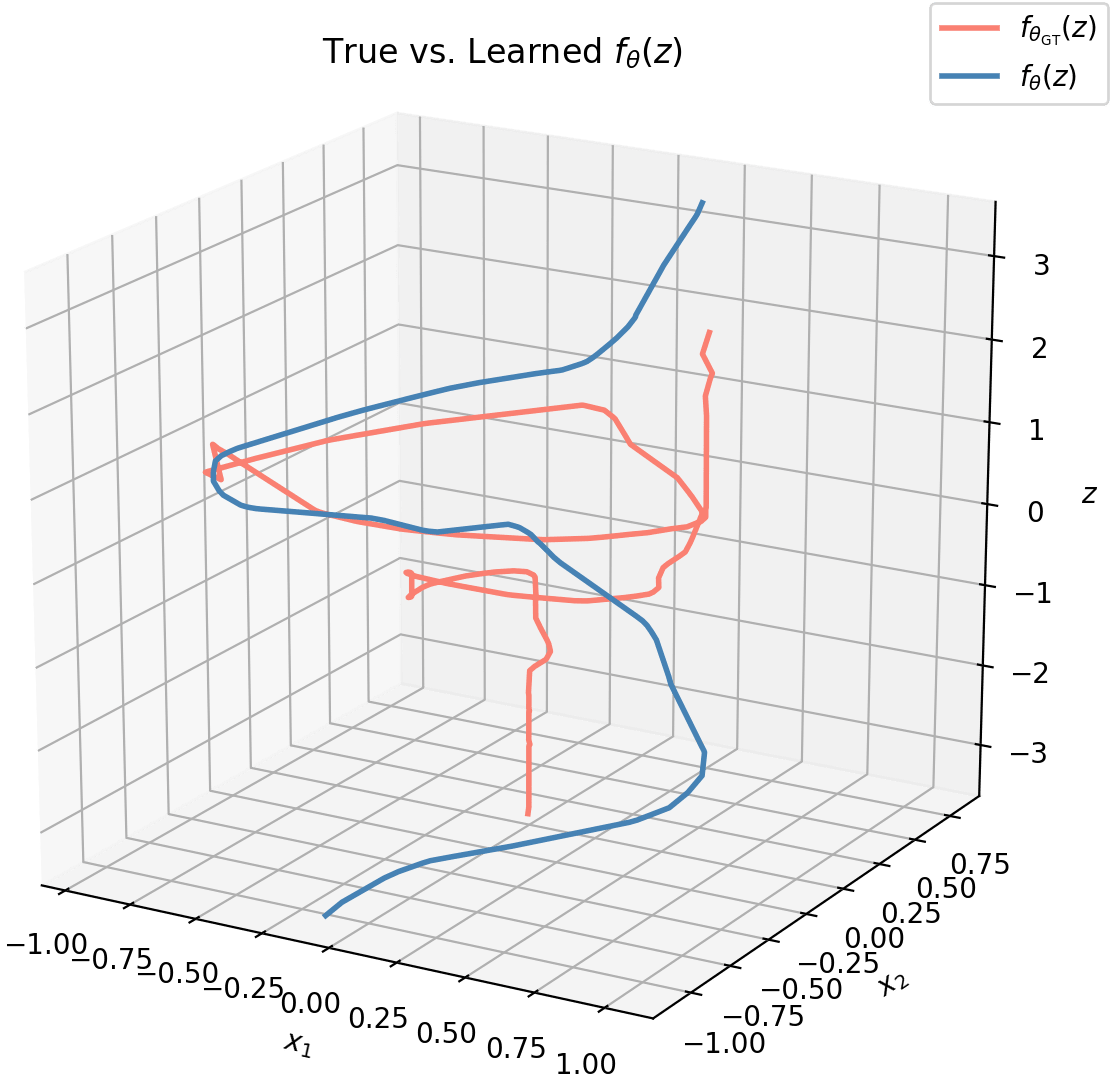}
    \caption{True vs. learned $f_\theta(x)$}
    \label{fig:vae-spiral-dots-f}
    \end{subfigure}
    ~
    \begin{subfigure}[t]{0.7\textwidth}
    \includegraphics[width=1.0\textwidth]{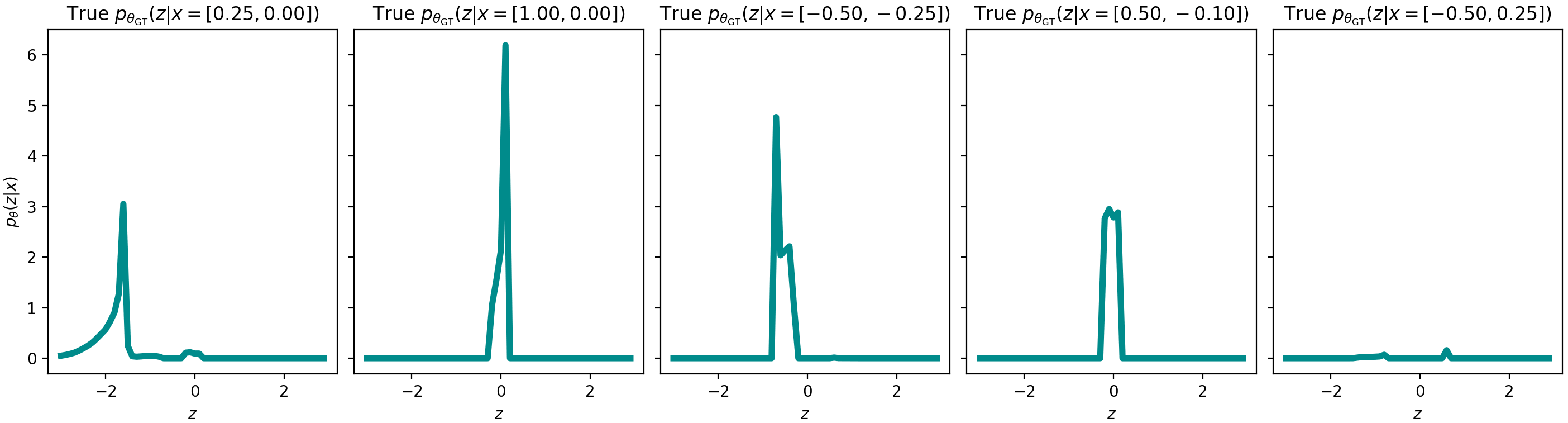}
    \caption{Posteriors under true $f_\theta$}
    \label{fig:vae-spiral-dots-post-true}
    \end{subfigure}
    ~
     \begin{subfigure}[t]{0.7\textwidth}
    \includegraphics[width=1.0\textwidth]{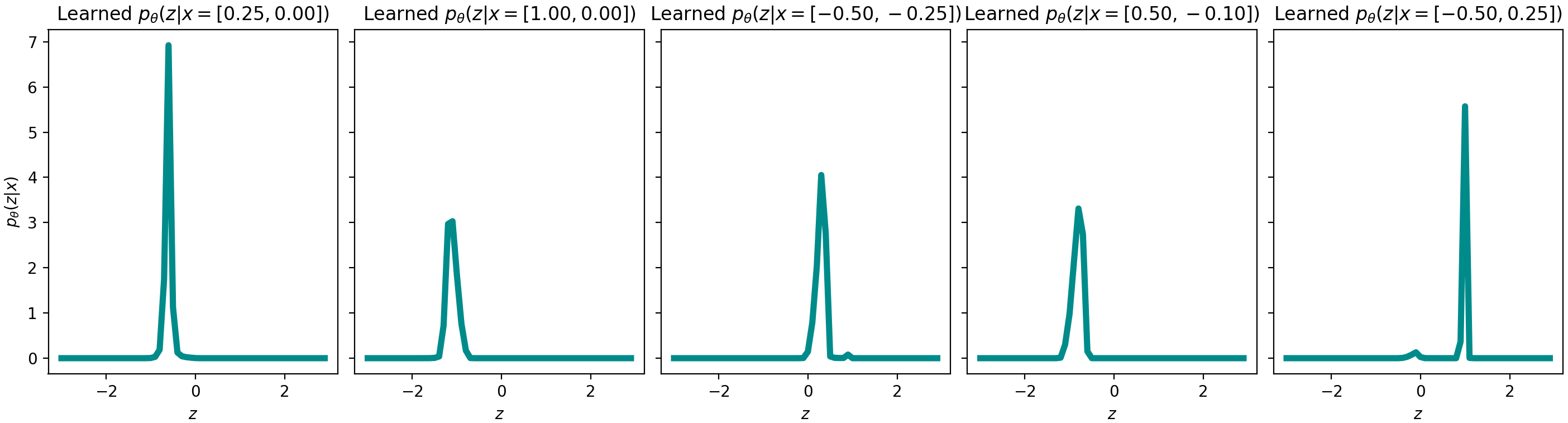}
    \caption{Posteriors under learned $f_\theta$}
    \label{fig:vae-spiral-dots-post-learned}
    \end{subfigure}
    \caption{MFG-VAE trained on the Spiral-Dots Example jointly over $\theta, \phi, \epsilon^2_\epsilon$. 
    In this toy data, the ELBO drastically misestimates the observation noise.
    The VAE learns a generative model with simpler posterior than that of the ground-truth,
    though it is unable to completely simplify the posterior as in the Absolute-Value Example.
    To learn a generative model with a simpler posterior, it learns a model with a function $f_\theta(z)$
    that, unlike the ground-truth function, does not have steep areas interleaved between flat areas.
    As such, the learned model is generally more flat, causing the learned density to be ``smeared'' between the modes.
    Moreover due to the error in approximating the true posterior with an MFG variational family,
    the ELBO misestimates $\sigma^2_\epsilon$.}
    \label{fig:vae-spiral-dots}
\end{figure*}

\begin{figure*}[p]
    \centering
    \vspace*{-1cm}
    \tiny
    
    \begin{subfigure}[t]{0.49\textwidth}
    \includegraphics[width=1.0\textwidth]{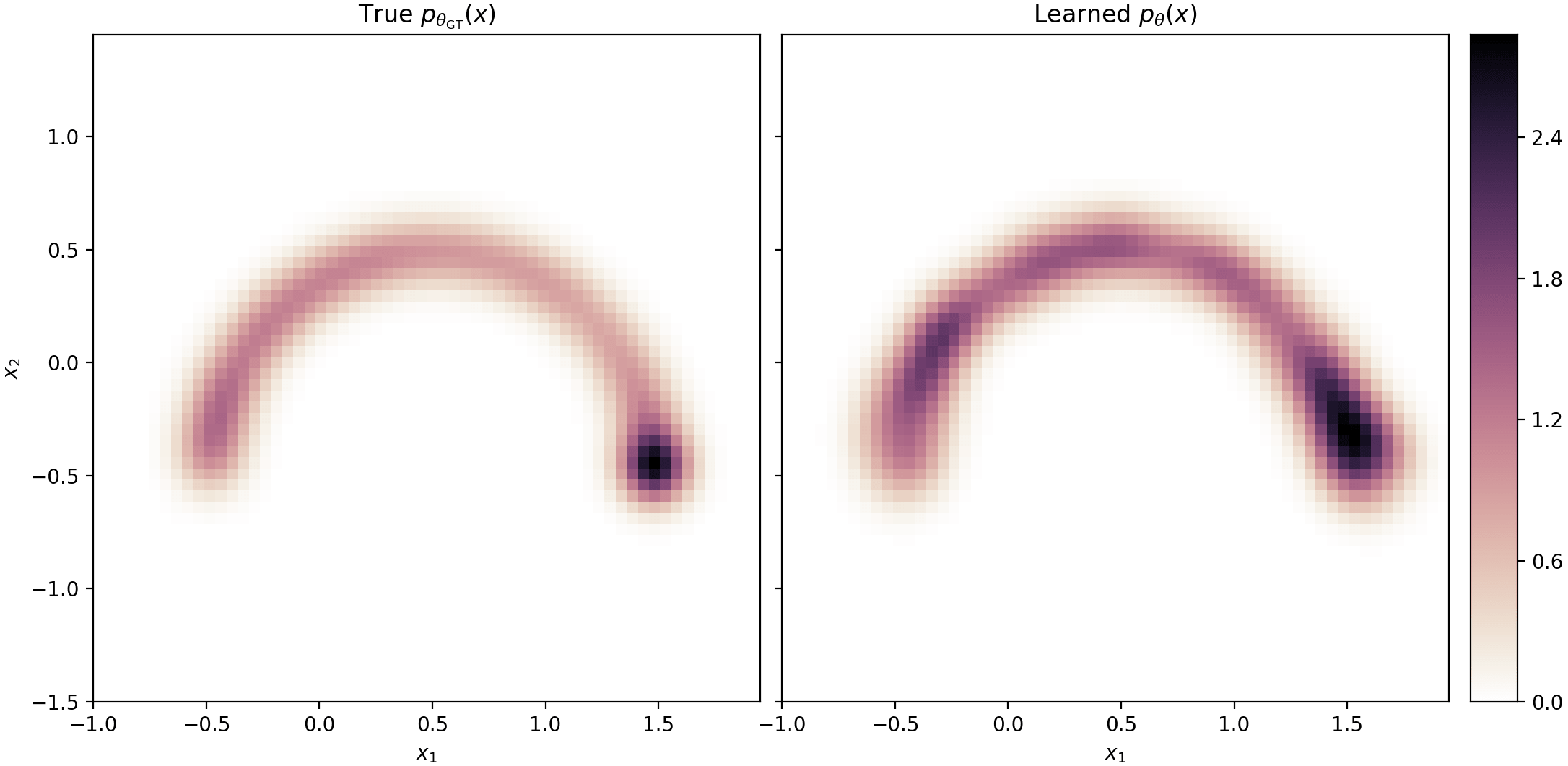}
    \caption{True vs. learned $p_\theta(x)$.}
    \label{fig:vae-ss-discrete-px}
    \end{subfigure}
    ~
    \begin{subfigure}[t]{0.49\textwidth}
    \includegraphics[width=1.0\textwidth]{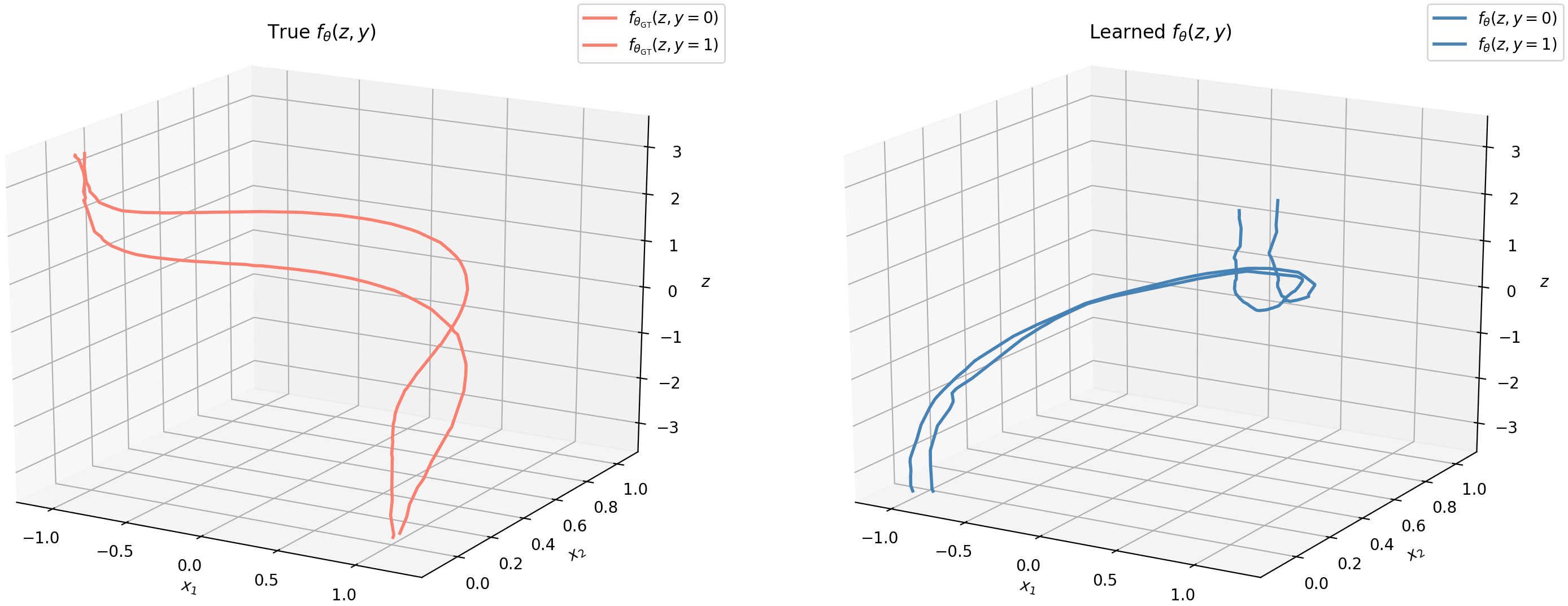}
    \caption{True vs. learned $f_\theta(z, y)$.}
    \label{fig:vae-ss-discrete-fn}
    \end{subfigure}
    ~
    \begin{subfigure}[t]{0.49\textwidth}
    \includegraphics[width=1.0\textwidth]{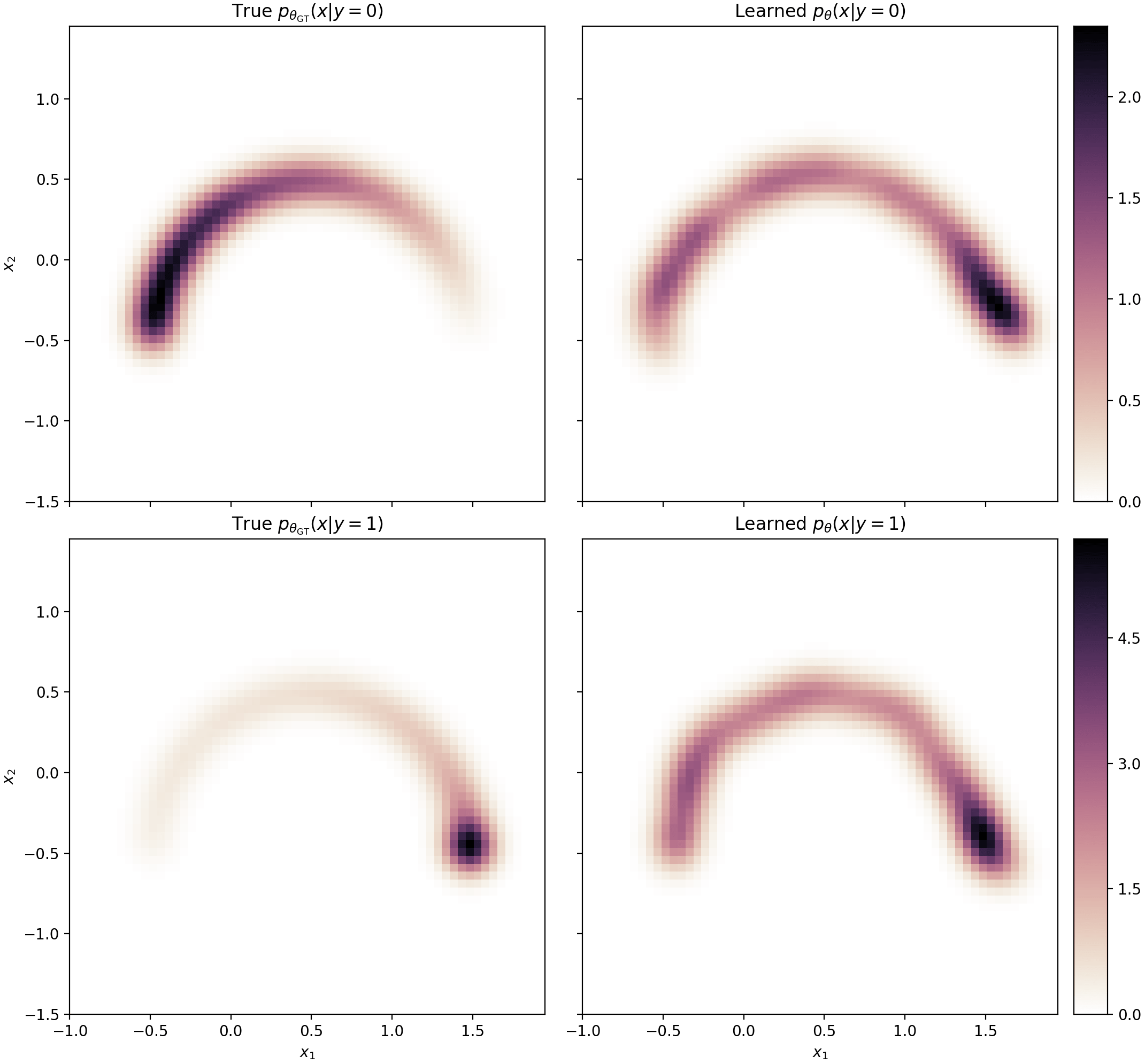}
    \caption{True vs. learned data conditionals $p_\theta(x | y)$.}
    \label{fig:vae-ss-discrete-px-given-y}
    \end{subfigure}
    ~
    \begin{subfigure}[t]{0.7\textwidth}
    \includegraphics[width=1.0\textwidth]{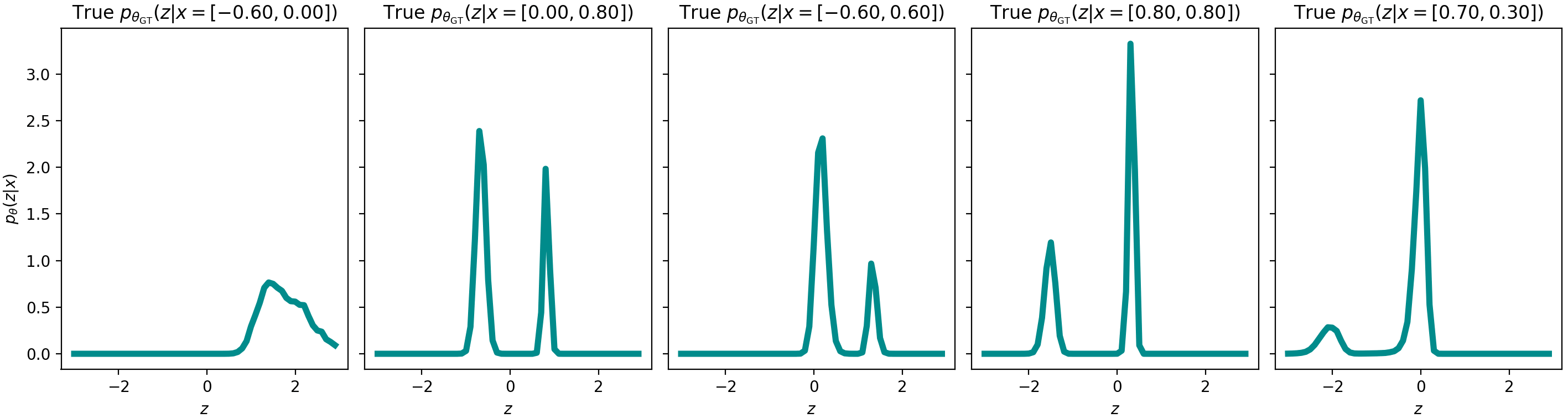}
    \caption{Posteriors under true $f_\theta$}
    \label{fig:vae-ss-discrete-post-true}
    \end{subfigure}
    ~
     \begin{subfigure}[t]{0.7\textwidth}
    \includegraphics[width=1.0\textwidth]{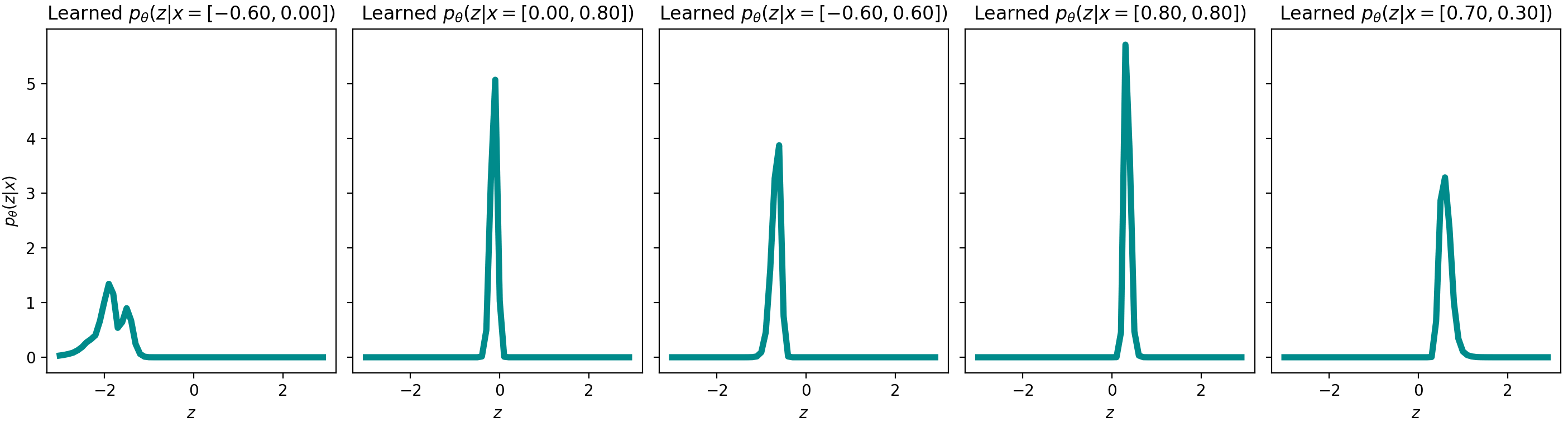}
    \caption{Posteriors under learned $f_\theta$}
    \label{fig:vae-ss-discrete-post-learned}
    \end{subfigure}

    \caption{Semi-Supervised MFG-VAE trained on the Discrete Semi-Circle Example.
    While using semi-supervision, a VAE is still able to learn the $p(x)$ relatively well.
    However, in this example, given $x$ there is uncertainty as to whether 
    it was generated from $f_\theta(y = 0, z)$ or from $f_\theta(y = 1, z)$,
    the posterior $p_\theta(z | x)$ is bimodal and will cause a high posterior matching objective.
    Since semi-supervised VAE objective prefers models with simpler posteriors,
    the VAE learns a unimodal posterior by collapsing $f_\theta(y = 0, z) = f_\theta(y = 1, z)$,
    causing $p(x|y = 0) \approx p(x | y = 1) \approx p(x)$.
    The learned model will therefore generate poor sample quality counterfactuals.}
    \label{fig:vae-ss-discrete}
\end{figure*}

\begin{figure*}[p]
    \centering
    \vspace*{-1cm}
    \tiny
    
    \begin{subfigure}[t]{0.49\textwidth}
    \includegraphics[width=1.0\textwidth]{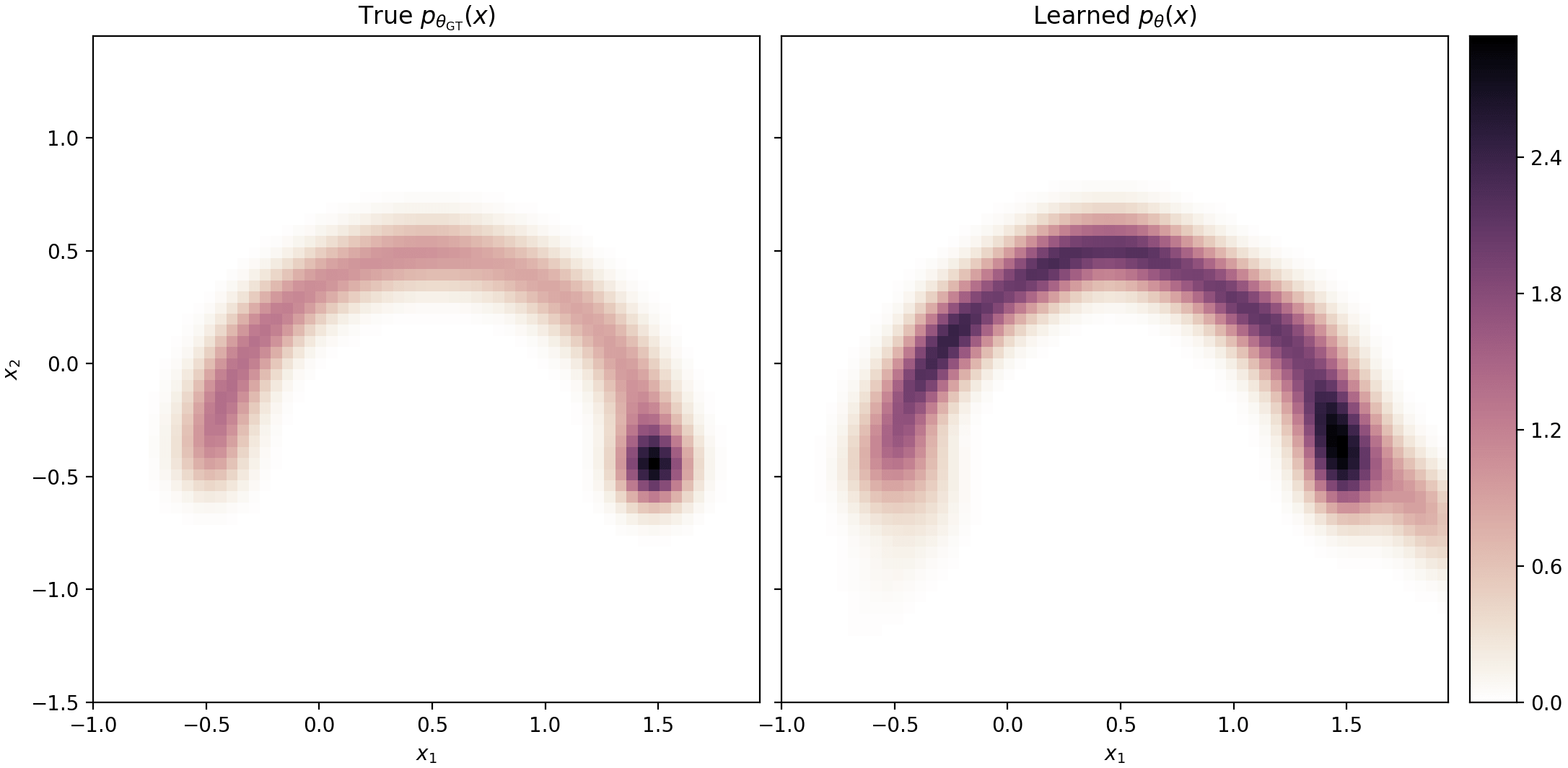}
    \caption{True vs. learned $p_\theta(x)$.}
    \label{fig:lin-ss-discrete-px}
    \end{subfigure}
    ~
    \begin{subfigure}[t]{0.49\textwidth}
    \includegraphics[width=1.0\textwidth]{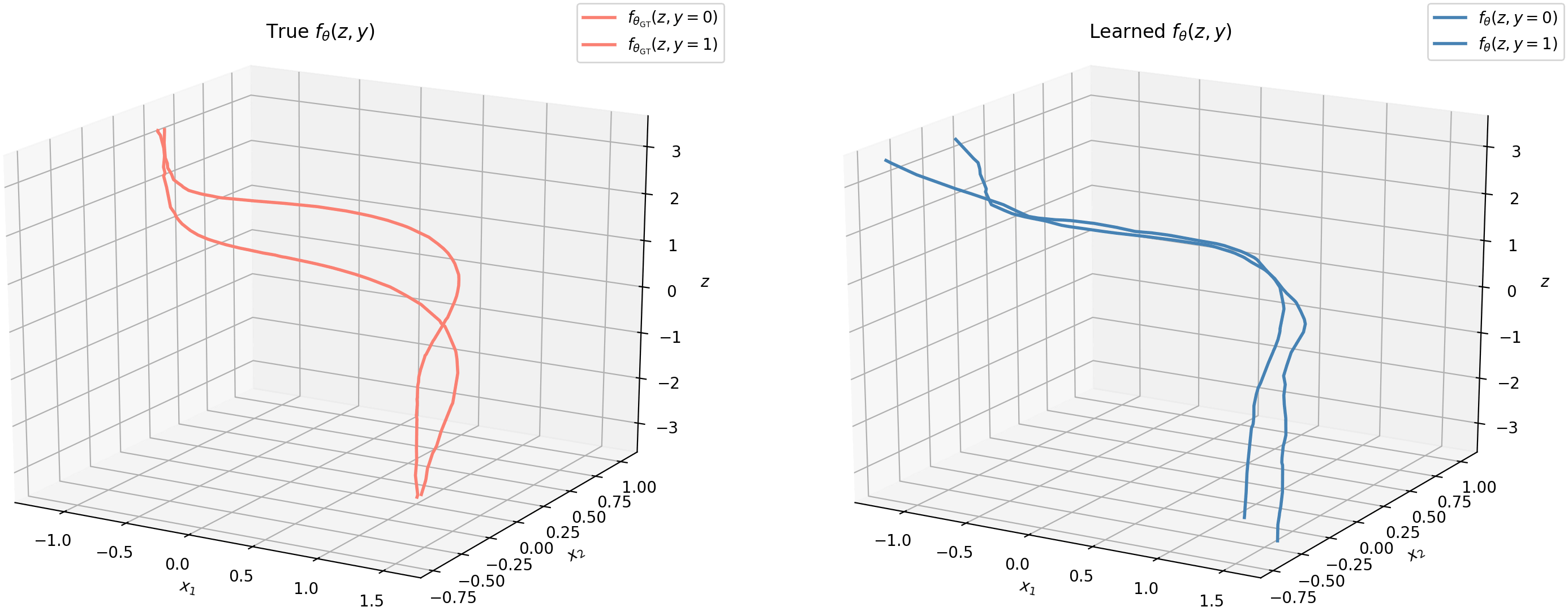}
    \caption{True vs. learned $f_\theta(z, y)$.}
    \label{fig:lin-ss-discrete-fn}
    \end{subfigure}
    ~
    \begin{subfigure}[t]{0.49\textwidth}
    \includegraphics[width=1.0\textwidth]{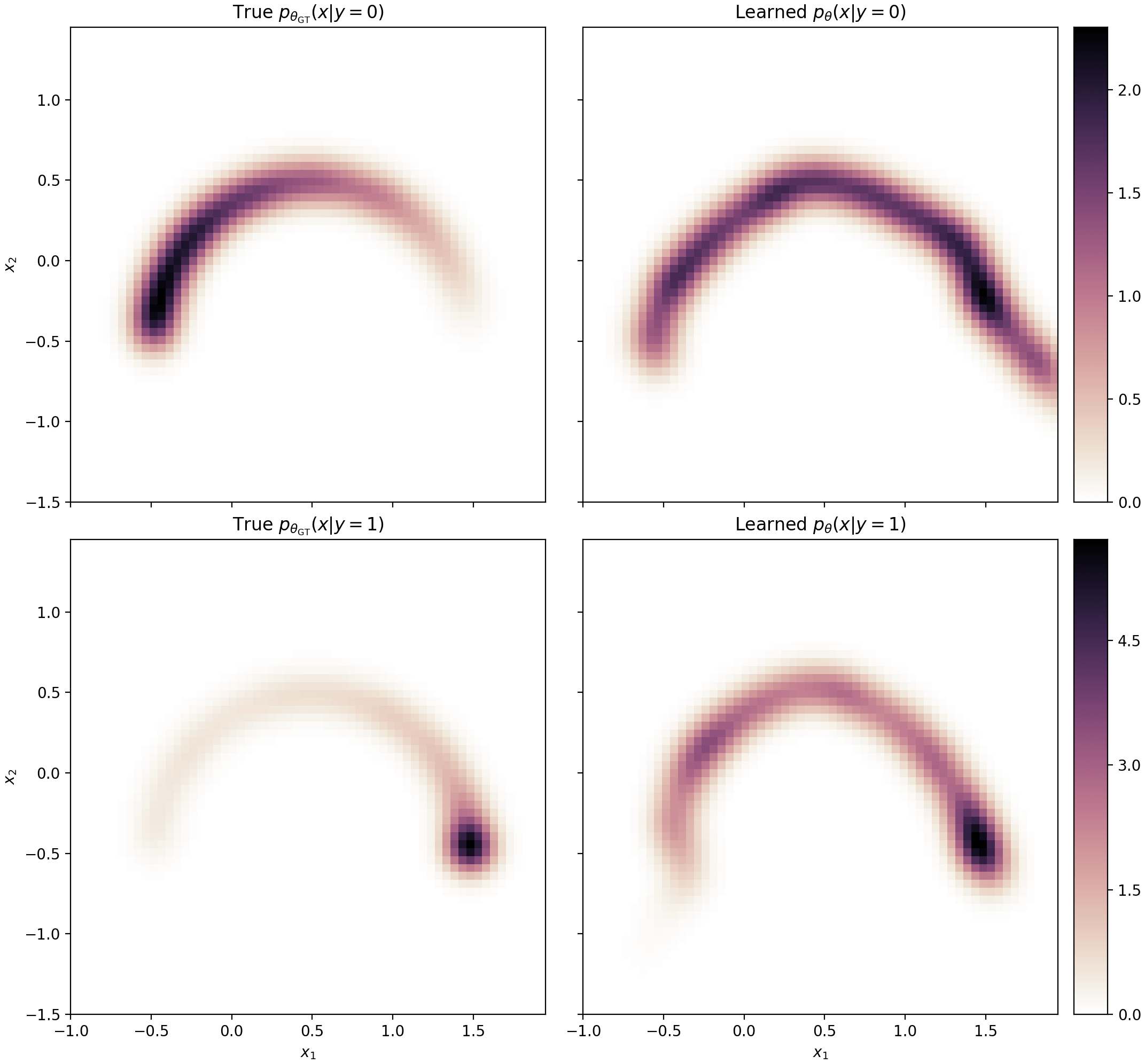}
    \caption{True vs. learned data conditionals $p_\theta(x | y)$.}
    \label{fig:lin-ss-discrete-px-given-y}
    \end{subfigure}
    ~
    \begin{subfigure}[t]{0.7\textwidth}
    \includegraphics[width=1.0\textwidth]{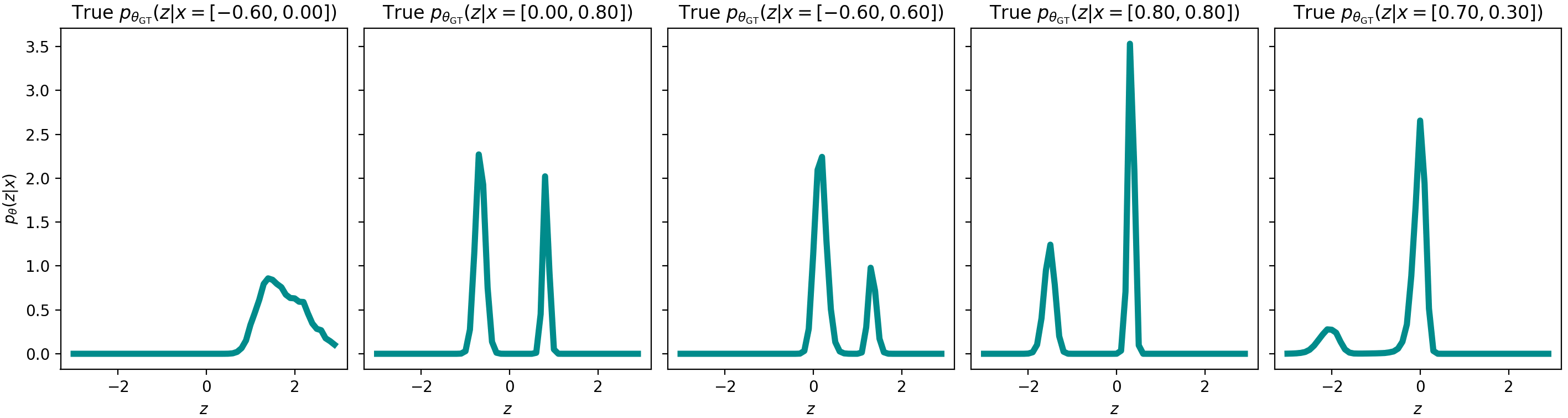}
    \caption{Posteriors under true $f_\theta$}
    \label{fig:lin-ss-discrete-post-true}
    \end{subfigure}
    ~
     \begin{subfigure}[t]{0.7\textwidth}
    \includegraphics[width=1.0\textwidth]{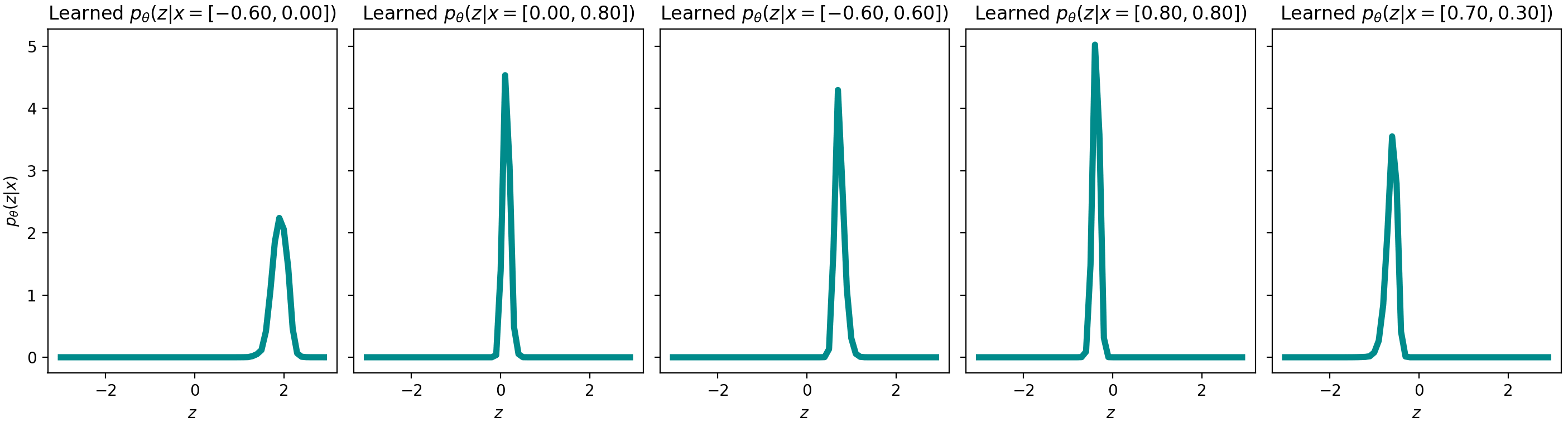}
    \caption{Posteriors under learned $f_\theta$}
    \label{fig:lin-ss-discrete-post-learned}
    \end{subfigure}

    \caption{Semi-Supervised VAE trained with Lagging Inference Networks (LIN) trained on the Discrete Semi-Circle Example.
    While LIN may help escape local optima, on this data, the training objective is still biased away
    from learning the true data distribution.
    As such, LIN fails in the same way an MFG-VAE does (see Figure \ref{fig:vae-ss-discrete}).
    }
    \label{fig:lin-ss-discrete}
\end{figure*}

\begin{figure*}[p]
    \centering
    \vspace*{-1cm}
    \tiny
    
    \begin{subfigure}[t]{0.49\textwidth}
    \includegraphics[width=1.0\textwidth]{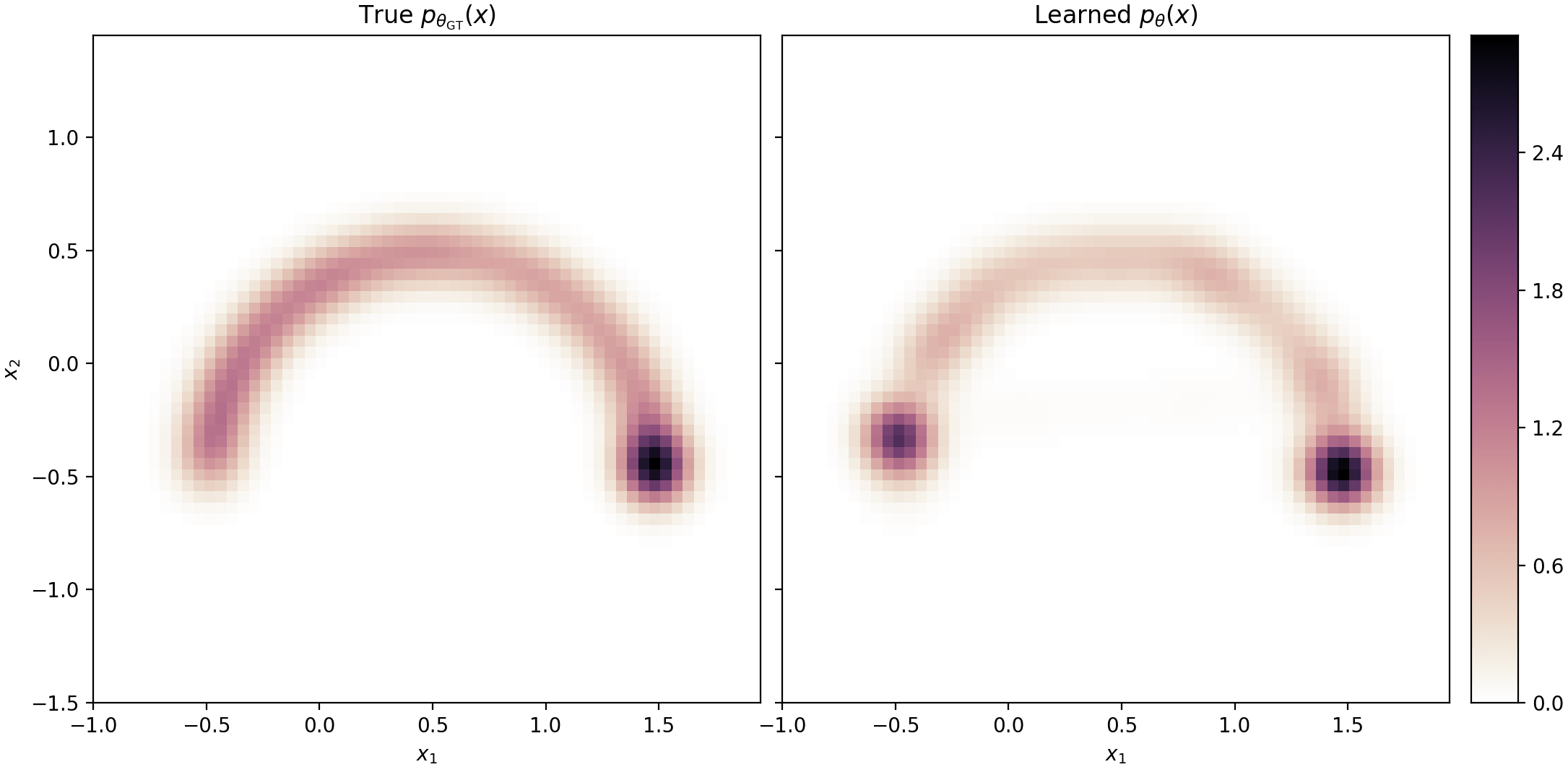}
    \caption{True vs. learned $p_\theta(x)$.}
    \label{fig:iwae-ss-discrete-px}
    \end{subfigure}
    ~
    \begin{subfigure}[t]{0.49\textwidth}
    \includegraphics[width=1.0\textwidth]{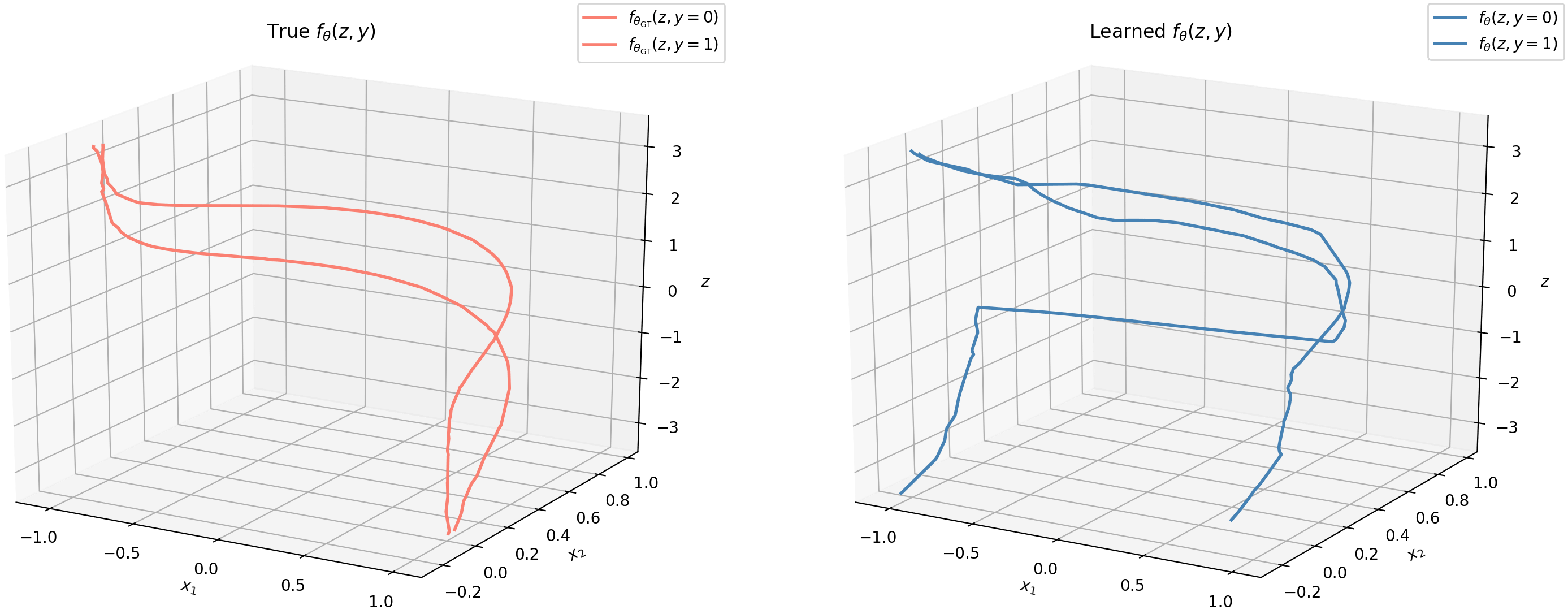}
    \caption{True vs. learned $f_\theta(z, y)$.}
    \label{fig:iwae-ss-discrete-fn}
    \end{subfigure}
    ~
    \begin{subfigure}[t]{0.49\textwidth}
    \includegraphics[width=1.0\textwidth]{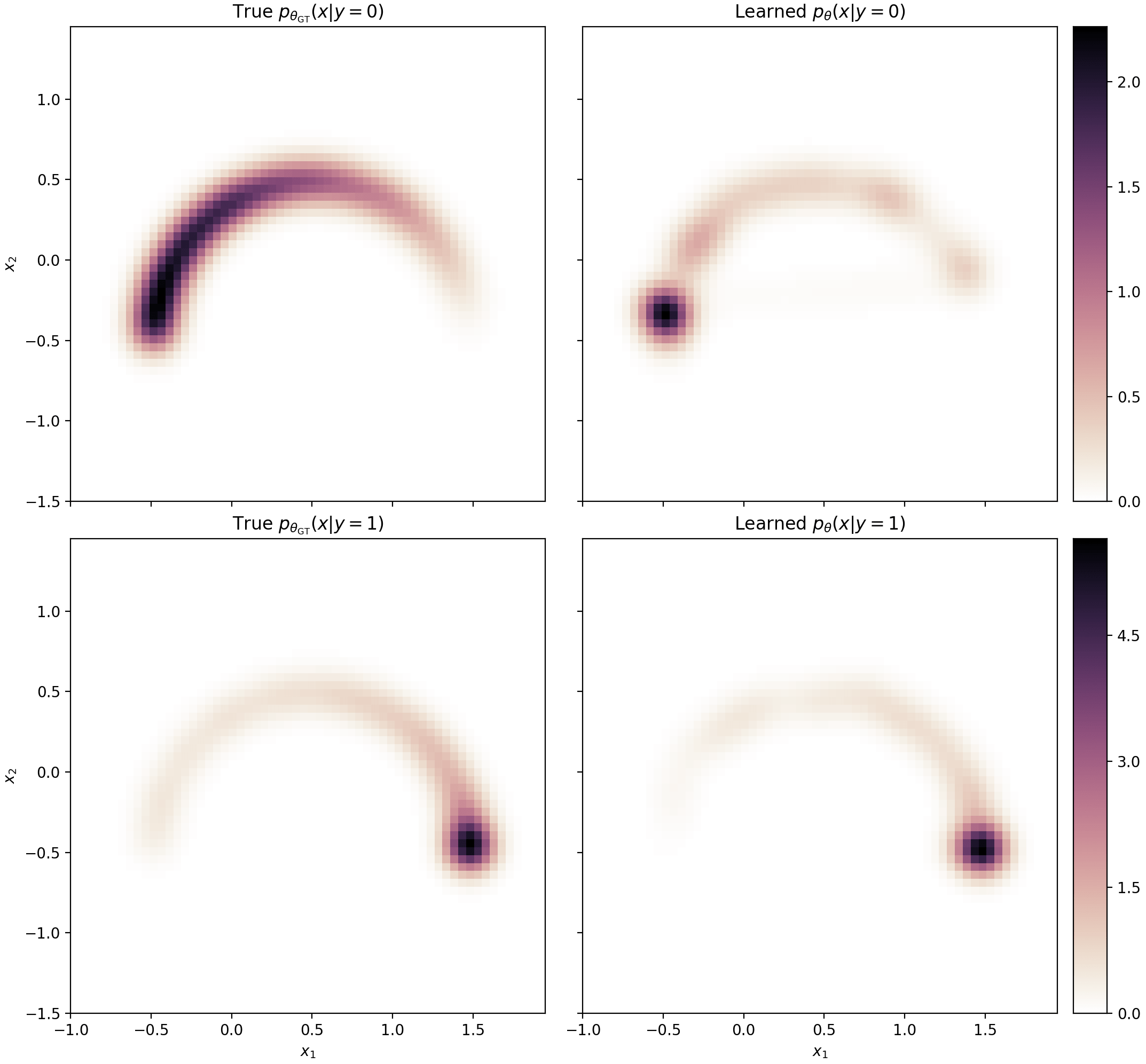}
    \caption{True vs. learned data conditionals $p_\theta(x | y)$.}
    \label{fig:iwae-ss-discrete-px-given-y}
    \end{subfigure}
    ~
    \begin{subfigure}[t]{0.7\textwidth}
    \includegraphics[width=1.0\textwidth]{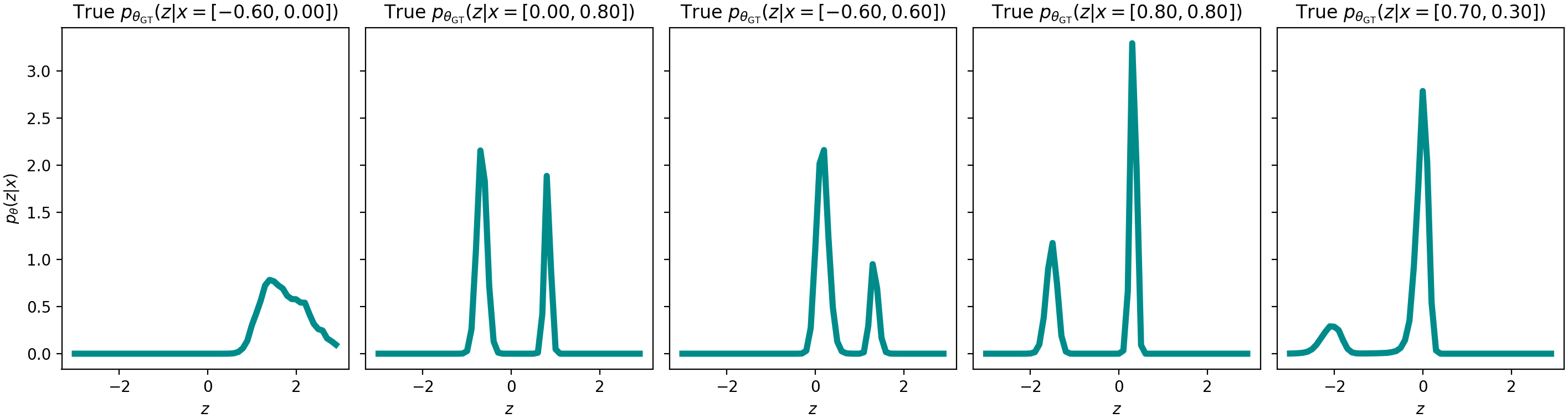}
    \caption{Posteriors under true $f_\theta$}
    \label{fig:iwae-ss-discrete-post-true}
    \end{subfigure}
    ~
     \begin{subfigure}[t]{0.7\textwidth}
    \includegraphics[width=1.0\textwidth]{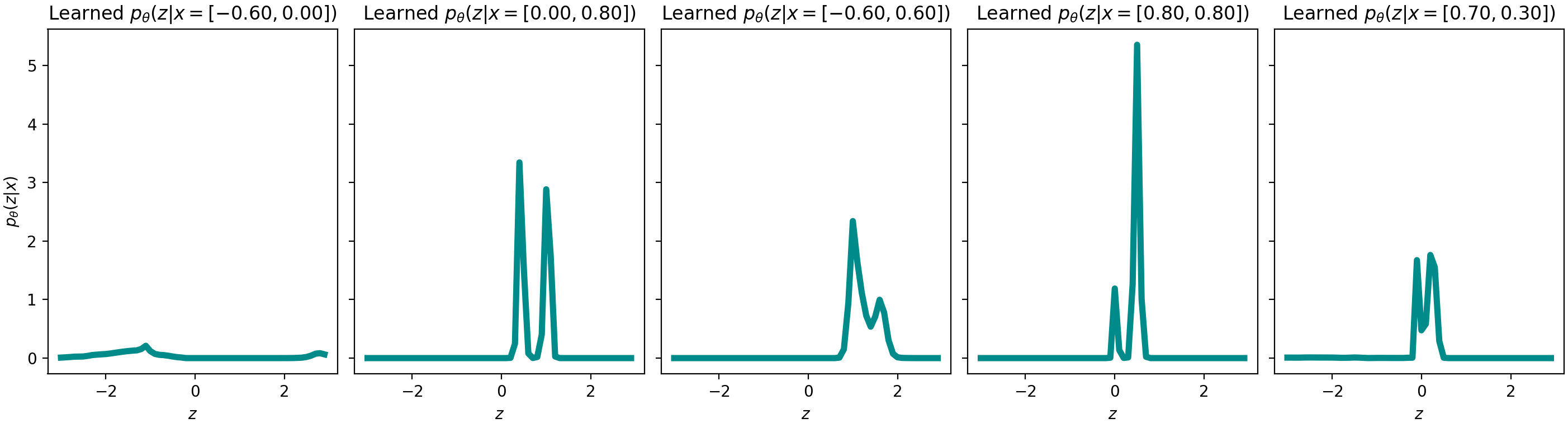}
    \caption{Posteriors under learned $f_\theta$}
    \label{fig:iwae-ss-discrete-post-learned}
    \end{subfigure}

    \caption{Semi-Supervised IWAE trained on the Discrete Semi-Circle Example.
    While using semi-supervision, a IWAE is still able to learn the $p(x)$ and $p(x | y)$ better than a VAE.
    This is because it allows for more complicated posteriors and therefore does  
    not collapse $f_\theta(y = 0, z) = f_\theta(y = 1, z)$.
    However, since IWAE has a more complex variational family, the variational family no longer
    regularizes the function $f_\theta$.
    As such, in order to put enough mass on the left-side of the semi-circle, 
    $f_\theta$ jumps sharply from the right to the left, as opposed to preferring a simpler function
    such as the ground-truth function.
    }
    \label{fig:iwae-ss-discrete}
\end{figure*}

\begin{figure*}[p]
    \centering
    \vspace*{-1cm}
    \tiny
    
    \begin{subfigure}[t]{0.49\textwidth}
    \includegraphics[width=1.0\textwidth]{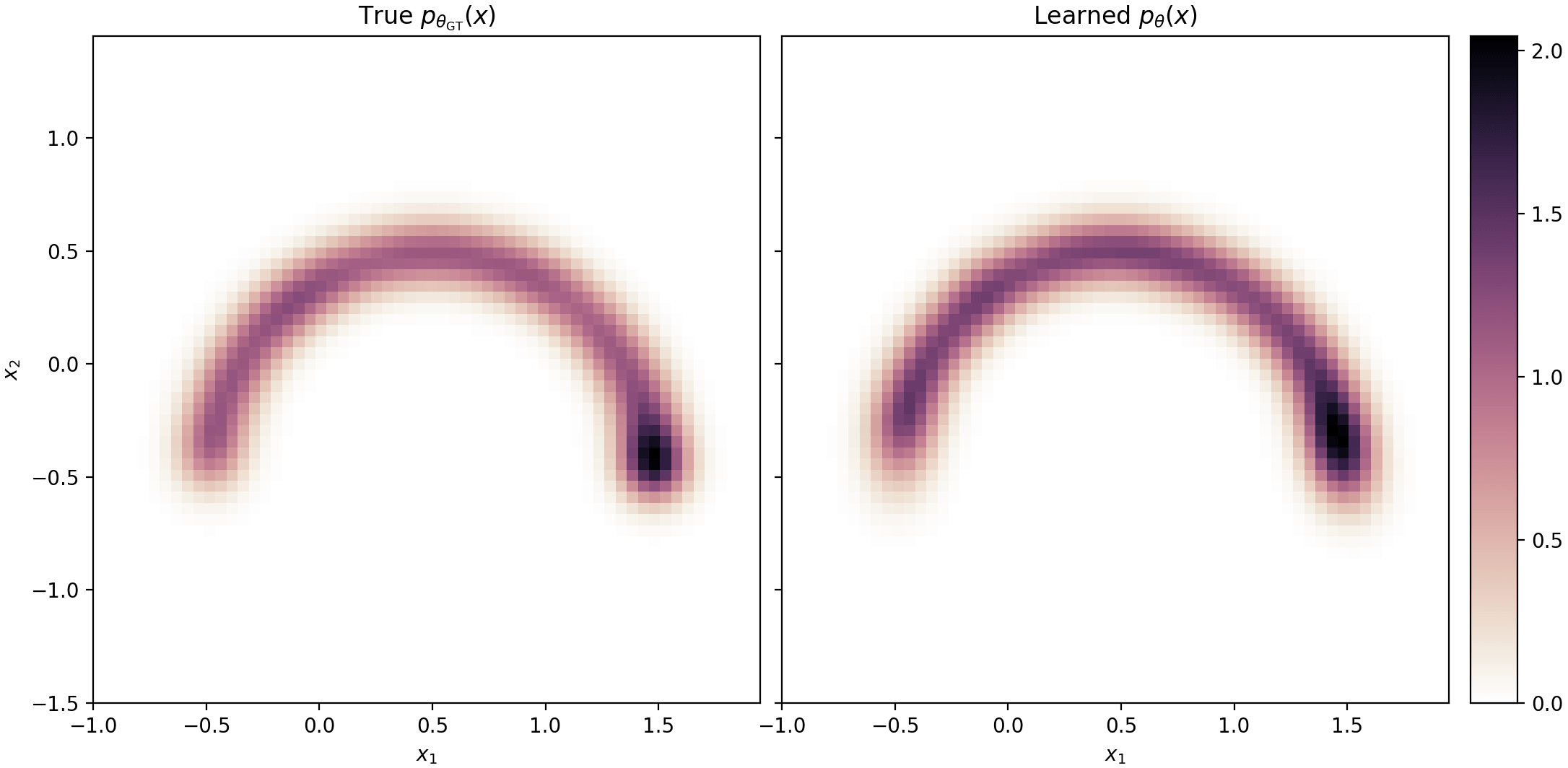}
    \caption{True vs. learned $p_\theta(x)$.}
    \label{fig:vae-ss-continuous-px}
    \end{subfigure}
    ~
    \begin{subfigure}[t]{0.49\textwidth}
    \includegraphics[width=1.0\textwidth]{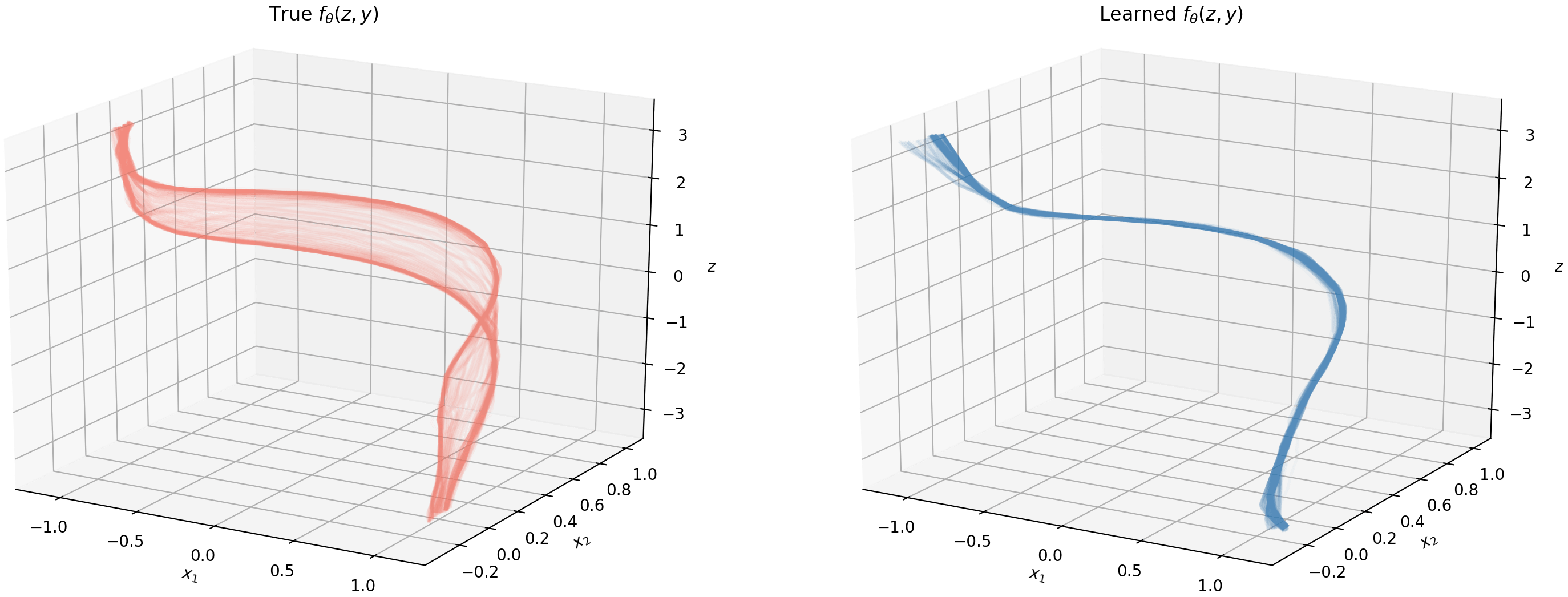}
    \caption{True vs. learned $f_\theta(z, y)$.}
    \label{fig:vae-ss-continuous-fn}
    \end{subfigure}
    ~
    \begin{subfigure}[t]{0.49\textwidth}
    \includegraphics[width=1.0\textwidth]{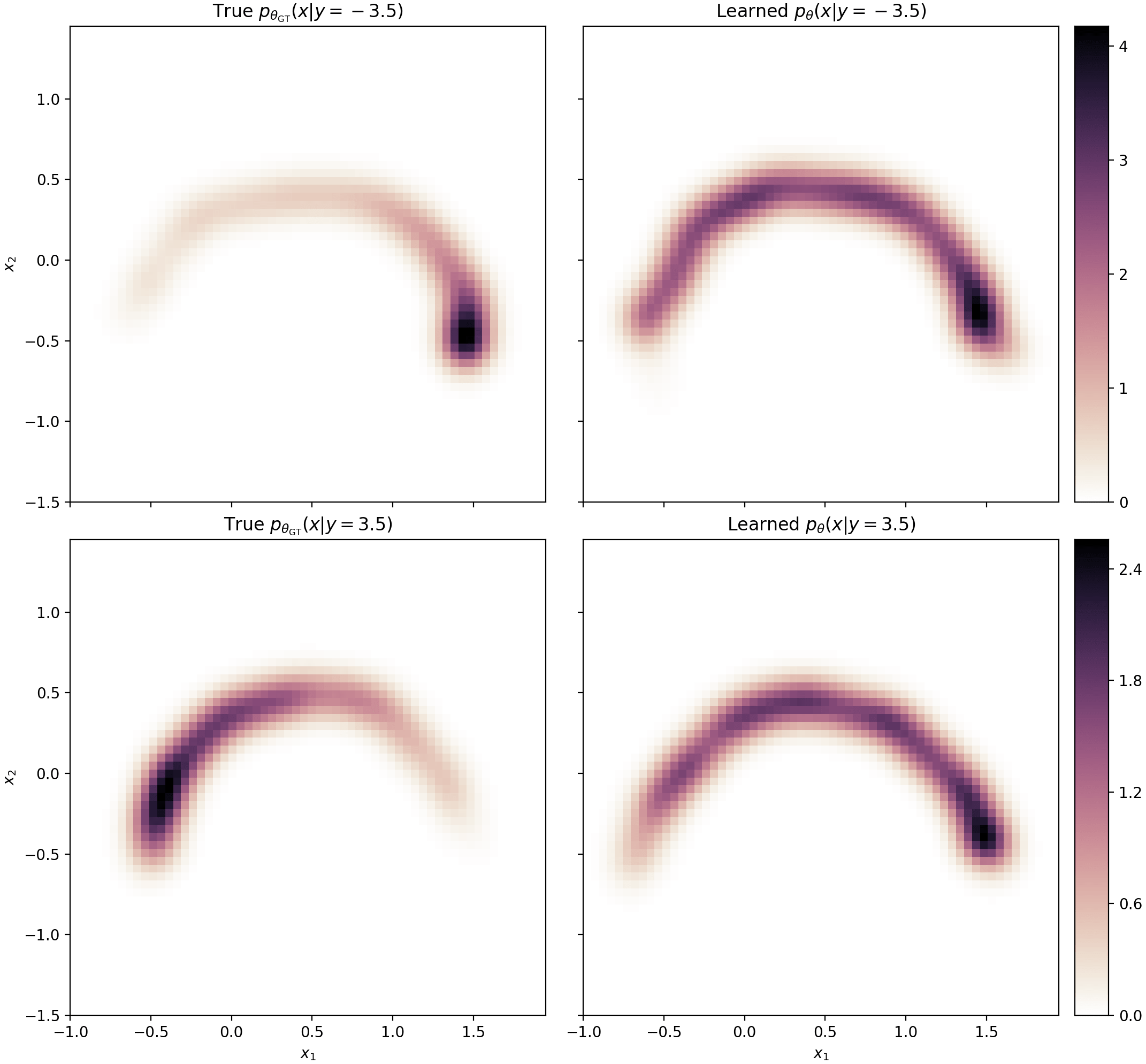}
    \caption{True vs. learned data conditionals $p_\theta(x | y)$.}
    \label{fig:vae-ss-continuous-px-given-y}
    \end{subfigure}
    ~
    \begin{subfigure}[t]{0.7\textwidth}
    \includegraphics[width=1.0\textwidth]{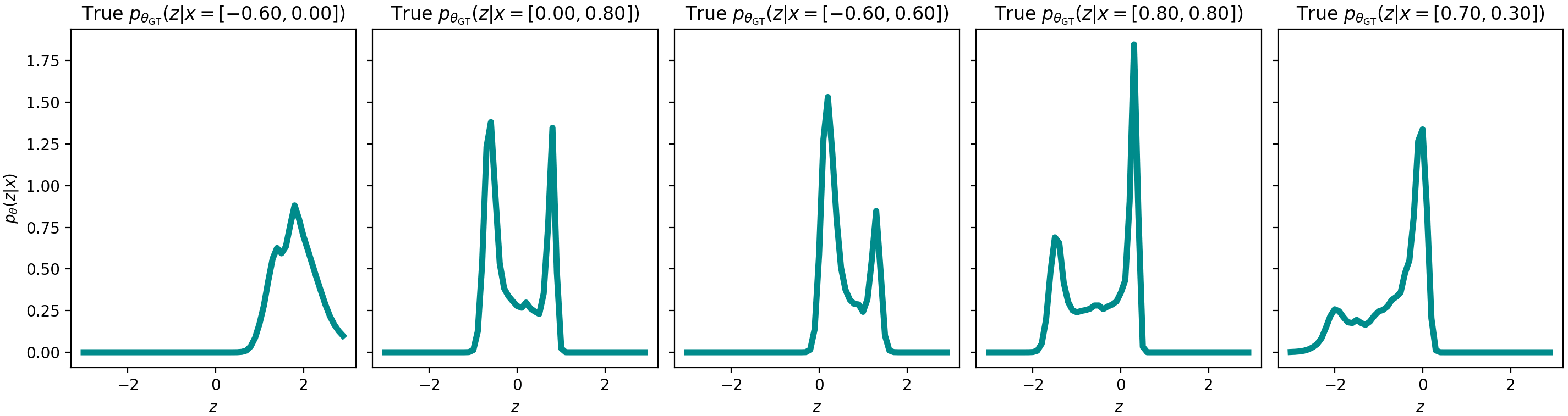}
    \caption{Posteriors under true $f_\theta$}
    \label{fig:vae-ss-continuous-post-true}
    \end{subfigure}
    ~
     \begin{subfigure}[t]{0.7\textwidth}
    \includegraphics[width=1.0\textwidth]{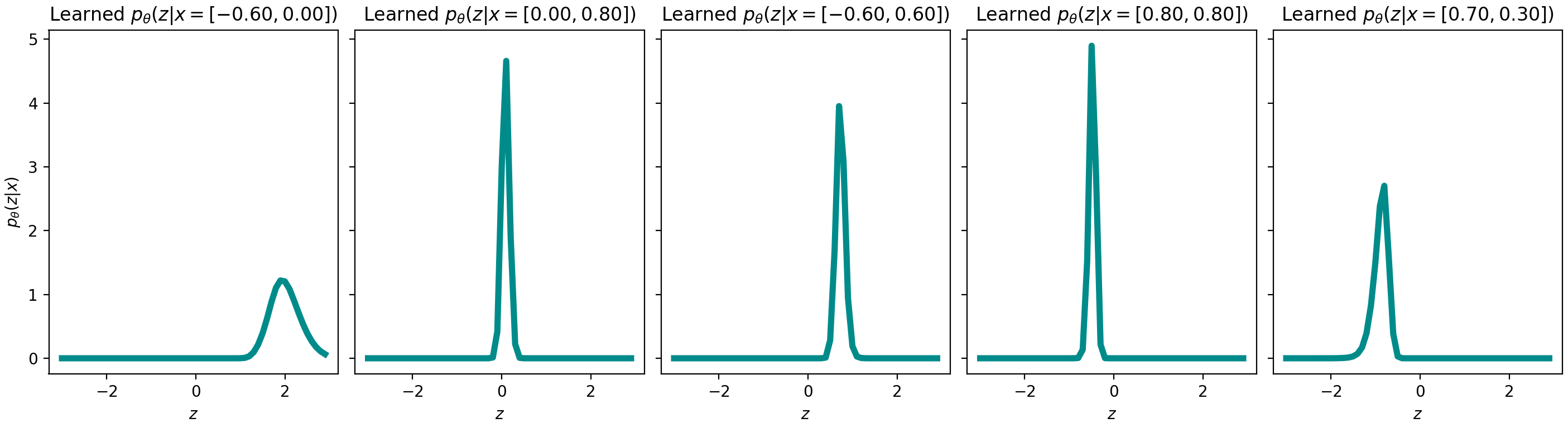}
    \caption{Posteriors under learned $f_\theta$}
    \label{fig:vae-ss-continuous-post-learned}
    \end{subfigure}
    
    \caption{Semi-Supervised MFG-VAE trained on the Continuous Semi-Circle Example.
    In this example, the VAE exhibits the same problems as in the 
    Discrete Semi-Circle Example (Figure \ref{fig:vae-ss-continuous}).
    However, with since $y$ is continuous, this poses an additional issue.
    Since $q_\phi(y | x)$ (the discriminator) in the objective is a Gaussian,
    and the ground-truth $p_\theta(y | x)$ is multi-modal, the objective will select a function 
    $f_\theta$ under which $p_\theta(y | x)$ is an MFG.
    This, again, leads to learning a model in which $f_\theta(y = \cdot, z)$ are the same for all values of $y$,
    causing $p(x|y = 0) \approx p(x | y = 1) \approx p(x)$.
    The learned model will therefore generate poor sample quality counterfactuals.}
    \label{fig:vae-ss-continuous}
\end{figure*}

\begin{figure*}[p]
    \centering
    \vspace*{-1cm}
    \tiny
    
    \begin{subfigure}[t]{0.49\textwidth}
    \includegraphics[width=1.0\textwidth]{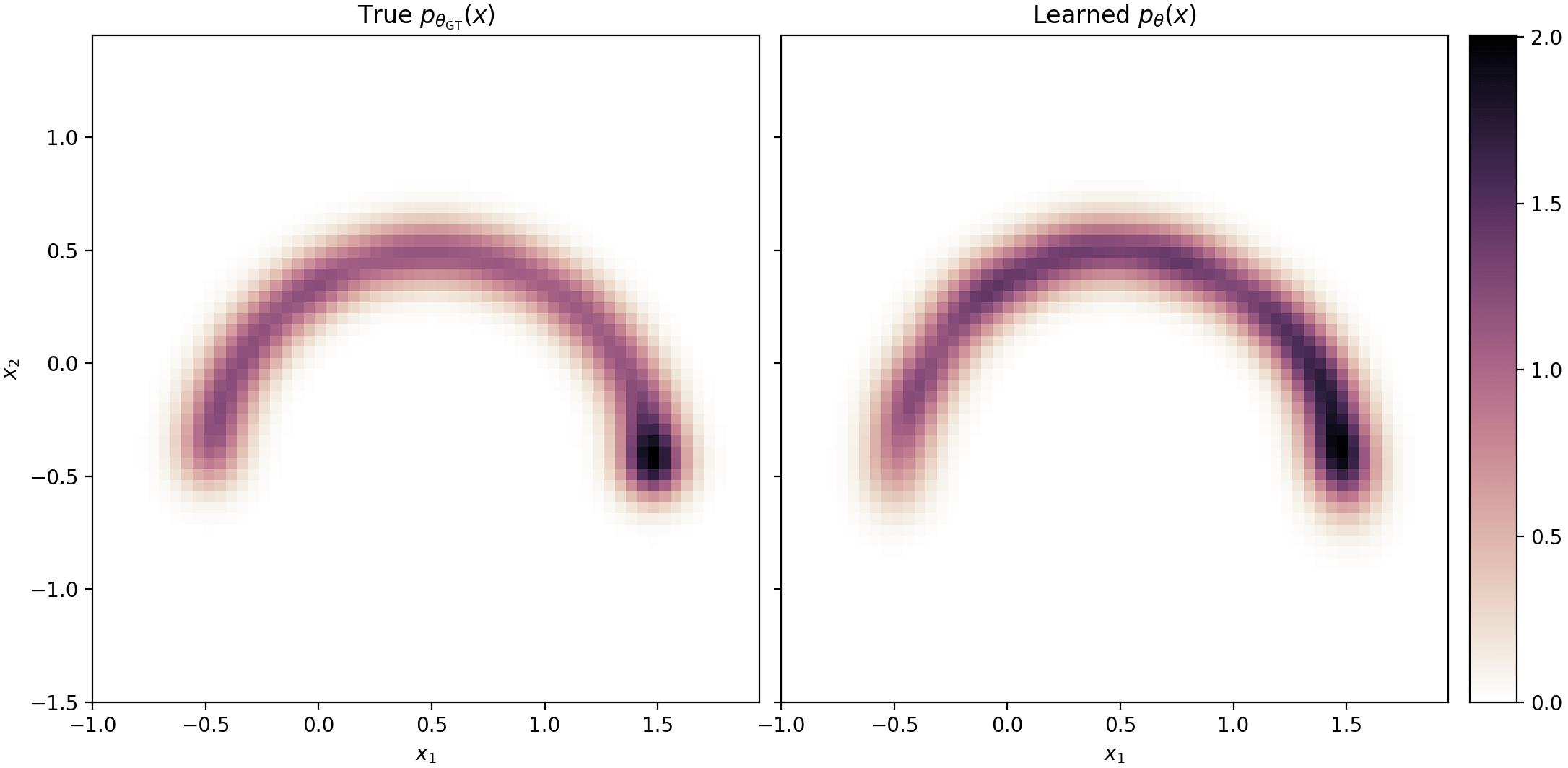}
    \caption{True vs. learned $p_\theta(x)$.}
    \label{fig:lin-ss-continuous-px}
    \end{subfigure}
    ~
    \begin{subfigure}[t]{0.49\textwidth}
    \includegraphics[width=1.0\textwidth]{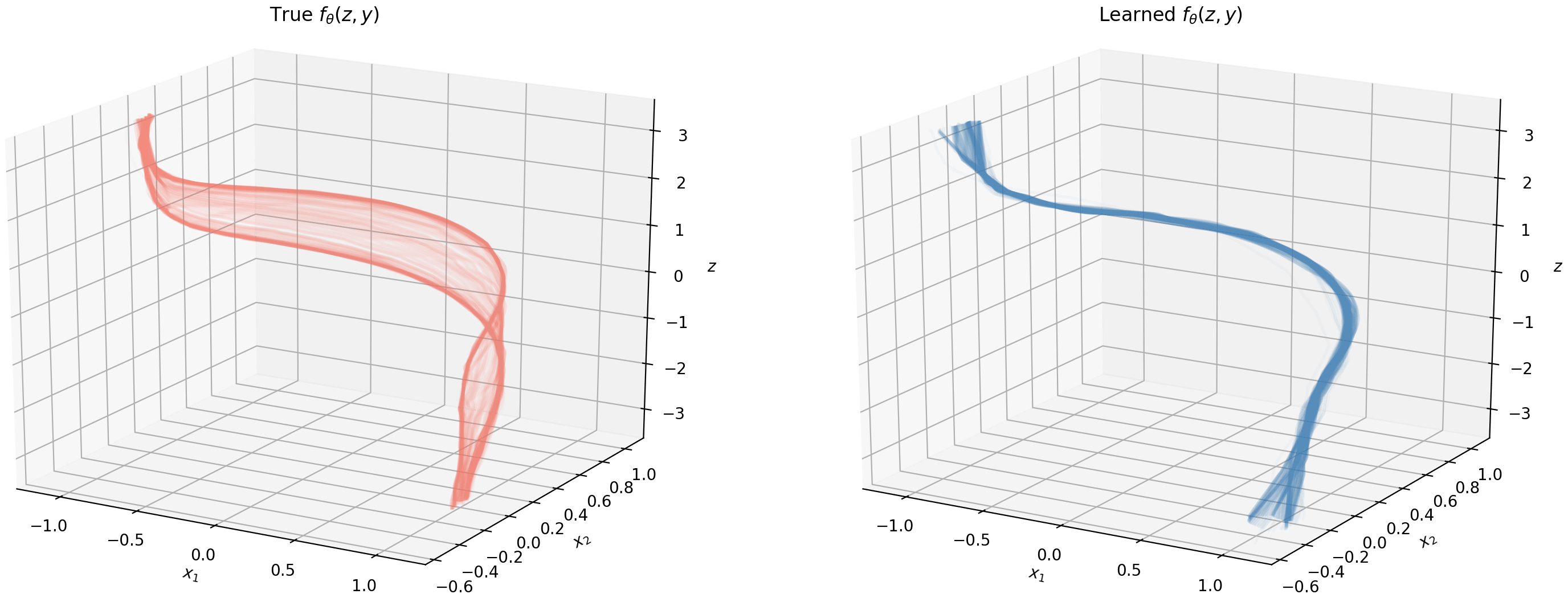}
    \caption{True vs. learned $f_\theta(z, y)$.}
    \label{fig:lin-ss-continuous-fn}
    \end{subfigure}
    ~
    \begin{subfigure}[t]{0.49\textwidth}
    \includegraphics[width=1.0\textwidth]{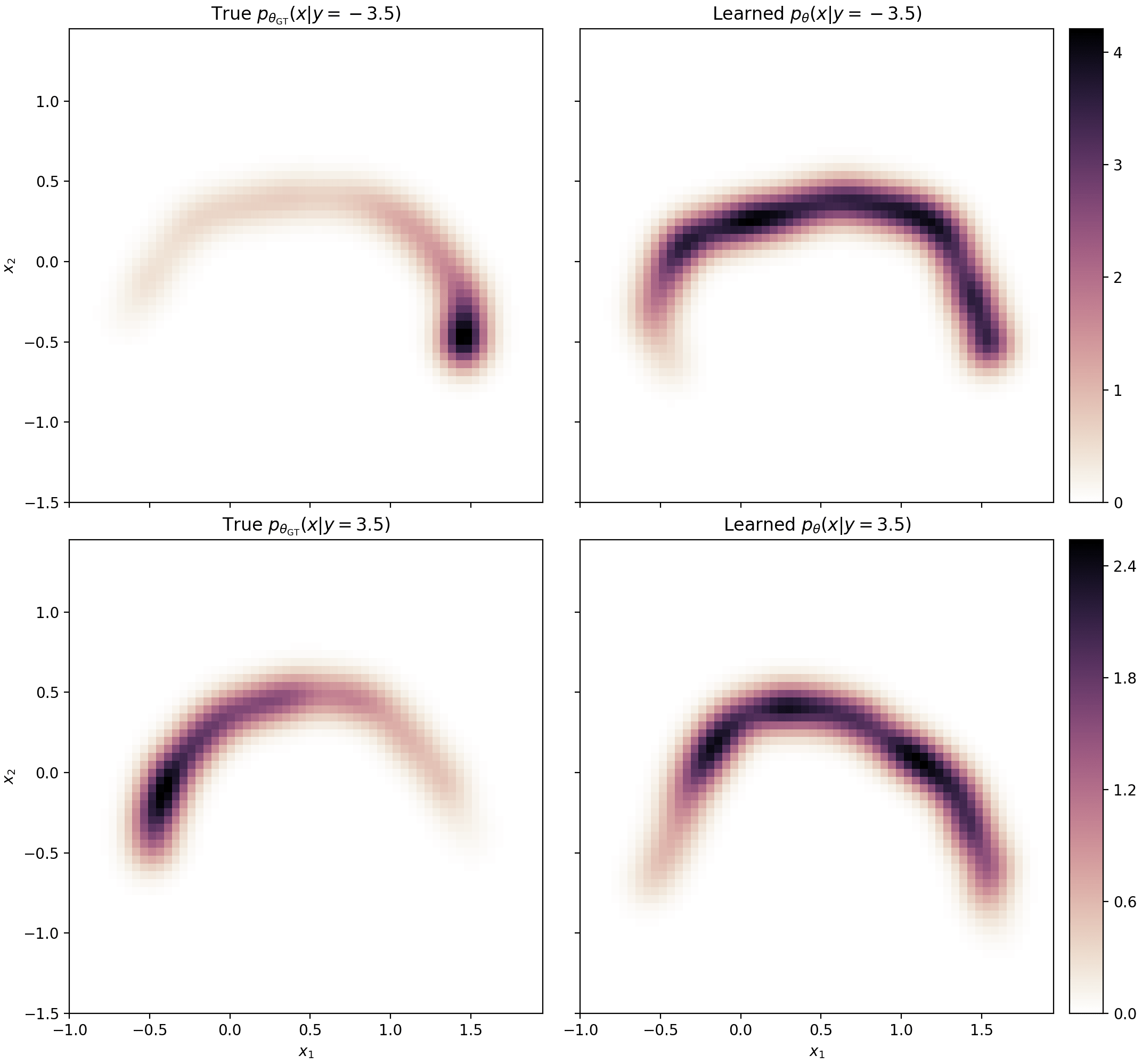}
    \caption{True vs. learned data conditionals $p_\theta(x | y)$.}
    \label{fig:lin-ss-continuous-px-given-y}
    \end{subfigure}
    ~
    \begin{subfigure}[t]{0.7\textwidth}
    \includegraphics[width=1.0\textwidth]{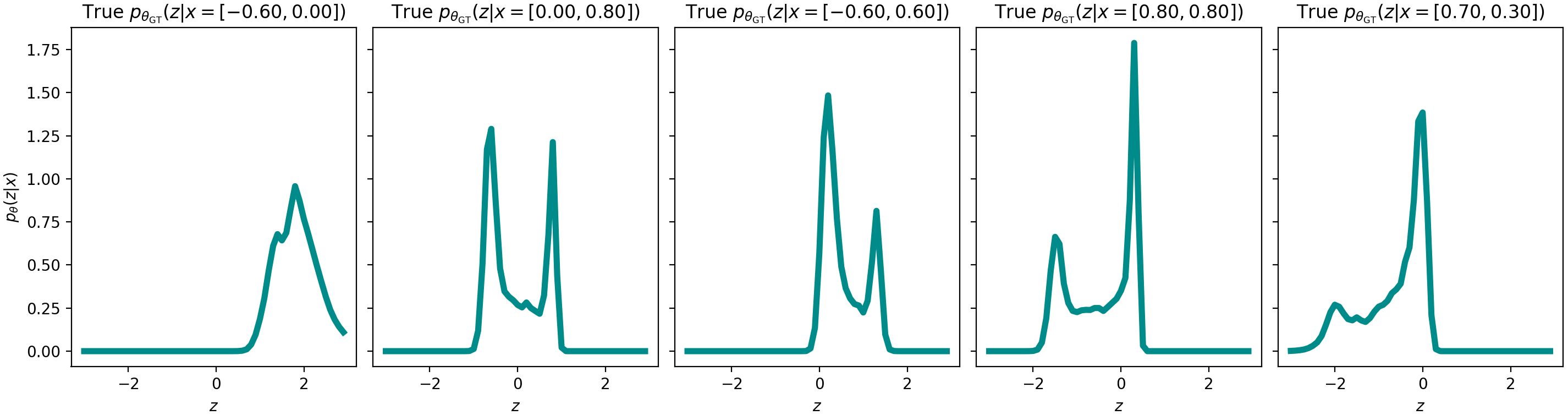}
    \caption{Posteriors under true $f_\theta$}
    \label{fig:lin-ss-continuous-post-true}
    \end{subfigure}
    ~
     \begin{subfigure}[t]{0.7\textwidth}
    \includegraphics[width=1.0\textwidth]{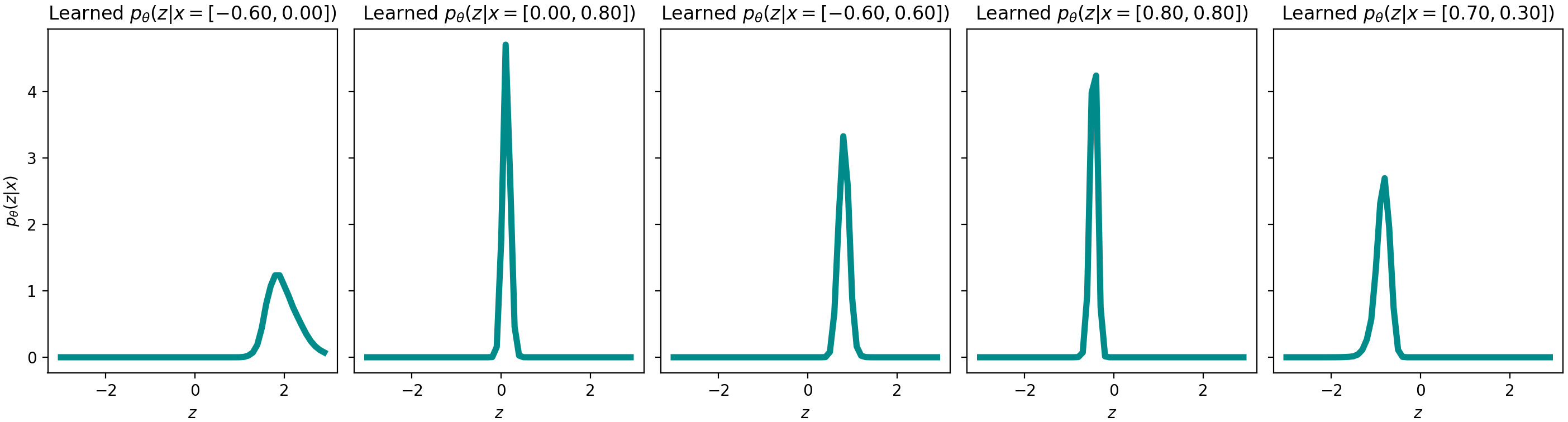}
    \caption{Posteriors under learned $f_\theta$}
    \label{fig:lin-ss-continuous-post-learned}
    \end{subfigure}

    \caption{Semi-Supervised VAE trained with Lagging Inference Networks (LIN) trained on the Continuous Semi-Circle Example.
    While LIN may help escape local optima, on this data, the training objective is still biased away
    from learning the true data distribution.
    As such, LIN fails in the same way an MFG-VAE does (see Figure \ref{fig:vae-ss-continuous}).
    }
    \label{fig:lin-ss-continuous}
\end{figure*}

\begin{figure*}[p]
    \centering
    \vspace*{-1cm}
    \tiny
    
    \begin{subfigure}[t]{0.49\textwidth}
    \includegraphics[width=1.0\textwidth]{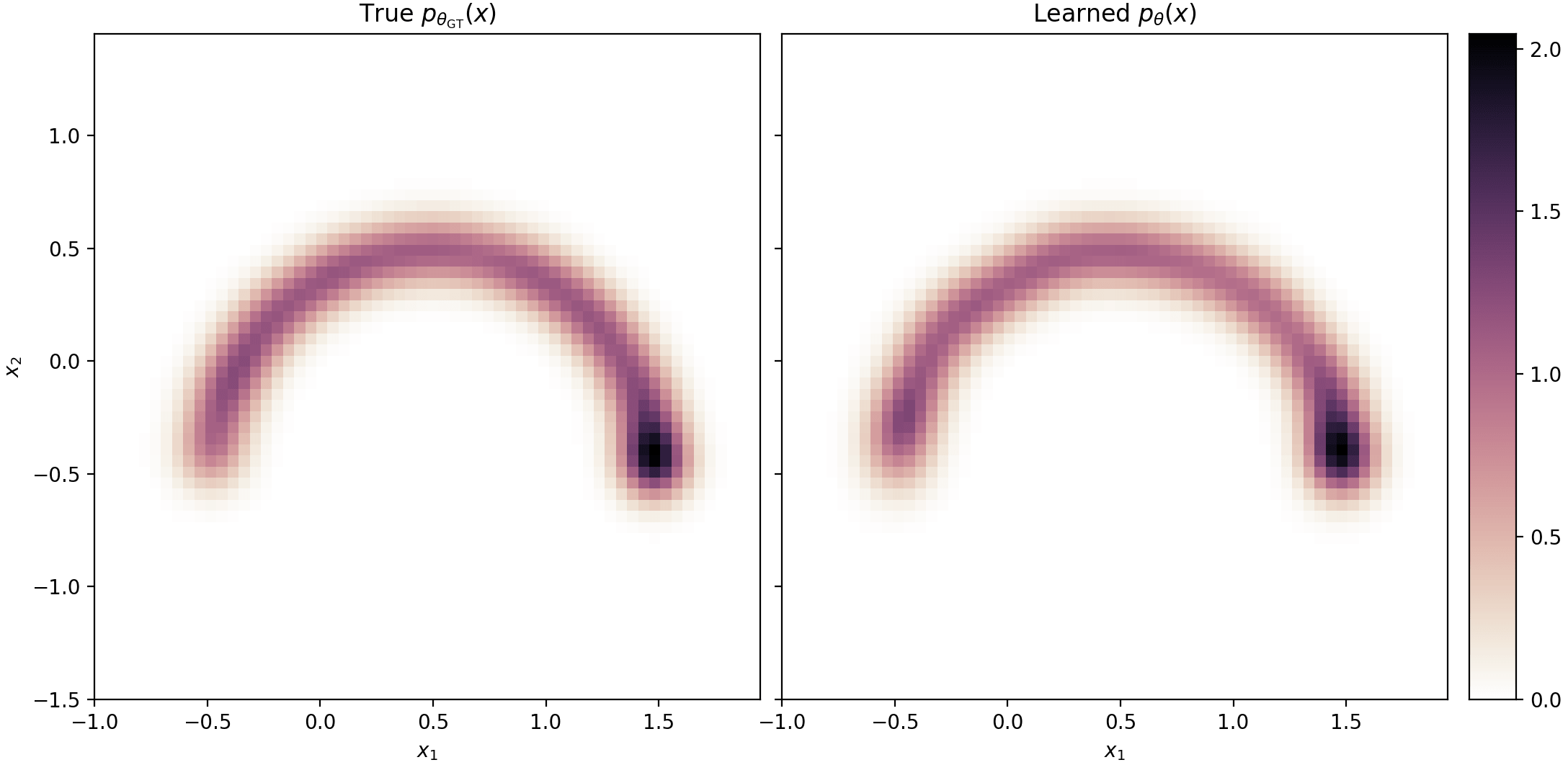}
    \caption{True vs. learned $p_\theta(x)$.}
    \label{fig:iwae-ss-continuous-px}
    \end{subfigure}
    ~
    \begin{subfigure}[t]{0.49\textwidth}
    \includegraphics[width=1.0\textwidth]{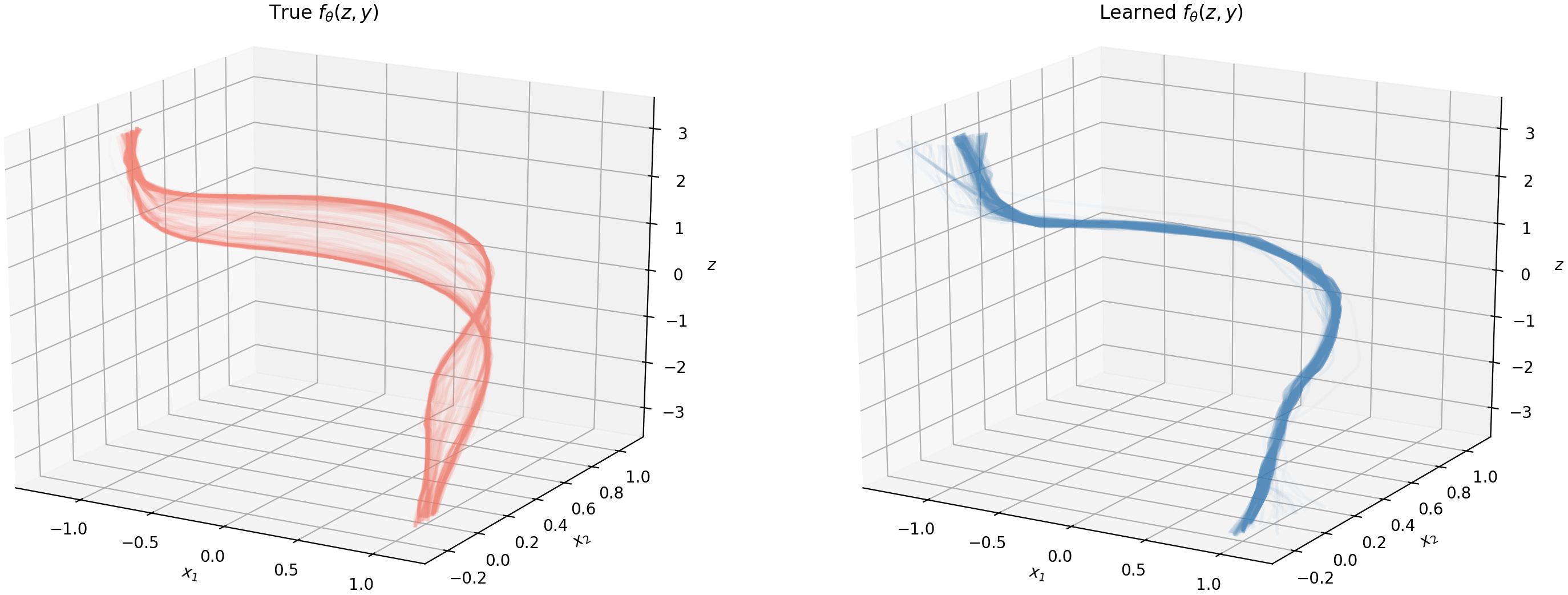}
    \caption{True vs. learned $f_\theta(z, y)$.}
    \label{fig:iwae-ss-continuous-fn}
    \end{subfigure}
    ~
    \begin{subfigure}[t]{0.49\textwidth}
    \includegraphics[width=1.0\textwidth]{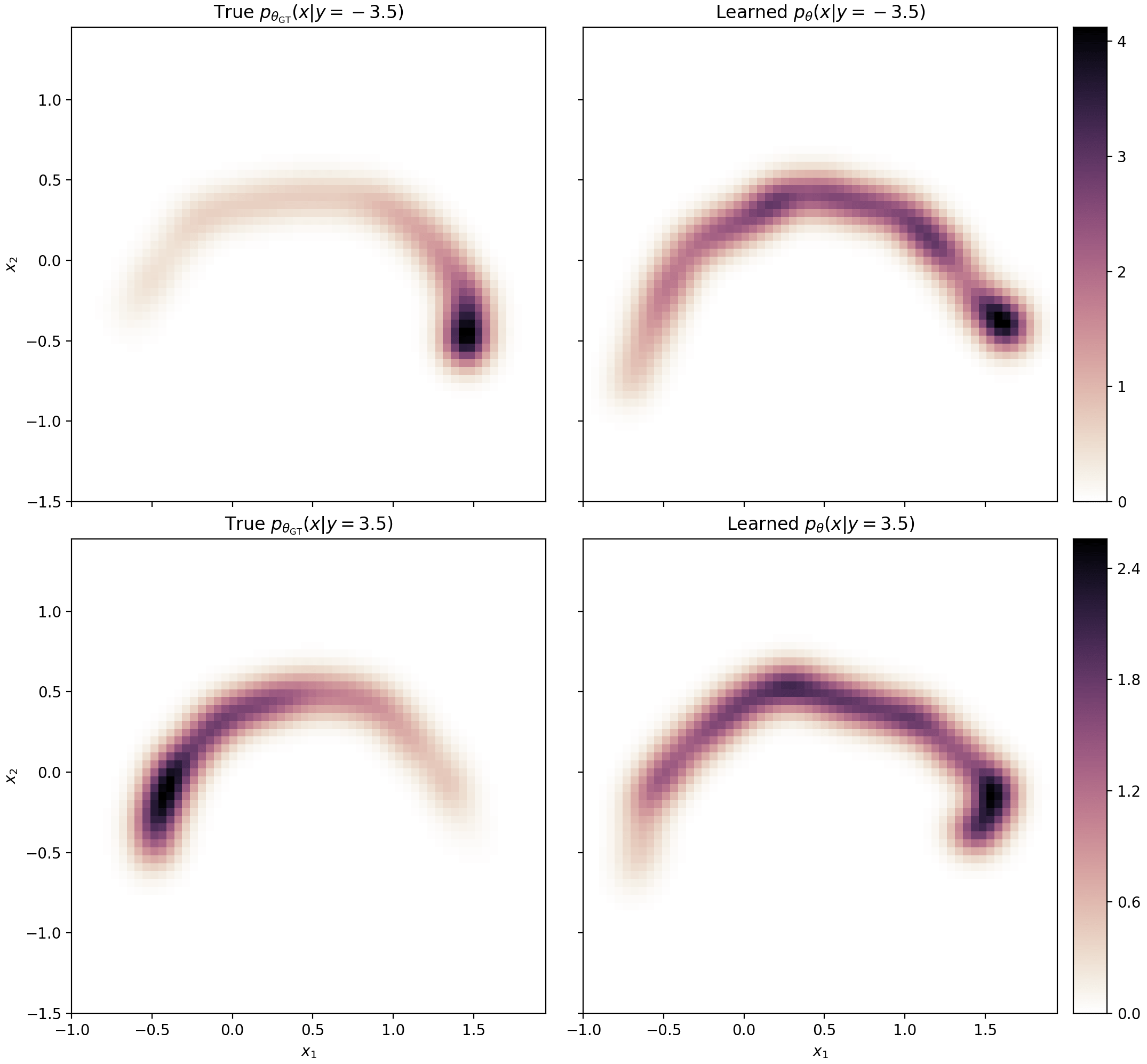}
    \caption{True vs. learned data conditionals $p_\theta(x | y)$.}
    \label{fig:iwae-ss-continuous-px-given-y}
    \end{subfigure}
    ~
    \begin{subfigure}[t]{0.7\textwidth}
    \includegraphics[width=1.0\textwidth]{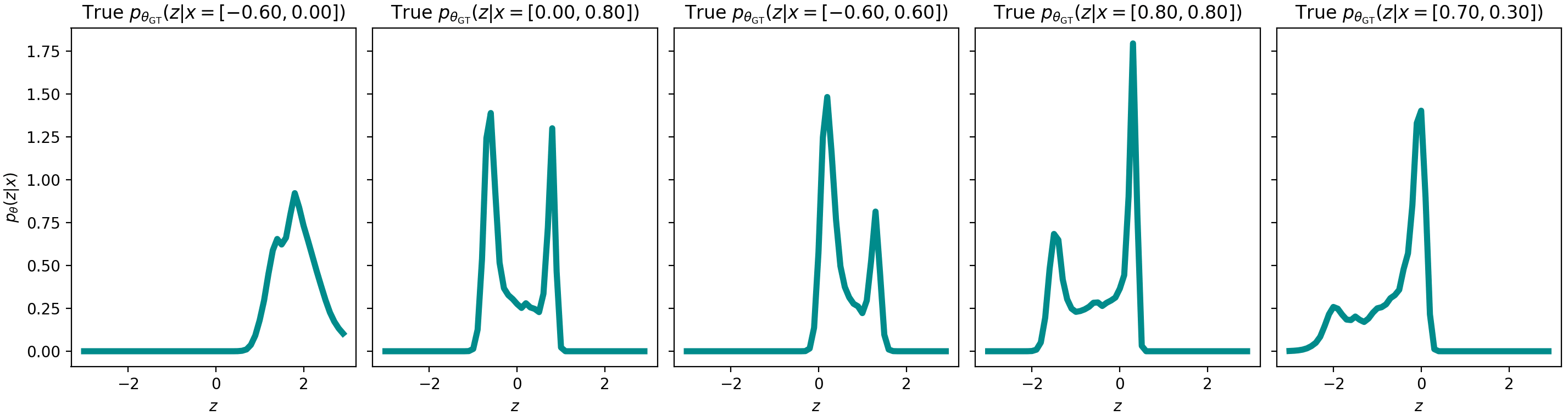}
    \caption{Posteriors under true $f_\theta$}
    \label{fig:iwae-ss-continuous-post-true}
    \end{subfigure}
    ~
     \begin{subfigure}[t]{0.7\textwidth}
    \includegraphics[width=1.0\textwidth]{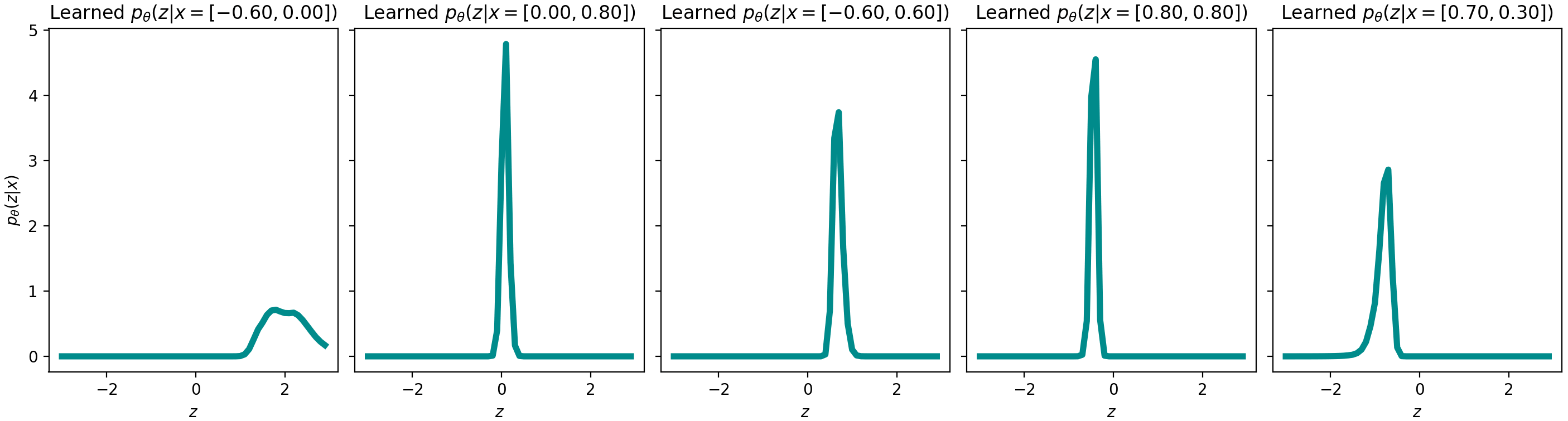}
    \caption{Posteriors under learned $f_\theta$}
    \label{fig:iwae-ss-continuous-post-learned}
    \end{subfigure}

    \caption{
    Semi-Supervised IWAE trained on the Continuous Semi-Circle Example.
    While using semi-supervision, a IWAE is still able to learn the $p(x)$ and $p(x | y)$ better than a VAE.
    However, since $q_\phi(y | x)$ (the discriminator) in the objective is a Gaussian,
    and the ground-truth $p_\theta(y | x)$ is multi-modal, the objective will select a function 
    $f_\theta$ under which $p_\theta(y | x)$ is an MFG.
    This, again, leads to learning a model in which $f_\theta(y = \cdot, z)$ are the same for all values of $y$,
    causing $p(x|y = 0) \approx p(x | y = 1) \approx p(x)$.
    The learned model will therefore generate poor sample quality counterfactuals.
    }
    \label{fig:iwae-ss-continuous}
\end{figure*}

\begin{figure*}[p]
    \centering

    \begin{subfigure}[t]{1.0\textwidth}
    \includegraphics[width=1.0\textwidth]{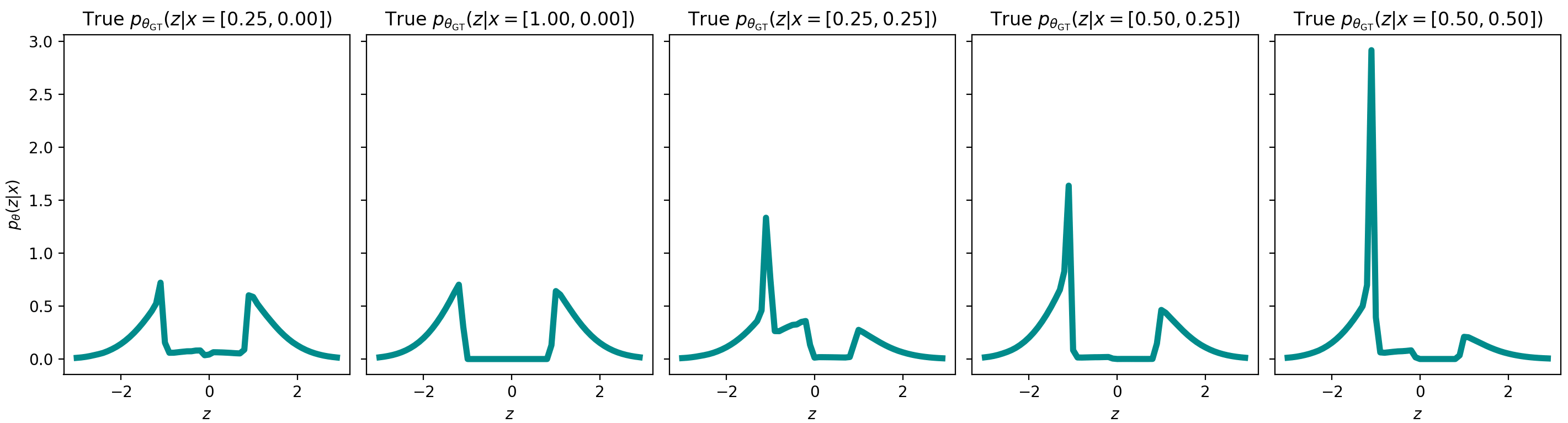}
    \caption{True Posterior $K = 1$}
    \label{fig:clusters-mismatch-5d-post-true}
    \end{subfigure}
    ~
    \begin{subfigure}[t]{1.0\textwidth}
    \includegraphics[width=1.0\textwidth]{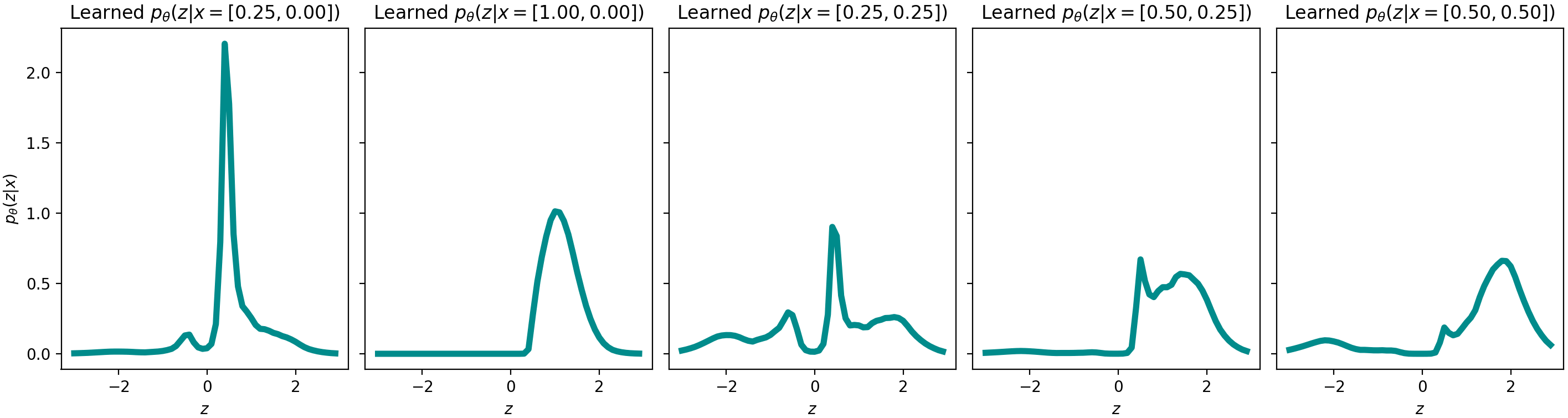}
    \caption{Learned Posterior $K = 1$}
    \label{fig:clusters-mismatch-5d-post-learned-k1}
    \end{subfigure}
    ~
    \begin{subfigure}[t]{1.0\textwidth}
    \includegraphics[width=1.0\textwidth]{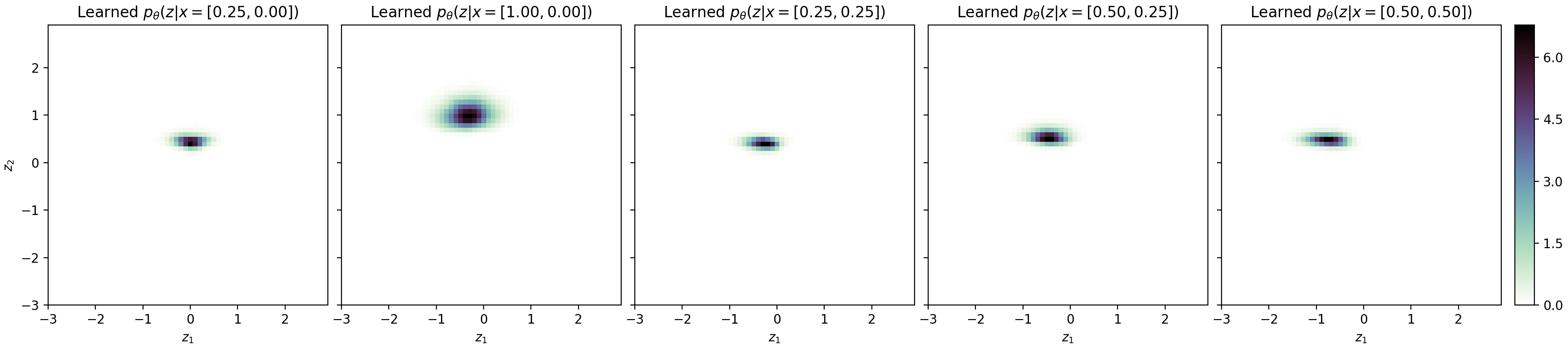}
    \caption{Learned Posterior $K = 2$}
    \label{fig:clusters-mismatch-5d-post-learned-k2}
    \end{subfigure}

    \caption{VAEs learn simpler posteriors as latent dimensionality $K$ increases and
    as the observation noise $\sigma^2_\epsilon$ decreases on ``Clusters Example'' (projected into 5D space).}
    \label{fig:clusters-mismatch-5d}
\end{figure*}

\begin{figure*}[p]
    \centering

    \begin{subfigure}[t]{1.0\textwidth}
    \includegraphics[width=1.0\textwidth]{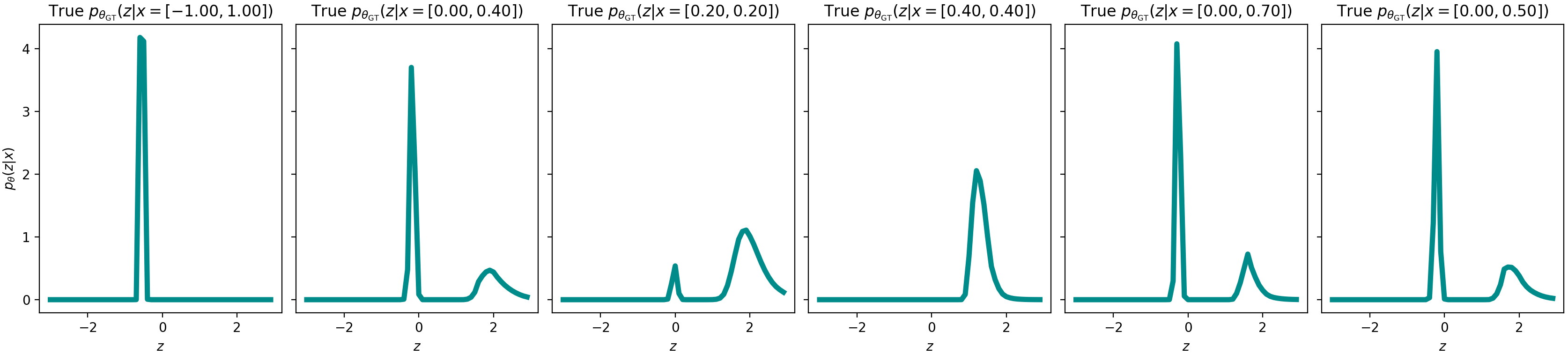}
    \caption{True Posterior $K = 1$}
    \label{fig:fig-8-mismatch-5d-post-true}
    \end{subfigure}
    ~
    \begin{subfigure}[t]{1.0\textwidth}
    \includegraphics[width=1.0\textwidth]{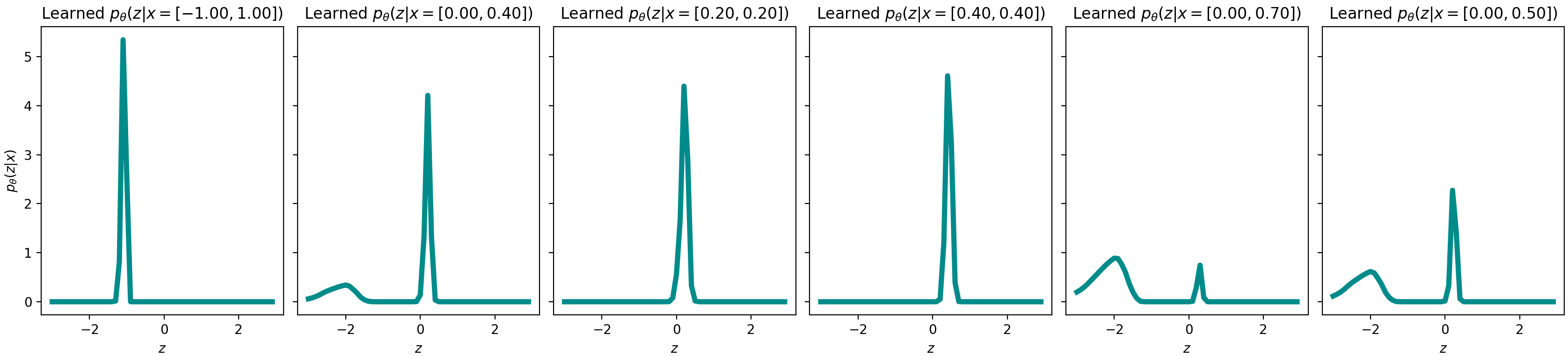}
    \caption{Learned Posterior $K = 1$}
    \label{fig:fig-8-mismatch-5d-post-learned-k1}
    \end{subfigure}
    ~
    \begin{subfigure}[t]{1.0\textwidth}
    \includegraphics[width=1.0\textwidth]{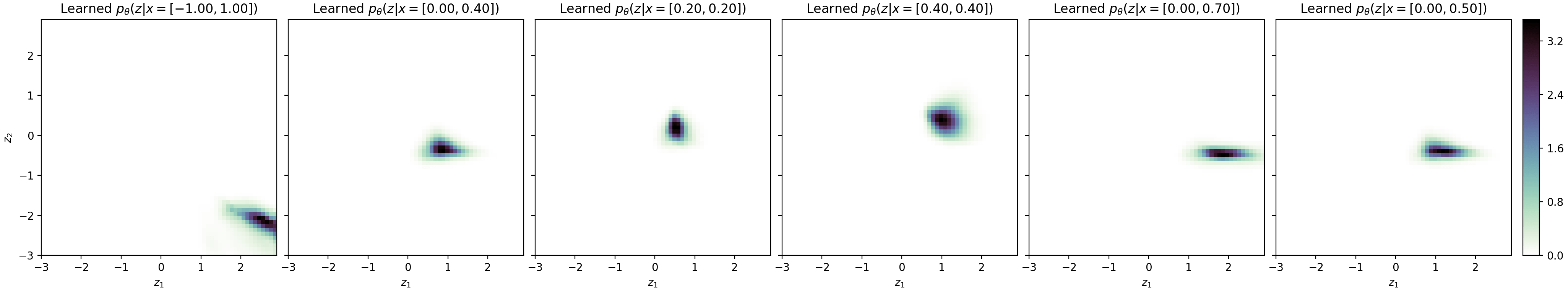}
    \caption{Learned Posterior $K = 2$}
    \label{fig:fig-8-mismatch-5d-post-learned-k2}
    \end{subfigure}

    \caption{VAEs learn simpler posteriors as latent dimensionality $K$ increases and
    as the observation noise $\sigma^2_\epsilon$ decreases on ``Figure-8 Example'' (projected into 5D space).}
    \label{fig:fig-8-mismatch-5d}
\end{figure*}

\end{document}